\documentclass{article}

\usepackage{microtype}
\usepackage{graphicx}
\usepackage{subcaption}
\usepackage{booktabs} 

\usepackage{hyperref}

\usepackage[preprint]{icml2026}

\usepackage{amsmath}
\usepackage{amssymb}
\usepackage{mathtools}
\usepackage{amsthm}
\usepackage{upgreek}
\usepackage{tikz-cd}

\usepackage[capitalize,noabbrev]{cleveref}

\theoremstyle{plain}
\newtheorem{theorem}{Theorem}[section]

\newtheorem{lemma}[theorem]{Lemma}
\newtheorem{corollary}[theorem]{Corollary}
\theoremstyle{definition}
\newtheorem{definition}[theorem]{Definition}

\theoremstyle{remark}
\newtheorem{remark}[theorem]{Remark}

\usepackage[textsize=tiny]{todonotes}
\usepackage{float}

\makeatletter
\newcommand{\hathat}[1]{%
\begingroup%
  \let\macc@kerna\z@%
  \let\macc@kernb\z@%
  \let\macc@nucleus\@empty%
  \hat{\raisebox{.2ex}{\vphantom{\ensuremath{#1}}}\smash{\hat{#1}}}%
\endgroup%
}
\makeatother

\newif\ifCOM
\COMfalse

\newif\ifTR
\TRfalse

\icmltitlerunning{Conditional KRR: Injecting Unpenalized Features into Kernel Methods with Applications to Kernel Thresholding}

\begin{document}

\twocolumn[
  \icmltitle{Conditional KRR: Injecting Unpenalized Features into Kernel Methods with Applications to Kernel Thresholding}



  \icmlsetsymbol{equal}{*}

  \begin{icmlauthorlist}
    \icmlauthor{Rustem Takhanov}{equal,yyy,sch}
    \icmlauthor{Zhenisbek Assylbekov}{comp}
  \end{icmlauthorlist}

  \icmlaffiliation{yyy}{Department of Mathematics, Nazarbayev University, Astana, Kazakhstan}
  \icmlaffiliation{sch}{Nazarbayev University Research Administration, Astana, Kazakhstan}
  \icmlaffiliation{comp}{Purdue University Fort Wayne, Indiana, USA}

  \icmlcorrespondingauthor{Rustem Takhanov}{rustem.takhanov@nu.edu.kz}

  \icmlkeywords{Machine Learning, ICML}

  \vskip 0.3in
]



\printAffiliationsAndNotice{}  

\begin{abstract}
Conditionally positive definite (CPD) kernels are defined with respect to a function class $\mathcal{F}$. 
It is well known that such a kernel $K$ is associated with its native space (defined analogously to an RKHS), which in turn gives rise to a learning method --- called conditional kernel ridge regression (conditional KRR) due to its analogy with KRR --- where the estimated regression function is penalized by the square of its native space norm. This method is of interest because it can be viewed as classical linear regression, with features specified by $\mathcal{F}$, followed by the application of standard KRR to the residual (unexplained) component of the target variable. Methods of this type have recently attracted increasing attention.

We study the statistical properties of this method by reducing its behavior to that of KRR with another fixed kernel, called the residual kernel. Our main theoretical result shows that such a reduction is indeed possible, at the cost of an additional term in the expected test risk, bounded by $\mathcal{O}(1/\sqrt{N})$, where $N$ is the sample size and the hidden constant depends on the class $\mathcal{F}$ and the input distribution. 

This reduction enables us to analyze conditional KRR in the case where $K$ is positive definite and $\mathcal{F}$ is given by the first $k$ principal eigenfunctions in the Mercer decomposition of $K$. We also consider the setting where $\mathcal{F}$ consists of $k$ random features from a random feature representation of $K$. It turns out that these two settings are closely related. Both our theoretical analysis and experiments confirm that conditional KRR outperforms standard KRR in these cases whenever the $\mathcal{F}$-component of the regression function is more pronounced than the residual part.
\end{abstract}
\section{Introduction}
Kernel Ridge Regression (KRR) is a powerful supervised learning method that has found applications in the learning theory of neural networks~\cite{Jacot,10.5555/3495724.3497030}, operator approximation~\cite{doi:10.1137/24M1650120}, and reinforcement learning~\cite{10.5555/3737916.3741455}, and the study of the manifold hypothesis in machine learning~\cite{takhanov2026on}, among others. To apply the method to a specific learning task, one must define a positive definite function $K(x,y)$ on pairs of inputs, called the kernel function. It has been observed that for KRR (and other kernel-based methods such as SVM or Kernel PCA), the requirement of positive definiteness can be relaxed to the more general property of conditional positive definiteness~\cite{NIPS2000_4e87337f,10.5555/3600270.3600880}. For a kernel that is conditionally positive definite (CPD) w.r.t. a class of functions $\mathcal{F}$, we only require that the quadratic form $\sum_{ij}K(x_i,x_j)\zeta_i\zeta_j$ is non-negative for any vector $[\zeta_i]$ orthogonal to the set $\{[f(x_i)]\mid f\in \mathcal{F}\}$.
Classical techniques such as spline estimation and Gaussian process regression are parameterized by kernels of this type, where the class $\mathcal{F}$ is interpreted as a set of unpenalized features. This connection has made the study of CPD kernels an important theme in approximation theory over the past decades~\cite{doi:10.1137/1.9781611970128,58326,Schaback_Wendland_2006}.

The majority of work on CPD kernels focuses on the case where $\mathcal{F}$ is defined as the set of multivariate polynomials of degree at most $k$~\cite{10.1007/BFb0086566}, or on variations of this definition, while the case of a general $\mathcal{F}$ has largely been neglected. One of the motivations of the present paper is that, even when the original kernel $K$ is simply positive definite, treating it as a CPD kernel w.r.t. a general class of functions $\mathcal{F}$ leads naturally to a broader framework, which we call conditional KRR. This extension allows us to develop a non-trivial statistical theory of learning within this setting, thereby deepening our understanding of standard KRR.

The organization of the paper is as follows. In Section~\ref{cpd-def}, we define CPD kernels and introduce the associated notion of the {\em residual kernel}, proving that the latter is positive definite (Theorem~\ref{residual}). Similar constructions are standard in the theory of native spaces induced by CPD kernels (e.g., see~\cite{Meinguet1979}), but our definition depends explicitly on the input data distribution, which makes it central to the subsequent development. In Section~\ref{native-space}, after recalling the standard definition of a native space, we provide an alternative characterization in terms of the Reproducing Kernel Hilbert Space (RKHS) associated with the residual kernel (Theorem~\ref{cucker-smale-kind}). The conditional KRR is then formulated analogously to the standard KRR, with the regularization term replaced by the squared native space semi-norm. The residual kernel further allows us to interpret this problem as a combination of linear regression and standard KRR applied to residual data (see Theorem~\ref{krr-cpd}; Diagram~\ref{cond_krr} provides a visualization of this result). 

\begin{figure*}
    \centering
    \includegraphics[width=0.8\textwidth]{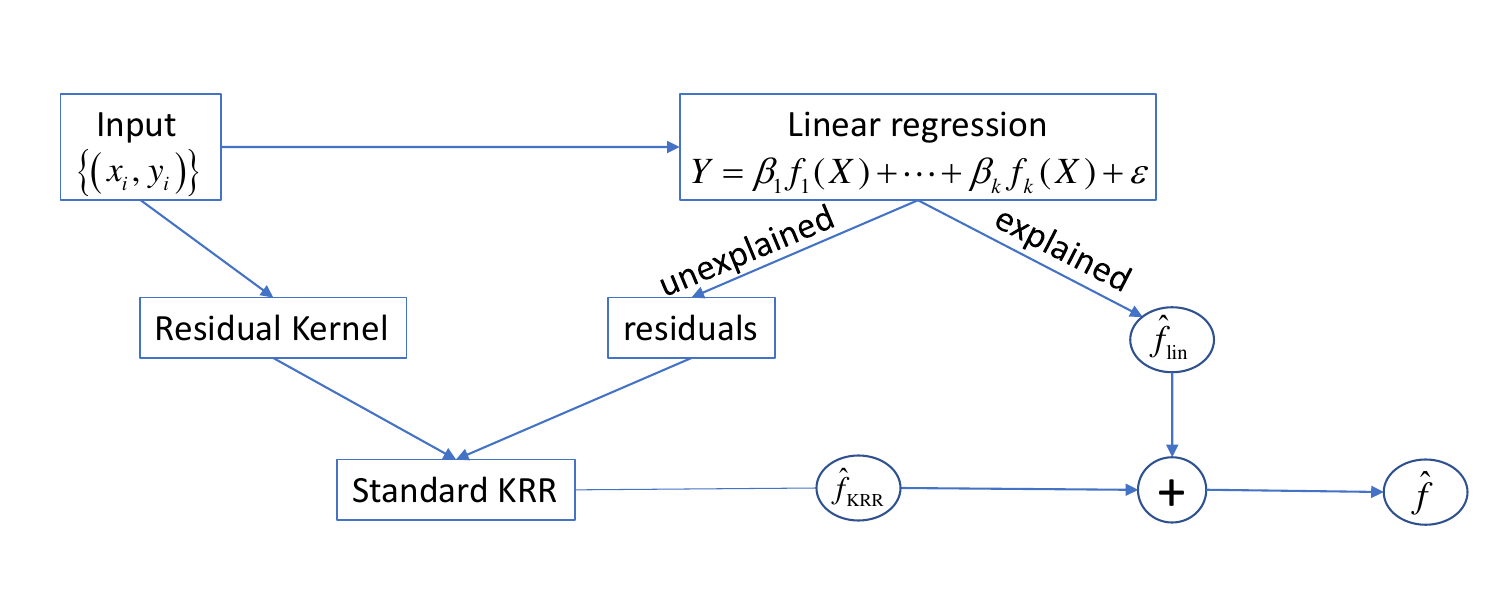}
    \caption{Structure of conditional KRR for $\mathcal{F} = {\rm span}(\{f_1, \cdots, f_k\})$.}
    \label{cond_krr}
\end{figure*}

Section~\ref{conditioning} develops the statistical theory of conditional KRR. In our framework, the regression function is decomposed into two components: the first belonging to $\mathcal{F}$ and the second to the RKHS of the residual kernel. We introduce the concept of an $\mathcal{F}$-conditional learner, which has full access to the $\mathcal{F}$-component the regression function and learns the second component from data using standard KRR with the residual kernel. To analyze the statistical properties of the estimator produced by conditional KRR, we compare it with the output of this learner. The distance between the two estimators is referred to as the {\em cost of conditioning}. This quantity measures the extent to which conditional KRR can be viewed as standard KRR with a modified kernel. Our main theoretical result, stated in Theorem~\ref{th-conditioning-bound}, establishes that with probability at least $1-\delta$, the cost of conditioning is bounded by $C \frac{\log k}{\sqrt{N}}$, where $N$ is the sample size, $k$ is the dimension of $\mathcal{F}$, and $C$ hides logarithmic factors in $k$ and $\delta$, as well as additional dependencies on the regression function, $K$, and $\mathcal{F}$.

In the next part of the paper (Section~\ref{thresholdings}), we apply our theoretical results to the case where the initial kernel $K$ is already positive definite and, consequently, CPD w.r.t. any class $\mathcal{F}$. We study conditional KRR under three specific scenarios:
(a) the {\em hard thresholding} case, i.e. where $\mathcal{F}$ is defined as the first $k$ principal eigenfunctions in the Mercer decomposition of $K$ (subsection~\ref{hard-thresholding});
(b) the {\em soft thresholding} case, i.e. where $\mathcal{F}$ consists of $k$ random realizations of a Gaussian process with covariance function $K$ (subsection~\ref{soft-thresholding});
(c) $\mathcal{F}$ consists of $k$ random features (or, equivalently, $k$ realizations of a random field) whose covariance function is $K$.
Our theoretical analysis, corroborated by experimental evidence, demonstrates that the expected test risk of conditional KRR is strictly lower than that of standard KRR, provided that the $\mathcal{F}$-component of the signal is sufficiently strong.

{\bf Related work}. 
The statistical properties of the KRR regression function estimator have been studied extensively, with particular focus on convergence rates~\cite{Caponnetto2007,pmlr-v99-marteau-ferey19a,cui2021generalization}, the distribution of expected risk under universality assumptions~\cite{10.5555/3524938.3525034,10.5555/3495724.3497030,simon2023the}, and the double-descent phenomenon~\cite{10.1002.cpa.22008,nakkiran2021optimal}. Our results show that these existing estimates can be directly extended to conditional KRR, provided that one accounts for the cost of conditioning.

Conditional KRR belongs to a broader family of two-stage methods: first recovering the main component of the signal with a base neural network, and then learning from the residuals. As shown in~\cite{NEURIPS2023_bf0857cb}, this strategy yields lower test risk than relying on the base network alone and additionally allows explicit memorization of the training labels. This line of research is related to the classical works on boosting~\cite{FREUND1997119}, where the strategy is to iteratively refine an ensemble by training each new weak learner on the residual errors left by the previous ones.

{\bf Conflict of Interest Disclosure}. The authors declare that there are no financial conflicts of interest associated with this research.


\section{Conditionally positive definite and residual kernels}\label{cpd-def}
Let $\mathcal{X} \subseteq \mathbb{R}^d$ be a compact set, $f_1, \dots, f_k: \mathcal{X} \to \mathbb{R}$ be linearly independent continuous functions. Define $\mathcal{F} = \mathrm{span}\{f_1, \dots, f_k\} \subseteq \mathbb{R}^\mathcal{X}$. 
We assume that $K: \mathcal{X} \times \mathcal{X} \to \mathbb{R}$ is a symmetric, continuous, conditionally positive definite (CPD) kernel with respect to $\mathcal{F}$, that is it satisfies: for any points $x_1, \dots, x_n \in \mathcal{X}$ and any coefficients $\alpha_1, \dots, \alpha_n \in \mathbb{R}$ such that 
$$\sum_{i=1}^n \alpha_i f(x_i) = 0\text{ for all }f \in \mathcal{F},$$ 
we have  
$$\sum_{i=1}^n \sum_{j=1}^n \alpha_i \alpha_j K(x_i, x_j) \ge 0.$$  
Note that if the inequality holds for all $\boldsymbol{\alpha} \in \mathbb{R}^n$ without additional constraints, then $K$ is said to be positive definite (PD).

Denote by $\mathcal{B}(\mathcal{X})$ the Borel $\sigma$-algebra on $\mathcal{X}$, and by $\mathcal{P}(\mathcal{X})$ the set of probability Borel measures on $\mathcal{X}$. For $P \in \mathcal{P}(\mathcal{X})$, the projection of $f$ onto $\mathcal{F}$, denoted $\Pi_P f$, is defined by
$\Pi_P f(x) = \int_{\mathcal{X}} \Pi(x,y) f(y) dP(y)$,
where $\Pi(x,y)$ is the kernel associated with the projection operator, given by 
$$
\Pi(x,y) = \sum_{i,j=1}^k (G^+)_{ij} f_i(x)f_j(y),
$$
for $G= [\langle f_i, f_j\rangle_{L_2(\mathcal{X},P)}]_{i,j=1}^k$ and $G^+$ is the Moore-Penrose inverse of $G$. 
Given a function $f(x,\omega)$, the notation $\Pi_P f(\cdot,\omega)$ refers to the projection operator applied to the function $f$ for fixed $\omega$. The result is another function $\tilde{f}(x,\omega)$. If $G$ is invertible, then $\Pi_P f = f$ for any $f \in \mathcal{F}$. In this case, the distribution $P$ is said to be $\mathcal{F}$-nondegenerate.  
The following theorem extends the construction of the kernel given in equation (20) of~\cite{Meinguet1979}.

\begin{theorem}\label{residual}
Let $P\in \mathcal{P}(\mathcal{X})$ be $\mathcal{F}$-nondegenerate. Define the residual kernel
\begin{equation*}
\begin{split}
&K_P(x, y) = K(x, y) - \Pi_P[K(x, \cdot)](x, y) - \\
&-\Pi_P[K(\cdot, y)](x, y) + \Pi_P[\Pi_P[K(x, \cdot)](\cdot,y)](x, y).
\end{split}
\end{equation*}
Then, $K_P(x, y)$ is a positive definite kernel.
\end{theorem}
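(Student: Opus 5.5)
The plan is to write $K_P$ as $K$ applied to signed measures of total $\mathcal{F}$-moment zero, and then pass CPD from finite sums to such measures by a limiting argument.

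\textbf{Step 1: rewrite $K_P$.} Unwinding the definition of $\Pi_P$, we have
\[
K_P(x,y)=\iint K(u,v)\,d\mu_x(u)\,d\mu_y(v),
\]
where $\mu_x=\delta_x-\Pi(x,\cdot)P$ is a finite signed Borel measure. Here $\Pi(x,\cdot)P$ denotes the measure with density $\Pi(x,\cdot)$ with respect to $P$. For any $f_m\in\mathcal{F}$, nondegeneracy gives $\Pi_P f_m=f_m$, so
\[
\int f_m\,d\mu_x=f_m(x)-\int\Pi(x,u)f_m(u)\,dP(u)=f_m(x)-(\Pi_Pf_m)(x)=0 .
\]
Thus every $\mu_x$ annihilates $\mathcal{F}$.

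\textbf{Step 2: reduce positive definiteness to an integral inequality.} Take points $x_1,\dots,x_n$ and coefficients $\alpha\in\mathbb{R}^n$, and set $\nu=\sum_i\alpha_i\mu_{x_i}$. Then $\nu$ is a finite signed measure with $\int f\,d\nu=0$ for all $f\in\mathcal{F}$, and
\[
\sum_{i,j}\alpha_i\alpha_jK_P(x_i,x_j)=\iint K\,d\nu\,d\nu .
\]
So it suffices to show the following extension of CPD: for every finite signed Borel measure $\nu$ on $\mathcal{X}$ annihilating $\mathcal{F}$, $\iint K\,d\nu\,d\nu\ge 0$.

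\textbf{Step 3: approximation.} This is the main obstacle, since CPD is only assumed for finitely supported combinations. The measure $\nu$ has the form $\sum_i\alpha_i\delta_{x_i}-gP$ with $g=\sum_i\alpha_i\Pi(x_i,\cdot)$ continuous. I would approximate $gP$ by finitely supported signed measures, for instance Riemann-type sums $\sum_j g(z_j)P(A_j)\delta_{z_j}$ over a fine partition $\{A_j\}$ of the compact set $\mathcal{X}$ with $z_j\in A_j$. Then $\nu_m\to\nu$ weakly with uniformly bounded total variation.

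These approximants need not annihilate $\mathcal{F}$ exactly, so they must be corrected. Since $f_1,\dots,f_k$ are linearly independent, there are points $y_1,\dots,y_k$ with $[f_l(y_j)]$ invertible. One can therefore add a small correction $\sum_j\beta_j^{(m)}\delta_{y_j}$ restoring the moment conditions. Because the moments $\int f_l\,d\nu_m\to\int f_l\,d\nu=0$ by continuity, we get $\beta^{(m)}\to0$.

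\textbf{Step 4: pass to the limit.} Each corrected $\tilde\nu_m$ is finitely supported and annihilates $\mathcal{F}$, so CPD gives $\iint K\,d\tilde\nu_m\,d\tilde\nu_m\ge0$. Since $K$ is continuous on the compact set $\mathcal{X}\times\mathcal{X}$, it is uniformly continuous. Weak convergence of $\tilde\nu_m\otimes\tilde\nu_m$ to $\nu\otimes\nu$ then follows, using the uniform total-variation bound and the product of weakly convergent measures. Hence the double integrals converge and $\iint K\,d\nu\,d\nu\ge0$.

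Symmetry and continuity of $K_P$ are immediate from the formula in Step 1.
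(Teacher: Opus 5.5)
Your proof is correct and takes the same route as the paper: your $\mu_x=\delta_x-\Pi(x,\cdot)P$ is exactly the paper's $(I-\Pi_P)^*\delta_x$, and both arguments reduce positive definiteness of $K_P$ to the signed-measure form of conditional positive definiteness, applied to a measure $\nu=(I-\Pi_P)^*\mu$ that annihilates $\mathcal{F}$. The one difference is that you actually prove that signed-measure extension (by Riemann-sum approximation plus a vanishing correction at unisolvent points $y_1,\dots,y_k$), whereas the paper states it as a lemma justified only by ``standard argumentation''.
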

Note that, using slightly more advanced notation, one can write
$K_P = \big((I-\Pi_P) \otimes (I-\Pi_P)\big)[K]$,
where $I$ denotes the identity operator on $L_2(\mathcal{X},P)$, and $\otimes$ is the tensor product of operators on $L_2(\mathcal{X},P)$, producing an operator on $L_2(\mathcal{X},P)\otimes L_2(\mathcal{X},P)$.

\section{The native space and Ridge Regression with CPD kernels}\label{native-space}
The reduced native space of a CPD kernel $K$ w.r.t. $\mathcal{F}$, denoted $\widetilde{\mathcal{H}}_K^\mathcal{F}$, is defined as the completion of  
\begin{equation*}
\begin{split}
\mathcal{L}=\big\{ f = \sum_{i=1}^n \alpha_i K(x_i, \cdot) \mid \\
\sum_{i=1}^n \alpha_i f_j(x_i) = 0 \text{ for all } j=1,\dots,k \big\},
\end{split}
\end{equation*}
equipped with the inner product
$$
\langle \sum_{i=1}^n \alpha_i K(x_i, \cdot), \sum_{j=1}^n \beta_j K(x_j, \cdot) \rangle_{\mathcal{L}}
= \sum_{i,j=1}^n \alpha_i \beta_j K(x_i, x_j).
$$
The space $\widetilde{\mathcal{H}}_K^\mathcal{F}$ is a Hilbert space~\cite{Wendland_2004}.

Since $K$ is continuous, the reduced native space $\widetilde{\mathcal{H}}_K^\mathcal{F}$ embeds naturally into $C(\mathcal{X})$. Hence, w.l.o.g., we may regard $\widetilde{\mathcal{H}}_K^\mathcal{F}$ as a subspace of $C(\mathcal{X})$.  
The full native space is then defined as the direct sum
$$
\mathcal{H}_K^\mathcal{F} = \widetilde{\mathcal{H}}_K^\mathcal{F} \oplus \mathcal{F},
$$
equipped with the semi-norm $$\|f\|_{\mathcal{H}_K^\mathcal{F}} = \sqrt{ \langle f_{\perp}, f_{\perp} \rangle_{\widetilde{\mathcal{H}}_K^\mathcal{F}} }$$ where $f = f_{\parallel} + f_{\perp}$ is the unique decomposition with $f_{\parallel} \in \mathcal{F}$ and $f_{\perp} \in \widetilde{\mathcal{H}}_K^\mathcal{F}$. This semi-norm corresponds to the inner product $\langle \cdot, \cdot\rangle_{\widetilde{\mathcal{H}}_K^\mathcal{F}}$, turning $\mathcal{H}_K^\mathcal{F}$ into a semi-Hilbert space. The subspace $\mathcal{F}$ is referred to as the null space of $\mathcal{H}_K^\mathcal{F}$.

Let $\mathcal{H}_{K_P}$ denote the RKHS of the residual kernel $K_P$. Note that functions in $\mathcal{H}_{K_P}$ are all orthogonal to $\mathcal{F}$ in $L_2(\mathcal{X},P)$. Then we define the semi-Hilbert space ${\rm H}_K^\mathcal{F}$ as the set of functions $\mathcal{H}_{K_P}\oplus \mathcal{F}$ with the inner product
$$
\langle f_{\parallel}+f_{\perp}, g_{\parallel}+g_{\perp}\rangle_{{\rm H}_K^\mathcal{F}} = \langle f_{\perp}, g_{\perp}\rangle_{\mathcal{H}_{K_P}},
$$
where  $f_{\parallel}\in \mathcal{F}, f_{\perp}\in \mathcal{H}_{K_P}$. The following theorem claims that the latter two definitions are equivalent. It is a generalization of Theorem 4 from~\cite{Cucker2001OnTM} for PD kernels to the case of CPD kernels.
\begin{theorem}\label{cucker-smale-kind}
Let $P$ be a probabilistic Borel measure non-degenerate on $\mathcal{X}$. Then, 
$$\mathcal{H}_K^\mathcal{F}= {\rm H}_K^\mathcal{F}.$$
\end{theorem}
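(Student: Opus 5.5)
We have to show that $\mathcal{H}_K^\mathcal{F}$ and ${\rm H}_K^\mathcal{F}$ contain the same functions and carry the same semi-inner product. The plan is to build an explicit isometry between the reduced native space $\widetilde{\mathcal{H}}_K^\mathcal{F}$ and $\mathcal{H}_{K_P}$, working first on the dense set $\mathcal{L}$, and then compare the two direct-sum decompositions. Write $Q=I-\Pi_P$; it acts on functions of one variable, and it is well defined on $C(\mathcal{X})$ because $\Pi(x,y)$ is continuous and $P$ is $\mathcal{F}$-nondegenerate, so $G$ is invertible.

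\textbf{Step 1 (finite sums).} Take $g=\sum_i\alpha_iK(x_i,\cdot)\in\mathcal{L}$, so that $\sum_i\alpha_i f(x_i)=0$ for all $f\in\mathcal{F}$.
\begin{itemize}
\item Expanding $K_P=(Q\otimes Q)[K]$ gives $K_P(x_i,\cdot)=QK(x_i,\cdot)-\Pi_P[QK(\cdot,\cdot)](x_i)$, where the second term is a function in $\mathcal{F}$ of the second variable.
\item This $\mathcal{F}$-valued term is killed by the moment conditions. Hence $\sum_i\alpha_iK_P(x_i,\cdot)=Qg$, and in particular $Qg\in\mathcal{H}_{K_P}$.
\item The same cancellation in the Gram form gives $\sum_{i,j}\alpha_i\alpha_jK_P(x_i,x_j)=\sum_{i,j}\alpha_i\alpha_jK(x_i,x_j)$. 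Indeed, every correction term contains a factor $\sum_i\alpha_i f_l(x_i)=0$.
\end{itemize}
So the map $T:g\mapsto Qg$ is an isometry from $(\mathcal{L},\langle\cdot,\cdot\rangle_\mathcal{L})$ into $\mathcal{H}_{K_P}$. Incidentally, this also reproves Theorem~\ref{residual} on these vectors.

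\textbf{Step 2 (extension and surjectivity).} $T$ extends to an isometry $\widetilde{\mathcal{H}}_K^\mathcal{F}\to\mathcal{H}_{K_P}$. To check that the extension is still $g\mapsto Qg$ pointwise, use continuity of point evaluations in both spaces together with the fact that $\Pi_P$ is an integral operator with bounded kernel. Since $\mathcal{H}_{K_P}$-convergence implies uniform convergence, $Q$ commutes with the limits.

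Surjectivity is the step I expect to be the main obstacle. The span of $\{K_P(x,\cdot)\}$ is dense in $\mathcal{H}_{K_P}$, but a single $K_P(x,\cdot)$ is not of the form $Qg$ with $g\in\mathcal{L}$, because one point does not satisfy the moment conditions.

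My first attempt is to use unisolvent points. Choose points $y_1,\dots,y_k$ with $\det[f_l(y_m)]\neq0$; they exist by linear independence. For any $x$, take the Lagrange-type combination $\delta_x-\sum_m u_m(x)\delta_{y_m}$, which annihilates $\mathcal{F}$. Applying $T$ to the corresponding element of $\mathcal{L}$ gives
\[
K_P(x,\cdot)-\sum_m u_m(x)K_P(y_m,\cdot).
\]
So the range of $T$ contains the span of $K_P(x,\cdot)$ modulo the finite-dimensional space $V=\mathrm{span}\{K_P(y_m,\cdot)\}$. To remove this quotient, I would show that $\sum_m c_mK_P(y_m,\cdot)$ can itself be approximated in the range of $T$. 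One route is to take $P$-weighted averages: $\int K_P(x,\cdot)\,h(x)\,dP(x)$ with $h\in\mathcal{F}$ vanishes, because $K_P$ is orthogonal to $\mathcal{F}$ in each variable. Another route is to show directly that the range of $T$ is closed (it is an isometric image) and that its orthogonal complement is zero. If $\phi\perp\mathrm{range}(T)$, then $\phi(x)=\sum_m u_m(x)\phi(y_m)$, so $\phi\in\mathcal{F}$. But $\phi\in\mathcal{H}_{K_P}$ is $L_2(P)$-orthogonal to $\mathcal{F}$, so $\phi=0$. This second argument is the one I will commit to.

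\textbf{Step 3 (matching the decompositions).} For $f=f_\parallel+f_\perp$ with $f_\perp\in\widetilde{\mathcal{H}}_K^\mathcal{F}$, write
\[
f=\bigl(f_\parallel+\Pi_Pf_\perp\bigr)+Qf_\perp .
\]
The first summand lies in $\mathcal{F}$ and $Qf_\perp=Tf_\perp\in\mathcal{H}_{K_P}$. This gives $\mathcal{H}_K^\mathcal{F}\subseteq{\rm H}_K^\mathcal{F}$ with equal semi-norms. Conversely, surjectivity of $T$ means any $h\in\mathcal{H}_{K_P}$ equals $Qg=g-\Pi_Pg$ for some $g\in\widetilde{\mathcal{H}}_K^\mathcal{F}$, so $h\in\mathcal{H}_K^\mathcal{F}$. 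Finally, the semi-inner products agree by polarization from the isometry.
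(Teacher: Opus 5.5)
Your proposal is correct and follows essentially the paper's route. You write each element of $\mathcal{L}$ as $\sum_i\alpha_iK_P(x_i,\cdot)$ plus an $\mathcal{F}$-term, match the inner products, and get density/surjectivity by showing that anything in $\mathcal{H}_{K_P}$ orthogonal to the image of $\mathcal{L}$ must lie in $\mathcal{F}\cap\mathcal{H}_{K_P}=\{0\}$. The differences are minor: you verify the isometry via the Gram-form cancellation and the reproducing property of $K_P$ instead of the Mercer expansion, and you spell out the unisolvent-points step and the extension to the completion, which the paper leaves implicit. For that extension step, the cleanest justification is that $\Pi_P g_n=g_n-Qg_n$ converges pointwise inside the finite-dimensional $\mathcal{F}$, so $g-\lim Qg_n\in\mathcal{F}$ and hence $Qg=\lim Qg_n$.
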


Now, suppose that we are given a dataset $\{(x_i, y_i)\}_{i=1}^N \subset \mathcal{X} \times \mathbb{R}$. We now introduce the conditional Kernel Ridge Regression problem, defined as the minimization of the functional
\begin{equation}\label{conditional-KRR}
J(f) = \frac{1}{N}\sum_{i=1}^N (f(x_i) - y_i)^2 + \uplambda \|f\|_{\mathcal{H}_K^\mathcal{F}}^2,
\end{equation}
over all $f\in \mathcal{H}_K^\mathcal{F}$. The role of the empirical residual kernel is demonstrated by the following theorem, which establishes a connection between conditional KRR and standard KRR for PD kernels.
\begin{theorem}\label{krr-cpd}
Let $P = \frac{1}{N}\sum_{i=1}^N \delta_{x_i}$ and let $\mathcal{H}_{K_P}$ be the RKHS of $K_P$. Assuming that $F = [f_i(x_j)]_{i=1}^k{ }_{j=1}^{N} \in \mathbb{R}^{k \times N}$ is of rank $k$, we have
\begin{equation*}
\begin{split}
\min_{f\in \mathcal{H}_{K}^\mathcal{F}}\frac{1}{N}\sum_{i=1}^N (f(x_i) - y_i)^2 + \uplambda \|f\|_{\mathcal{H}_K^\mathcal{F}}^2 = \\
\min_{g\in \mathcal{H}_{K_{P}}}\frac{1}{N}\sum_{i=1}^N (g(x_i) - r_i)^2 + \uplambda \|g\|_{\mathcal{H}_{K_{P}}}^2,
\end{split}
\end{equation*}
where $r = (r_1, \dots, r_N)^\top \in \mathbb{R}^N$ is a projection of $y = (y_1, \dots, y_N)^\top \in \mathbb{R}^N$ onto the orthogonal complement of the row space of $F$.

If $f^\ast$ is an optimal function of the first task then
$g=(I-\Pi_P)f^\ast$ is an optimal function for the second task. Reversely, if $g^\ast$  is an optimal function for the second task, then
\begin{equation*}
\begin{split}
f = g^\ast+[f_1(x), \cdots, f_k(x)](F F^\top)^{-1}Fy,
\end{split}
\end{equation*}
is an optimal function for the first task.
\end{theorem}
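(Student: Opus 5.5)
The plan is to apply Theorem~\ref{cucker-smale-kind} with the empirical measure $P=\frac1N\sum_i\delta_{x_i}$. The rank assumption on $F$ says exactly that $G=\frac1N FF^\top$ is invertible, so $P$ is $\mathcal{F}$-nondegenerate. The theorem then identifies $\mathcal{H}_K^\mathcal{F}$ with $\mathcal{H}_{K_P}\oplus\mathcal{F}$, and the semi-norm becomes $\|f\|_{\mathcal{H}_K^\mathcal{F}}=\|f_\perp\|_{\mathcal{H}_{K_P}}$. Consequently the first problem can be rewritten as a joint minimization over $g\in\mathcal{H}_{K_P}$ and $c\in\mathbb{R}^k$, where $f=g+\sum_j c_jf_j$:
\[
J(g,c)=\frac1N\|g(X)+F^\top c-y\|^2+\uplambda\|g\|_{\mathcal{H}_{K_P}}^2 .
\]
Here $g(X)=(g(x_1),\dots,g(x_N))^\top$.

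The key observation is that every $g\in\mathcal{H}_{K_P}$ is orthogonal to $\mathcal{F}$ in $L_2(\mathcal{X},P)$. For the empirical $P$ this means $Fg(X)=0$, i.e.\ $g(X)$ lies in the orthogonal complement $V^\perp$ of the row space $V$ of $F$. I would check this directly: $K_P(x_i,\cdot)$ evaluated on the sample equals $(I-\Pi_P)$ applied in both arguments, so it lies in $V^\perp$, and the property passes to the closure because point evaluations are continuous in the RKHS.

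Next, decompose $y=Py+r$, where $Py=F^\top(FF^\top)^{-1}Fy$ is the projection onto $V$ and $r$ is the projection onto $V^\perp$. By Pythagoras,
\[
\|g(X)+F^\top c-y\|^2=\|g(X)-r\|^2+\|F^\top c-Py\|^2 .
\]
The two terms decouple. The second vanishes exactly when $c=(FF^\top)^{-1}Fy$, which is unique because $F^\top$ is injective. Minimizing in $c$ first therefore gives the equality of minimal values, and it shows that if $g^\ast$ solves the residual problem, then $f=g^\ast+[f_1,\dots,f_k](FF^\top)^{-1}Fy$ solves the conditional problem.

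For the forward direction, take an optimal $f^\ast=g+\sum_jc_jf_j$. Since $P$ is nondegenerate, $\Pi_P$ fixes $\mathcal{F}$ and kills $\mathcal{H}_{K_P}$, so $(I-\Pi_P)f^\ast=g$. Optimality of the pair $(g,c)$ for the decoupled objective forces $g$ to be optimal for the residual problem, since otherwise replacing $g$ alone would lower $J$.

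The main obstacle is the orthogonality claim $Fg(X)=0$ for all $g\in\mathcal{H}_{K_P}$, together with the decomposition $f=f_\parallel+f_\perp$ being exactly the one used in Theorem~\ref{cucker-smale-kind}. I would verify the orthogonality on the generators $K_P(x,\cdot)$ via $\Pi_P$ and then extend by density.
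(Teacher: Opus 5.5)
Your argument is correct and takes a different route from the paper. The paper does not use Theorem~\ref{cucker-smale-kind} here. Its steps are:
\begin{itemize}
\item invoke a representer theorem for CPD kernels to write $f^\ast=\sum_i\alpha_iK(x_i,\cdot)+\sum_j\beta_jf_j$ with $F\alpha=0$;
\item reduce to a finite-dimensional problem in $(\alpha,\beta)$ and eliminate $\beta=-(FF^\top)^{-1}F(\mathbf{K}\alpha-y)$;
\item observe that the compressed matrix $(I_N-F^\top(FF^\top)^{-1}F)\mathbf{K}(I_N-F^\top(FF^\top)^{-1}F)$ is the Gram matrix of $K_P$ on the sample;
\item drop the constraint $F\alpha=0$, since this matrix annihilates ${\rm row}(F)$;
\item match the result with KRR for $K_P$ via the ordinary representer theorem, and recover $f^\ast$ from $g^\ast$ by an explicit computation with $\beta$.
\end{itemize}
You work in function space instead. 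The identification $\mathcal{H}_K^\mathcal{F}=\mathcal{H}_{K_P}\oplus\mathcal{F}$, together with $g(X)\perp{\rm row}(F)$ for every $g\in\mathcal{H}_{K_P}$, turns the objective into an exact Pythagorean sum: a least-squares term in $c$ plus the residual KRR objective in $g$. Equality of the minima and both directions of the correspondence of minimizers then follow at once, including uniqueness of $c=(FF^\top)^{-1}Fy$. The ``linear regression, then KRR on residuals'' reading of Remark~\ref{phil-remark} is built into the proof. The paper's route, in exchange, rests only on three things: the definition of the native space via $\mathcal{L}$, a cited CPD representer theorem, and a Gram-matrix identity that holds by direct computation for the empirical measure. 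It also yields explicit kernel-matrix formulas for the solution.

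The one point you should tighten is exactly where your route gets its leverage: applying Theorem~\ref{cucker-smale-kind} to the finitely supported $P=\frac1N\sum_i\delta_{x_i}$. The paper's proof of that theorem expands $K_P$ in eigenfunctions of the integral operator on $L_2(\mathcal{X},P)$. For the empirical measure this operator has rank at most $N$. The expansion therefore reproduces $K_P$ only on the sample, and the identification $\mathcal{H}_{K_P}=\sqrt{{\rm O}^P_K}[L_2(\mathcal{X},P)]$ breaks down. So whether or not ``non-degenerate'' there is read as mere $\mathcal{F}$-nondegeneracy, that proof does not cover your case.

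The conclusion you need is still true and short to verify directly. Suppose $\sum_i\alpha_if_j(z_i)=0$ for all $j$. Then $\sum_i\alpha_i\Pi(z_i,\cdot)=0$, hence
\begin{itemize}
\item $\sum_i\alpha_iK_P(z_i,\cdot)=(I-\Pi_P)\sum_i\alpha_iK(z_i,\cdot)$, and
\item $\sum_{i,l}\alpha_i\alpha_lK_P(z_i,z_l)=\sum_{i,l}\alpha_i\alpha_lK(z_i,z_l)$.
\end{itemize}
So $h\mapsto(I-\Pi_P)h$ is an isometry from $\mathcal{L}$ into $\mathcal{H}_{K_P}$. Its range is dense because $K_P(x,\cdot)$ is the image of $K(x,\cdot)-\frac1N\sum_l\Pi(x,x_l)K(x_l,\cdot)\in\mathcal{L}$; this is where finiteness of the support of $P$ helps. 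Extending to the completion gives $\mathcal{H}_K^\mathcal{F}=\mathcal{H}_{K_P}\oplus\mathcal{F}$ with $\|f\|_{\mathcal{H}_K^\mathcal{F}}=\|(I-\Pi_P)f\|_{\mathcal{H}_{K_P}}$. After that, the rest of your proof goes through as written.

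The orthogonality $Fg(X)=0$, which you flag as the main obstacle, is actually the easy part. Just verify it on $K_P(x,\cdot)$ for every $x\in\mathcal{X}$, not only $x=x_i$.
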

\begin{remark}\label{phil-remark} The intuition behind this theorem is as follows. Suppose we are given a set of features $f_1, \cdots, f_k$. For a training set $\{(x_i, y_i)\}_{i=1}^N \subset \mathcal{X} \times \mathbb{R}$, we first solve a standard linear regression problem with the model
$$
Y=\beta_1 f_1(X)+...+\beta_k f_k(X)+\varepsilon,
$$
which amounts to projecting the target vector $y$ onto the row space of $F$. The remaining unexplained component of $y$ is the residual vector $r$. These residuals can then be predicted using KRR with the kernel $K_P$. The theorem shows that this two-step procedure is exactly equivalent to performing conditional KRR with the kernel $K$, which is CPD w.r.t. $\mathcal{F}$. Diagram~\ref{cond_krr} visualizes this scheme.
\end{remark}

\section{The $\mathcal{F}$-conditional learning and the cost of conditioning}\label{conditioning}
Suppose $P$ is a distribution whose support is $\mathcal{X}$. 
Throughout this section, we additionally assume that there exists $N_0\in\mathbb{N}$ such that for all $N \ge N_0$, if $X_1,\ldots,X_N$ are i.i.d. samples from $P$, then ${\rm rank}([f_i(X_j)]_{i=1}^k{}_{j=1}^N)=k$ almost surely. The residual kernel w.r.t. $\mathcal{F} = {\rm span}(f_1, \cdots, f_k)$, $K_P$, has a Mercer-type representation,
\begin{equation*}
\begin{split}
K_P(x,y) = \sum_{i=1}^\infty \lambda_i\phi_i(x)\phi_i(y),
\end{split}
\end{equation*}	
Each $\phi_i$ belongs to $L_2(\mathcal{X}, P) \cap \mathcal{F}^\perp$, where the orthogonality is taken w.r.t. the inner product in $L_2(\mathcal{X}, P)$. Let $f$ be a function from $\mathcal{H}^{\mathcal{F}}_K$ which, by Theorem~\ref{cucker-smale-kind}, can be written as $f= f_{\parallel} +f_\perp$, where
\begin{equation*}
\begin{split}
f_{\parallel} = \sum_{i=1}^k u_if_i, \,\,\,\,\,\,f_\perp = \sum_{i=1}^\infty v_i \sqrt{\lambda_i}\phi_i.
\end{split}
\end{equation*}	
By construction, $\|f\|_{\mathcal{H}^{\mathcal{F}}_K}^2 = \sum_{i=1}^\infty v_i^2$.

 Let $P_{X,Y}$ be a distribution on $\mathcal{X}\times {\mathbb R}$ defined by
$$
(X,Y)\sim P_{X,Y}\Leftrightarrow X\sim P, Y=f(X)+\tilde{\varepsilon},
$$
where $\tilde{\varepsilon}\sim \mathcal{N}(0,\sigma^2)$ is independent of $X$. 
Pairs of the training set $\mathcal{T} = \{(X_1, Y_1), \cdots, (X_N, Y_N)\}$ are generated independently from $P_{X,Y}$, i.e. $Y_i = f(X_i)+\varepsilon_i$.  To the latter training set one can relate another training set (called residual), $\mathcal{T}_{\rm res}=\{(X_i,Y^\perp_i)\}_{i=1}^N$, where $Y^\perp_i = f_\perp(X_i)+\varepsilon_i$. The corresponding distribution over input–output pairs is denoted by $P_{X,Y}^\perp$.  
Note that the noise term is included as part of the residual training set.

We now outline the idea of $\mathcal{F}$-conditional learning.  
Suppose that, prior to learning the target mapping $f$ from the dataset $\mathcal{T}$,  
the learner has full access to the component $f_\parallel$.  
The learner can then construct the residual dataset $\mathcal{T}_{\rm res}$  
by defining $Y_i^\perp = Y_i - f_\parallel(X_i)$.  
Next, KRR with the residual kernel $K_P$ is applied to $\mathcal{T}_{\rm res}$,  
yielding an estimator $h$ of the residual function $f_\perp$, i.e.
\begin{equation*}
\begin{split}
h = \arg\min_{g\in \mathcal{H}_{K_P}}  \frac{1}{N}\sum_{i=1}^N (g(X_i) - Y^\perp_i)^2+\uplambda\|g\|^2_{\mathcal{H}_{K_P}}.\\
\end{split}
\end{equation*} 
Suppose that $\hat{f}= \sum_{i=1}^\infty \hat{v}_i \sqrt{\lambda_i}\phi_i+\sum_{i=1}^k \hat{u}_i f_i $ is an argument at which~\eqref{conditional-KRR} attains its minimum, i.e.
$\hat{f}= \arg\min_{g\in \mathcal{H}_K^\mathcal{F}}  \frac{1}{N}\sum_{i=1}^N (g(X_i) - Y_i)^2 + \uplambda \|g\|_{\mathcal{H}_K^\mathcal{F}}^2$. 
For the trained function $\hat{f}$ one can define $\hat{f}_\perp =  \sum_{i=1}^\infty \hat{v}_i \sqrt{\lambda_i}\phi_i$ and $\hat{f}_\parallel = \sum_{i=1}^k \hat{u}_i f_i $. 
Due to Theorem~\ref{krr-cpd}, it is natural to expect that $\hat{f}_\perp\approx h$ and $\hat{f}_\parallel\approx f_{\parallel}$.
The discrepancy between $\hat f$, obtained without access to $f_\parallel$,  
and $f_\parallel+h$, produced by an $\mathcal{F}$-conditional learner,  
can be naturally interpreted as the cost of conditioning.

\begin{definition} The difference 
\begin{equation}\label{cost-conditioning}
\begin{split}
&c_{\rm con} = 
{\mathbb E}[(\hat{f}(X)-f_{\parallel}(X)-h(X))^2] = \\
&\|\hat{f}_\perp -h \|_{L_2(\mathcal{X}, P)}^2+\|\hat{f}_\parallel -f_\parallel \|_{L_2(\mathcal{X}, P)}^2
\end{split}
\end{equation}	
is referred to as the cost of conditioning. Note that $c_{\rm con}$ is a random variable, depending on $X_1, \cdots, X_N$ and the noise.
\end{definition}

\begin{theorem}\label{th-conditioning-bound} Suppose that $f_1, \cdots, f_k$ are orthogonal functions of unit norm in $L_2(\mathcal{X},P)$, $k\geq 1$. With probability  at least $1-\delta$ over randomness in $X_1, \cdots, X_N$, we have
\begin{equation*}
\begin{split}
&{\mathbb E}_\varepsilon [c_{\rm con}] \leq  \\
&c_1\|f\|^2_{\mathcal{H}^\mathcal{F}_K} C_{K_P}^2\max_{j:1\leq j\leq k}\|f_j\|_{L_\infty(\mathcal{X})}^2 \frac{k\log^{1/2}(\frac{2k}{\delta})}{N^{1/2}}+\frac{c_2\sigma^2}{N},
\end{split}
\end{equation*}	
where 
\begin{equation*}
\begin{split}
&C_{K_P}=\sqrt{\max_{x}K_P(x,x)},  \\
&c_1= 32\sqrt{2}\big(2+3\lambda_1( \frac{7C_{K_P}}{\uplambda} + \frac{343C_{K_P}^3}{\uplambda^2} )^2\big),\\
&c_2=\frac{9\lambda_1 C^2_{K_P}}{\uplambda^2}\big(\frac{C^2_{K_P}}{\uplambda}+1\big)^2+2k,
\end{split}
\end{equation*}	
and
provided that $N\geq \max \big( (\frac{28}{3}k\max\limits_{j:1\leq j\leq k}\|f_j\|^2_{L_\infty(\mathcal{X})}+\frac{4}{3}) \log(\frac{4k}{\delta}), k^2 \log (\frac{2k}{\delta}) \max\limits_{j:1\leq j\leq k}\|f_j\|^4_{L_\infty(\mathcal{X})}\big)$ and $N\geq N_0$.
\end{theorem}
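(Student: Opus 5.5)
The plan is to make both estimators explicit through Theorem~\ref{krr-cpd} and compare them in matrix form. Let $\hat P=\frac1N\sum_i\delta_{X_i}$ and let $F\in\mathbb{R}^{k\times N}$ be as in Theorem~\ref{krr-cpd}. Theorem~\ref{krr-cpd} expresses $\hat f$ as $g^\ast+\mathbf f(x)^\top(FF^\top)^{-1}FY$, where $\mathbf f=(f_1,\dots,f_k)^\top$ and $g^\ast$ is the KRR solution with the \emph{empirical} residual kernel $K_{\hat P}$ applied to the residual vector $r=(I-F^\top(FF^\top)^{-1}F)Y$. The conditional learner $h$ is KRR with the \emph{population} kernel $K_P$ applied to $Y^\perp=f_\perp(X)+\varepsilon$.

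\textbf{Step 1: the $\mathcal F$-component.} Write $Y=F^\top u+f_\perp(X)+\varepsilon$, so that
\[
\hat u-u=(FF^\top)^{-1}F\big(f_\perp(X)+\varepsilon\big).
\]
By orthonormality, $\|\hat f_\parallel-f_\parallel\|^2_{L_2(P)}=\|\hat u-u\|^2$. I will need the following ingredients.
\begin{itemize}
\item A matrix Bernstein bound on $\frac1N FF^\top-I$. Its summands are bounded by $k\max_j\|f_j\|_\infty^2$, and this is where the first condition on $N$ enters. It gives $\|(\frac1NFF^\top)^{-1}\|\le 2$ with probability $1-\delta/2$.
\item A Hoeffding/Bernstein bound on $\frac1N Ff_\perp(X)$, whose mean is $0$ since $f_\perp\perp\mathcal F$. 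Each coordinate is an average of terms bounded by $\|f_j\|_\infty\|f_\perp\|_\infty\le\|f_j\|_\infty C_{K_P}\|f\|_{\mathcal H_K^\mathcal F}$. A union bound over $j$ gives a term of order $\|f\|^2C_{K_P}^2\max_j\|f_j\|^2_\infty\, k\log(2k/\delta)/N$.
\item The noise part of $\mathbb E_\varepsilon\|\hat u-u\|^2$, which equals $\sigma^2\operatorname{tr}(FF^\top)^{-1}\le 2k\sigma^2/N$. This accounts for the $2k$ in $c_2$.
\end{itemize}

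\textbf{Step 2: the $\perp$-component, i.e.\ a kernel perturbation.} I write $\hat f_\perp-h=(g^\ast-h)$ plus a correction, since $g^\ast$ lives in $\mathcal H_{K_{\hat P}}$, not $\mathcal H_{K_P}$. I split $g^\ast-h$ into two pieces.
\begin{itemize}
\item \emph{Data mismatch:} KRR with kernel $K_{\hat P}$ on $r$ versus on $Y^\perp$. The residual satisfies $r=(I-\Pi_F)(f_\perp(X)+\varepsilon)$, so $r-Y^\perp=-\Pi_F(f_\perp(X)+\varepsilon)$. This is controlled by Step 1, since the KRR operator $K(\mathbf K+N\uplambda I)^{-1}$ has norm at most about $C/\uplambda$ in the relevant sense.
\item \emph{Kernel mismatch:} KRR with $K_{\hat P}$ versus $K_P$ on the same data.
\end{itemize}
For the kernel mismatch, expand
\[
K_{\hat P}-K_P=\big((I-\Pi_{\hat P})\otimes(I-\Pi_{\hat P})-(I-\Pi_P)\otimes(I-\Pi_P)\big)K .
\]
The difference $\Pi_{\hat P}-\Pi_P$ is governed by $\|(\frac1NFF^\top)^{-1}-I\|$ and by the empirical averages $\frac1N\sum_i f_j(X_i)K_P(X_i,\cdot)$, both of size $O\big(\sqrt{k\log(k/\delta)/N}\big)\max_j\|f_j\|_\infty$. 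Hence $\sup_{x,y}|K_{\hat P}-K_P|\lesssim C_{K_P}^2\max_j\|f_j\|_\infty^2\,k\sqrt{\log(2k/\delta)/N}$; the second condition on $N$ makes this perturbation at most of order one. Then apply the resolvent identity
\[
A(\mathbf A+N\uplambda)^{-1}-B(\mathbf B+N\uplambda)^{-1}=(A-B)(\mathbf A+N\uplambda)^{-1}+B(\mathbf B+N\uplambda)^{-1}(\mathbf B-\mathbf A)(\mathbf A+N\uplambda)^{-1},
\]
applied to the signal $f_\perp(X)$ (with $\|f_\perp(X)\|\le\sqrt N C_{K_P}\|f\|$) and to the noise (its $\mathbb E_\varepsilon$ gives the $\sigma^2/N$-type term with the $\lambda_1C_{K_P}^2/\uplambda^2$ prefactor in $c_2$). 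This produces the polynomial factors in $C_{K_P}/\uplambda$ appearing in $c_1$. The square $(\ldots)^2$ in $c_1$ indicates that errors are squared, so the leading kernel term of order $N^{-1/2}$ in sup norm becomes order $N^{-1/2}$ after squaring only if I bound one factor crudely; I will accept the $N^{-1/2}$ rate as stated.

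\textbf{Main obstacle.} The main obstacle is Step 2: $\hat f_\perp$ and $h$ sit in different RKHSs, and the perturbation $K_{\hat P}-K_P$ must be bounded in operator norm uniformly. I would first try to express both estimators as $\sum_i\alpha_iK_\bullet(X_i,\cdot)$ and bound $\|\hat\alpha-\alpha\|$ and the kernel difference separately in $L_\infty$, which dominates $L_2(P)$. Finally, I combine Steps 1 and 2 via $(a+b)^2\le2a^2+2b^2$ and a union bound over the two events of probability $\delta/2$.
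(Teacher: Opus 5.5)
Your argument takes a genuinely different route from the paper's, and in substance it works.

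\textbf{The paper's route.} It never compares $K_{\hat P}$ with $K_P$ pointwise. Instead:
\begin{itemize}
\item It expands everything in the Mercer basis of the population kernel $K_P$. Using the exact identity $(I_N-\Pi_F)K(\mathcal D_N,\mathcal D_N)(I_N-\Pi_F)=\Psi^\top\Lambda\Psi$ with $\Psi=\Phi(I_N-\Pi_F)$ (Corollary~\ref{project-N}, Lemma~\ref{eigen-N}), it writes the coefficient vectors of $\hat f_\perp$ and $h$ as $t(N^{-1/2}\Lambda^{1/2}\Psi)v+\dots$ and $t(N^{-1/2}\Lambda^{1/2}\Phi)v+\dots$, where $t(A)=A(A^\top A+\uplambda I_N)^{-1}A^\top$.
\item Lemma~\ref{lipshits} then reduces the signal part to $\|N^{-1/2}\Lambda^{1/2}\Phi\Pi_F\|$. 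This is controlled by $\frac1N\Lambda^{1/2}\Phi F^\top$ (Lemma~\ref{proj-conc-bound}) together with matrix Bernstein.
\item The $\mathcal F$-part goes through the analogous operator $T_f$.
\item The $\perp$-error is measured in $\mathcal H_{K_P}$ and converted to $L_2(\mathcal X,P)$ via $\sqrt{\lambda_1}$. This is where $\lambda_1$ enters $c_1$ and $c_2$.
\end{itemize}

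\textbf{Your route.} You bound $\hat f-f_\parallel-h=(g^\ast-h)+\mathbf f^\top\big((FF^\top)^{-1}FY-u\big)$ in sup norm. You use scalar Hoeffding for the regression part, and $\eta=\sup_{x,y}|K_{\hat P}(x,y)-K_P(x,y)|$ plus the resolvent identity for $g^\ast-h$. This gives $\sup_x|(g^\ast-h)(x)|\le\eta\,C_{K_P}\|f\|_{\mathcal H_K^{\mathcal F}}(\uplambda^{-1}+C_{K_P}^2\uplambda^{-2})$ on the signal and $O(\sigma^2\eta^2/N)$ on the noise.

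This is more elementary. Suppose you concentrate $\frac1N\sum_i f_j(X_i)K_P(X_i,\cdot)$ in $\mathcal H_{K_P}$-norm at rate $N^{-1/2}$, using a Hilbert-space Hoeffding bound (e.g.\ McDiarmid applied to the norm rather than to its square). Then it yields $O(k^2\log(k/\delta)/N)$, i.e.\ a $1/N$ rate. The stated $N^{-1/2}$ form follows by spending $N\ge k^2\log(2k/\delta)\max_j\|f_j\|^4_{L_\infty(\mathcal X)}$ on one factor of $\eta$.

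This gain comes from the concentration step, not from the sup-norm route. Lemma~\ref{proj-conc-bound} applies McDiarmid to the squared norm and so gets only $N^{-1/4}$ for the norm, which is exactly why the paper's bound is $N^{-1/2}$. With the same fix, the operator route would give $O(k\log(k/\delta)/N)$. That is better in $k$ than your estimate, which pays $\sup_x\|\mathbf f(x)\|\le\sqrt k\max_j\|f_j\|_{L_\infty(\mathcal X)}$ where the paper only needs $\|F\|\le\sqrt{3N/2}$.

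\textbf{Three things need adjusting.}

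First, your $\hat u=(FF^\top)^{-1}FY$ is the regression coefficient of Theorem~\ref{krr-cpd}, not the coefficient of $\hat f_\parallel$. Since $g^\ast$ is orthogonal to $\mathcal F$ only in $L_2(\hat P)$, the $\mathcal H_{K_P}\oplus\mathcal F$ split of $\hat f$ has $\mathcal F$-coefficient $(FF^\top)^{-1}FY+\langle\mathbf f,g^\ast\rangle_{L_2(\mathcal X,P)}$. So $\|\hat f_\parallel-f_\parallel\|_{L_2(\mathcal X,P)}=\|\hat u-u\|$ is false as written. Work only with the exact decomposition of $\hat f-f_\parallel-h$ above, and the ``correction'' in Step~2 disappears.

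Second, your data-mismatch term is identically zero, since $K_{\hat P}(x,\mathcal D_N)F^\top=0$ and $K_{\hat P}(\mathcal D_N,\mathcal D_N)F^\top=0$. Also, to obtain a sup-norm bound involving only $C_{K_P}$ and mean-zero averages, expand $K_{\hat P}=\big((I-\Pi_{\hat P})\otimes(I-\Pi_{\hat P})\big)K_P$ rather than around $K$. This is valid because $(I-\Pi_{\hat P})(I-\Pi_P)=I-\Pi_{\hat P}$.

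Third, contrary to what you claim, this route does not reproduce the displayed constants.
\begin{itemize}
\item Sup-norm control puts $C_{K_P}^2$ where $c_1$ and $c_2$ carry $\lambda_1$; no $\lambda_1C_{K_P}^2/\uplambda^2$ noise term arises.
\item The split $(a+b)^2\le 2a^2+2b^2$ doubles the $2k$. For $f\in\mathcal F$ and $\uplambda\to\infty$ your bound is about $4k\sigma^2/N$, while the stated right-hand side tends to $2k\sigma^2/N$.
\end{itemize}
What you would establish is the same inequality with different explicit constants (and a better rate), not the version with $c_1,c_2$ as displayed.
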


For fixed $\uplambda \ne 0$, the second term behaves as $\mathcal{O}(\frac{\sigma^2(k+1)}{N})$,  
which matches the decay rate of the expected loss in linear regression with $k$ features.  
When the signal part of the output lies entirely in $\mathcal{F}$, i.e. $f_\perp=0$ and  
$\|f\|_{\mathcal{H}^\mathcal{F}_K}=0$, the first term in the inequality vanishes. In this case, conditional KRR yields $\hat{f}$ such that ${\mathbb E}_\varepsilon [\|\hat{f}_\perp -h \|_{L_2(\mathcal{X}, P)}^2]=\mathcal{O}(\frac{\sigma^2(k+1)}{N})$, ${\mathbb E}_\varepsilon[\|\hat{f}_\parallel -f_\parallel \|_{L_2(\mathcal{X}, P)}^2]=\mathcal{O}(\frac{\sigma^2(k+1)}{N})$. That is, $\hat{f}_\parallel$ recovers the signal $f$ with the accuracy of linear regression,  
while $\hat{f}_\perp$ is $\mathcal{O}(\frac{\sigma^2(k+1)}{N})$-close to $h$, the output of KRR with residual kernel $K_P$.
In other words, noise can make a substantial contribution  
(beyond $\mathcal{O}(\frac{\sigma^2(k+1)}{N})$) only to the component orthogonal to $\mathcal{F}$,  
and hence orthogonal to the signal $f$. 
Unlike linear regression, however, $\mathcal{F}$-conditional learning is capable of memorizing the noise in the training set.  
This effect may be described as {\em weak benign overfitting}. Moreover, if the eigenvalues of the residual kernel $K_P$ decay as  
$\lambda_i \sim \frac{1}{i \log^\alpha i}$ with $\alpha>1$, then $h \to 0$ as $N\to\infty$.  
In this regime, the learner exhibits partial memorization of the training set without degrading the error loss,  
a phenomenon known simply as {\em benign overfitting}~\cite{NEURIPS2022_08342dc6}. 

Finally, toy experiments reported in Section~\ref{experiments} suggest that  
the cost of conditioning typically decays as $\sim \frac{1}{N}$, even when $f \notin \mathcal{F}$.  
In contrast, our theoretical bound permits an additional contribution arising from a nontrivial component $f_\perp$, which may decay only at the slower rate $\sim \frac{1}{\sqrt{N}}$. In our experiments, such slower behavior was observed only for artificial kernels, such as $K(x,y)=\sum_{i=0}^{300}\cos(i(x-y))$.

\section{Applications of Theorem~\ref{th-conditioning-bound}}\label{thresholdings}
\subsection{$\mathcal{F}$-conditioning with $k$ principal eigenfunctions: hard thresholding}\label{hard-thresholding}
Suppose the initial kernel $K$ is positive definite, i.e.
\begin{equation*}
\begin{split}
K(x,y) = \sum_{i=1}^\infty \lambda_i\phi_i(x)\phi_i(y),
\end{split}
\end{equation*}
where $\{\lambda_i\}$ are strictly positive eigenvalues and  
$\{\phi_i\}$ are the corresponding eigenfunctions of the integral operator $\phi\to \int_{\mathcal{X}} K(\cdot,x)\phi(x)dP(x)$ acting on $L_2(\mathcal{X},P)$. 
Let us treat $K$ as a CPD kernel w.r.t. $\varPhi_k = {\rm span}(\{\phi_1, \cdots, \phi_k\})$ and study the task~\eqref{conditional-KRR}. Thus, the set of unpenalized features coincides with first $k$ eigenfunctions of $K$. Then, the residual kernel w.r.t. to $\varPhi_k$ is simply the tail part of $K$, i.e.
\begin{equation*}
\begin{split}
K_P(x,y) = \sum_{i=k+1}^\infty \lambda_i\phi_i(x)\phi_i(y).
\end{split}
\end{equation*}
Following the formalism of the previous section, let us now assume that the regression function has the form:
\begin{equation*}
\begin{split}
f =\sum_{i=1}^k u_i\phi_i.
\end{split}
\end{equation*}
As shown in the previous section, with probability at least $1-\delta$ over the randomness in the inputs, conditional KRR with a CPD kernel $K$ (w.r.t. $\varPhi_k$) and a regularization parameter $\uplambda>0$ can be interpreted as standard KRR with the residual kernel $K_P$ applied to the residual dataset $\{(X_i,\varepsilon_i)\}_{i=1}^N$ (which now consists solely of noise, since $f\in \varPhi_k$).  The only difference is the presence of an additional conditioning cost, bounded by $\mathcal{O}(\frac{\sigma^2(k+1)}{N})$, which contributes to the test error (noting that, by construction, $\|f\|_{\mathcal{H}_K^\mathcal{F}}=0$).

Let $\varkappa>0$  be such that $\sum_{i=1}^\infty \frac{\lambda_i}{\lambda_i+\varkappa}+\frac{\uplambda}{\varkappa}=N$, and let
\begin{equation*}
\begin{split}
\mathcal{E} = \mathcal{E}_{\rm noise}(\sum_{i=1}^\infty (1-\mathcal{L}_i)^2u_i^2+\sigma^2),
\end{split}
\end{equation*}
where $\mathcal{L}_i = \frac{\lambda_i}{\lambda_i+\varkappa}$ denotes the learnability of the mode $\phi_i$,  
and $\mathcal{E}_{\rm noise} = \frac{N}{N-\sum_{i=1}^\infty \mathcal{L}_i^2}$ is the overfitting coefficient. 
According to~\cite{simon2023the}, the expected error of  KRR with the kernel $K$ approximately equals $\mathcal{E}$.
Analogously, let $\varkappa'>0$  be such that $\sum_{i=k+1}^\infty \frac{\lambda_i}{\lambda_i+\varkappa'}+\frac{\uplambda}{\varkappa'}=N$ and $\mathcal{L}'_i = \frac{\lambda_i}{\lambda_i+\varkappa'}$, $\mathcal{E}'_{\rm noise} = \frac{N}{N-\sum_{i=k+1}^\infty (\mathcal{L}'_i) ^2}$. Then, the output of KRR with the residual kernel $K_P$ has the expected error of approximately $\mathcal{E}' = \mathcal{E}'_{\rm noise}\sigma^2$.
To estimate the expected error of conditional KRR with the CPD kernel $K$ (w.r.t.~$\varPhi_k$), 
the loss $\mathcal{E}'$ must be augmented by the conditioning cost 
${\mathbb E}[c_{\rm con}] = \mathcal{O}(\frac{\sigma^2(k+1)}{N})$. 
Therefore, in order for the expected error of conditional KRR to be smaller than that of 
standard KRR (i.e., KRR with the PD kernel $K$ and no unpenalized features), the following 
condition must hold
\begin{equation*}
\begin{split}
&0>{\mathbb E}[c_{\rm con}]+\mathcal{E}'-\mathcal{E} = 
\mathcal{E}'_{\rm noise}\sigma^2-\\
&-\mathcal{E}_{\rm noise}(\sum_{i=1}^k (1-\mathcal{L}_i)^2u_i^2+\sigma^2)+\mathcal{O}(\frac{\sigma^2(k+1)}{N}),
\end{split}
\end{equation*}
or, equivalently,
\begin{equation}\label{unpenalize-condition}
\begin{split}
\sum_{i=1}^k \frac{\varkappa^2 u_i^2}{(\lambda_i+\varkappa)^2}> \sigma^2(\frac{\mathcal{E}_{\rm noise}'}{\mathcal{E}_{\rm noise}}-1)+\mathcal{O}(\frac{\sigma^2(k+1)}{N \mathcal{E}_{\rm noise}}).
\end{split}
\end{equation}
Note that the right-hand side of this inequality, as well as the coefficients 
$\frac{\varkappa^2}{(\lambda_i+\varkappa)^2}$ on the left-hand side, do not depend on the target 
function $f$. Hence, the inequality provides a sufficient condition on the coefficients of $f$ in 
the basis $\{\phi_i\}_{i=1}^k$ ensuring that conditional KRR outperforms standard KRR without 
unpenalized features (equivalently, that the expected test error is a U-shaped function of $k$). Our experiments confirm that the test error is often non-monotonic in $k$ (the number of 
unpenalized principal components) when the signal $f$ is sufficiently strong. In contrast, for 
pure-noise datasets ($f=0$), the test MSE consistently increased with $k$ across all experiments. 
The corresponding experimental results are presented in Section~\ref{experiments}.

\subsection{$\mathcal{F}$-conditioning with $k$ random Gaussian features: soft thresholding}\label{soft-thresholding}
In what follows, we show that choosing 
$\mathcal{F} = \varPhi_k = {\rm span}\{\phi_1, \ldots, \phi_k\}$ 
is closely related to defining $\mathcal{F}$ as $k$ random Gaussian features with the covariance function $K$. Recall that the kernel admits the Mercer decomposition $K(x, y) = \sum_{j=1}^\infty \lambda_j \phi_j(x) \phi_j(y)$ where $\{\phi_j\}_{j=1}^\infty \subset L_2(\mathcal{X}, P)$ forms an orthonormal system and the 
eigenvalues $\{\lambda_j\}$ are positive and decreasing. Let 
$\{f(\omega,x)\}_{x\in \mathcal{X}}$ denote a centered Gaussian random field with covariance 
function $K$. Using the Karhunen-Lo{\'e}ve representation of $f(\omega, x) \in L_2(\mathcal{X}, P)$, we have
$$
f(\omega, x) = \sum_{j=1}^\infty \sqrt{\lambda_j} \xi_j(\omega) \phi_j(x),
$$
where $\{\xi_j(\omega)\}_{j=1}^\infty \sim^{\rm iid} \mathcal{N}(0,1)$.

Let us assume that $g_i(x) = f(\omega_i, x)$ for i.i.d. samples $\omega_1, \dots, \omega_k$ and denote $\boldsymbol{\omega} = (\omega_1, \dots, \omega_k)$.
Thus, we have
$$g_i(x) = \sum_{j=1}^\infty \sqrt{\lambda_j} \xi_{ij} \phi_j(x),$$
where $\{\xi_{ij}\}_{i=1}^k{}_{j=1}^\infty \sim^{\rm iid} \mathcal{N}(0,1)$. We now define $\mathcal{G}_k = {\rm span}(g_1, \cdots, g_k)$ and consider conditional KRR with the CPD kernel $K$ w.r.t. $\mathcal{G}_k$.

Let $K_P^{\boldsymbol{\omega}}$ be a residual kernel w.r.t. $\mathcal{G}_k$. Let us define
$$
M_{\ell,m} = \sum_{i,j=1}^k \xi_{i\ell} (G^{-1})_{ij} \xi_{jm}.
$$
where $G \in \mathbb{R}^{k \times k}$ is the Gram matrix
with $G_{ij} = \langle g_i, g_j \rangle_{L_2(\mathcal{X},P)} = \sum_{\ell=1}^\infty \lambda_\ell \xi_{i\ell} \xi_{j\ell}$.
The kernel of the projection operator onto $\mathcal{G}_k$ in $L_2(\mathcal{X}, P)$ is
\begin{equation*}
\begin{split}
&\Pi_k(x, y) = \\
&\sum_{i,j=1}^k \big( \sum_{\ell =1}^\infty \sqrt{\lambda_\ell} \xi_{i\ell} \phi_\ell(x) \big) (G^{-1})_{ij} \big( \sum_{m=1}^\infty \sqrt{\lambda_m} \xi_{jm} \phi_m(y) \big).
\end{split}
\end{equation*}	
After grouping terms we have
\begin{equation*}
\begin{split}
&\Pi_k(x, y) = \\
&\sum_{\ell ,m=1}^\infty \sqrt{\lambda_\ell \lambda_m} \big( \sum_{i,j=1}^k \xi_{i\ell} (G^{-1})_{ij} \xi_{jm} \big) \phi_\ell(x) \phi_m(y) =\\
&\sum_{\ell ,m=1}^\infty \sqrt{\lambda_\ell \lambda_m} M_{\ell, m} \phi_\ell(x) \phi_m(y) .
\end{split}
\end{equation*}	
Since $K_P^{\boldsymbol{\omega}} = (I-\Pi_k)\otimes (I-\Pi_k)[K]$, the dependence of $K_P^{\boldsymbol{\omega}}$ on $\boldsymbol{\omega}$ is encapsulated in coefficients $M_{\ell,m}$. Diagonal elements of the random residual kernel can be given as
\begin{equation*}
\begin{split}
&\langle \phi_i, {\mathbb E}_{Y\sim P}K_P^{\boldsymbol{\omega}}(\cdot, Y)\phi_i(Y)\rangle_{L_2(\mathcal{X},P)}=\\
&\lambda_i(1-2\lambda_i M_{i,i}+\sum_{j=1}^\infty \lambda_j^2 M_{i,j}^2).
\end{split}
\end{equation*}
\begin{remark}\label{simon-remark} In a slightly different context, the coefficients $M_{\ell,m}$ were analyzed in Appendix~C.3.1 of~\cite{pmlr-v119-jacot20a} (see also Subsection~I.7 of~\cite{simon2023the}), under the assumption that $k$ is large, corresponding to the so-called ``thermodynamic limit''. We have
$G = \sum_{i=1}^\infty\lambda_{i}\xi_i \xi_i^\top$, where $\xi_i\sim \mathcal{N}({\mathbf 0}, I_k)$ are generated independently. 
Let $G_{-i}=\sum\limits_{j:j\neq i}\lambda_{j}\xi_j\xi_j^\top$, that is $G = \lambda_{i}\xi_i\xi_i^\top + G_{-i}$. 
Then the Sherman-Morrison formula gives
$$
G^{-1}=G_{-i}^{-1}-\frac{\lambda_{i}G_{-i}^{-1}\xi_i\xi_i^\top G_{-i}^{-1}}{1+\lambda_{i}\xi_i^\top G_{-i}^{-1}\xi_i},
$$
and, therefore,
\begin{equation*}
\begin{split}
&M_{i,j} =\xi_i^\top G^{-1}\xi_j = \\
&\xi_i^\top G_{-i}^{-1} \xi_j - \frac{\lambda_i \xi_i^\top G_{-i}^{-1} \xi_i \cdot \xi_i^\top G_{-i}^{-1} \xi_j}{1 + \lambda_i \xi_i^\top G_{-i}^{-1} \xi_i} = \frac{\xi_i^\top G_{-i}^{-1} \xi_j}{1 + \lambda_i \xi_i^\top G_{-i}^{-1} \xi_i}.
\end{split}
\end{equation*}
As shown in~\cite{simon2023the}, the quantity $\xi_i^\top G_{-i}^{-1}\xi_i$ concentrates sharply around its mean as $k \to \infty$, and moreover ${\mathbb E}[\xi_i^\top G_{-i}^{-1}\xi_i]\approx {\mathbb E}[\xi_j^\top G_{-j}^{-1}\xi_j]$. The off-diagonal coefficients, i.e. $M_{i,j}, i\ne j$, concentrate sharply around zero. 
To analyze the effect of $k$ unpenalized random Gaussian features, it suffices to study the structure of ${\mathbb E}_{\boldsymbol{\omega}}[K^{\boldsymbol{\omega}}_P(x,y)]$, which is the subject of the next theorem.
\end{remark}

\begin{theorem}\label{eigenvalue-push-down} The expectation of $K_P^{\boldsymbol{\omega}}$ over randomness in $\boldsymbol{\omega}=(\omega_1, \cdots, \omega_k)$, i.e. ${\mathbb E}_{\boldsymbol{\omega}}[K^{\boldsymbol{\omega}}_P(x,y)]$, is a Mercer kernel that is equal to
$$
\sum_{i=1}^\infty \mu_i \phi_i(x)\phi_i(y),
$$
where
$\mu_i=\lambda_i(1-2\lambda_i \cdot \mathbb{E}[M_{i,i}]+\sum_{j=1}^\infty \lambda_j^2 \cdot \mathbb{E}[M_{i,j}^2])$.
\end{theorem}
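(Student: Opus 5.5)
We will compute $K_P^{\boldsymbol{\omega}}$ explicitly in the basis $\{\phi_\ell\}$, take expectations coefficientwise, and then show that every off-diagonal expected coefficient vanishes by a sign-flip symmetry of the Gaussian variables $\xi_{ij}$.

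\textbf{Step 1: the residual kernel in the eigenbasis.} Write $K=\sum_\ell \lambda_\ell\phi_\ell\otimes\phi_\ell$ and $\Pi_k=\sum_{\ell,m}\sqrt{\lambda_\ell\lambda_m}M_{\ell,m}\phi_\ell\otimes\phi_m$. Using $K_P^{\boldsymbol{\omega}}=(I-\Pi_k)\otimes(I-\Pi_k)[K]$ and the orthonormality of $\{\phi_j\}$ in $L_2(\mathcal{X},P)$, we have $\Pi_k\phi_j=\sum_\ell\sqrt{\lambda_\ell\lambda_j}M_{\ell,j}\phi_\ell$. Expanding gives
\[
K_P^{\boldsymbol{\omega}}(x,y)=\sum_{\ell,m} c_{\ell m}\,\phi_\ell(x)\phi_m(y),
\]
\[
c_{\ell m}=\lambda_\ell\delta_{\ell m}-\lambda_\ell^{3/2}\lambda_m^{1/2}M_{\ell,m}-\lambda_m^{3/2}\lambda_\ell^{1/2}M_{\ell,m}+\sqrt{\lambda_\ell\lambda_m}\sum_j\lambda_j^2M_{\ell,j}M_{j,m}.
\]
Here $M$ is symmetric, since $G^{-1}$ is. On the diagonal $\ell=m=i$ this reproduces the formula stated before the theorem,
\[
c_{ii}=\lambda_i\Big(1-2\lambda_iM_{i,i}+\sum_j\lambda_j^2M_{i,j}^2\Big).
\]

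\textbf{Step 2: vanishing of off-diagonal expectations.} Fix an index $\ell$ and flip the sign of the column $\xi_{\cdot\ell}$, i.e.\ replace $\xi_{i\ell}\mapsto-\xi_{i\ell}$ for all $i$. This map preserves the Gaussian law, and it leaves $G=\sum_j\lambda_j\xi_{\cdot j}\xi_{\cdot j}^\top$ unchanged. Hence $M_{a,b}\mapsto \epsilon_a\epsilon_b M_{a,b}$, with $\epsilon_\ell=-1$ and $\epsilon_a=1$ for $a\neq\ell$.

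For $\ell\ne m$:
\begin{itemize}
\item the terms $M_{\ell,m}$ change sign under the flip;
\item each product $M_{\ell,j}M_{j,m}$ picks up the factor $\epsilon_\ell\epsilon_j^2\epsilon_m=-1$.
\end{itemize}
So $c_{\ell m}\mapsto -c_{\ell m}$ in distribution, which forces $\mathbb{E}[c_{\ell m}]=0$. Thus $\mathbb{E}_{\boldsymbol{\omega}}K_P^{\boldsymbol{\omega}}=\sum_i\mu_i\phi_i\otimes\phi_i$ with $\mu_i=\mathbb{E}[c_{ii}]$, which is exactly the claimed formula.

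\textbf{Step 3: integrability and the Mercer property (main obstacle).} We must justify that the expectations exist, and that expectation may be interchanged with the infinite double sums. I would use operator bounds rather than moment bounds on $G^{-1}$. Since $\Pi_k$ is an orthogonal projection, $0\le \sqrt{\lambda_\ell\lambda_m}$-weighted quantities are controlled:
\begin{itemize}
\item $\lambda_\ell M_{\ell,\ell}=\langle\phi_\ell,\Pi_k\phi_\ell\rangle\in[0,1]$;
\item $\sum_j\lambda_j\lambda_\ell M_{\ell,j}^2=\|\Pi_k\phi_\ell\|^2\le1$.
\end{itemize}
Hence $\lambda_\ell\sum_j\lambda_j^2M_{\ell,j}^2\le\lambda_1$, and $|c_{\ell m}|$ is bounded by deterministic constants. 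Almost surely $G$ is invertible, because the $g_i$ are a.s.\ linearly independent (infinitely many $\lambda_j>0$).

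In operator form, $K_P^{\boldsymbol{\omega}}$ is the kernel of $(I-\Pi_k)T_K(I-\Pi_k)$, where $T_K$ is the integral operator of $K$. This operator is positive and trace class, with trace at most $\operatorname{tr}T_K=\sum\lambda_i$. Its expectation, taken as a Bochner integral in trace-class operators, is therefore positive with summable eigenvalues $\mu_i\ge0$. Dominated convergence justifies the termwise expectations.

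The remaining delicate point is pointwise identification of the kernel, i.e.\ the Mercer statement. For this I would use continuity of $K_P^{\boldsymbol{\omega}}$ together with
\[
\mathbb{E}\,K_P^{\boldsymbol{\omega}}(x,x)\le K(x,x)+\dots,
\]
which is bounded uniformly by expressing it through $\Pi_k$ and Cauchy--Schwarz. This yields a continuous PD expected kernel, and Mercer's theorem then gives uniform convergence of its expansion.
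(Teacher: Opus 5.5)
Your Steps~1--2 are essentially the paper's proof. The paper handles $\mathbb{E}[\Pi_k]$ and the double-projection term $C_{i,n}=\sum_j\lambda_j^2M_{i,j}M_{j,n}$ separately, and kills their off-diagonal expectations with the same single-column sign flip (Lemmas~\ref{off-diagonal} and~\ref{off-diagonal2}). You instead apply the flip once to the full coefficient $c_{\ell m}$. Your deterministic bounds $0\le\lambda_\ell M_{\ell,\ell}\le 1$ and $\|\Pi_k\phi_\ell\|\le 1$ also supply the integrability the paper takes for granted.

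Step~3 goes beyond the paper, which interchanges expectations with infinite sums without comment and never verifies the Mercer claim. Your trace-class argument is sound. It gives $\mathbb{E}_{\boldsymbol{\omega}}K_P^{\boldsymbol{\omega}}=\sum_i\mu_i\phi_i\otimes\phi_i$ in $L_2(P\otimes P)$, with $\mu_i\ge 0$ and $\sum_i\mu_i\le\sum_i\lambda_i$.

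The pointwise part, however, does not work as sketched, for two reasons.
\begin{itemize}
\item Continuity of $K_P^{\boldsymbol{\omega}}$ requires a continuous version of the Gaussian field, and continuity of $K$ does not guarantee one.
\item The natural estimate is $K_P^{\boldsymbol{\omega}}(x,x)\le\bigl(\sqrt{K(x,x)}+\sqrt{\lambda_1\Pi_k(x,x)}\bigr)^2$, where $\Pi_k(x,x)=\mathbf{g}(x)^\top G^{-1}\mathbf{g}(x)$ and $\mathbf{g}=(g_1,\dots,g_k)^\top$. This quantity is not bounded uniformly in $\boldsymbol{\omega}$; already for $k=1$ it equals $g_1(x)^2/\|g_1\|^2_{L_2(\mathcal{X},P)}$. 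So Cauchy--Schwarz alone cannot control $\mathbb{E}K_P^{\boldsymbol{\omega}}(x,x)$.
\end{itemize}
Here you genuinely need a moment bound on $G^{-1}$, contrary to your plan. One route is the following.
\begin{itemize}
\item Let $v_i=(\xi_{1i},\dots,\xi_{ki})^\top$ and $G_{-i}=G-\lambda_iv_iv_i^\top$. Since $G\succeq G_{-i}$, you get $M_{i,i}\le v_i^\top G_{-i}^{-1}v_i$.
\item Moreover $G_{-i}\succeq\lambda_{k+3}W$, where $W$ is a $k\times k$ Wishart matrix with $k+2$ degrees of freedom, independent of $v_i$. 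Hence $\sup_i\mathbb{E}[M_{i,i}]\le k/\lambda_{k+3}$.
\item Since $\sum_j\lambda_j^2v_jv_j^\top\preceq\lambda_1G$, you have $\sum_j\lambda_j^2M_{i,j}^2\le\lambda_1M_{i,i}$. This yields $\mu_i\le(1+\lambda_1k/\lambda_{k+3})\lambda_i$.
\end{itemize}
Consequently $\sum_i\mu_i\phi_i(x)\phi_i(y)$ is dominated termwise by the Mercer series of $K$, and it converges uniformly to a continuous positive definite kernel. Bounds of this type (e.g.\ $\mathbb{E}\|G^{-1}\|<\infty$) are also what make the pointwise interchange of $\mathbb{E}_{\boldsymbol{\omega}}$ with the expansion rigorous.
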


To analyze the behavior of $\frac{\mu_i}{\lambda_i}$, we need to estimate the quantity $1-2\lambda_i  \mathbb{E}[M_{i,i}]+\sum_{j=1}^\infty \lambda_j^2  \mathbb{E}[M_{i,j}^2]$, which again, turns out to be tractable in the thermodynamic limit.
In Appendix~\ref{thermodynamic} we provide a non-rigorous argument (supported by experiments) showing that, as $k \to \infty$, the following approximation holds:
\begin{equation}\label{formula-kappa}
\begin{split}
\frac{\mu_i}{\lambda_i}\approx  \frac{c\varkappa^2}{(\lambda_i+\varkappa)^2},
\end{split}
\end{equation}
where $\varkappa>0$ satisfies $\sum_{i=1}^\infty \frac{\lambda_i}{\lambda_i+\varkappa} = k$.
The interpretation of the latter estimate is straightforward: when $\lambda_i \gg \varkappa$, the $i$-th mode is strongly suppressed in the residual kernel, while for $\lambda_i \ll \varkappa$, the corresponding eigenvalue is amplified by some factor $c$. Our numerical experiments (see Figure~\ref{ratio-estimates} in Appendix~\ref{thermodynamic}) confirm that this behavior persists even for finite $k$. This shows that defining $\mathcal{F}$ via $k$ random Gaussian features has a similar effect to choosing $\mathcal{F}$ as the top $k$ eigenfunctions: in both cases the residual kernel $K^{\boldsymbol{\omega}}_P(x,y)$ resembles a truncated kernel, but with the suppression of large eigenvalues applied in a soft manner. For this reason, it is natural to refer to these two approaches as {\em soft thresholding} and {\em hard thresholding}, respectively.

We expect that the theoretical prediction of a U-shaped dependence of the expected error on $k$ should also hold for soft thresholding, just as it does for hard thresholding (under conditions analogous to formula~\eqref{unpenalize-condition}).  
Our experiments, reported in Section~\ref{experiments}, confirm that the non-monotone dependence of the expected test error on $k$ commonly arises in this setting as well.

\begin{figure*}[t]
\centering
\begin{minipage}[t]{.19\textwidth}\centering
  \includegraphics[width=\linewidth]{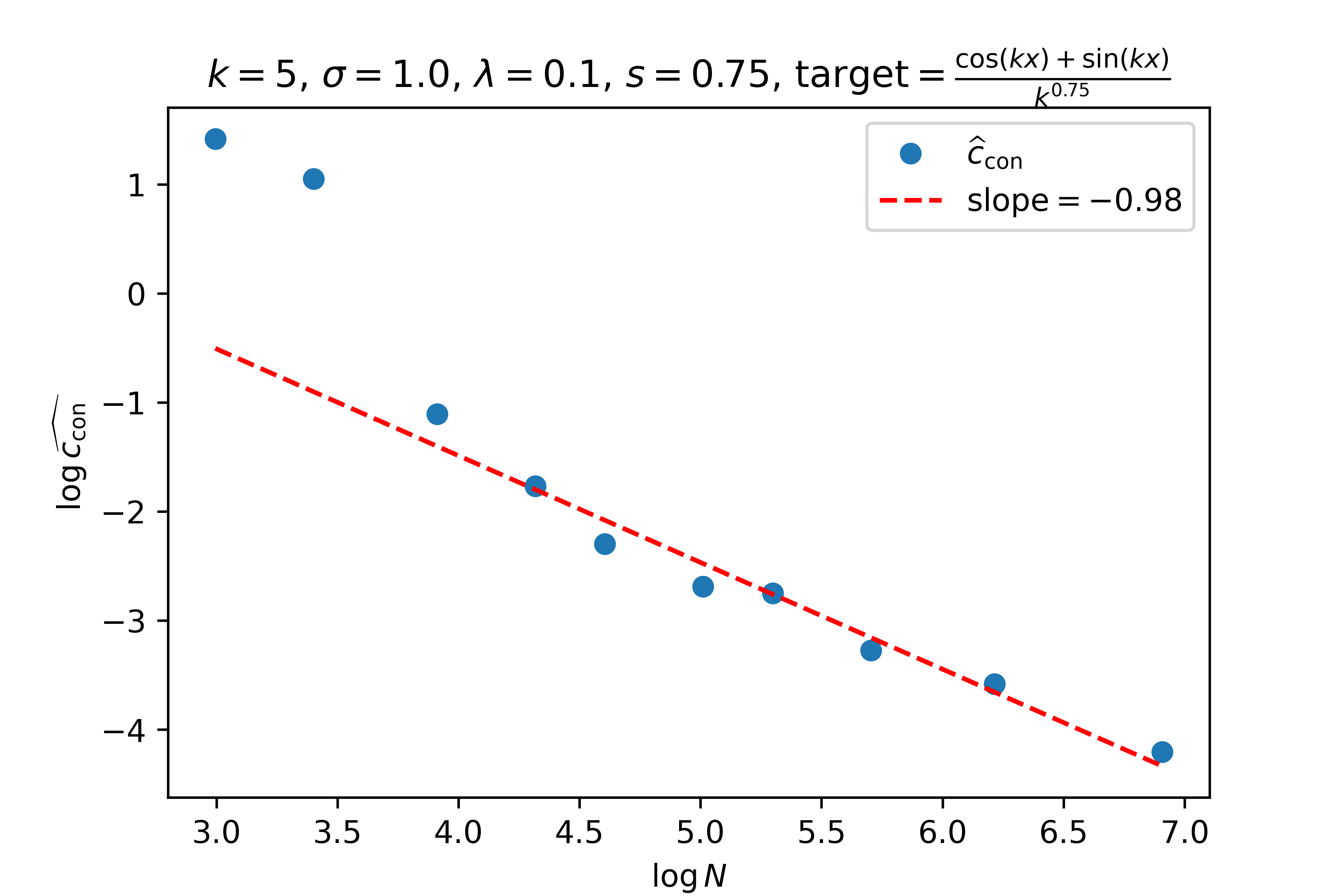}
\end{minipage}\hfill%
\begin{minipage}[t]{.19\textwidth}\centering
  \includegraphics[width=\linewidth]{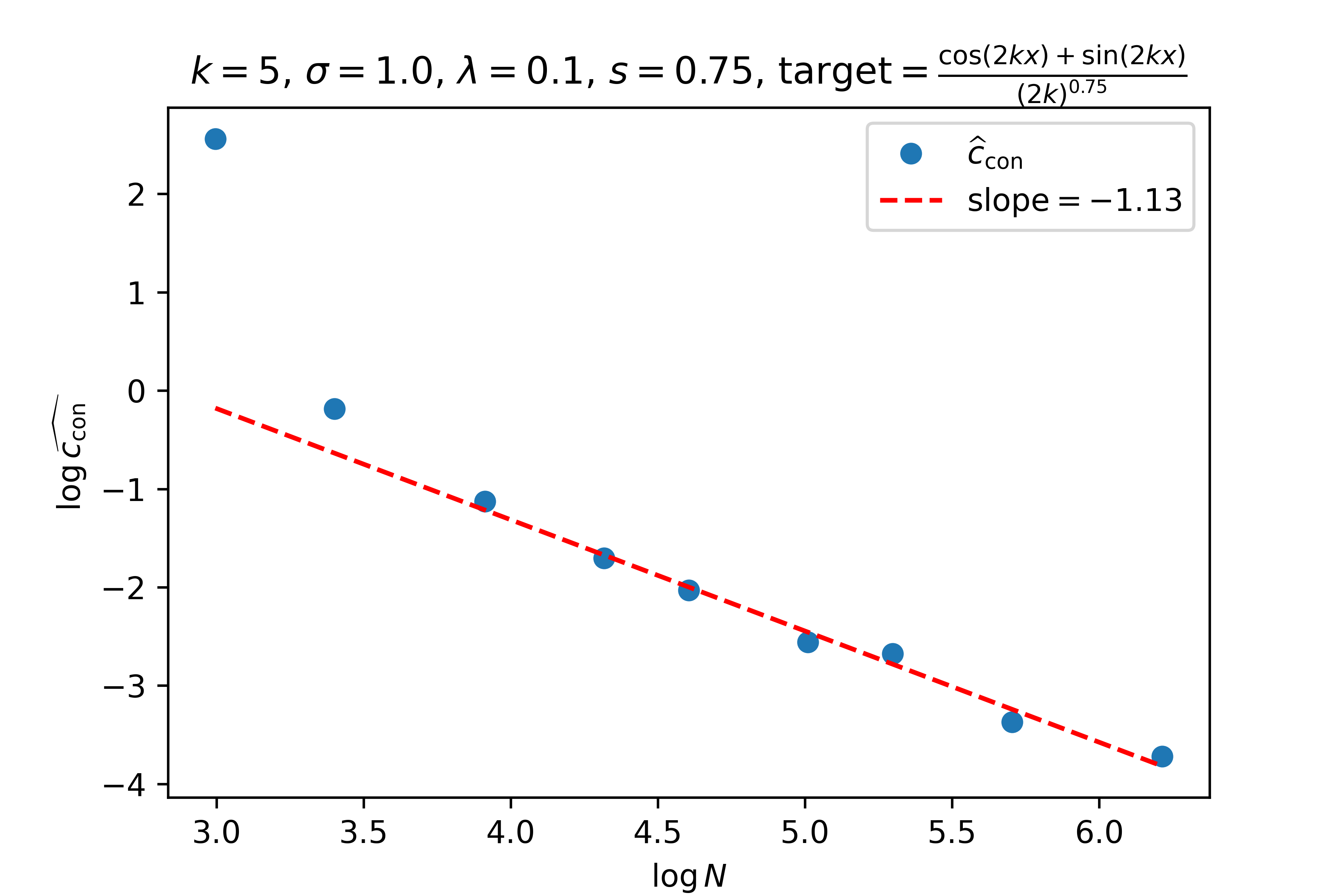}
\end{minipage}\hfill%
\begin{minipage}[t]{.19\textwidth}\centering
  \includegraphics[width=\linewidth]{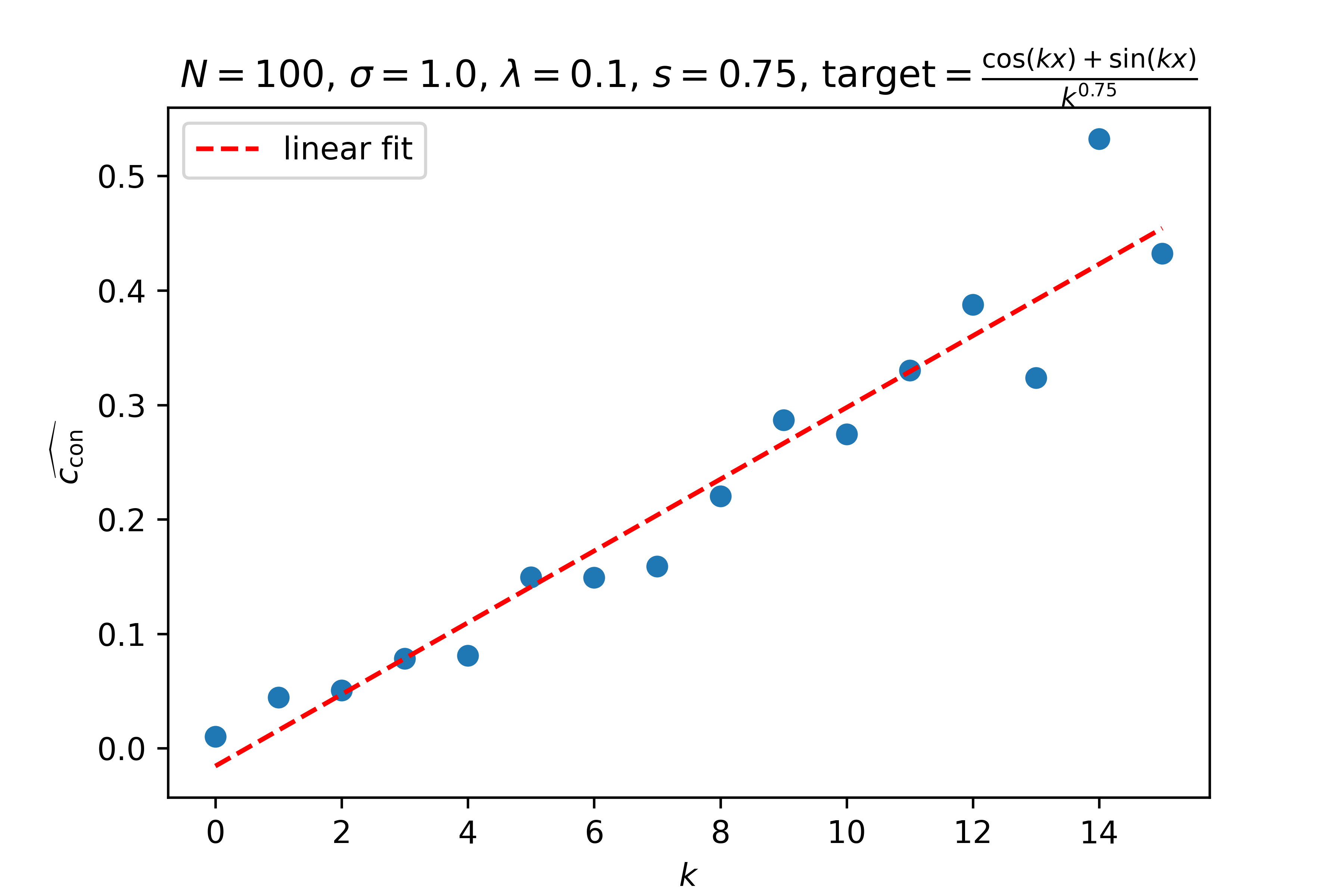}
\end{minipage}\hfill%
\begin{minipage}[t]{.19\textwidth}\centering
  \includegraphics[width=\linewidth]{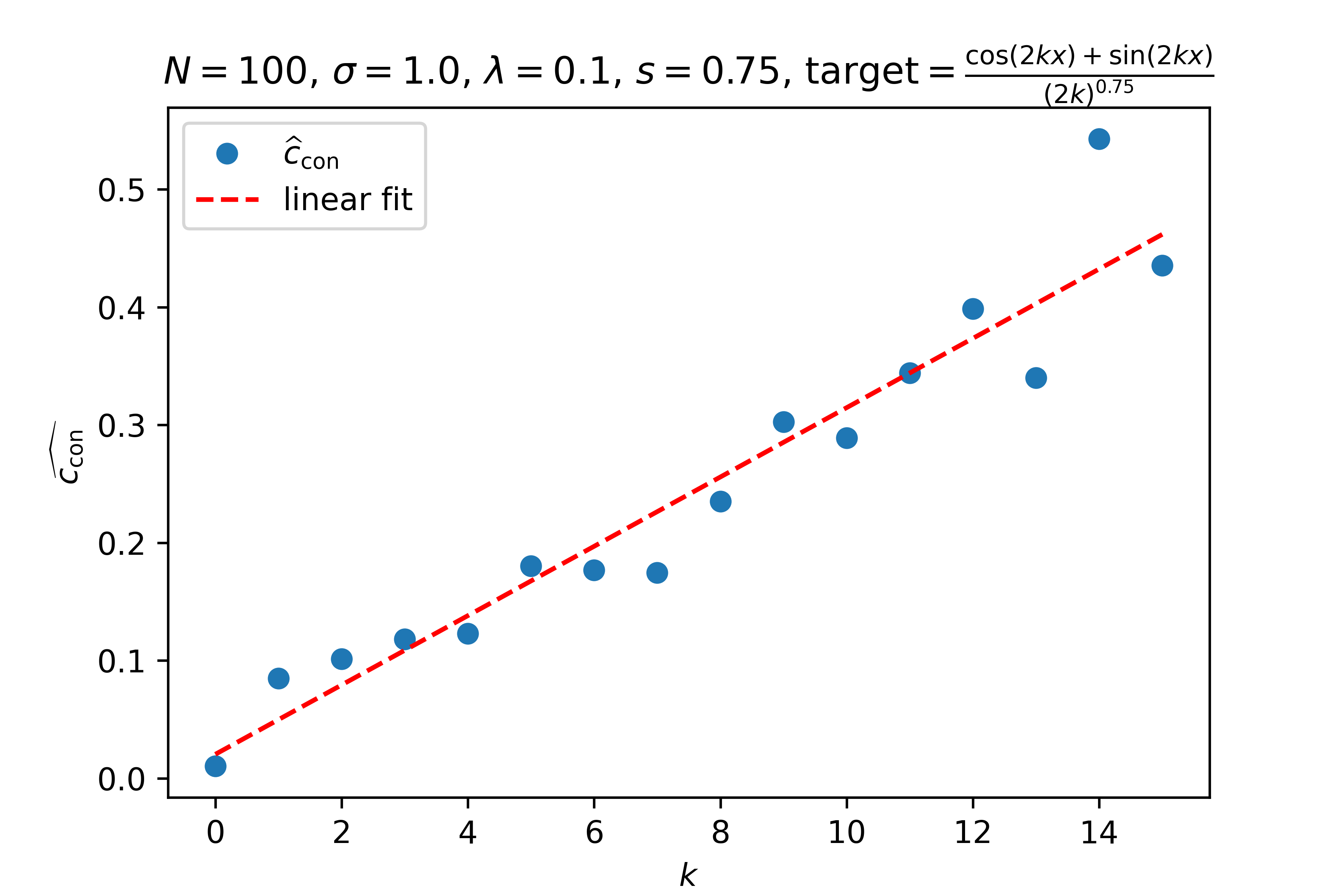}
\end{minipage}\hfill%
\begin{minipage}[t]{.19\textwidth}\centering
  \includegraphics[width=\linewidth]{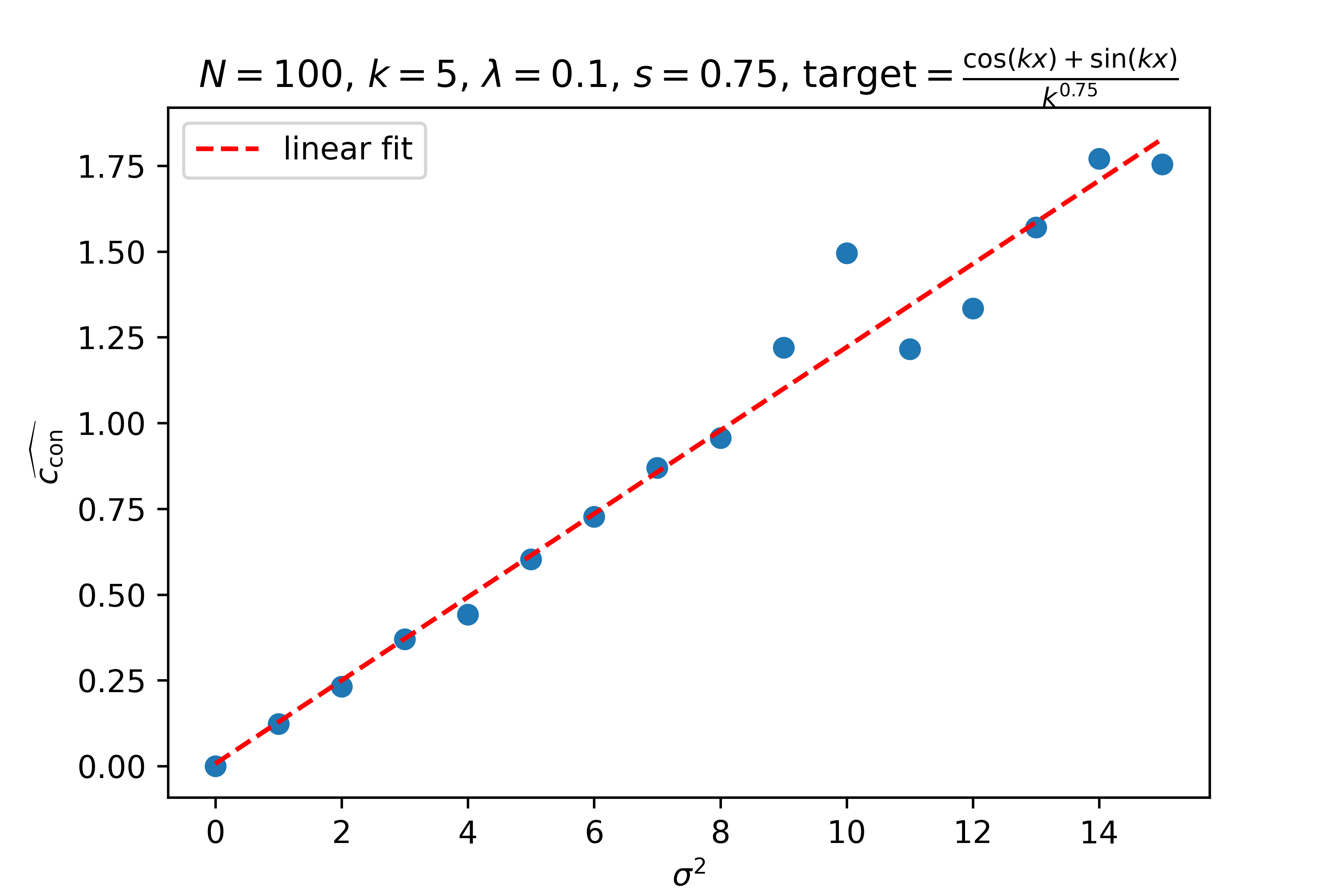}
\end{minipage}
\caption{Dependence of the cost of conditioning on $N$, $k$ and $\sigma^2$ in the hard thresholding setting.}
\label{hard_thresholding}
\end{figure*}

\begin{figure*}[t]
\centering
\begin{minipage}[t]{.24\textwidth}\centering
  \includegraphics[width=\linewidth]{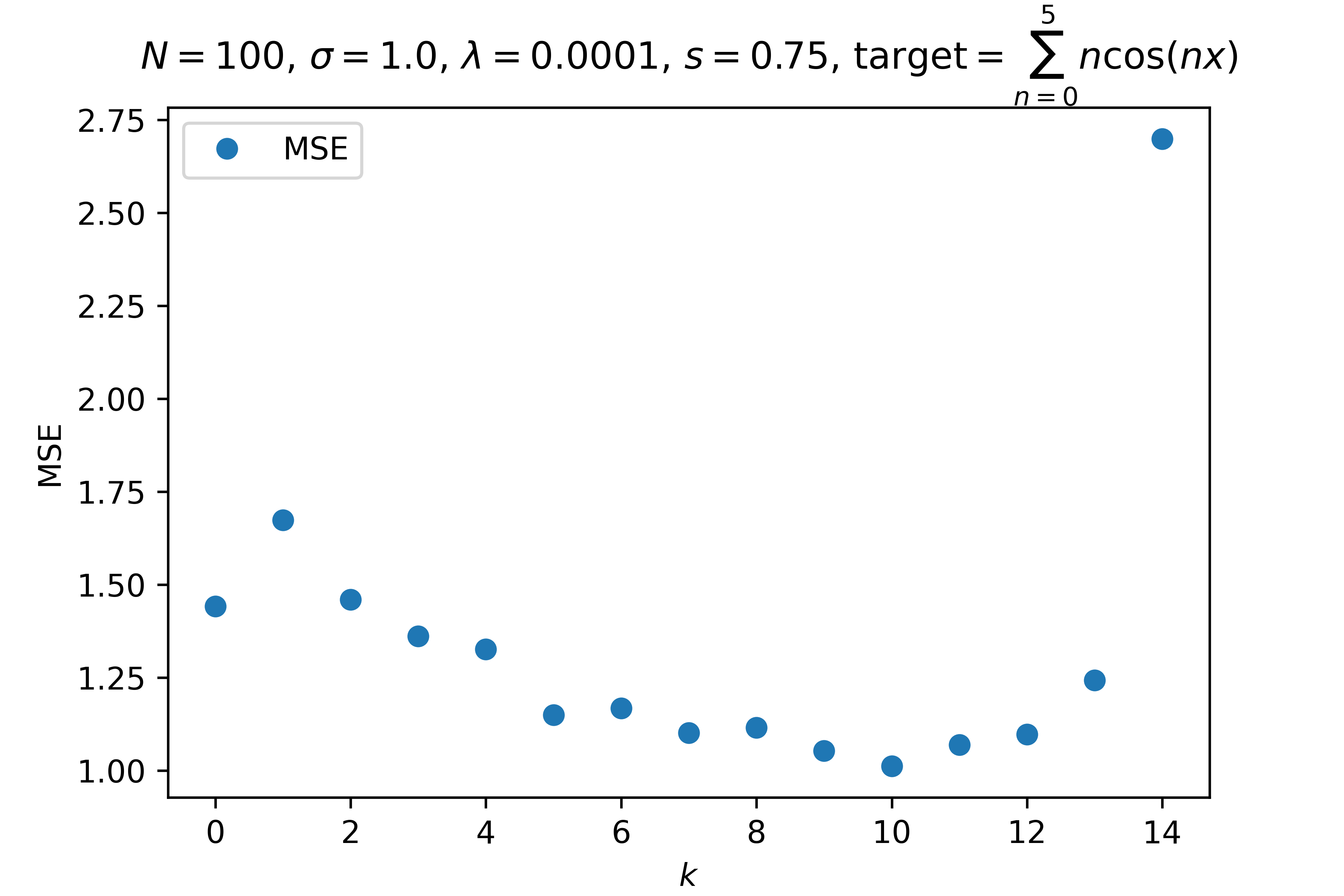}
\end{minipage}\hfill%
\begin{minipage}[t]{.24\textwidth}\centering
  \includegraphics[width=\linewidth]{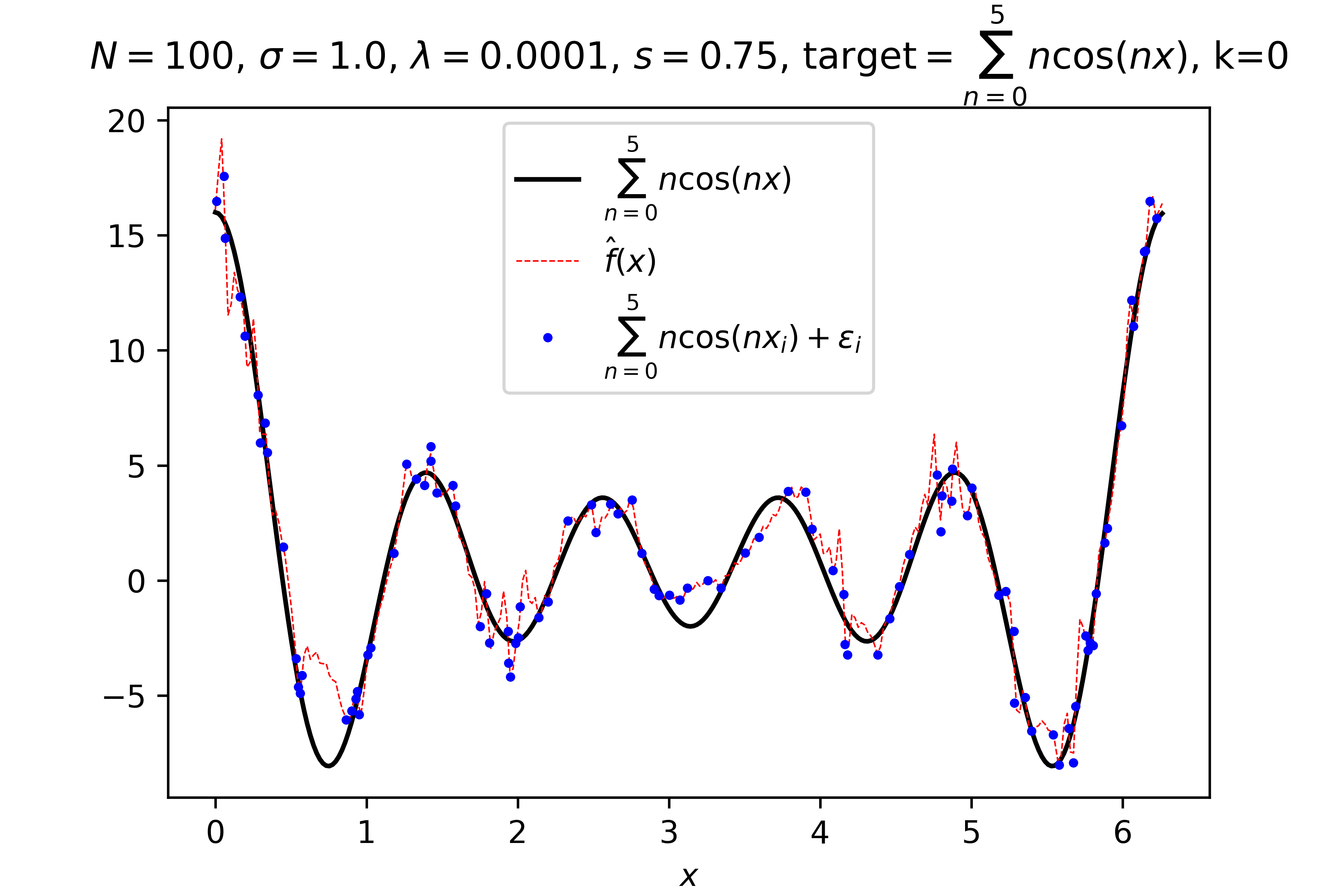}
\end{minipage}\hfill%
\begin{minipage}[t]{.24\textwidth}\centering
  \includegraphics[width=\linewidth]{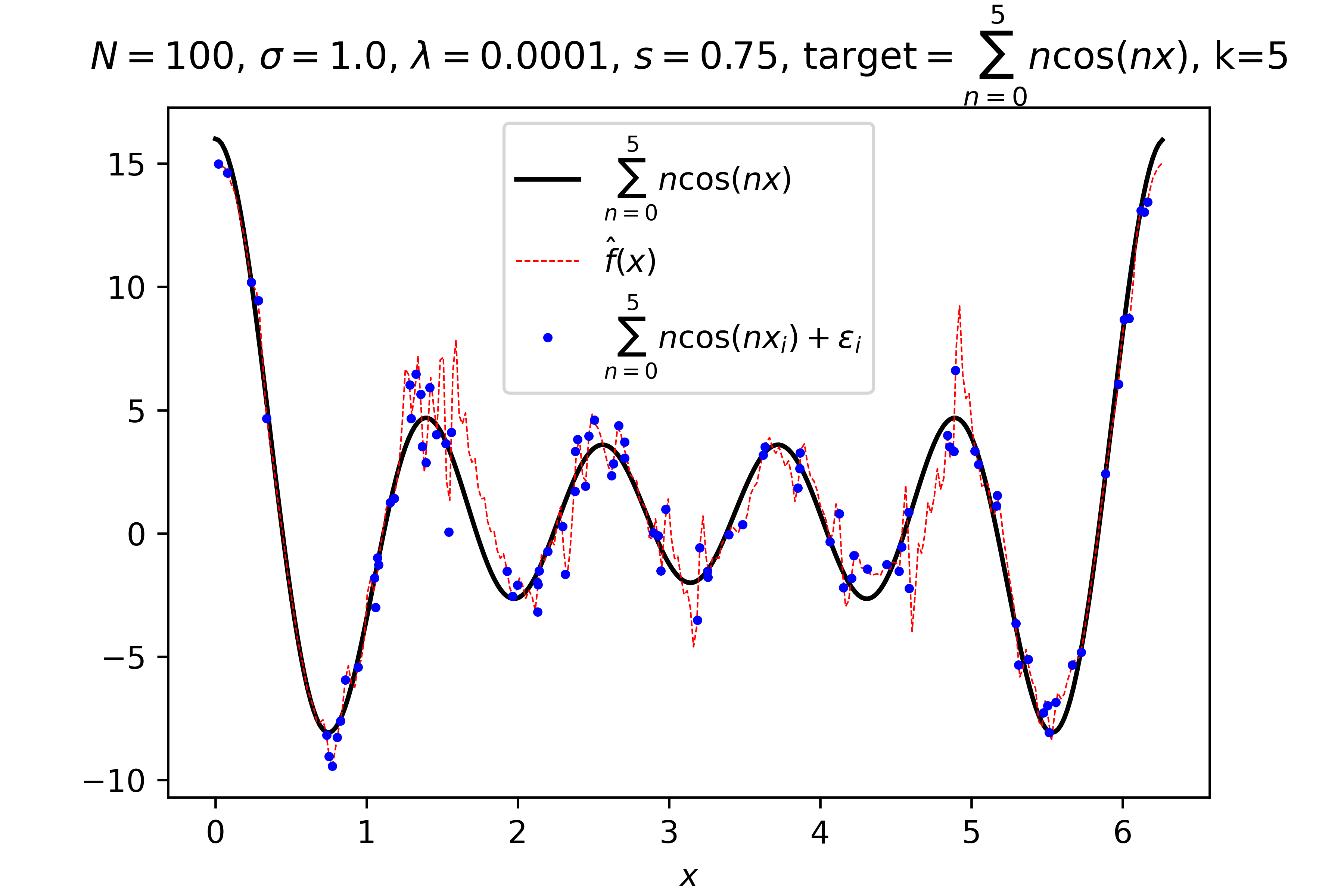}
\end{minipage}\hfill%
\begin{minipage}[t]{.24\textwidth}\centering
  \includegraphics[width=\linewidth]{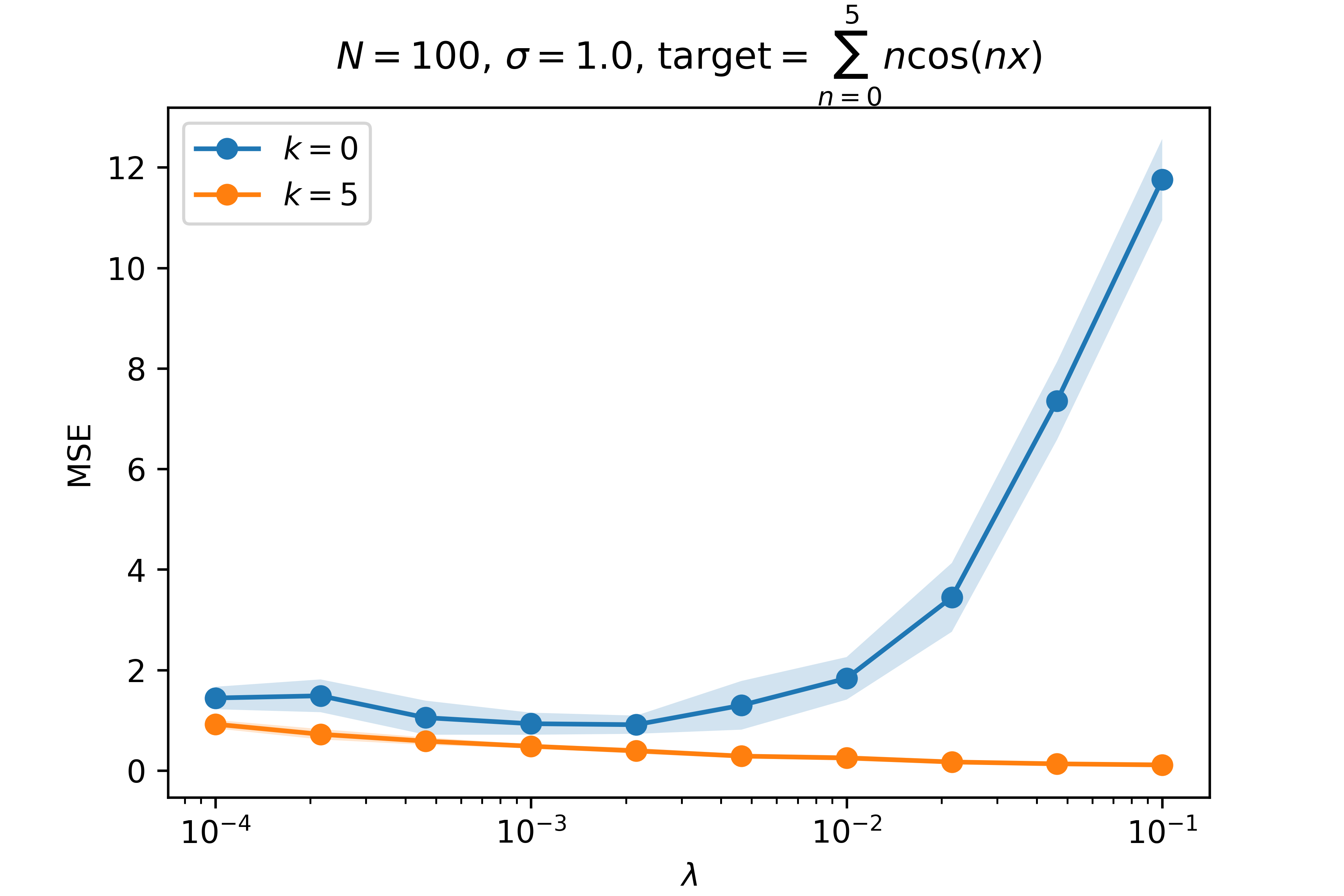}
\end{minipage}
\caption{Comparison of test MSE for Conditional KRR with $k=0$ (standard KRR) and $k=5$. The last plot compares test MSEs across a range of regularization parameters $\uplambda$.}
\label{additional_hard_thresholding}
\end{figure*}

{\bf $\mathcal{F}$-conditioning with $k$ random features}.\label{rfm-conditional}
Let us assume that $K$ is a Mercer kernel that is given through the random features mapping $f: \Omega\times \mathcal{X}\to {\mathbb R}$ and $(\Omega, \Sigma, \mathcal{P})$ is a probabilistic space, that is
$K(x,y) ={\mathbb E}_{\omega\sim \mathcal{P}}[f(\omega, x) f(\omega, y)]$. 
Since $K$ is a positive definite kernel, it is in particular CPD w.r.t. any subspace $\mathcal{F} = {\rm span}(f_1, \dots, f_k)$. 
Given a dataset $\{(x_i, y_i)\}_{i=1}^N \subset \mathcal{X} \times \mathbb{R}$, we consider the conditional KRR problem w.r.t. $\mathcal{F}$, namely the optimization task~\eqref{conditional-KRR}.

We conducted experiments with $\mathcal{F} = \mathcal{R}_k$, where $\mathcal{R}_k = \operatorname{span}\{g_1, \dots, g_k\}$ and $g_i(x) = f(\omega_i, x)$ for i.i.d. samples $\omega_1, \dots, \omega_k \sim \mathcal{P}$. 
When $\{f(\omega, x)\}_{x\in \mathcal{X}}$ is a Gaussian random field, this setup coincides with the soft thresholding framework. Hence, it can be seen as a generalization of soft thresholding to the case of non-Gaussian features. Prior work~\cite{10.1214/17-AAP1328,10.1214/21-EJP699} has shown that general random feature models behave similarly to the Gaussian case, and thus we expect the U-shaped dependence of the expected risk on $k$ to be a generic phenomenon here as well. 
This hypothesis is verified experimentally in the next section and Appendix~\ref{exp:soft}.

\begin{figure*}[t]
\centering
\begin{minipage}[t]{.32\textwidth}\centering
  \includegraphics[width=\linewidth]{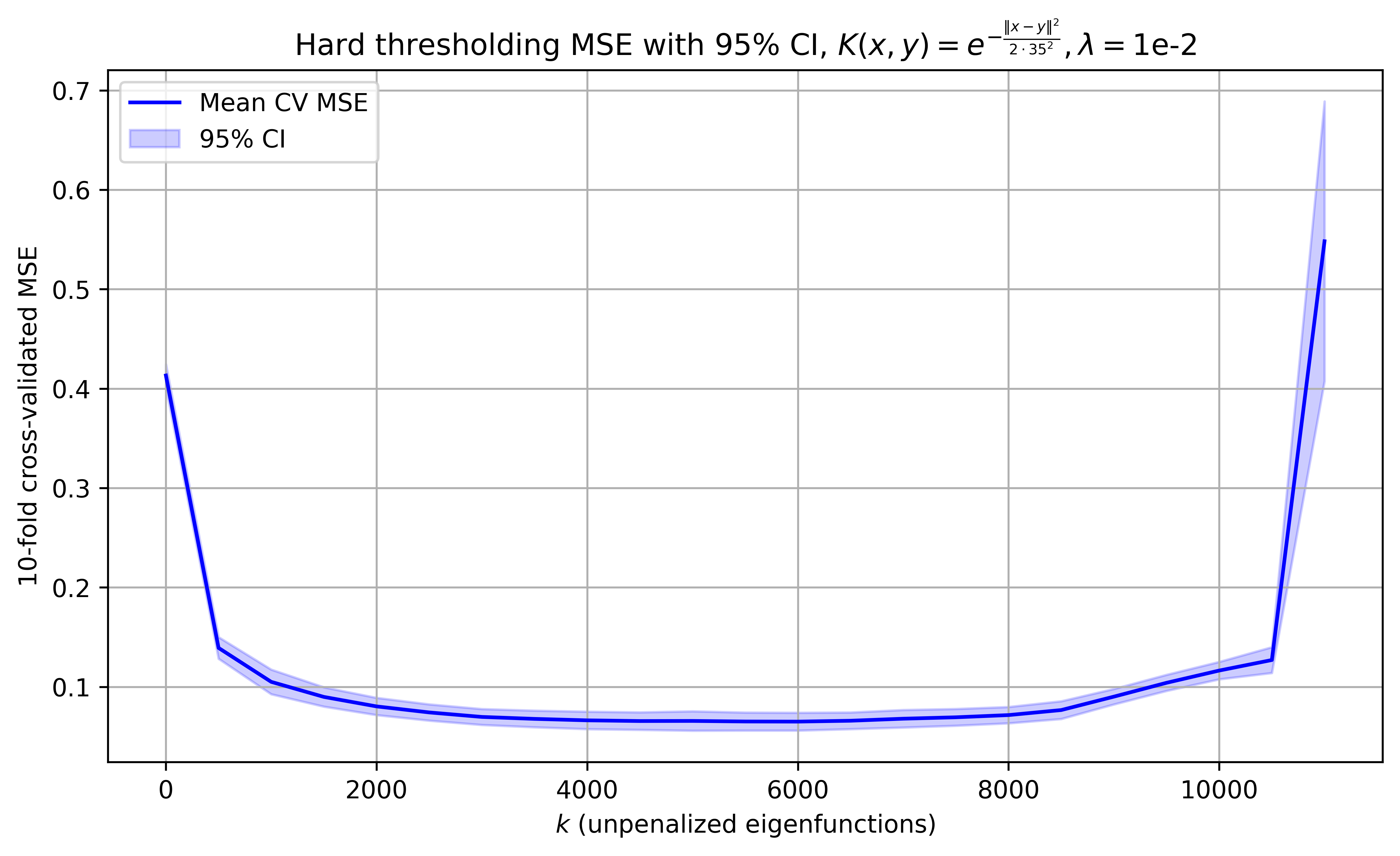}
\end{minipage}\hfill%
\begin{minipage}[t]{.32\textwidth}\centering
  \includegraphics[width=\linewidth]{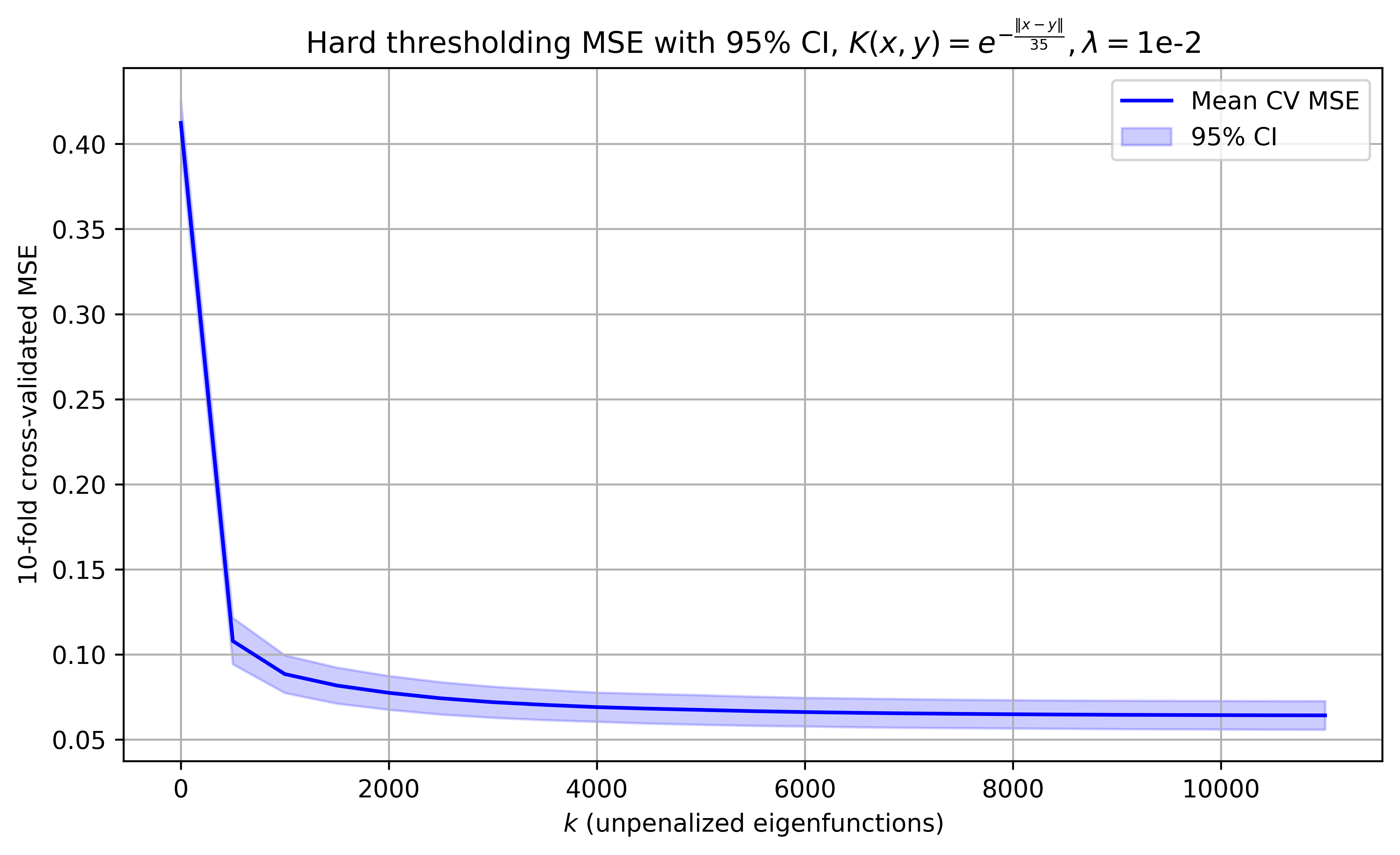}
\end{minipage}\hfill%
\begin{minipage}[t]{.32\textwidth}\centering
  \includegraphics[width=\linewidth]{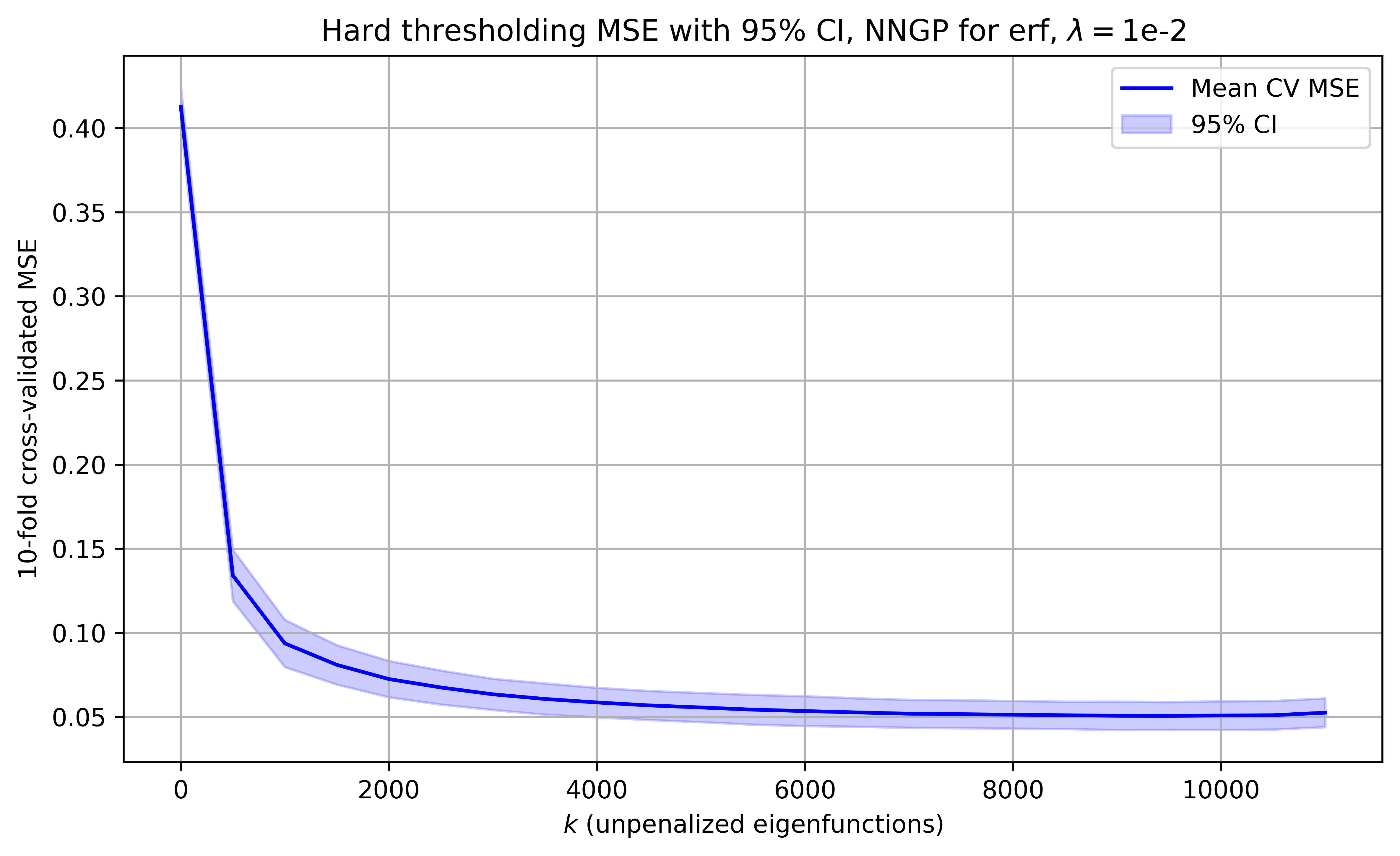}
\end{minipage}

\caption{U-shaped Test MSE in the hard thresholding setup for the 7-vs-9 MNIST dataset (with standardization).}
\label{cv_hard_thresholding}
\end{figure*}
\begin{figure*}[t]
\centering
\begin{minipage}[t]{.32\textwidth}\centering
  \includegraphics[width=\linewidth]{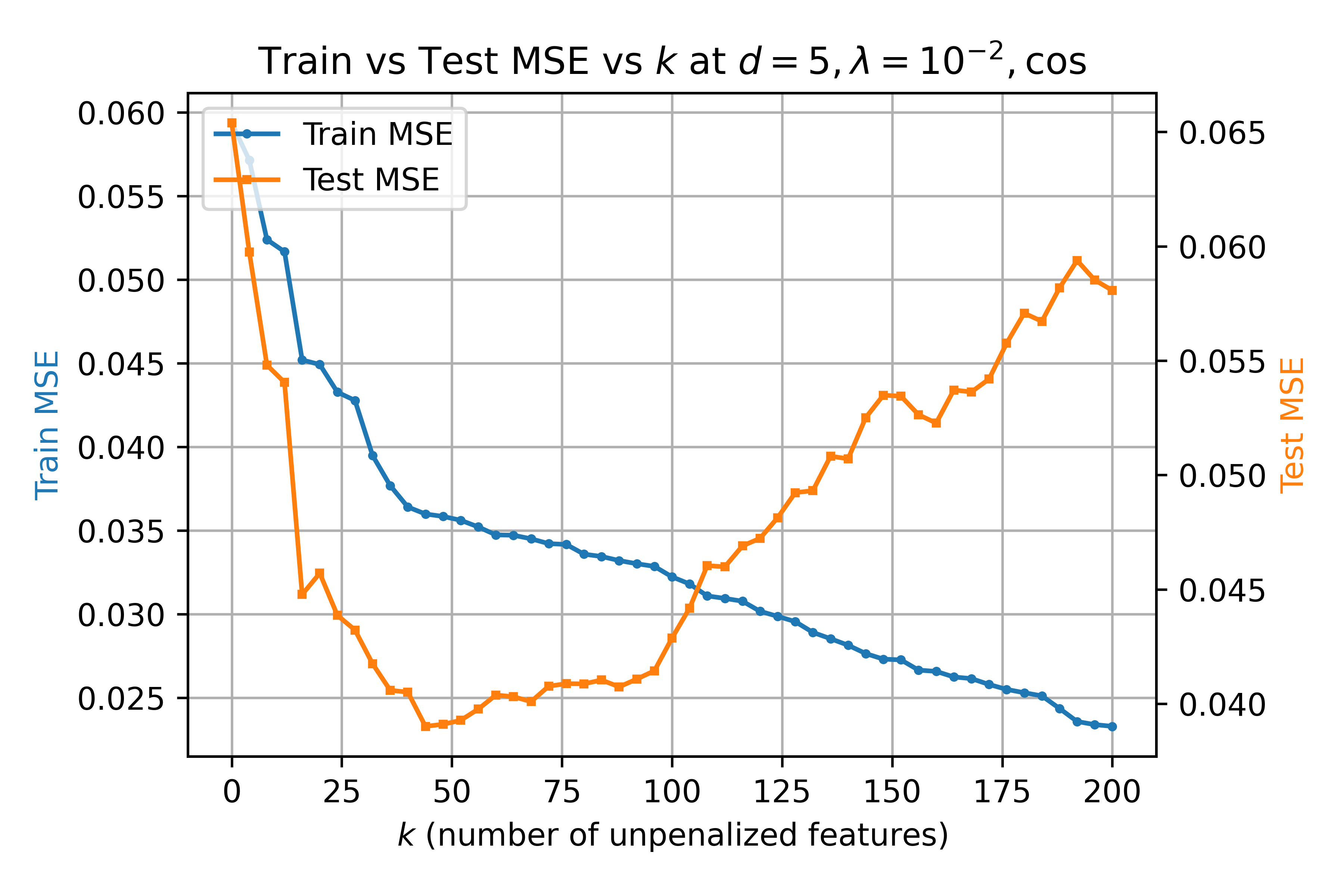}
\end{minipage}\hfill%
\begin{minipage}[t]{.32\textwidth}\centering
  \includegraphics[width=\linewidth]{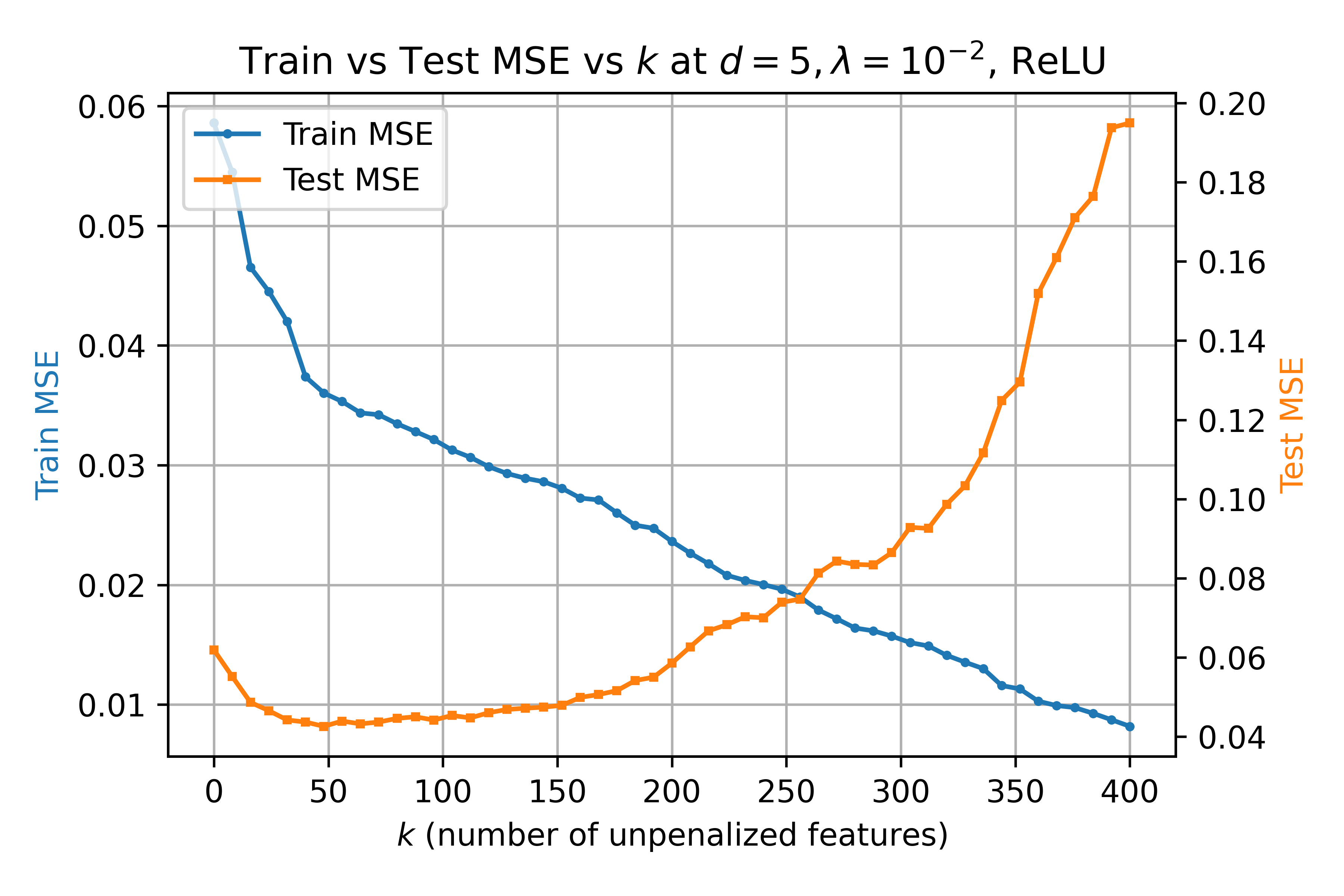}
\end{minipage}\hfill%
\begin{minipage}[t]{.32\textwidth}\centering
  \includegraphics[width=\linewidth]{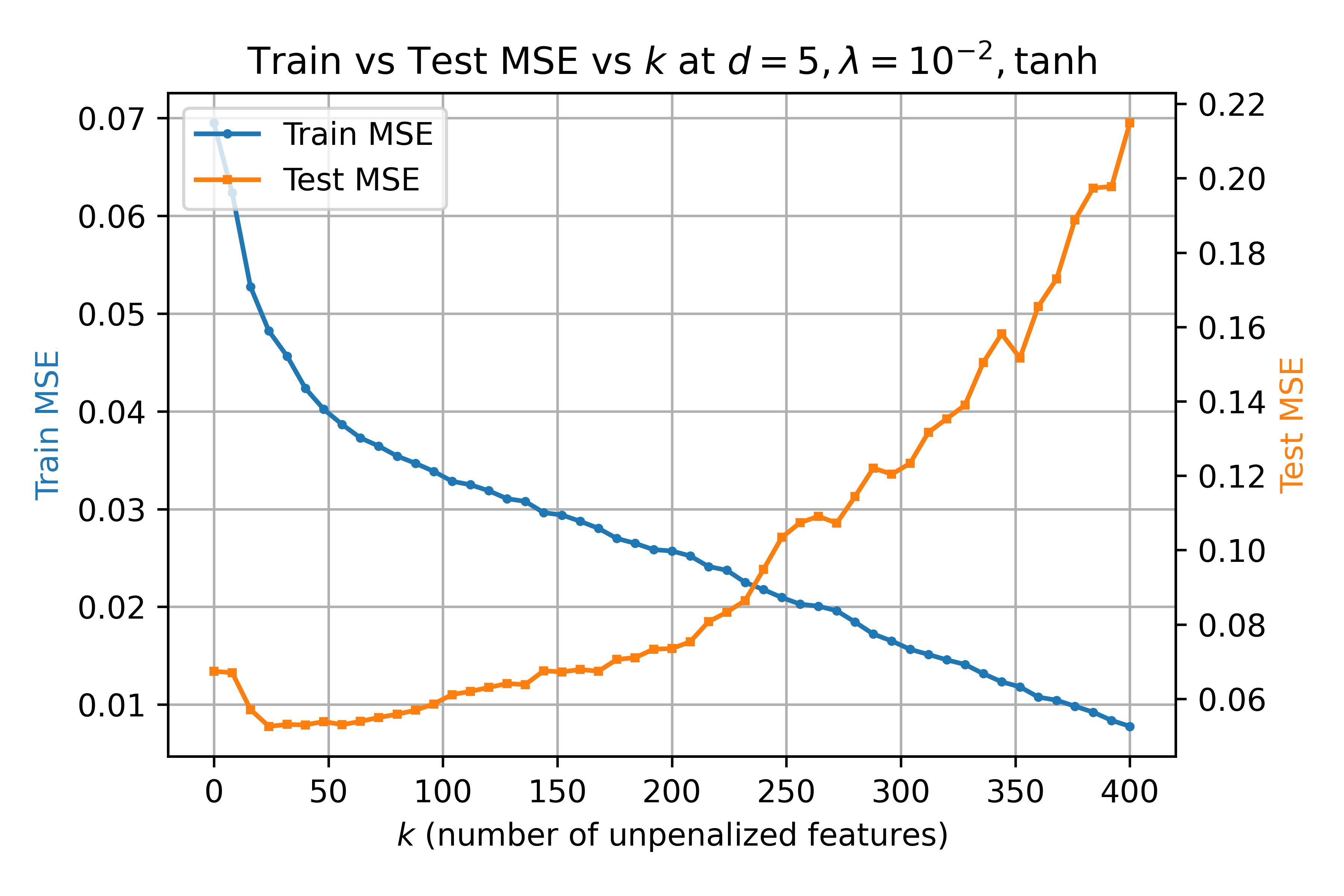}
\end{minipage}
\caption{The effect of the soft thresholding for the cosine, ReLU and tanh activation functions and the regression function $f(x_1,\cdots,x_d)=\sin(x_1)+\tfrac12\cos(x_2)$.}
\label{soft_thresholding}
\end{figure*}

\section{Experiments}\label{experiments}

{\bf Hard thresholding (synthetic data case)}. To examine the cost of conditioning and the U-shaped dependence of the test risk on the number of unpenalized principal eigenfunctions in the hard-thresholding case, predicted theoretically by inequality~\eqref{unpenalize-condition}, we carried out a toy experiment. On the domain $\mathcal{X} = [0,2\pi]$ with the uniform input distribution, we consider the kernel
$$K(x,y) = 1 + \sum_{i=1}^\infty i^{-2s}\big(\cos(ix)\cos(iy) + \sin(ix)\sin(iy)\big),$$
parameterized by a smoothness parameter $s > 0$. For a fixed parameter $k$, the set of unpenalized features $\mathcal{F}$ is defined as $
{\rm span}\big(\{\cos(ix)\}_{i=0}^k \cup \{\sin(ix)\}_{i=1}^k\big)$.

The dependence of $\hat{c}_{\rm con}$ on the parameters $N, k$, and $\sigma^2$ for various target functions is shown in Figure~\ref{hard_thresholding}. 
The plots for $k$ and $\sigma^2$ exhibit linear trends fully consistent with the predictions of Theorem~\ref{th-conditioning-bound}. Across all experiments, we observed a decay rate of $\hat{c}_{\rm con} \sim \frac{1}{N}$ as $N$ increases. 
In contrast, the upper bound of Theorem~\ref{th-conditioning-bound} scales as $\frac{1}{\sqrt{N}}$ whenever $\|f\|_{\mathcal{H}_K^\mathcal{F}}\ne 0$ or $f\notin \mathcal{F}$. Whether the faster $\frac{1}{N}$ decay is a general property of the hard thresholding setting, or merely a peculiarity of our experiments, remains an open theoretical question.
For the regression function $f(x) = \sum_{n=0}^5 n\cos(nx)$ the resulting U-shaped behavior of the test error as a function of $k$ is illustrated in Figure~\ref{additional_hard_thresholding}.
The last plot on Figure~\ref{additional_hard_thresholding} shows test-MSE curves (with 95\% confidence intervals) for varying regularization parameter $\uplambda$ in two representative cases: $k=0$ (standard KRR) and $k=5$, where the regression function is $f(x)=\sum_{n=0}^5 n\cos(nx)$. Taken together, these results suggest that adjusting the number of unpenalized features within the hard-thresholding framework can be beneficial for essentially any value of $\uplambda$, including the value optimally tuned for standard KRR.

{\bf Hard thresholding (real world data case).} As an illustrative example, we used 12214 samples of the digits 7 and 9 from the MNIST training set. 
Each image was cropped to a $24 \times 24$ window by removing border pixels. 
We assigned the label $+1$ to digit 7 and $-1$ to digit 9. Since the eigenfunctions $\phi_i$ 
are not available in closed form for non-synthetic data, we estimated them from samples.
We conducted numerical experiments using a certain estimator $\widehat{\phi}_i$ of $\phi_i$  in the hard-thresholding setup (whose description can be found in subsection~\ref{hard-description} of Appendix) on the 7-vs-9 MNIST dataset. We performed 10-fold cross-validation to evaluate the Conditional KRR model for a fixed parameter $\uplambda=0.01$ and a range of sizes $k$. In each fold, the data were re-standardized, Conditional KRR was fitted on the training subset and its test MSE was recorded for all $k$, after which the mean test MSE and its 95\% confidence interval across folds were computed. Results for different kernels are shown on Figure~\ref{cv_hard_thresholding}.

As shown in the results, both the Gaussian and NNGP-erf kernels exhibit a U-shaped dependence of the test MSE on $k$. In the case of the NNGP-erf kernel, however, overfitting is very mild and becomes noticeable only when $k$ approaches the size of the training set (approximately $11{,}000$, the largest value for which $\widehat{\phi}_k(x)$ is defined).
Interestingly, no overfitting is observed for the Laplace kernel. We attribute this behavior to the fact that, for the Laplace kernel, essentially all of the first $11{,}000$ empirical eigenfunctions remain informative for prediction; detecting overfitting would require a substantially larger sample size to allow $\widehat{\phi}_k(x)$ to be defined for larger $k$.

{\bf Experiments on soft thresholding with random features.} We also conducted experiments on $\mathcal{F}$-conditioning with $k$ random features (see the discussion in the end of subsection~\ref{soft-thresholding}). In this setup, we worked directly with random feature representations rather than explicitly computing the kernel $K$. As shown in Appendix~\ref{rfm-section}, conditional KRR in this setting can be approximated by ridge regression with two types of random features: a large set of penalized features and $k$ unpenalized ones.
We considered three activation functions: $\cos(x)$, ${\rm ReLU}(x)$, and $\tanh(x)$. In each case, a random field on $\mathcal{X} = {\mathbb S}^{d-1}$ with covariance $K$ was defined as follows:
\begin{itemize}
\item[(a)] $f(x,[\omega,b]) = \cos(\omega^\top x + b)$ with $\omega \sim \mathcal{N}({\mathbf 0}, I_d)$ and $b \sim U([0,2\pi])$;
\item[(b)] $f(x,[\omega,b]) = {\rm ReLU}(\omega^\top x + b)$ with $\omega \sim \mathcal{N}({\mathbf 0}, I_d)$ and $b \sim U([-1,1])$;
\item[(c)] $f(x,[\omega,b]) = \tanh(\omega^\top x + b)$ with $\omega \sim \mathcal{N}({\mathbf 0}, I_d)$ and $b \sim U([-1,1])$.
\end{itemize}
Note that in case (a), $K$ corresponds to the Gaussian kernel. The U-shaped dependence of the expected risk for all three cases for the target function $f(x_1,\cdots,x_d)=\sin(x_1)+\tfrac12\cos(x_2)$ is shown in Figure~\ref{soft_thresholding}. Experiments on real world data can be found in 
subsection~\ref{exp:soft} of Appendix. 

Details of described experiments, together with additional experiments, are provided in Appendix~\ref{additional}.

\section{Conclusions and open problems}
We have developed a statistical theory of learning with conditional KRR and applied it to both hard and soft thresholding settings. Attempting to study the memorization phenomenon in conditional KRR encounters an immediate difficulty: all of our bounds require the regularization parameter $\uplambda \ne 0$, whereas perfect memorization of the training set is possible only when $\uplambda = 0$. Extending our statistical analysis to cover this latter case remains an open direction for future research.

\section*{Acknowledgements}
This research has been funded by the Science Committee of the Ministry of Science and Higher Education of
the Republic of Kazakhstan (Grant No. AP27510283), PI R. Takhanov.

\bibliography{lit}
\bibliographystyle{icml2026}

\newpage
\appendix
\onecolumn
\section{Proof of Theorem~\ref{residual}}
Let $\mathcal{M}(\mathcal{X})$ be the set of finite signed Borel measures on $\mathcal{X}$. 
The following characterization of CPD requires only  standard argumentation.
\begin{lemma}
$K$ is CPD w.r.t. $\mathcal{F}$ if and only if for all finite signed Borel measures $\mu \in \mathcal{M}(\mathcal{X})$ satisfying $\int f(x)\, d\mu(x) = 0 \quad \text{for all } f \in \mathcal{F}$,
we have:
$$
\iint K(x, x')\, d\mu(x)\, d\mu(x') \ge 0.
$$
\end{lemma}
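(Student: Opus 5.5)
The "if" direction is immediate. Given points $x_1,\dots,x_n$ and coefficients $\alpha_i$ with $\sum_i\alpha_i f(x_i)=0$ for all $f\in\mathcal{F}$, the discrete measure $\mu=\sum_i \alpha_i\delta_{x_i}$ is a finite signed Borel measure annihilating $\mathcal{F}$. Its double integral equals $\sum_{i,j}\alpha_i\alpha_j K(x_i,x_j)$, so the measure condition yields the CPD inequality.

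For the "only if" direction I will approximate a general $\mu\in\mathcal{M}(\mathcal{X})$ annihilating $\mathcal{F}$ by discrete measures that also annihilate $\mathcal{F}$. The approximation will be in a topology for which $\nu\mapsto\iint K\,d\nu\,d\nu$ is continuous. Since $\mathcal{X}$ is compact and $K$ is continuous, $K$ is uniformly continuous on $\mathcal{X}\times\mathcal{X}$. The construction is as follows. Partition $\mathcal{X}$ into finitely many Borel sets $A_1,\dots,A_m$ of diameter less than $\eta$, pick $a_l\in A_l$, and set $\mu_\eta=\sum_l \mu(A_l)\delta_{a_l}$. Then $\|\mu_\eta\|_{TV}\le\|\mu\|_{TV}$. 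For any uniformly continuous $g$ with modulus $\omega_g$ we have $|\int g\,d\mu-\int g\,d\mu_\eta|\le \omega_g(\eta)\|\mu\|_{TV}$. Applied to $K$ on the product partition $\{A_l\times A_{l'}\}$, whose sets have diameter less than $2\eta$ in the product metric, this gives $\big|\iint K\,d\mu\,d\mu-\iint K\,d\mu_\eta\,d\mu_\eta\big|\le \omega_K(2\eta)\|\mu\|_{TV}^2\to0$. The same estimate applies to each $f_j$, so $\int f_j\,d\mu_\eta=:\epsilon_j(\eta)\to 0$.

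The main obstacle is that $\mu_\eta$ no longer annihilates $\mathcal{F}$ exactly, and I will fix this with a finite-dimensional correction. The $f_j$ are linearly independent, so there exist points $z_1,\dots,z_k\in\mathcal{X}$ with $[f_j(z_i)]$ invertible. Indeed, the span of the evaluation vectors $(f_1(x),\dots,f_k(x))$ must be all of $\mathbb{R}^k$, since otherwise some nontrivial combination of the $f_j$ would vanish identically. Choose coefficients $\beta(\eta)\in\mathbb{R}^k$ solving $\sum_i\beta_i f_j(z_i)=-\epsilon_j(\eta)$, so that $\beta(\eta)\to0$, and set $\tilde\mu_\eta=\mu_\eta+\sum_i\beta_i\delta_{z_i}$. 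This is a discrete measure annihilating $\mathcal{F}$, so by CPD $\iint K\,d\tilde\mu_\eta\,d\tilde\mu_\eta\ge0$.

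Expanding the bilinear form, the correction terms are bounded by $2\|K\|_\infty\|\beta\|_1\|\mu\|_{TV}+\|K\|_\infty\|\beta\|_1^2$, which tend to $0$ as $\eta\to0$. Letting $\eta\to0$ therefore gives $\iint K\,d\mu\,d\mu\ge0$. Along the way I will check that the double integral is well defined, which holds because $K$ is bounded and continuous and $\mu$ is finite, so Fubini applies with respect to $|\mu|\otimes|\mu|$.
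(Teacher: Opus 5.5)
The paper gives no proof of this lemma; it only says the characterization ``requires only standard argumentation'' and then uses it in the proof of Theorem~\ref{residual}. Your proposal is a correct and complete version of that standard argument. The partition-based discretization $\mu_\eta=\sum_l \mu(A_l)\delta_{a_l}$, with the uniform modulus-of-continuity bound on $\mathcal{X}\times\mathcal{X}$, gives convergence of the quadratic form directly. The correction by $\sum_i\beta_i\delta_{z_i}$, at points where $[f_j(z_i)]$ is invertible (such points exist because the $f_j$ are linearly independent on $\mathcal{X}$), restores the constraint $\int f\,d\tilde\mu_\eta=0$ exactly. That correction is the step a bare appeal to weak-$*$ density of discrete measures would miss.
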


Let us prove that $K_P(x, y)$ is a PD kernel, i.e. that
$$
\iint K_P(x, y)\, d\mu(x)\, d\mu(y)\geq 0
$$
for any $\mu\in \mathcal{M}(\mathcal{X})$. 
We define $\nu = (I - \Pi_P)^* \mu \in \mathcal{M}(\mathcal{X})$ as the unique signed measure satisfying
$$
\int f(x)\, d\nu(x) = \int (I - \Pi_P)f(x)\, d\mu(x)
\quad \text{for all } f \in C(\mathcal{X}).
$$
From Riesz-Markov-Kakutani representation theorem we obtain that $\nu$ is a finite signed Borel measure.
For every $f \in \mathcal{F}$, we have $(I-\Pi_P) f=0$ due to nondegeneracy of $P$. Thus, we have
$$
  \int f(x)\, d\nu(x) = \int (I - \Pi_P)f(x)\, d\mu(x) = 0.
$$
From the definition of $K_P$, we have $K_P = \left(I - \Pi_P\right)\left[\left(I - \Pi_P\right) [K(x, \cdot)](\cdot, y)\right]$,
so for any $\mu\in \mathcal{M}(\mathcal{X})$,
$$
\begin{aligned}
\iint K_P(x, y)\, d\mu(x)\, d\mu(y)= \int \left[\int\left(I - \Pi_P\right)\left[\left(I - \Pi_P\right) [K(x, \cdot)](\cdot, y)\right](x,y)\, d\mu(x)\right]\, d\mu(y) =\\
\int \left[\int\left(I - \Pi_P\right) [K(x, \cdot)](x,y)\, d\nu(x)\right]\, d\mu(y)  = \int \left[\int\left(I - \Pi_P\right) [K(x, \cdot)](x,y)\, d\mu(y)\right]\, d\nu(x)  \\
= \int \left[\int K(x,y)\, d\nu(y)\right]\, d\nu(x)=  \iint K(x,y)\, d\nu(x)d\nu(y). \\
\end{aligned}
$$
The latter expression is non-negative due to conditional positive definiteness of $K$. Therefore, $K_P(x, y)$ is positive definite.

\section{Proof of Theorem~\ref{cucker-smale-kind}}
Define ${\rm O}_{K}^P: L_2(\mathcal{X}, P)\to L_2(\mathcal{X}, P)$ by
$$
{\rm O}_{K}^P f(x) = \int_{{\mathcal X}}K_P(x,y)f(y)dP(y).
$$
By Mercer's theorem~\cite{TAKHANOV2023126718}, the kernel $K_P$ can be expanded as
$$
K_P(x,y) = \sum_{i=1}^\infty \lambda_i\phi_i(x)\phi_i(y),
$$
where $\{\lambda_i\}$ are non-zero eigenvalues of ${\rm O}_K^P$ and $\{\phi_i\}$ are corresponding orthogonal eigenfunctions of unit length. Note that $\phi_i\in L_2(\mathcal{X}, P)\cap \mathcal{F}^\perp$ due to ${\rm O}^P_{K}[L_2(\mathcal{X}, P)\cap \mathcal{F}^\perp]\subseteq L_2(\mathcal{X}, P)\cap \mathcal{F}^\perp$ and ${\rm O}^P_{K}[\mathcal{F}] = \{0\}$. It is well-known (see Theorem 4 from~\cite{Cucker2001OnTM}) that $\mathcal{H}_{K_P}$ is a Hilbert space with a set of functions $\sqrt{{\rm O}^P_{K}}[L_2(\mathcal{X}, P)]$, i.e. the set of functions of the form
$$
\sum_{i=1}^\infty \sqrt{\lambda_i}x_i \phi_i
$$
where $[x_i]_{i=1}^\infty\in l^2({\mathbb N})$. For $f = \sum_{i=1}^\infty \sqrt{\lambda_i}x_i \phi_i$ and $g=\sum_{i=1}^\infty \sqrt{\lambda_i}y_i \phi_i$, the inner product on $\mathcal{H}_{K_P}$ equals
$$
\langle f, g\rangle_{\mathcal{H}_{K_P}} = \sum_{i=1}^\infty x_i y_i.
$$

First let us prove that $\mathcal{L}\subseteq {\rm H}_K^\mathcal{F}$ which will directly imply $\mathcal{L}\oplus \mathcal{F}\subseteq {\rm H}_K^\mathcal{F}$. Let $f\in \mathcal{L}$ and
$$
f = \sum_{i=1}^n \alpha_i K(x_i, \cdot) \text{ such that } \sum_{i=1}^n \alpha_i f_j(x_i) = 0 \text{ for all } j=1,\dots,k.
$$
Since $\sum_{i=1}^n \alpha_i f_j(x_i) = 0$, then $(I-\Pi_P)^\ast\sum_{i=1}^n \alpha_i \delta_{x_i} = \sum_{i=1}^n \alpha_i \delta_{x_i}$. 
Therefore, the function $f$ can be expressed as
$$
\begin{aligned}
f(x) = \int_{\mathcal{X}}K(y,x)d(\sum_{i=1}^n \alpha_i \delta_{x_i})(y) = \int_{\mathcal{X}}K(y,x)d((I-\Pi_P)^\ast\sum_{i=1}^n \alpha_i \delta_{x_i})(y) =\\
 \int_{\mathcal{X}}(I-\Pi_P)[K(\cdot,z_2)](y,x)d(\sum_{i=1}^n \alpha_i \delta_{x_i})(y)=\sum_{i=1}^n \alpha_i (I-\Pi_P)[K(\cdot,z_2)](x_i, x)=\\
\sum_{i=1}^n \alpha_i (I-\Pi_P)[(I-\Pi_P)[K(\cdot,z_2)](z_1, \cdot)](x_i, x)+ \alpha_i \Pi_P[(I-\Pi_P)[K(\cdot,z_2)](z_1, \cdot)](x_i, x). 
\end{aligned}
$$
Note that $\Pi_P[(I-\Pi_P)[K(\cdot,z_2)](z_1, \cdot)](x_i, x)\in \mathcal{F}$ and we obtained
$$
\begin{aligned}
f(x) =
\sum_{i=1}^n \alpha_i K_P(x_i, x)+ \tilde{f}, 
\end{aligned}
$$
where $\tilde{f}\in \mathcal{F}$. Since
$$
\begin{aligned}
\sum_{i=1}^n \alpha_i K_P(x_i, x)=\sum_{i=1}^n \alpha_i\sum_{j=1}^\infty\lambda_j\phi_j(x_i)\phi_j(x)  = 
\sum_{j=1}^\infty\sqrt{\lambda_j}(\sum_{i=1}^n \alpha_i\phi_j(x_i))\sqrt{\lambda_j}\phi_j(x),
\end{aligned}
$$
and
$$
\|[\sqrt{\lambda_j}(\sum_{i=1}^n \alpha_i\phi_j(x_i))]_{j=1}^\infty\|_{l^2({\mathbb N})}^2 = \sum_{j=1}^\infty \lambda_j \sum_{i,i'=1}^n \alpha_i\alpha_{i'}\phi_j(x_i)\phi_j(x_{i'}) = \sum \alpha_i\alpha_{i'} K(x_i,x_{i'})<\infty$$
we conclude that $\sum_{i=1}^n \alpha_i K_P(x_i, x)\in \mathcal{H}_{K_P}$. Thus, we proved $f\in {\rm H}_K^\mathcal{F}$, and therefore, $\mathcal{L}\oplus \mathcal{F}\subseteq {\rm H}_K^\mathcal{F}$. 

Let us now prove that for any $g\in {\rm H}_K^\mathcal{F}$ and the previous $f\in \mathcal{L}$ we have 
$$
\langle g, f\rangle_{{\rm H}_K^\mathcal{F}} = \sum_{i=1}^n \alpha_i g(x_i).
$$
The latter property is an analog of the reproducing property of the kernel $K$ in the theory of RKHSs. 
By construction, we have $g = \sum_j r_j\sqrt{\lambda_j}\phi_j+\tilde{g}$ for $[r_i]_{i=1}^\infty \in l^2({\mathbb N})$ and $\tilde{g}\in \mathcal{F}$. Thus,
$$
\langle g, f\rangle_{{\rm H}_K^\mathcal{F}} = \sum_{j=1}^\infty r_j\sqrt{\lambda_j}(\sum_{i=1}^n \alpha_i\phi_j(x_i))= \sum_{i=1}^n \alpha_i (g(x_i)-\tilde{g}(x_i)) =  \sum_{i=1}^n \alpha_i g(x_i),
$$
due to $\sum_{i=1}^n \alpha_i \tilde{g}(x_i)=0$.

The inner product in $\mathcal{L}$ matches the inner product in ${\rm H}_K^\mathcal{F}$. Indeed, let $f,g\in \mathcal{L}$. In the previous analysis we established that
$$
\begin{aligned}
f=\sum_{j=1}^\infty p_j\sqrt{\lambda_j}\phi_j+\tilde{f},
g=\sum_{j=1}^\infty q_j\sqrt{\lambda_j}\phi_j+\tilde{g},
\end{aligned}
$$
where $p_j = \sqrt{\lambda_j}(\sum_{i=1}^n \alpha_i\phi_j(x_i))$, $q_j=\sqrt{\lambda_j}(\sum_{i=1}^m \beta_i\phi_j(y_i))$ and $\tilde{f},\tilde{g}\in \mathcal{F}$. Therefore,
$$
\langle f, g\rangle_{{\rm H}_K^\mathcal{F}} = \sum_{j=1}^\infty p_jq_j = \sum_{j=1}^\infty \lambda_j (\sum_{i=1}^n \alpha_i\phi_j(x_i))(\sum_{i=1}^m \beta_i\phi_j(y_i)) = \sum_{i=1}^n\sum_{j=1}^m K(x_i,y_j)\alpha_i\beta_j.
$$
So, we have $\langle f, g\rangle_{\mathcal{L}} = \langle f, g\rangle_{{\rm H}_K^\mathcal{F}}$. This implies to $\langle f, g\rangle_{\mathcal{H}_K^\mathcal{F}} = \langle f, g\rangle_{{\rm H}_K^\mathcal{F}}$ for any $f,g\in \mathcal{L}\oplus \mathcal{F}$.

To complete the proof we need to show that $\mathcal{L}\oplus \mathcal{F}\subseteq {\rm H}_K^\mathcal{F}$ is dense in ${\rm H}_K^\mathcal{F}$. The latter follows from the denseness of $(\mathcal{L}\oplus \mathcal{F})\cap \mathcal{H}_{K_P}$ in $\mathcal{H}_{K_P}$. Indeed, let $f\in \mathcal{H}_{K_P}$ be orthogonal to all functions in $(\mathcal{L}\oplus \mathcal{F})\cap \mathcal{H}_{K_P}$, then the previous analysis shows that it should be orthogonal to all functions from $\mathcal{L}$, i.e.
$$
\langle f,\sum_{i=1}^n \alpha_i K(x_i, \cdot)\rangle_{{\rm H}_K^\mathcal{F}}=0 \text{ whenever } \sum_{i=1}^n \alpha_i f_j(x_i) = 0 \text{ for all } j=1,\dots,k.
$$
This implies 
$$
\sum_{i=1}^n \alpha_i f(x_i)=0 \text{ whenever } \sum_{i=1}^n \alpha_i f_j(x_i) = 0 \text{ for all } j=1,\dots,k.
$$
The latter implies $f\in \mathcal{F}$. Since $ \mathcal{F}\cap \mathcal{H}_{K_P} = \{0\}$, we obtain $f=0$. Theorem proved.

\section{Proof of Theorem~\ref{krr-cpd}}
By the Representer Theorem (e.g. see Theorem 6.1 from~\cite{auffray}), the solution $f^\ast$ of the initial task \eqref{conditional-KRR} has the form
$$
f^\ast(x) = \sum_{i=1}^N \alpha_i K(x_i,x) + \sum_{j=1}^k \beta_j f_j(x),
$$
where $\sum_{i}\alpha_i f_j(x_i)=0$, $1\leq j\leq k$,
which leads us to the following optimization task
$$
\min_{\alpha,\beta}\|\mathbf{K} \alpha + F^\top \beta - y\|^2 + \uplambda \alpha^\top \mathbf{K} \alpha \quad \text{subject to} \quad F\alpha = 0,
$$
where $y = (y_1, \dots, y_N)^\top \in \mathbb{R}^N$, $\mathbf{K} = [K(x_i, x_j)]_{i,j=1}^N \in \mathbb{R}^{N \times N}$, $\alpha = (\alpha_1, \dots, \alpha_N)^\top \in \mathbb{R}^N$ and $\beta = (\beta_1, \dots, \beta_k)^\top \in \mathbb{R}^k$.

Since the matrix $F F^\top$ is invertible, the minimization over $\beta$ gives
$$
\beta = -(F F^\top)^{-1} F(\mathbf{K} \alpha  - y).
$$
The matrix $\Pi = F^\top (F F^\top)^{-1} F$ corresponds to the projection operator onto the row space of $F$. 
Let us denote $r=(I_N-\Pi)  y$, where $I_N = [\delta_{ij}]_{i,j=1}^N$. Note that $\alpha = (I_N-\Pi)  \alpha$ due to $F \alpha=0$.
After we plug the expression for $\beta$ into the former objective, we obtain a new objective
$$
\begin{aligned}
\|(I_N-\Pi)(\mathbf{K} \alpha - y)\|^2 + \uplambda \alpha^\top K \alpha = \\
\|(I_N-\Pi)\mathbf{K} (I_N-\Pi)\alpha - r\|^2 + \uplambda \alpha^\top (I_N-\Pi) \mathbf{K} (I_N-\Pi)\alpha.
\end{aligned}
$$
Let us denote $\tilde{\mathbf{K}} = (I_N-\Pi)\mathbf{K} (I_N-\Pi)$. We obtained the task
$$
\min_{\alpha}\|\tilde{\mathbf{K}} \alpha - r\|^2 + \uplambda \alpha^\top \tilde{\mathbf{K}} \alpha \quad \text{subject to} \quad F \alpha = 0.
$$
For any $y\in  \{x\in {\mathbb R}^N\mid F x = 0\}^\perp = {\rm row}(F)$ we have $\tilde{\mathbf{K}} y=0$. Therefore, the latter task is equivalent to solving the unconstrained 
$$
\min_{\alpha'}\|\tilde{\mathbf{K}} \alpha' - r\|^2 + \uplambda \alpha'^\top \tilde{\mathbf{K}} \alpha',
$$
and then setting $\alpha = (I_N-\Pi)\alpha'$. Further, let us denote solutions of the latter two tasks $\alpha$ and $\alpha'$ respectively. 
Note that $\tilde{\mathbf{K}}$ is the kernel matrix for the residual kernel function $K_P$. By Theorem~\ref{residual}, $K_P$ is positive semidefinite and the latter task leads to the KRR optimization task 
$$
\min_{g\in \mathcal{H}_{K_P}}\sum_{i=1}^N (g(x_i) - r_i)^2 + \uplambda \|g\|_{\mathcal{H}_{K_P}}^2,
$$
with the correspondence between solutions of the KRR and the previous one established by the rule $$g^\ast(x)=\sum_{i=1}^N \alpha_i' K_P(x_i, \cdot).$$ Note that adding to $\alpha'$ any vector from ${\rm row}(F)$ does not change $g^\ast$, therefore we can write $g^\ast(x)=\sum_{i=1}^N \alpha_i K_P(x_i, \cdot)$. Since $\alpha = (I_N-\Pi)\alpha$ and $[K_P(x_i, \cdot)]_{i=1}^N = (I_N-\Pi) [(I-\Pi_P)[K(x_i, \cdot)]]_{i=1}^N$, we obtain 
$$
g^\ast(x)=\sum_{i=1}^N \alpha_i (I-\Pi_P)[K(x, \cdot)](x_i, x)
$$
That is, $g^\ast= (I-\Pi_P)f^\ast$. 

Next, given $g^\ast$, let us recover $f^\ast$. 
We have
\begin{equation*}
\begin{split}
f^\ast = g^\ast+\Pi_P f^\ast=g^\ast+\Pi_P (\sum_{i=1}^N \alpha_i K(x_i,x) + \sum_{j=1}^k \beta_j f_j(x))=\\
g^\ast+\Pi_P (\sum_{i=1}^N \alpha_i K(x_i,x)) +\beta^\top [f_1(x), \cdots, f_k(x)]^\top = \\
g^\ast+\alpha^\top \mathbf{K}F^\top(FF^\top)^{-1} [f_1(x), \cdots, f_k(x)]^\top +\beta^\top [f_1(x), \cdots, f_k(x)]^\top.
\end{split}
\end{equation*}
Using $\beta = -(F F^\top)^{-1} F(\mathbf{K} \alpha  - y)$, we conclude
\begin{equation*}
\begin{split}
f^\ast = g^\ast+\Pi_P f^\ast=g^\ast+y^\top F^\top(F F^\top)^{-1}[f_1(x), \cdots, f_k(x)]^\top.
\end{split}
\end{equation*}
Theorem proved.

\section{Proof of Theorem~\ref{th-conditioning-bound}}
Let $l^2({\mathbb N})$ denote the Hilbert space of sequences $[x_i]_{i=1}^\infty$ such that $\sum_i x_i^2<\infty$ with the standard dot product of sequences. By
$\mathcal{B}(A, B)$ we denote bounded linear operators between spaces $A$ and $B$. E.g., $\mathcal{B}({\mathbb R}^N, l^2({\mathbb N}))$ can be identifed with certain ${\mathbb N}\times N$ matrices. 
\subsection{Expressions for transfer matrices}
Following Section~\ref{conditioning}, let us introduce notations
\begin{equation*}
\begin{split}
u = [u_1, \cdots, u_k]^\top, v = [v_1, v_2, \cdots]^\top, \hat{u} = [\hat{u}_1, \cdots, \hat{u}_k]^\top, \hat{v} = [\hat{v}_1, \hat{v}_2, \cdots]^\top\\
y = [Y_1, \cdots, Y_N]^\top, \boldsymbol{\phi}_i = [\phi_i(X_1), \cdots, \phi_i(X_N)]^\top, \mathbf{f}_i = [f_i(X_1), \cdots, f_i(X_N)]^\top\\
\Phi = [\phi_i(X_j)]_{i=1}^\infty{ }_{j=1}^{N}, F = [f_i(X_j)]_{i=1}^k{ }_{j=1}^{N} \in \mathbb{R}^{k \times N}, \Lambda = [\lambda_i\delta_{ij}]_{i,j=1}^\infty.
\end{split}
\end{equation*}
Note that $y = \Phi^\top \Lambda^{1/2}v +F^\top u+\varepsilon$ where $\varepsilon\sim \mathcal{N}({\mathbf 0}, \sigma^2 I_N)$ is independent of $\mathcal{D}_N = (X_1, \cdots, X_N)$.


\begin{theorem}\label{transfer} Let $F$ be of rank $k$. For the $v$-part of the regression function we have $\hat{v}=T_\phi v+T_{\phi\varepsilon} \varepsilon$, with matrices $T_\phi$ and $T_{\phi\varepsilon}$ defined by
\begin{equation*}
\begin{split}
T_\phi =  \Lambda^{1/2} \Psi(\Psi^\top \Lambda \Psi+\uplambda N I_N)^{-1} \Psi^\top\Lambda^{1/2},\\
T_{\phi\varepsilon} = \Lambda^{1/2} \Psi(\Psi^\top \Lambda \Psi+\uplambda N I_N)^{-1} (I_N-F^\top(FF^\top)^{-1}F),
\end{split}
\end{equation*}	
where $\Psi =  \Phi(I_N-F^\top(FF^\top)^{-1}F)$ and $I = [\delta_{ij}]_{i,j=1}^\infty$. For the $u$-part of the regression function we have $\hat{u} = u + T_f v+T_{f\varepsilon} \varepsilon$ where 
\begin{equation*}
\begin{split}
T_f = (FF^\top)^{-1}F \Phi^\top \Lambda^{1/2}(I -  \Lambda^{1/2} \Psi(\Psi^\top \Lambda \Psi+\uplambda N I_N)^{-1} \Psi^\top  \Lambda^{1/2}),\\
T_{f\varepsilon} = (FF^\top)^{-1}F .
\end{split}
\end{equation*}	

\end{theorem}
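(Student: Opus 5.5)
We start from Theorem~\ref{krr-cpd}, which applies here because $F$ has rank $k$. It splits the conditional KRR estimator into two pieces. The first is $\hat f_\perp$-type data: the component of $\hat f$ produced by KRR with the empirical residual kernel on the residual vector $r=(I_N-\Pi)y$, where $\Pi=F^\top(FF^\top)^{-1}F$. The second is the linear-regression part $[f_1,\dots,f_k](FF^\top)^{-1}Fy$.

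The expansion $\hat f=\sum\hat v_i\sqrt{\lambda_i}\phi_i+\sum\hat u_if_i$ is written with respect to the population residual kernel $K_P$ and its Mercer basis. So first we express the solution of the conditional problem \eqref{conditional-KRR} directly in the coordinates $(v,u)$. By Theorem~\ref{cucker-smale-kind}, every $g\in\mathcal{H}_K^\mathcal{F}$ can be written as $g=\sum_i w_i\sqrt{\lambda_i}\phi_i+\sum_j a_jf_j$ with $\|g\|^2_{\mathcal{H}_K^\mathcal{F}}=\|w\|^2_{l^2}$. The problem therefore becomes
\begin{equation*}
\min_{w\in l^2,\,a\in\mathbb{R}^k}\frac1N\|\Phi^\top\Lambda^{1/2}w+F^\top a-y\|^2+\uplambda\|w\|^2 .
\end{equation*}
We minimize over $a$ first, which gives $a=(FF^\top)^{-1}F(y-\Phi^\top\Lambda^{1/2}w)$. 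Substituting back leaves the reduced problem
\begin{equation*}
\min_{w}\frac1N\|(I_N-\Pi)(\Phi^\top\Lambda^{1/2}w-y)\|^2+\uplambda\|w\|^2 .
\end{equation*}
Note that $(I_N-\Pi)\Phi^\top=\Psi^\top$, since $I_N-\Pi$ is symmetric.

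The reduced problem is ridge regression in $l^2$ with design operator $A=\Psi^\top\Lambda^{1/2}\in\mathcal{B}(l^2(\mathbb N),\mathbb R^N)$ and target $(I_N-\Pi)y$. Its unique minimizer is $\hat v=(A^\top A+\uplambda N I)^{-1}A^\top(I_N-\Pi)y$. The push-through identity $(A^\top A+\uplambda NI)^{-1}A^\top=A^\top(AA^\top+\uplambda NI_N)^{-1}$ turns this into
\begin{equation*}
\hat v=\Lambda^{1/2}\Psi(\Psi^\top\Lambda\Psi+\uplambda NI_N)^{-1}(I_N-\Pi)y .
\end{equation*}
The identity holds for bounded operators because $A$ has finite rank, and $A$ is bounded because $\sum_i\lambda_i\phi_i(X_j)^2=K_P(X_j,X_j)<\infty$. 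For the same reason $\Psi^\top\Lambda\Psi$ is a finite PSD matrix, namely the Gram matrix of $K_P$ compressed by $I_N-\Pi$.

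Now substitute $y=\Phi^\top\Lambda^{1/2}v+F^\top u+\varepsilon$. The term $F^\top u$ is annihilated because $(I_N-\Pi)F^\top=0$. The term $(I_N-\Pi)\Phi^\top\Lambda^{1/2}v$ equals $\Psi^\top\Lambda^{1/2}v$. This yields $T_\phi$ and $T_{\phi\varepsilon}$ exactly as stated.

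For the $u$-part we use $\hat u=(FF^\top)^{-1}F(y-\Phi^\top\Lambda^{1/2}\hat v)$. Substituting $y$ gives $u+(FF^\top)^{-1}F\varepsilon+(FF^\top)^{-1}F\Phi^\top\Lambda^{1/2}(v-\hat v)$. Inserting $\hat v=T_\phi v+T_{\phi\varepsilon}\varepsilon$ then produces $T_f$ as stated. It also produces an additional noise term $-(FF^\top)^{-1}F\Phi^\top\Lambda^{1/2}T_{\phi\varepsilon}\varepsilon$.

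The main obstacle is showing that this extra noise term disappears, so that $T_{f\varepsilon}=(FF^\top)^{-1}F$ exactly. The claim is not automatic, because $F\Phi^\top\ne0$ empirically even though the $\phi_i$ are orthogonal to $\mathcal{F}$ in $L_2(P)$. We will have to check carefully whether the identity holds as stated, or only after reorganizing terms. The first thing to try is to show that $\Lambda^{1/2}T_{\phi\varepsilon}\varepsilon$ lies in a subspace killed by $F\Phi^\top$ modulo the $T_f$ structure, for instance by rewriting $\Phi^\top=\Psi^\top+\Pi\Phi^\top$. If that fails, the honest conclusion is $T_{f\varepsilon}=(FF^\top)^{-1}F(I_N-\Phi^\top\Lambda^{1/2}T_{\phi\varepsilon})$, and we would then verify which form the later bounds actually require.

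Minor technical points remain. We must justify minimizing over $l^2$, which is a strictly convex coercive problem, so the minimizer is unique. We must also justify the operator manipulations with the infinite matrices $\Phi$ and $\Lambda$; this follows from the Mercer series converging uniformly, since $K_P$ is continuous on the compact set $\mathcal{X}$.
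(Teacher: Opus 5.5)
Your derivation of $T_\phi$, $T_{\phi\varepsilon}$ and $T_f$ is correct, and it takes a more direct route than the paper. You parametrize $\mathcal{H}_K^\mathcal{F}$ by $(w,a)\in l^2(\mathbb{N})\times\mathbb{R}^k$ through Theorem~\ref{cucker-smale-kind}, eliminate $a$, and solve a ridge problem in $l^2(\mathbb{N})$ with the push-through identity. That identity holds for any bounded $A$; finite rank is not needed. The paper instead starts from Theorem~\ref{krr-cpd} with the empirical residual kernel $K_{P_N}$. It computes $\sqrt{\lambda_i}\hat v_i=\langle\phi_i,\hat f\rangle_{L_2(\mathcal{X},P)}$ via Lemma~\ref{eigen-N}, Lemma~\ref{project-Q} and Corollary~\ref{project-N}, and gets $\hat u$ by pairing $\hat f$ with the $f_i$ in $L_2(\mathcal{X},P_N)$. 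Your version avoids $K_{P_N}$ and those lemmas. Both routes arrive at the same identity $\hat u=(FF^\top)^{-1}F(y-\Phi^\top\Lambda^{1/2}\hat v)$.

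\textbf{The gap.} The step you flag as the main obstacle cannot be closed, because the stated $T_{f\varepsilon}=(FF^\top)^{-1}F$ is false in general. Your fallback $T_{f\varepsilon}=(FF^\top)^{-1}F(I_N-\Phi^\top\Lambda^{1/2}T_{\phi\varepsilon})$ is the correct formula.

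Here is why the extra term survives. With $\Pi=F^\top(FF^\top)^{-1}F$ we have
\begin{equation*}
\Phi^\top\Lambda^{1/2}T_{\phi\varepsilon}=K_P(\mathcal{D}_N,\mathcal{D}_N)(I_N-\Pi)(\Psi^\top\Lambda\Psi+\uplambda N I_N)^{-1}(I_N-\Pi).
\end{equation*}
The matrix $(\Psi^\top\Lambda\Psi+\uplambda N I_N)^{-1}$ maps the range of $I_N-\Pi$ bijectively onto itself. Hence the extra term vanishes for every $\varepsilon$ if and only if $FK_P(\mathcal{D}_N,\mathcal{D}_N)(I_N-\Pi)=0$. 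That empirical orthogonality holds for $K_{P_N}$ but not for the population kernel $K_P$. Your suggested rewriting $\Phi^\top=\Psi^\top+\Pi\Phi^\top$ only returns $F\Phi^\top=F\Pi\Phi^\top$, so it cannot help.

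A concrete counterexample: take $\mathcal{X}=[0,1]$, $P$ uniform, $K(x,y)=xy$, $\mathcal{F}$ the constants, $N=2$, $X_1=0$, $X_2=\tfrac12$. Then $K_P(x,y)=(x-\tfrac12)(y-\tfrac12)$ and $\sqrt{\lambda_1}\phi_1(x)=x-\tfrac12$. For $u=v=0$, direct minimization gives
\begin{equation*}
\hat v=\frac{2(\varepsilon_2-\varepsilon_1)}{1+16\uplambda},\qquad \hat u=\frac{\varepsilon_1+\varepsilon_2}{2}+\frac{\varepsilon_2-\varepsilon_1}{2(1+16\uplambda)},
\end{equation*}
while $(FF^\top)^{-1}F\varepsilon=\frac{\varepsilon_1+\varepsilon_2}{2}$.

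The paper's proof reaches the stated formula through a slip. When it inserts $\hat v$ into $\hat u_i=\tilde u_i-\frac1N\mathbf{f}_i^\top\Phi^\top\Lambda^{1/2}\hat v$, it replaces $\hat v$ by $T_\phi v$ alone and drops $T_{\phi\varepsilon}\varepsilon$. The later generalization to non-orthonormal $f_i$ inherits this omission. So you should commit to your fallback rather than try to kill the term.

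The correction does not change the rates downstream. On the event of Lemma~\ref{cov-bound},
\begin{equation*}
\|(FF^\top)^{-1}F\Phi^\top\Lambda^{1/2}T_{\phi\varepsilon}\|_F\le\|(\tfrac1N FF^\top)^{-1}\|\,\|\tfrac1N F\Phi^\top\Lambda^{1/2}\|\,\|T_{\phi\varepsilon}\|_F\le 2\sqrt{3/2}\,C_{K_P}\cdot\frac{C_{K_P}}{\uplambda\sqrt N}.
\end{equation*}
So the noise term in Lemma~\ref{parallel-part} stays $\mathcal{O}(\sigma^2/N)$, with only a larger constant $c_2$ in Theorem~\ref{th-conditioning-bound}.
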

\begin{remark} The  latter theorem claims that a linear relationships between coefficients of the regression function and the trained function can be described by the following diagram.
\[\begin{tikzcd}[row sep=6pt,column sep={7mm,between origins}]
  & \varepsilon\arrow[dddddrrr,"T_{\phi\varepsilon}", swap] \arrow[dddddrrrrrr,"T_{f\varepsilon}", near start] &  &   & v\arrow[ddddd,"T_\phi", near start] \arrow[dddddrrr,"T_f"]  & &  & u\arrow[ddddd,"I_k"] & \\
  & &  &   &  & &  &  & \\
  & &  &   &  & &  &  & \\
  & &  &   &  & &  &  & \\[4mm]
  & &  &   &  & &  &  & \\
  & &  &   & {\hat v} & &  & {\hat u} & \\
\end{tikzcd}\]
Note that ${\hat v}$ and ${\hat u} -u$ do not depend on $u$. This implies that for a fixed $u$, the distribution of $(\hat{f}(X)-f(X))^2$ for $X\sim P$ does not depend on $u$. Therefore, in a statistical analysis of this expression we may assume that $u={\mathbf 0}$.

The fact that ${\hat v}$ does not depend on $u$ follows from Theorem~\ref{krr-cpd}. Indeed, according to Remark~\ref{phil-remark} KRR with the CPD kernel can be understood as the two step process: the first step being the linear regression with features $f_1, \cdots, f_k$ and the second step being the KRR on residuals. The first step ``erases'' all correlations with $u$, i.e. the $\mathcal{F}$-part of the signal. That is why the part of the trained function that belongs to the RKHS of $K_P$ does not depend on $u$.
\end{remark}
Further, given a kernel $\tilde{K}$, $\tilde{K}(x,\mathcal{D}_N)$ denotes the row $[\tilde{K}(x,X_1), \cdots, \tilde{K}(x,X_N)]$ and $\tilde{K}(\mathcal{D}_N,\mathcal{D}_N)$ denotes the matrix $[\tilde{K}(X_i,X_j)]_{i,j=1}^N$. We define $P_N = \frac{1}{N}\sum_{i=1}^N \delta_{X_i}$, i.e. the empirical measure. Also, in all lemmas below we assume that $F$ is of rank $k$, i.e. $P_N$ is $\mathcal{F}$-nondegenerate.
\begin{lemma}\label{eigen-N} We have $\langle \phi_i, K_{P_N}(\cdot, \mathcal{D}_N)\rangle_{L_2(\mathcal{X}, P)} =\lambda_i \boldsymbol{\phi}_i^\top (I_N-F^\top(FF^\top)^{-1}F) $.
\end{lemma}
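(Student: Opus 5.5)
The plan is to reduce the empirical residual kernel $K_{P_N}$ to the population residual kernel $K_P$, whose Mercer expansion is available, and then exploit the orthogonality of $\phi_i$ to $\mathcal{F}$ in $L_2(\mathcal{X},P)$.

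\textbf{Step 1: replace $K$ by $K_P$.} Since $P_N$ is $\mathcal{F}$-nondegenerate (rank of $F$ is $k$), $\Pi_{P_N}g=g$ for every $g\in\mathcal{F}$. Because $\Pi_P$ maps into $\mathcal{F}$, this gives $(I-\Pi_{P_N})\Pi_P=0$, i.e.
\begin{equation*}
(I-\Pi_{P_N})=(I-\Pi_{P_N})(I-\Pi_P).
\end{equation*}
Applying this in both arguments yields
\begin{equation*}
K_{P_N}=\big((I-\Pi_{P_N})\otimes(I-\Pi_{P_N})\big)[K]=\big((I-\Pi_{P_N})\otimes(I-\Pi_{P_N})\big)[K_P].
\end{equation*}
Here $\Pi_{P_N}$ only uses the values of a function at $X_1,\dots,X_N$, so it acts on continuous functions pointwise.

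\textbf{Step 2: insert the Mercer expansion.} Write $K_P=\sum_l\lambda_l\phi_l(x)\phi_l(y)$. Each $\phi_l$ with $\lambda_l\neq0$ is continuous, and the series converges uniformly on the compact set $\mathcal{X}$. The operator $\Pi_{P_N}$ is finite rank and depends only on finitely many point evaluations, so it commutes with the sum. Setting $\psi_l=(I-\Pi_{P_N})\phi_l$, we get
\begin{equation*}
K_{P_N}(x,y)=\sum_l\lambda_l\psi_l(x)\psi_l(y).
\end{equation*}
Since $\Pi_{P_N}\phi_l\in\mathcal{F}$ and $\phi_i\perp\mathcal{F}$ in $L_2(\mathcal{X},P)$, we have $\langle\phi_i,\psi_l\rangle_{L_2(\mathcal{X},P)}=\langle\phi_i,\phi_l\rangle_{L_2(\mathcal{X},P)}=\delta_{il}$. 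Integrating term by term, which is justified by uniform convergence and boundedness of $\phi_i$, gives
\begin{equation*}
\langle\phi_i,K_{P_N}(\cdot,X_j)\rangle_{L_2(\mathcal{X},P)}=\lambda_i\psi_i(X_j).
\end{equation*}

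\textbf{Step 3: evaluate $\psi_i$ on the sample.} With $G=\frac1N FF^\top$ and $\langle f_a,\phi_i\rangle_{L_2(\mathcal{X},P_N)}=\frac1N(F\boldsymbol{\phi}_i)_a$, the factors $1/N$ cancel. Hence
\begin{equation*}
\Pi_{P_N}\phi_i(X_j)=\big(F^\top(FF^\top)^{-1}F\boldsymbol{\phi}_i\big)_j,
\end{equation*}
so the vector $[\psi_i(X_j)]_{j=1}^N$ equals $(I_N-F^\top(FF^\top)^{-1}F)\boldsymbol{\phi}_i$. Transposing, and using that this projection matrix is symmetric, gives the claimed row vector.

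\textbf{Main obstacle.} The only delicate point is the identity in Step 1, which relies on $\mathcal{F}$-nondegeneracy of both $P$ and $P_N$. The interchange of limits in Step 2 is routine given uniform Mercer convergence for the continuous kernel $K_P$.
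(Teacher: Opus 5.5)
Your proof is correct. It uses the same two mechanisms as the paper: $I-\Pi_{P_N}$ annihilates $\mathcal{F}$ because $P_N$ is $\mathcal{F}$-nondegenerate, and any $\mathcal{F}$-valued contribution in the integrated variable is orthogonal to $\phi_i$ in $L_2(\mathcal{X},P)$. The packaging differs, though.

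The paper never passes through $K_P$ or its series. From the formula for $K_P$ it reads off that $g(y)=\langle\phi_i,K(\cdot,y)\rangle_{L_2(\mathcal{X},P)}$ equals $\lambda_i\phi_i(y)$ plus an element of $\mathcal{F}$. It then notes that $\langle\phi_i,K_{P_N}(\cdot,y)\rangle_{L_2(\mathcal{X},P)}=(I-\Pi_{P_N})g(y)$, because the projection in the first argument is killed by orthogonality. It concludes $(I-\Pi_{P_N})g=\lambda_i(I-\Pi_{P_N})\phi_i$.

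You instead first establish the kernel-level identity $K_{P_N}=((I-\Pi_{P_N})\otimes(I-\Pi_{P_N}))[K_P]$, a functional form of Lemma~\ref{project-Q}, and then expand $K_P$ in its Mercer series. Your route yields more: the explicit representation $K_{P_N}=\sum_l\lambda_l\psi_l\otimes\psi_l$, which at the sample points already contains Corollary~\ref{project-N}. The price is the term-by-term interchange. It rests on uniform Mercer convergence on $\mathcal{X}$, and hence on the standing assumption $\operatorname{supp}P=\mathcal{X}$ of that section. The paper needs only the single eigen-relation $\int K_P(x,y)\phi_i(x)\,dP(x)=\lambda_i\phi_i(y)$.

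You could also have dropped the series after Step 1. Integrate in $x$ first, where the $x$-projection vanishes by orthogonality. Then $I-\Pi_{P_N}$ in $y$, being a finite combination of point evaluations, passes through the integral and acts on $\lambda_i\phi_i$.

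One small correction to your closing remark: the identity $(I-\Pi_{P_N})\Pi_P=0$ uses only the nondegeneracy of $P_N$. The nondegeneracy of $P$ enters through $\phi_i\perp\mathcal{F}$.
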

\begin{proof} By construction,
$\langle \phi_i, K_{P}(\cdot,y)\rangle_{L_2(\mathcal{X}, P)} =\lambda_i \phi_i(y)$. Let $\Pi(x,y) = \sum_{i,j=1}^k (G^{-1})_{ij} f_i(x)f_j(y)$ where $G=[G_{ij}]_{i,j=1}^k=  [\langle f_i, f_j\rangle_{L_2(\mathcal{X}, P)}]_{i,j=1}^k$. 
The residual kernel equals
\begin{equation*}
\begin{split}
K_{P}(x,y) =K(x,y)-{\mathbb E}_{S\sim P}[K(x, S) \Pi(S,y)]-{\mathbb E}_{S\sim P} [\Pi(x, S) K(S, y) ]+
{\mathbb E}_{S,T\sim P} [\Pi(x,S)K(S, T)  \Pi(T,y)].
\end{split}
\end{equation*}
Since $\langle \phi_i, f_j\rangle_{L_2(\mathcal{X}, P)} = 0$, we obtain $$\langle \phi_i, K(\cdot,y)-{\mathbb E}_{S\sim P}[K(\cdot, S) \Pi(S,y)]\rangle_{L_2(\mathcal{X}, P)} =\lambda_i \phi_i(y).$$
For any $S$, $ \Pi(S,y)\in {\rm span}(f_1,\cdots, f_k)$, therefore, $$\langle \phi_i, K(\cdot,y)\rangle_{L_2(\mathcal{X}, P)} - \lambda_i \phi_i \in {\rm span}(f_1,\cdots, f_k).$$ 
The residual kernel w.r.t. $P_N$ equals
\begin{equation*}
\begin{split}
K_{P_N}(x,y) =K(x,y)-{\mathbb E}_{S\sim P_N}[K(x, S) \Pi_N(S,y)]-{\mathbb E}_{S\sim P_N} [\Pi_N(x, S) K(S, y) ]+\\
{\mathbb E}_{S,T\sim P_N} [\Pi_N(x,S)K(S, T)  \Pi_N(T,y)],
\end{split}
\end{equation*}
where $\Pi_N(x,y) = \sum_{i,j=1}^k (H^{-1})_{ij} f_i(x)f_j(y)$ and $H = [H_{ij}]_{i,j=1}^k=  [\langle f_i, f_j\rangle_{L_2(\mathcal{X}, P_N)}]_{i,j=1}^k$. Therefore,
\begin{equation*}
\begin{split}
\langle \phi_i, K_{P_N}(\cdot,y)\rangle_{L_2(\mathcal{X}, P)} =\langle \phi_i, K(\cdot,y)\rangle_{L_2(\mathcal{X}, P)} -{\mathbb E}_{S\sim P_N}[\langle \phi_i, K(\cdot,S)\rangle_{L_2(\mathcal{X}, P)} \Pi_N(S,y)].
\end{split}
\end{equation*}
Since $\langle \phi_i, K(\cdot,y)\rangle_{L_2(\mathcal{X}, P)}-\lambda_i\phi_i\in {\rm span}(f_1,\cdots, f_k)$, we have 
\begin{equation*}
\begin{split}
\langle \phi_i, K(\cdot,y)\rangle_{L_2(\mathcal{X}, P)} -{\mathbb E}_{S\sim P_N}[\langle \phi_i, K(\cdot,S)\rangle_{L_2(\mathcal{X}, P)} \Pi_N(S,y)]=
\lambda_i \phi_i -{\mathbb E}_{S\sim P_N}[\lambda_i \phi_i (S) \Pi_N(S,y)].
\end{split}
\end{equation*}
Thus,
\begin{equation*}
\begin{split}
\langle \phi_i, K_{P_N}(\cdot, \mathcal{D}_N)\rangle_{L_2(\mathcal{X}, P)} =\lambda_i \boldsymbol{\phi}_i^\top -\frac{1}{N}\sum_{j=1}^N\lambda_i \phi_i (X_j) \Pi_N(X_j, \mathcal{D}_N)=
\lambda_i \boldsymbol{\phi}_i^\top (I_N-F^\top(FF^\top)^{-1}F).
\end{split}
\end{equation*}
Lemma proved.
\end{proof}

\begin{lemma}\label{project-Q} For any $\mathcal{F}$-nondegenerate distribution $Q$, we have 
\begin{equation*}
\begin{split}
(I_N-F^\top(FF^\top)^{-1}F) K(\mathcal{D}_N,\mathcal{D}_N)(I_N-F^\top(FF^\top)^{-1}F) =\\ 
(I_N-F^\top(FF^\top)^{-1}F)  K_Q(\mathcal{D}_N,\mathcal{D}_N)(I_N-F^\top(FF^\top)^{-1}F).
\end{split}
\end{equation*}
\end{lemma}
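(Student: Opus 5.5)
Write $Q_N = I_N - F^\top(FF^\top)^{-1}F$. This is the orthogonal projection in $\mathbb{R}^N$ onto the orthogonal complement of the row space of $F$, so $Q_N F^\top = 0$ and $F Q_N = 0$. The plan is to expand $K_Q$ by its definition, write every correction term as a matrix factoring through $F$, and check that $Q_N$ annihilates it from one side.

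Let $G_Q = [\langle f_i,f_j\rangle_{L_2(\mathcal{X},Q)}]_{i,j=1}^k$, which is invertible since $Q$ is $\mathcal{F}$-nondegenerate. Put $\Pi_Q(x,y)=\sum_{i,j}(G_Q^{-1})_{ij}f_i(x)f_j(y)$. Then
\begin{equation*}
K_Q(x,y)=K(x,y)-a(x,y)-b(x,y)+c(x,y),
\end{equation*}
where $a(x,y)={\mathbb E}_{S\sim Q}[K(x,S)\Pi_Q(S,y)]$, $b(x,y)={\mathbb E}_{S\sim Q}[\Pi_Q(x,S)K(S,y)]$ and $c(x,y)={\mathbb E}_{S,T\sim Q}[\Pi_Q(x,S)K(S,T)\Pi_Q(T,y)]$. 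The key observation is that, for fixed $x$, $a(x,\cdot)$ and $c(x,\cdot)$ lie in $\mathcal{F}$. Similarly, for fixed $y$, $b(\cdot,y)$ and $c(\cdot,y)$ lie in $\mathcal{F}$. Concretely, $a(x,y)=\sum_j \alpha_j(x) f_j(y)$ with $\alpha_j(x)=\sum_i (G_Q^{-1})_{ij}{\mathbb E}_{S\sim Q}[K(x,S)f_i(S)]$. These coefficients are finite because $K$ and the $f_i$ are continuous on the compact $\mathcal{X}$.

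Evaluated on the sample $\mathcal{D}_N$, this gives the factorizations
\begin{equation*}
a(\mathcal{D}_N,\mathcal{D}_N)=A F,\qquad b(\mathcal{D}_N,\mathcal{D}_N)=F^\top B,\qquad c(\mathcal{D}_N,\mathcal{D}_N)=F^\top C F,
\end{equation*}
for suitable matrices $A\in\mathbb{R}^{N\times k}$, $B\in\mathbb{R}^{k\times N}$ and $C\in\mathbb{R}^{k\times k}$. Here $A_{nj}=\alpha_j(X_n)$, $C$ is $G_Q^{-1}$ times the matrix of double integrals ${\mathbb E}_{S,T\sim Q}[f_i(S)K(S,T)f_j(T)]$ times $G_Q^{-1}$, and $B$ is obtained from $A$ by symmetry of $K$.

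Sandwiching with $Q_N$ then kills each correction term. We get $Q_N A F Q_N=0$ because $FQ_N=0$, $Q_N F^\top B Q_N=0$ because $Q_N F^\top=0$, and $Q_N F^\top C F Q_N=0$ for either reason. Only $Q_N K(\mathcal{D}_N,\mathcal{D}_N) Q_N$ survives, which is the claimed identity.

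There is no real obstacle. The one point needing care is to verify, directly from the definition of $\Pi_Q$, that each correction term is a linear combination of the $f_j$ in the appropriate variable, uniformly for the rows and columns indexed by the sample. After that the proof is pure linear algebra.
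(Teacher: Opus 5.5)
Your proof is correct and follows the paper's argument. You expand $K_Q$ as $K$ minus the two one-sided projection terms plus the two-sided term, note that on $\mathcal{D}_N$ each correction term lies in the row space of $F$ in at least one index, and let $I_N-F^\top(FF^\top)^{-1}F$ annihilate it from that side. Your explicit factorizations $AF$, $F^\top B$, $F^\top C F$ are just a more concrete bookkeeping of the paper's observation that $[\Pi(X_i,s)]_{i=1}^N\in{\rm span}(\mathbf{f}_1,\dots,\mathbf{f}_k)$ for every $s$.
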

\begin{proof} The residual kernel equals
\begin{equation*}
\begin{split}
K_{Q}(x,y) = \int (\delta(x-s)-\Pi(x,s))(\delta(y-t)-\Pi(y,t))K(s,t)dQ(s)dQ(t),
\end{split}
\end{equation*}
where $\Pi(x,y) = \sum_{i,j=1}^k (G^{-1})_{ij} f_i(x)f_j(y)$ and $[G_{ij}]_{i,j=1}^k=  [\langle f_i, f_j\rangle_{L_2(\mathcal{X}, Q)}]_{i,j=1}^k$. So,
\begin{equation*}
\begin{split}
K(x,y) = K_{Q}(x,y)+\int \Pi(x,s)K(s,y)dQ(s)+\\
\int \Pi(y,t)K(x,t)dQ(t)-\int \Pi(x,s)\Pi(y,t)K(s,t)dQ(s)QP(t).
\end{split}
\end{equation*}
Since $[\Pi(x_i,s)]_{i=1}^N\in {\rm span}(\mathbf{f}_1,\cdots, \mathbf{f}_k)$ we have $(I_N-F^\top(FF^\top)^{-1}F)[\Pi(x_i,s)]_{i=1}^N=0$. Analogously, $([\Pi(t, x_i)]_{i=1}^N)^\top(I_N-F^\top(FF^\top)^{-1}F)=0$. Therefore,
\begin{equation*}
\begin{split}
(I_N-F^\top(FF^\top)^{-1}F) K(\mathcal{D}_N,\mathcal{D}_N)(I_N-F^\top(FF^\top)^{-1}F) =\\ 
(I_N-F^\top(FF^\top)^{-1}F) K_{Q}(\mathcal{D}_N,\mathcal{D}_N)(I_N-F^\top(FF^\top)^{-1}F).
\end{split}
\end{equation*}
Lemma proved.
\end{proof}
\begin{corollary}\label{project-N} We have
\begin{equation*}
\begin{split}
(I_N-F^\top(FF^\top)^{-1}F) K(\mathcal{D}_N,\mathcal{D}_N)(I_N-F^\top(FF^\top)^{-1}F) =\\ 
(I_N-F^\top(FF^\top)^{-1}F) \Phi^\top\Lambda \Phi(I_N-F^\top(FF^\top)^{-1}F).
\end{split}
\end{equation*}
\end{corollary}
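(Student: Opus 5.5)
The plan is to apply Lemma~\ref{project-Q} with the specific choice $Q=P$, the underlying input distribution. $P$ is $\mathcal{F}$-nondegenerate because the $f_j$ are linearly independent and continuous and $P$ has full support. By Lemma~\ref{project-Q}, the left-hand side of the corollary then equals
\[
(I_N-F^\top(FF^\top)^{-1}F)\,K_P(\mathcal{D}_N,\mathcal{D}_N)\,(I_N-F^\top(FF^\top)^{-1}F).
\]
It therefore suffices to identify the Gram matrix $K_P(\mathcal{D}_N,\mathcal{D}_N)$ with $\Phi^\top\Lambda\Phi$.

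That identification is just the Mercer-type expansion $K_P(x,y)=\sum_{i=1}^\infty \lambda_i\phi_i(x)\phi_i(y)$ introduced in Section~\ref{conditioning}, evaluated at the sample points. Entrywise it reads
\[
K_P(X_a,X_b)=\sum_i \lambda_i\phi_i(X_a)\phi_i(X_b)=(\Phi^\top\Lambda\Phi)_{ab},
\]
where $\Phi=[\phi_i(X_j)]$ is the $\mathbb{N}\times N$ matrix defined in the notation above. Substituting this into the previous display gives the claim.

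The only point needing care is the pointwise (not merely $L_2$) validity of the Mercer expansion at the points $X_a$. Since $K$ is continuous, $K_P$ is continuous, and the expansion converges absolutely and uniformly on the compact set $\mathcal{X}$ by Mercer's theorem for the continuous PD kernel $K_P$ (Theorem~\ref{residual}) on a measure with full support. Hence the entries of $\Phi^\top\Lambda\Phi$ are well-defined convergent series equal to $K_P(X_a,X_b)$. I expect this continuity/support check to be the only mild obstacle; the rest is a direct substitution.
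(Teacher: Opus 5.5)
Your proposal is correct and matches the paper's proof: set $Q=P$ in Lemma~\ref{project-Q} and use $K_P(\mathcal{D}_N,\mathcal{D}_N)=\Phi^\top\Lambda\Phi$ from the Mercer expansion of $K_P$. Your checks that $P$ is $\mathcal{F}$-nondegenerate and that the expansion holds pointwise (continuity of $K_P$, full support of $P$) are details the paper leaves implicit.
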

\begin{proof}
After setting $Q=P$ in the previous lemma and, from $K_{P}(\mathcal{D}_N,\mathcal{D}_N) = \Phi^\top\Lambda \Phi$, we obtain the needed statement.
\end{proof}

\begin{proof}[Proof of Theorem~\ref{transfer}] The vector of residuals is given by
\begin{equation*}
\begin{split}
r = (I_N-F^\top(FF^\top)^{-1}F) y =   (I_N-F^\top(FF^\top)^{-1}F) (\Phi^\top \Lambda^{1/2} v +F^\top u+\varepsilon)=\\
(I_N-F^\top(FF^\top)^{-1}F) (\Phi^\top \Lambda^{1/2} v+\varepsilon).
\end{split}
\end{equation*}
From Theorem~\ref{krr-cpd}, the trained mapping $\hat{f}= \sum_{i=1}^\infty \hat{v}_i \sqrt{\lambda_i}\phi_i+\sum_{i=1}^k \hat{u}_i f_i$ can be given as
$$
K_{P_N}(x,\mathcal{D}_N)(K_{P_N}(\mathcal{D}_N,\mathcal{D}_N)+\uplambda N I_N)^{-1}r + \sum_{i=1}^k \tilde{u}_i f_i,
$$
which gives
$$
\sqrt{\lambda_i}\hat{v}_i  = \langle \phi_i, K_{P_N}(\cdot,\mathcal{D}_N)(K_{P_N}(\mathcal{D}_N,\mathcal{D}_N)+\uplambda I_N)^{-1}r\rangle_{L_2(\mathcal{X}, P)}.
$$
From Lemma~\ref{eigen-N} we conclude $ \langle \phi_i, K_{P_N}(\cdot,\mathcal{D}_N)\rangle_{L_2(\mathcal{X}, P)}=\lambda_i \boldsymbol{\phi}_i^\top (I_N-F^\top(FF^\top)^{-1}F)$. Therefore,
\begin{equation*}
\begin{split}
\sqrt{\lambda_i}\hat{v}_i  = \lambda_i \boldsymbol{\phi}_i^\top (I_N-F^\top(FF^\top)^{-1}F)(K_{P_N}(\mathcal{D}_N,\mathcal{D}_N)+\uplambda N I_N)^{-1}r =\\
 \lambda_i \boldsymbol{\phi}_i^\top (I_N-F^\top(FF^\top)^{-1}F)(K_{P_N}(\mathcal{D}_N,\mathcal{D}_N)+\uplambda N I_N)^{+} (I_N-F^\top(FF^\top)^{-1}F)(\Phi^\top \Lambda^{1/2} v+\varepsilon)=\\
 \lambda_i \boldsymbol{\phi}_i^\top( (I_N-F^\top(FF^\top)^{-1}F)K_{P_N}(\mathcal{D}_N,\mathcal{D}_N) (I_N-F^\top(FF^\top)^{-1}F)+
 \uplambda N (I_N-F^\top(FF^\top)^{-1}F))^{+} (\Phi^\top \Lambda^{1/2} v+\varepsilon).
\end{split}
\end{equation*}
Above we used the property $(AB)^{+}=B^{+}A^{+}$ for commuting symmetric matrices $A,B$ and the fact that $A^{+}=A$ for a projection operator $A$. From Lemma~\ref{project-Q} and Corollary~\ref{project-N} we conclude
\begin{equation*}
\begin{split}
 (I_N-F^\top(FF^\top)^{-1}F)K_{P_N}(\mathcal{D}_N,\mathcal{D}_N) (I_N-F^\top(FF^\top)^{-1}F)= \\
 (I_N-F^\top(FF^\top)^{-1}F)\Phi^\top\Lambda\Phi (I_N-F^\top(FF^\top)^{-1}F)=\\ 
 (I_N-F^\top(FF^\top)^{-1}F)\Psi^\top\Lambda\Psi  (I_N-F^\top(FF^\top)^{-1}F).
\end{split}
\end{equation*}
Thus,
\begin{equation*}
\begin{split}
\hat{v}_i  =
 \sqrt{\lambda_i} \boldsymbol{\phi}_i^\top(I_N-F^\top(FF^\top)^{-1}F) (\Psi^\top\Lambda\Psi  +
 \uplambda N I_N)^{-1} (I_N-F^\top(FF^\top)^{-1}F) (\Phi^\top \Lambda^{1/2} v+\varepsilon),
\end{split}
\end{equation*}
and, therefore, $T_\phi =  \Lambda^{1/2} \Psi(\Psi^\top \Lambda \Psi+\uplambda N I_N)^{-1} \Psi^\top \Lambda^{1/2}$. Also, $T_\varepsilon = \Lambda^{1/2} \Psi(\Psi^\top \Lambda \Psi+\uplambda N I_N)^{-1} (I_N-F^\top(FF^\top)^{-1}F) $.

Let us now derive the formula for $\hat{u}$ as a function of $u,v$ and $\varepsilon$. 
First we will assume that $f_1, \cdots, f_k$ are orthogonal unit vectors in $L_2(\mathcal{X}, P_N)$. Using $\hat{f}= \sum_{i=1}^\infty \hat{v}_i \sqrt{\lambda_i}\phi_i+\sum_{i=1}^k \hat{u}_i f_i$, we have
\begin{equation*}
\begin{split}
\langle f_i, \hat{f}\rangle_{L_2(\mathcal{X}, P_N)} =\frac{1}{N}\sum_{j=1}^\infty \hat{v}_j \sqrt{\lambda_j}\mathbf{f}_i^\top\boldsymbol{\phi}_j  +\hat{u}_i = \frac{1}{N}\mathbf{f}_i^\top \Phi^\top\Lambda^{1/2} \hat{v}  +\hat{u}_i.
\end{split}
\end{equation*}
From $\hat{f}=K_{P_N}(x,\mathcal{D}_N)(K_{P_N}(\mathcal{D}_N,\mathcal{D}_N)+\uplambda N I_N)^{-1}r + \sum_{i=1}^k \tilde{u}_i f_i$ we also derive
$\langle \hat{f}, f_i\rangle_{L_2(\mathcal{X}, P_N)} =\tilde{u}_i$. Thus,
\begin{equation*}
\begin{split}
\hat{u}_i = \tilde{u}_i- \frac{1}{N}\mathbf{f}_i^\top \Phi^\top \Lambda^{1/2}\hat{v} .
\end{split}
\end{equation*}
Since $\tilde{u} = (FF^\top)^{-1}F y = \frac{1}{N}F y$ and $y = \Phi^\top \Lambda^{1/2} v +F^\top u+\varepsilon$ we conclude
\begin{equation*}
\begin{split}
\hat{u} = u+\frac{1}{N}F \Phi^\top \Lambda^{1/2}v- \frac{1}{N}F \Phi^\top\Lambda^{1/2}\hat{v} +\frac{1}{N}F\varepsilon=\\
 u+\frac{1}{N} F \Phi^\top\Lambda^{1/2} (v -  \Lambda^{1/2} \Psi(\Psi^\top \Lambda \Psi+\uplambda I_N)^{-1} \Psi^\top \Lambda^{1/2} v)+\frac{1}{N}F\varepsilon,
\end{split}
\end{equation*}
where $\Psi =  \Phi(I_N-F^\top(FF^\top)^{-1}F) = \Phi(I_N-\frac{1}{N}F^\top F)$. Therefore,
\begin{equation*}
\begin{split}
F^\top(\hat{u} - u) = \frac{1}{N} F^\top F \Phi^\top \Lambda^{1/2} (v -  \Lambda^{1/2}\Psi(\Psi^\top \Lambda \Psi+\uplambda I_N)^{-1} \Psi^\top  \Lambda^{1/2} v)+\frac{1}{N}F^\top F\varepsilon.
\end{split}
\end{equation*}

In the latter derivation we assumed that $f_1, \cdots, f_k$ are already orthogonalized in $L_2(\mathcal{X}, P_N)$. If we do not make such an assumption, the matrix of the projection operator onto $\mathcal{F}$ in $L_2(\mathcal{X}, P_N)$ is not $\frac{1}{N} F^\top F$, but $F^\top (FF^\top)^{-1}F$, which gives us
\begin{equation*}
\begin{split}
F^\top(\hat{u} - u) = F^\top (FF^\top)^{-1}F \left(\Phi^\top  \Lambda^{1/2}(v -   \Lambda^{1/2}\Psi(\Psi^\top \Lambda \Psi+\uplambda N I_N)^{-1} \Psi^\top  \Lambda^{1/2} v)+\varepsilon\right).
\end{split}
\end{equation*}
Thus,
\begin{equation*}
\begin{split}
\hat{u} = u+ (FF^\top)^{-1}F  \left(\Phi^\top  \Lambda^{1/2}(v -   \Lambda^{1/2}\Psi(\Psi^\top \Lambda \Psi+\uplambda N I_N)^{-1} \Psi^\top  \Lambda^{1/2} v)+\varepsilon\right).
\end{split}
\end{equation*}
where $\Psi =  \Phi(I_N-F^\top(FF^\top)^{-1}F)$. From the latter the expression for $T_f$ is straightforward.
\end{proof}

\subsection{Distance between $\hat{f}_\perp$ and $h_\perp$ ($\hat{f}_\parallel$ and $f_\parallel$)}
Recall that $h = \arg\min_{g\in \mathcal{H}_{K_P}}  \frac{1}{N}\sum_{i=1}^N (g(X_i) - Y^\perp_i)^2+\lambda\|g\|^2_{\mathcal{H}_{K_P}}$. The  following theorem bounds the difference between $h$ and $\hat{f}$ (or $\hat{f}_\perp$) in $\mathcal{H}_{K}$. Let $\hathat{v}$ be such that $h = \sum_{i=1}^\infty \hathat{v}_i\sqrt{\lambda_i}\phi_i$. From Theorem~\ref{transfer} we conclude, $\hathat{v} =  \hathat{T}_\phi v+\hathat{T}_{\phi\varepsilon}\varepsilon$ where 
$$
\hathat{T}_\phi = \Lambda^{1/2} \Phi(\Phi^\top \Lambda \Phi+\uplambda N I_N)^{-1} \Phi^\top\Lambda^{1/2}$$ 
and 
$$
\hathat{T}_{\phi\varepsilon} = \Lambda^{1/2} \Phi(\Phi^\top \Lambda \Phi+\uplambda N I_N)^{-1}.
$$
Let us introduce $t: \mathcal{B}({\mathbb R}^N, l^2({\mathbb N}))\to \mathcal{B}(l^2({\mathbb N}), l^2({\mathbb N}))$ by
$$
t(A) = A (A^\top A+ \uplambda I_N)^{-1} A^\top \in \mathcal{B}(l^2({\mathbb N}), l^2({\mathbb N})).
$$
Then, we have $\hathat{T}_\phi = t(\frac{1}{\sqrt{N}}\Lambda^{1/2} \Phi)$ and $T_\phi = t(\frac{1}{\sqrt{N}}\Lambda^{1/2} \Psi)$.

\begin{lemma}\label{perp-part} We have
\begin{equation*}
\begin{split}
{\mathbb E}_{\varepsilon}[\|h-\hat{f}\|^2_{\mathcal{H}^\mathcal{F}_{K}}]\leq \| t(\frac{1}{\sqrt{N}}\Lambda^{1/2} \Phi) - t(\frac{1}{\sqrt{N}}\Lambda^{1/2} \Psi)\|^2_{\mathcal{B}( l^2({\mathbb N}), l^2({\mathbb N}))} \|f\|^2_{\mathcal{H}^\mathcal{F}_{K}}+
\sigma^2 \frac{9C^2_{K_P}}{N\uplambda^2}\left(\frac{C^2_{K_P}}{\uplambda}+1\right)^2.
\end{split}
\end{equation*}	
\end{lemma}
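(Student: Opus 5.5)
Since $h$ and $\hat f_\perp$ both lie in $\mathcal{H}_{K_P}$, the seminorm $\|h-\hat f\|_{\mathcal{H}^\mathcal{F}_K}$ depends only on $\hat f_\perp - h$. In coefficients this is $\|\hat v-\hathat{v}\|_{l^2(\mathbb N)}$. By Theorem~\ref{transfer} and the formulas for $\hathat{T}_\phi,\hathat{T}_{\phi\varepsilon}$ we can write
\[
\hat v-\hathat{v} = (T_\phi-\hathat{T}_\phi)v + (T_{\phi\varepsilon}-\hathat{T}_{\phi\varepsilon})\varepsilon .
\]
Because $\varepsilon$ is centered and independent of $\mathcal{D}_N$, the cross term vanishes under ${\mathbb E}_\varepsilon$. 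This leaves
\[
{\mathbb E}_\varepsilon\|\hat v-\hathat v\|^2=\|(T_\phi-\hathat T_\phi)v\|^2+\sigma^2\|T_{\phi\varepsilon}-\hathat T_{\phi\varepsilon}\|_{HS}^2 .
\]
The first term is at most $\|t(\tfrac{1}{\sqrt N}\Lambda^{1/2}\Phi)-t(\tfrac{1}{\sqrt N}\Lambda^{1/2}\Psi)\|^2\,\|v\|^2$, and $\|v\|^2=\|f\|^2_{\mathcal{H}^\mathcal{F}_K}$. This is exactly the first summand of the lemma. Note that $\uplambda N$ inside the inverses becomes $\uplambda$ after pulling out $1/\sqrt N$, consistent with the definition of $t$.

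The remaining work is the noise term. I would write $A=\tfrac1{\sqrt N}\Lambda^{1/2}\Phi$, $Q=I_N-F^\top(FF^\top)^{-1}F$, and $B=AQ=\tfrac1{\sqrt N}\Lambda^{1/2}\Psi$. Since $Q$ is a projection and $B^\top B$ is supported on ${\rm range}(Q)$, $(B^\top B+\uplambda I)^{-1}$ commutes with $Q$, so
\[
T_{\phi\varepsilon}-\hathat T_{\phi\varepsilon}=\tfrac1{\sqrt N}\bigl[B(B^\top B+\uplambda)^{-1}-A(A^\top A+\uplambda)^{-1}\bigr].
\]
I would split this difference as
\[
(B-A)(B^\top B+\uplambda)^{-1}+A\bigl[(B^\top B+\uplambda)^{-1}-(A^\top A+\uplambda)^{-1}\bigr],
\]
and expand the resolvent difference as $(B^\top B+\uplambda)^{-1}(A^\top A-B^\top B)(A^\top A+\uplambda)^{-1}$. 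Here $B-A=-A(I-Q)$ with $I-Q$ a rank-$k$ projection, and $A^\top A-B^\top B=A^\top A-QA^\top AQ$ is a sum of terms each containing the factor $I-Q$. Consequently every piece has rank $O(k)$, and its Hilbert--Schmidt norm is controlled by its operator norm times a $\sqrt{\text{rank}}$ factor. The ingredients are:
\[
\|A\|^2=\|A^\top A\|\le {\rm tr}(A^\top A)/N\cdot N=\tfrac1N\sum_i K_P(X_i,X_i)\cdot\text{(row scale)}\le C_{K_P}^2,
\]
together with $\|(\cdot+\uplambda)^{-1}\|\le 1/\uplambda$ and $\|A(A^\top A+\uplambda)^{-1}\|\le\|A\|/\uplambda$. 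Combining these gives a bound of the form $\tfrac{C_{K_P}}{\uplambda}(\tfrac{C_{K_P}^2}{\uplambda}+1)$ up to the constant $3$. After the prefactor $1/N$ and squaring, this yields $\sigma^2\tfrac{9C^2_{K_P}}{N\uplambda^2}(\tfrac{C^2_{K_P}}{\uplambda}+1)^2$.

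The main obstacle is that the claimed noise bound carries no factor $k$, so the Hilbert--Schmidt norm must be handled more cleverly than via rank. I would instead bound $\|T_{\phi\varepsilon}\|_{HS}^2$ and $\|\hathat T_{\phi\varepsilon}\|_{HS}^2$ directly in trace form. For example,
\[
\|\hathat T_{\phi\varepsilon}\|_{HS}^2=\tfrac1N{\rm tr}\bigl(A^\top A(A^\top A+\uplambda)^{-2}\bigr)\le \tfrac1N\,{\rm tr}(A^\top A)/\uplambda^2 ,
\]
with ${\rm tr}(A^\top A)=\tfrac1N\sum_i K_P(X_i,X_i)\le C_{K_P}^2$. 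I would then split the difference into three terms, each having one trace-class factor bounded by $C_{K_P}^2$ in trace norm and the remaining factors bounded in operator norm by $1/\uplambda$ or $C_{K_P}^2/\uplambda$. The triangle inequality then produces the factor $3$, and the inequality $(a+b+c)^2$-type combination produces the $9$. Checking that each of the three pieces indeed has such a trace-class factor, with $\|B\|\le\|A\|$ because $Q$ is a contraction, is the delicate bookkeeping step.
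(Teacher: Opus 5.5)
Your proposal is correct and is essentially the paper's argument. It uses the same expansion with the cross term killed by ${\mathbb E}_\varepsilon$ and the same operator-norm bound on the signal part. For the noise part it uses the same estimate $\|XY\|_F\le\|X\|_F\|Y\|$, with $\|\tfrac{1}{\sqrt N}\Lambda^{1/2}\Phi\|_F^2=\tfrac1N\sum_i K_P(X_i,X_i)\le C_{K_P}^2$, combined with the resolvent identity. Your observation that $Q$ commutes with $(B^\top B+\uplambda I_N)^{-1}$ merely collapses the paper's four-term split into two.

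The bookkeeping you flag as delicate is in fact unnecessary. Already $\|T_{\phi\varepsilon}\|_{F}+\|\hathat{T}_{\phi\varepsilon}\|_{F}\le 2C_{K_P}/(\sqrt N\uplambda)$, which gives a noise term of at most $4\sigma^2C_{K_P}^2/(N\uplambda^2)$, below the stated one.
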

\begin{proof} 
Using Theorem~\ref{cucker-smale-kind}, the squared semi-norm $\|h-\hat{f}\|^2_{\mathcal{H}_{K}}$ equals
\begin{equation*}
\begin{split}
\|\sum_{i=1}^\infty \hathat{v}_i\sqrt{\lambda_i}\phi_i-\sum_{i=1}^\infty \hat{v}_i\sqrt{\lambda_i}\phi_i\|_{\mathcal{H}_{K_P}}^2 = \|\hathat{v}-\hat{v}\|_{l^2({\mathbb N})}^2 = \|\hathat{T}_\phi v+\hathat{T}_{\phi\varepsilon} \varepsilon-T_\phi v-T_{\phi\varepsilon} \varepsilon\|_{l^2({\mathbb N})}^2= \\
\|(\hathat{T}_\phi-T_\phi) v\|_{l^2({\mathbb N})}^2+\|(\hathat{T}_{\phi\varepsilon}-T_{\phi\varepsilon}) \varepsilon\|_{l^2({\mathbb N})}^2+2\langle (\hathat{T}_\phi-T_\phi) v, (\hathat{T}_{\phi\varepsilon}-T_{\phi\varepsilon}) \varepsilon \rangle_{l^2({\mathbb N})}.
\end{split}
\end{equation*}	
Taking the expectation over $\varepsilon$ gives
\begin{equation*}
\begin{split}
{\mathbb E}_{\varepsilon}[\|h-\hat{f}\|^2_{\mathcal{H}^\mathcal{F}_{K}}]=
\|(\hathat{T}_\phi-T_\phi) v\|_{l^2({\mathbb N})}^2+\sigma^2 {\rm Tr}((\hathat{T}_{\phi\varepsilon}-T_{\phi\varepsilon}) ^\top(\hathat{T}_{\phi\varepsilon}-T_{\phi\varepsilon}) ).
\end{split}
\end{equation*}	
The $v$-dependent term can be bounded by 
\begin{equation*}
\begin{split}
\|(\hathat{T}_\phi-T_\phi) v\|_{l^2({\mathbb N})}^2\leq \|\hathat{T}_\phi-T_\phi\|^2_{\mathcal{B}(l^2({\mathbb N}),l^2({\mathbb N}))} \|v\|_{l^2({\mathbb N})}^2 = \\
\| t(\frac{1}{\sqrt{N}}\Lambda^{1/2} \Phi) - t(\frac{1}{\sqrt{N}}\Lambda^{1/2} \Psi)\|^2_{\mathcal{B}( l^2({\mathbb N}), l^2({\mathbb N}))} \|f\|^2_{\mathcal{H}^\mathcal{F}_{K}}.
\end{split}
\end{equation*}	
Let us denote $\Pi_F =  F^\top(FF^\top)^{-1}F$. The coefficient ${\rm Tr}((\hathat{T}_{\phi\varepsilon}-T_{\phi\varepsilon}) ^\top(\hathat{T}_{\phi\varepsilon}-T_{\phi\varepsilon}) )=\|\hathat{T}_{\phi\varepsilon}-T_{\phi\varepsilon}\|_F^2$ can be bounded by
\begin{equation*}
\begin{split}
\|\hathat{T}_{\phi\varepsilon}-T_{\phi\varepsilon}\|_F \leq \|\Lambda^{1/2} \Phi(\Psi^\top \Lambda \Psi+\uplambda N I_N)^{-1}-\Lambda^{1/2} \Phi(\Phi^\top \Lambda \Phi+\uplambda N I_N)^{-1}\|_F+\\
\|\Lambda^{1/2} \Phi \Pi_F (\Psi^\top \Lambda \Psi+\uplambda N I_N)^{-1}\|_F+\|\Lambda^{1/2} \Phi (\Psi^\top \Lambda \Psi+\uplambda N I_N)^{-1}\Pi_F \|_F
+\|\Lambda^{1/2} \Phi \Pi_F (\Psi^\top \Lambda \Psi+\uplambda N I_N)^{-1}\Pi_F \|_F.
\end{split}
\end{equation*}	
The resolvent identity allows to bound the first term term by
\begin{equation*}
\begin{split}
\|\Lambda^{1/2} \Phi(\Psi^\top \Lambda \Psi+\uplambda N I_N)^{-1}-\Lambda^{1/2} \Phi(\Phi^\top \Lambda \Phi+\uplambda N I_N)^{-1}\|_F\leq \\
\frac{1}{\sqrt{N}} \|\frac{1}{\sqrt{N}}\Lambda^{1/2} \Phi\|_F \cdot \|(\frac{1}{N}\Psi^\top \Lambda \Psi+\uplambda I_N)^{-1}-(\frac{1}{N}\Phi^\top \Lambda \Phi+\uplambda I_N)^{-1}\|_{\mathcal{B}({\mathbb R}^N, {\mathbb R}^N)}\leq \\
\frac{C_{K_P}}{\sqrt{N}} \|(\frac{1}{N}\Psi^\top \Lambda \Psi+\uplambda I_N)^{-1}-(\frac{1}{N}\Phi^\top \Lambda \Phi+\uplambda I_N)^{-1}\|_{\mathcal{B}({\mathbb R}^N, {\mathbb R}^N)} \leq \\
\frac{C_{K_P}}{\sqrt{N}\uplambda^2}\|\frac{1}{N}\Psi^\top \Lambda \Psi-\frac{1}{N}\Phi^\top \Lambda \Phi\|_{\mathcal{B}({\mathbb R}^N, {\mathbb R}^N)}.
\end{split}
\end{equation*}	
Since
\begin{equation*}
\begin{split}
\|\frac{1}{N}\Psi^\top \Lambda \Psi-\frac{1}{N}\Phi^\top \Lambda \Phi\|_{\mathcal{B}({\mathbb R}^N, {\mathbb R}^N)} \leq  \|\frac{1}{N} \Pi_F^\top\Phi^\top \Lambda \Phi\|_{\mathcal{B}({\mathbb R}^N, {\mathbb R}^N)} +\|\frac{1}{N}\Phi^\top \Lambda \Phi\Pi_F\|_{\mathcal{B}({\mathbb R}^N, {\mathbb R}^N)} +\\
\|\frac{1}{N}\Pi_F^\top\Phi^\top \Lambda \Phi \Pi_F\|_{\mathcal{B}({\mathbb R}^N, {\mathbb R}^N)}\leq 3\|\frac{1}{N} \Phi^\top \Lambda \Phi\|_{\mathcal{B}({\mathbb R}^N, {\mathbb R}^N)}  \leq 3C^2_{K_P},
\end{split}
\end{equation*}	
the first term is bounded by $\frac{3C^3_{K_P}}{\sqrt{N}\uplambda^2}$. The 2nd, 3rd and 4th terms are bounded by
\begin{equation*}
\begin{split}
\|\Lambda^{1/2} \Phi \Pi_F (\Psi^\top \Lambda \Psi+\uplambda N I_N)^{-1}\|_F\leq \frac{\|\Lambda^{1/2} \Phi \Pi_F\|_F}{\uplambda N}\leq  \frac{\|\Lambda^{1/2} \Phi \|_F}{\uplambda N}\leq \frac{C_{K_P}}{\sqrt{N}\uplambda},\\
\|\Lambda^{1/2} \Phi  (\Psi^\top \Lambda \Psi+\uplambda N I_N)^{-1}\Pi_F\|_F\leq \|\Lambda^{1/2} \Phi  (\Psi^\top \Lambda \Psi+\uplambda N I_N)^{-1}\|_F\leq \frac{\|\Lambda^{1/2} \Phi \|_F}{\uplambda N}\leq \frac{C_{K_P}}{\sqrt{N}\uplambda},\\
\|\Lambda^{1/2} \Phi \Pi_F (\Psi^\top \Lambda \Psi+\uplambda N I_N)^{-1}\Pi_F \|_F\leq \frac{\|\Lambda^{1/2} \Phi \Pi_F\|_F}{\uplambda N}\leq \frac{\|\Lambda^{1/2} \Phi \|_F}{\uplambda N}\leq \frac{C_{K_P}}{\sqrt{N}\uplambda}.
\end{split}
\end{equation*}	
To conclude, we have
\begin{equation*}
\begin{split}
\|\hathat{T}_{\phi\varepsilon}-T_{\phi\varepsilon}\|_F \leq \frac{3C^3_{K_P}}{\sqrt{N}\uplambda^2}+ \frac{3C_{K_P}}{\sqrt{N}\uplambda}.
\end{split}
\end{equation*}	
Lemma proved.
\end{proof}

The  following theorem bounds the difference between $f_\parallel$ and $\hat{f}_\parallel$ in $L_2(\mathcal{X}, P)$. 
\begin{lemma}\label{parallel-part} We have
\begin{equation*}
\begin{split}
{\mathbb E}_{\varepsilon}[\|\hat{f}_\parallel-f_\parallel\|^2_{L_2(\mathcal{X}, P)}]\leq 
\|T_f\|^2_{\mathcal{B}(l^2({\mathbb N}),{\mathbb R}^k)} \|f\|^2_{\mathcal{H}^\mathcal{F}_K}+
\frac{k}{N}\|(\frac{1}{N}FF^\top)^{-1}\|_{\mathcal{B}({\mathbb R}^k,{\mathbb R}^k)}\sigma^2.
\end{split}
\end{equation*}	
\end{lemma}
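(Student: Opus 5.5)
The plan is to read everything off the transfer representation of Theorem~\ref{transfer}: $\hat{u}-u = T_f v + T_{f\varepsilon}\varepsilon$ with $T_{f\varepsilon}=(FF^\top)^{-1}F$. Since $f_1,\dots,f_k$ are orthonormal in $L_2(\mathcal{X},P)$ (the standing assumption of Theorem~\ref{th-conditioning-bound}), we have
\[
\|\hat{f}_\parallel-f_\parallel\|^2_{L_2(\mathcal{X},P)}=\Big\|\sum_{i=1}^k(\hat{u}_i-u_i)f_i\Big\|^2_{L_2(\mathcal{X},P)}=\|\hat{u}-u\|^2_{\mathbb{R}^k}.
\]
This reduces the lemma to a bound on $\mathbb{E}_\varepsilon\|T_f v+T_{f\varepsilon}\varepsilon\|^2$ with $X_1,\dots,X_N$ held fixed, so that $F$ is fixed and of rank $k$.

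\textbf{Step 1: expand the square.} Because $\varepsilon\sim\mathcal{N}(\mathbf{0},\sigma^2 I_N)$ is independent of $\mathcal{D}_N$ and $T_f v$ is deterministic given $\mathcal{D}_N$, the cross term $2\langle T_f v, T_{f\varepsilon}\varepsilon\rangle$ has zero expectation. Hence
\[
\mathbb{E}_\varepsilon\|\hat{u}-u\|^2=\|T_f v\|^2+\sigma^2\,\mathrm{Tr}(T_{f\varepsilon}T_{f\varepsilon}^\top).
\]

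\textbf{Step 2: the signal term.} We bound $\|T_f v\|^2\le \|T_f\|^2_{\mathcal{B}(l^2(\mathbb{N}),\mathbb{R}^k)}\|v\|^2_{l^2(\mathbb{N})}$. By the construction in Section~\ref{conditioning} (via Theorem~\ref{cucker-smale-kind}), $\|v\|^2_{l^2}=\|f\|^2_{\mathcal{H}^\mathcal{F}_K}$, which gives the first summand. We also need $T_f$ to be bounded. This holds because $\Lambda^{1/2}\Phi$ is a Hilbert--Schmidt map $\mathbb{R}^N\to l^2(\mathbb{N})$ with $\|\Lambda^{1/2}\Phi\|_F^2=\sum_j K_P(X_j,X_j)\le NC_{K_P}^2$, and the bracket $I-t(\cdot)$ has norm at most $1$.

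\textbf{Step 3: the noise term.} We compute
\[
T_{f\varepsilon}T_{f\varepsilon}^\top=(FF^\top)^{-1}FF^\top(FF^\top)^{-1}=(FF^\top)^{-1}=\tfrac1N\big(\tfrac1N FF^\top\big)^{-1}.
\]
Its trace is at most $k$ times its largest eigenvalue, i.e.
\[
\mathrm{Tr}\big((FF^\top)^{-1}\big)\le \frac{k}{N}\Big\|\big(\tfrac1N FF^\top\big)^{-1}\Big\|_{\mathcal{B}(\mathbb{R}^k,\mathbb{R}^k)}.
\]
Multiplying by $\sigma^2$ yields the second summand.

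There is no real obstacle here; the only points requiring care are the independence argument that kills the cross term and the identification $\|\hat{f}_\parallel-f_\parallel\|_{L_2(P)}=\|\hat{u}-u\|$, which uses orthonormality in $L_2(\mathcal{X},P)$ and not in $L_2(\mathcal{X},P_N)$. Everything else is immediate from Theorem~\ref{transfer}.
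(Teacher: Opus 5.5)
Your proposal is correct and matches the paper's proof step for step. You reduce to $\mathbb{E}_\varepsilon\|\hat u-u\|^2$ via orthonormality of $f_1,\dots,f_k$ in $L_2(\mathcal{X},P)$, split it as $\|T_f v\|^2+\sigma^2\mathrm{Tr}(T_{f\varepsilon}T_{f\varepsilon}^\top)$, bound the first term by $\|T_f\|^2\|v\|^2_{l^2}=\|T_f\|^2\|f\|^2_{\mathcal{H}^\mathcal{F}_K}$, and bound $\mathrm{Tr}((FF^\top)^{-1})$ by $k$ times its largest eigenvalue; you are also right to flag that this orthonormality hypothesis, left implicit in the lemma statement, is what the first identity relies on.
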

\begin{proof} The squared norm $\|\hat{f}_\parallel-f_\parallel\|^2_{L_2(\mathcal{X}, P)}$ equals
\begin{equation*}
\begin{split}
{\mathbb E}_{\varepsilon}[\|\sum_{i=1}^k \hat{u}_i f_i-\sum_{i=1}^k u_i f_i\|^2_{L_2(\mathcal{X}, P)}] = {\mathbb E}_{\varepsilon}[\|\hat{u}-u\|^2] = {\mathbb E}_{\varepsilon}[\|T_f v+T_{f\varepsilon} \varepsilon\|^2]=\\
\|T_f v\|^2+\sigma^2{\rm Tr}(T_{f\varepsilon}^\top T_{f\varepsilon})\leq \|T_f\|^2_{\mathcal{B}(l^2({\mathbb N}),{\mathbb R}^k)} \|f\|^2_{\mathcal{H}^\mathcal{F}_K}+\sigma^2{\rm Tr}(T_{f\varepsilon}^\top T_{f\varepsilon}).
\end{split}
\end{equation*}	
The second term can be bounded by
\begin{equation*}
\begin{split}
{\rm Tr}(T_{f\varepsilon}^\top T_{f\varepsilon})={\rm Tr}( T_{f\varepsilon}T_{f\varepsilon}^\top) ={\rm Tr}((FF^\top)^{-1}FF^\top(FF^\top)^{-1})=
{\rm Tr}((FF^\top)^{-1})\leq \frac{k}{N}\|(\frac{1}{N}FF^\top)^{-1}\|_{\mathcal{B}({\mathbb R}^k,{\mathbb R}^k)}.
\end{split}
\end{equation*}	
\end{proof}
Thus, to control ${\mathbb E}_{\varepsilon}[\|\hat{f}_\perp-h\|^2_{\mathcal{H}^\mathcal{F}_{K}}]$ and ${\mathbb E}_{\varepsilon}[\|\hat{f}_\parallel-f_\parallel\|^2_{L_2(\mathcal{X}, P)}]$ we need to bound $\| t(\frac{1}{\sqrt{N}}\Lambda^{1/2} \Phi) - t(\frac{1}{\sqrt{N}}\Lambda^{1/2} \Psi)\|^2_{\mathcal{B}( l^2({\mathbb N}), l^2({\mathbb N}))}$, $\|T_f\|_{\mathcal{B}(l^2({\mathbb N}),{\mathbb R}^k)}$ and $\|(\frac{1}{N}FF^\top)^{-1}\|_{\mathcal{B}({\mathbb R}^k,{\mathbb R}^k)}$. Required bounds for the latter expressions are obtained in the next section.

\subsection{Concentration of transfer matrices}
Let us introduce the notation:
$$
e(x) = [\sqrt{\lambda_1}\phi_1(x), \sqrt{\lambda_2}\phi_2(x), \cdots]^\top\in l^2({\mathbb N}).
$$
Given $x_1, \cdots, x_N\in \mathcal{X}$, let 
$$
e(x_1, \cdots, x_N) = [e(x_1), \cdots, e(x_N)]\in \mathcal{B}({\mathbb R}^N, l^2({\mathbb N})),
$$
Given continuous functions $f_j:\mathcal{X}\to {\mathbb R}$, $j=1,\cdots, k$, let us define a vector-function ${\mathbf f}(x) = [f_1(x), \cdots, f_k(x)]^\top$ and denote
$$
{\mathbf f}(x_1, \cdots, x_N) = [{\mathbf f}(x_1), \cdots, {\mathbf f}(x_N)]\in {\mathbb R}^{k\times N}.
$$
Using  the introduced notation, the matrix $\Lambda^{1/2}\Phi$ can be rewritten as $e(X_1, \cdots, X_N)$, the matrix $F$ as ${\mathbf f}(X_1, \cdots, X_N)$, and the matrix $\Lambda^{1/2}\Psi$ in the following form:
\begin{equation*}
\begin{split}
\Lambda^{1/2}\Psi = e(X_1, \cdots, X_N)\times 
(I_N-{\mathbf f}(X_1, \cdots, X_N)^\top ({\mathbf f}(X_1, \cdots, X_N){\mathbf f}(X_1, \cdots, X_N)^\top)^{-1}{\mathbf f}(X_1, \cdots, X_N)).
\end{split}
\end{equation*}
Our first goal is to bound $\|\Lambda^{1/2}\Psi-\Lambda^{1/2}\Phi\|_{\mathcal{B}({\mathbb R}^N, l^2({\mathbb N}))}$, or to bound
$$
\|e(X_1, \cdots, X_N) {\mathbf f}(X_1, \cdots, X_N)^\top ({\mathbf f}(X_1, \cdots, X_N){\mathbf f}(X_1, \cdots, X_N)^\top)^{-1}{\mathbf f}(X_1, \cdots, X_N)\|_{\mathcal{B}({\mathbb R}^N, l^2({\mathbb N}))}.
$$
The latter expression can be bounded by a product of
$$
\|\frac{1}{N}e(X_1, \cdots, X_N) {\mathbf f}(X_1, \cdots, X_N)^\top\|_{\mathcal{B}({\mathbb R}^k, l^2({\mathbb N}))},
$$
and
$$
\|(\frac{1}{N}{\mathbf f}(X_1, \cdots, X_N){\mathbf f}(X_1, \cdots, X_N)^\top)^{-1}\|_{\mathcal{B}({\mathbb R}^k, {\mathbb R}^k)} \|{\mathbf f}(X_1, \cdots, X_N)\|_{\mathcal{B}({\mathbb R}^N, {\mathbb R}^k)}.
$$
The following lemma is dedicated to the first factor. 
\begin{lemma}\label{proj-conc-bound} Let $X_1, \dots, X_N \sim^{\text{iid}} P$. For any $t>0$, we have
\begin{equation*}
\begin{split}
\|\frac{1}{N}e(X_1, \cdots, X_N) {\mathbf f}(X_1, \cdots, X_N)^\top\|_{\mathcal{B}({\mathbb R}^k,l^2({\mathbb N}))}\leq 
 \sqrt{k\left(\frac{1}{N}C^2_{K_P}\max_{j:1\leq j\leq k}\|f_j\|^2_{L_\infty(\mathcal{X})}+t\right)},
\end{split}
\end{equation*}
with probability at least $1-ke^{-\frac{Nt^2}{8 C_{K_P}^4\max_{j:1\leq j\leq k}\|f_j\|^4_{L_\infty(\mathcal{X})}}}$.
\end{lemma}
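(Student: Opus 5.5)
The operator $A=\frac{1}{N}e(X_1,\cdots,X_N){\mathbf f}(X_1,\cdots,X_N)^\top\in\mathcal{B}({\mathbb R}^k,l^2({\mathbb N}))$ has columns
$$a_j=\frac1N\sum_{i=1}^N f_j(X_i)e(X_i)\in l^2({\mathbb N}),\qquad j=1,\dots,k.$$
I will bound the operator norm by the Hilbert--Schmidt norm, $\|A\|^2\le\sum_{j=1}^k\|a_j\|^2_{l^2}$. It then suffices to show that for each fixed $j$,
$$\|a_j\|^2\le \frac1N C_{K_P}^2\|f_j\|_{L_\infty}^2+t$$
with probability at least $1-e^{-Nt^2/(8C_{K_P}^4\max_j\|f_j\|^4_{L_\infty})}$. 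A union bound over the $k$ columns then gives the factor $k$ in both the bound and the failure probability.

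The key structural fact is that each summand $Z_i=f_j(X_i)e(X_i)$ is centered. Its $m$-th coordinate has mean $\sqrt{\lambda_m}\langle f_j,\phi_m\rangle_{L_2(\mathcal{X},P)}=0$, because the Mercer eigenfunctions of $K_P$ lie in $\mathcal{F}^\perp$ (Section~\ref{conditioning}, proof of Theorem~\ref{cucker-smale-kind}). Each summand is also bounded: $\|e(x)\|^2=\sum_m\lambda_m\phi_m(x)^2=K_P(x,x)\le C_{K_P}^2$, so $\|Z_i\|\le C_{K_P}\|f_j\|_{L_\infty}=:B$. Expanding,
$$\|a_j\|^2=\frac1{N^2}\sum_i\|Z_i\|^2+\frac1{N^2}\sum_{i\ne i'}\langle Z_i,Z_{i'}\rangle .$$
The diagonal part is deterministically at most $B^2/N$, which is exactly the $\frac1N C_{K_P}^2\|f_j\|^2_{L_\infty}$ term. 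The off-diagonal part $U=\frac1{N^2}\sum_{i\neq i'}\langle Z_i,Z_{i'}\rangle$ is a degenerate U-statistic with mean zero, since $\mathbb{E}\langle Z_i,Z_{i'}\rangle=\langle \mathbb{E}Z_i,\mathbb{E}Z_{i'}\rangle=0$.

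It remains to show $P(U>t)\le e^{-Nt^2/(8B^4)}$. I will apply McDiarmid's bounded-difference inequality to $g(X_1,\dots,X_N)=\|a_j\|^2$. For concreteness, the constants I expect are as follows. Replacing $X_i$ by $X_i'$ changes $a_j$ by $\Delta=(Z_i'-Z_i)/N$ with $\|\Delta\|\le 2B/N$. The change in $\|a_j\|^2$ is $\langle\Delta,a_j+a_j'\rangle$, bounded by $\frac{2B}{N}\cdot 2B=\frac{4B^2}{N}$, since $\|a_j\|,\|a_j'\|\le B$. With $c_i=cB^2/N$, McDiarmid gives tail $\exp(-2t^2/(N c_i^2))=\exp(-2Nt^2/(c^2B^4))$. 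Using $\mathbb{E}\|a_j\|^2=\frac1N\mathbb{E}\|Z_1\|^2\le B^2/N$ (cross terms vanish by centering), this yields
$$\|a_j\|^2\le B^2/N+t\quad\text{w.p. }\ge 1-e^{-2Nt^2/(c^2B^4)}.$$
The needed exponent $Nt^2/(8B^4)$ corresponds to $c=4$, which matches the increment computed above. Replacing $B^4$ by $C_{K_P}^4\max_j\|f_j\|^4_{L_\infty}$ only weakens the bound.

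The main obstacle is the bounded-difference constant. I will verify that the crude estimate $|\langle\Delta,a_j+a_j'\rangle|\le 4B^2/N$ suffices, rather than needing a variance-sensitive Bernstein-type inequality. I will also check that nothing requires more than the boundedness of $e(\cdot)$ and the orthogonality of $\{\phi_m\}$ to $\mathcal{F}$. Notably, the orthonormality assumption on $f_j$ is not needed for this lemma.
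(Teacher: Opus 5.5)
Your proposal is correct and follows essentially the same route as the paper. You bound the operator norm by $\sqrt{k}\max_j$ of the column norms, get centering from $\phi_m\perp\mathcal{F}$ in $L_2(\mathcal{X},P)$, apply McDiarmid with bounded differences $4C_{K_P}^2\|f_j\|^2_{L_\infty(\mathcal{X})}/N$, and take a union bound over the $k$ columns.

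The only difference is minor. The paper applies McDiarmid to the mean-zero off-diagonal part $\|(f_j)'_N\|^2_{\mathcal{H}_{K_P}}-\frac{1}{N^2}\sum_i f_j(X_i)^2K_P(X_i,X_i)$. You apply it to the full $\|a_j\|^2$ and use $\mathbb{E}\|a_j\|^2\le B^2/N$, which gives identical constants. Note that your argument yields $\|a_j\|^2\le B^2/N+t$ directly, not the intermediate claim $\mathbb{P}(U>t)\le e^{-Nt^2/(8B^4)}$ you announced; you do not need that claim, so drop or rephrase that sentence.
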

To prove it we need to prepare a number of lemmas. 
\begin{lemma}\label{mean-in-rkhs}
Let $f\in {\rm span}(f_1, \cdots, f_k)$ and $X_1, \dots, X_N \sim^{\text{iid}} P$. 
We have
\begin{equation*}
\begin{split}
{\mathbb E}\left[\|f'_N\|_{H_{K_P}}^2\right] = \frac{1}{N}E_{X\sim P}[f(X)^2K_P (X,X)],
\end{split}
\end{equation*}
where $f'_N(\cdot) = \frac{1}{N}\sum_{i=1}^N f(X_i)K_P (X_i,\cdot)$.
\end{lemma}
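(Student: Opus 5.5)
We prove Lemma~\ref{mean-in-rkhs} by expanding the squared RKHS norm of the finite kernel combination with the reproducing property, and then using independence and the orthogonality of $f$ to the residual kernel sections. Since $f'_N=\sum_i \frac{1}{N}f(X_i)K_P(X_i,\cdot)$, the reproducing property of $\mathcal{H}_{K_P}$ gives
\begin{equation*}
\|f'_N\|^2_{\mathcal{H}_{K_P}}=\frac{1}{N^2}\sum_{i,j=1}^N f(X_i)f(X_j)K_P(X_i,X_j).
\end{equation*}
We then take the expectation over the i.i.d. sample and split the double sum into the diagonal part ($i=j$) and the off-diagonal part ($i\neq j$).

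The diagonal part contributes $\frac{1}{N^2}\cdot N\cdot {\mathbb E}_{X\sim P}[f(X)^2K_P(X,X)]$, which is exactly the claimed right-hand side. For $i\neq j$, independence gives
\begin{equation*}
{\mathbb E}[f(X_i)f(X_j)K_P(X_i,X_j)]=\int\!\!\int f(x)K_P(x,y)f(y)\,dP(x)\,dP(y).
\end{equation*}
This integral vanishes. For fixed $y$, the function $x\mapsto K_P(x,y)$ lies in $(I-\Pi_P)[L_2]$, since $K_P=((I-\Pi_P)\otimes(I-\Pi_P))[K]$. Equivalently, by the Mercer expansion $K_P(x,y)=\sum_i\lambda_i\phi_i(x)\phi_i(y)$ with $\phi_i\in\mathcal{F}^\perp$, it is orthogonal in $L_2(\mathcal{X},P)$ to every element of $\mathcal{F}$. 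Since $f\in\mathcal{F}=\mathrm{span}(f_1,\dots,f_k)$, we get $\int f(x)K_P(x,y)\,dP(x)=0$ for every $y$, and hence all off-diagonal expectations are zero.

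The only step needing care is justifying this orthogonality and the interchange of integrals. Here we use that $P$ is $\mathcal{F}$-nondegenerate, so that $\Pi_P f=f$ and therefore $(I-\Pi_P)f=0$. We also use that $K_P$ and $f$ are continuous on the compact set $\mathcal{X}$, so all integrands are bounded and Fubini applies. Concretely, the orthogonality reads $\int f(x)K_P(x,y)\,dP(x)=\big((I-\Pi_P)[\,(I-\Pi_P)[K(\cdot,y)]\,]\big)$ paired with $f$, which equals the pairing of $(I-\Pi_P)[K(\cdot,y)]$ with $(I-\Pi_P)f=0$, using that $\Pi_P$ is self-adjoint on $L_2(\mathcal{X},P)$. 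Summing the two contributions yields the lemma.
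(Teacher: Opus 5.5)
Your proposal is correct and follows the paper's proof. Like the paper, you expand $\|f'_N\|^2_{\mathcal{H}_{K_P}}$ via the reproducing property and kill the off-diagonal expectations using $\int_{\mathcal{X}} K_P(x,y)f(y)\,dP(y)=0$ for $f\in\mathcal{F}$, leaving only the $N$ diagonal terms; you additionally spell out the diagonal/off-diagonal split and justify that orthogonality (self-adjointness of $\Pi_P$, $(I-\Pi_P)f=0$ under nondegeneracy, and Fubini), which the paper asserts in a single line.
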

\begin{proof}
By the reproducing property
\begin{equation*}
\begin{split}
\|f'_N\|_{\mathcal{H}_K}^2 = \frac{1}{N^2}\sum_{i,j=1}^N f(X_i)f(X_j)\langle K_P(X_i,\cdot),K_P(X_j,\cdot)\rangle_{\mathcal{H}_{K_P}}=
\frac{1}{N^2}\sum_{i,j=1}^N f(X_i)f(X_j) K_P(X_i,X_j).
\end{split}
\end{equation*}
Hence,
$$
{\mathbb E}[\|f'_N\|_{\mathcal{H}_{K_P}}^2] = \frac{1}{N}{\mathbb E}_{X\sim P}[f(X)^2K_P(X,X)],
$$
due to $\int_{\mathcal{X}} K_P(x,y)f(y)dP(y)=0$.
Lemma proved.
\end{proof}
\begin{lemma} Under the assumptions of the previous lemma, for any $t>0$, we have
$$
{\mathbb P}[\|f'_N\|_{\mathcal{H}_{K_P}}^2>\frac{1}{N}C^2_{K_P}\|f\|^2_{L_\infty(\mathcal{X})}+t]\leq e^{-\frac{Nt^2}{8 C_{K_P}^4\|f\|^4_{L_\infty(\mathcal{X})}}}.
$$
\end{lemma}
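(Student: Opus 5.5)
The plan is to view $\|f'_N\|_{\mathcal{H}_{K_P}}$ as the norm of an empirical mean of i.i.d. random vectors in the Hilbert space $\mathcal{H}_{K_P}$, and to apply a bounded-differences (McDiarmid) inequality. Write $\xi_i = f(X_i)K_P(X_i,\cdot)\in\mathcal{H}_{K_P}$, so that $f'_N=\frac{1}{N}\sum_{i=1}^N\xi_i$.

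\textbf{Step 1 (boundedness).} By the reproducing property,
$\|\xi_i\|_{\mathcal{H}_{K_P}}=|f(X_i)|\sqrt{K_P(X_i,X_i)}\le \|f\|_{L_\infty(\mathcal{X})}C_{K_P}$ almost surely. Set $B=C_{K_P}\|f\|_{L_\infty(\mathcal{X})}$.

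\textbf{Step 2 (mean).} From Lemma~\ref{mean-in-rkhs},
\[
{\mathbb E}\|f'_N\|^2_{\mathcal{H}_{K_P}}=\frac{1}{N}{\mathbb E}[f(X)^2K_P(X,X)]\le \frac{B^2}{N}.
\]
This is exactly the deterministic shift $\frac1N C_{K_P}^2\|f\|_{L_\infty(\mathcal{X})}^2$ in the statement.

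\textbf{Step 3 (concentration).} Consider $Z=\|f'_N\|^2_{\mathcal{H}_{K_P}}$ as a function of $(X_1,\dots,X_N)$. Replacing one $X_i$ by $X_i'$ changes $f'_N$ by $\frac{1}{N}(\xi_i-\xi_i')$, whose norm is at most $2B/N$. Since $\|f'_N\|\le B$ always, the difference of squares satisfies
\[
\bigl|\|a\|^2-\|b\|^2\bigr|\le(\|a\|+\|b\|)\,\|a-b\|\le 2B\cdot\frac{2B}{N}=\frac{4B^2}{N}.
\]
McDiarmid's inequality with $c_i=4B^2/N$ then gives
\[
{\mathbb P}[Z-{\mathbb E}Z>t]\le \exp\!\Bigl(-\frac{2t^2}{\sum_i c_i^2}\Bigr)=\exp\!\Bigl(-\frac{Nt^2}{8B^4}\Bigr),
\]
which matches the stated exponent. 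Combining this with ${\mathbb E}Z\le B^2/N$ from Step 2 yields the claim.

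The only point needing care is the bounded-difference constant. Applying McDiarmid directly to $Z$ (rather than to $\|f'_N\|$) is what produces the $8B^4$ in the denominator, so I would commit to bounding the squared norm directly, using the uniform bound $\|f'_N\|\le B$.
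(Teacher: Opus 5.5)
Your proof is correct and is essentially the paper's argument: McDiarmid's inequality with bounded-difference constant $4C_{K_P}^2\|f\|^2_{L_\infty(\mathcal{X})}/N$, which yields exactly the exponent $Nt^2/(8C_{K_P}^4\|f\|^4_{L_\infty(\mathcal{X})})$. The only difference is cosmetic. The paper applies McDiarmid to the off-diagonal part $\|f'_N\|^2_{\mathcal{H}_{K_P}}-\frac{1}{N^2}\sum_{i}f(X_i)^2K_P(X_i,X_i)$, which has mean exactly zero, and bounds the diagonal term deterministically by $\frac{1}{N}C_{K_P}^2\|f\|^2_{L_\infty(\mathcal{X})}$. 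You instead apply McDiarmid to $\|f'_N\|^2_{\mathcal{H}_{K_P}}$ itself, obtaining the difference bound from the Hilbert-space norm, and bound its mean via Lemma~\ref{mean-in-rkhs}.
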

\begin{proof}
Let us define the function $\tilde{h}$ by $\tilde{h}(X_1, \cdots, X_N) = \|f'_N\|_{\mathcal{H}_{K_P}}^2-\frac{1}{N^2}\sum_{i=1}^N f(X_i)^2K_P(X_i,X_i)$. The function satisfies
\begin{equation*}
\begin{split}
|\tilde{h}(x_1, \cdots, x_N)-\tilde{h}(x_1, \cdots,x_{i-1}, x_i',x_{i+1},\cdots, x_N)|\leq \\
\frac{2}{N^2}\sum_{j:j\ne i}^N |f(x_i)f(x_j)K_P(x_i,x_j)-f(x'_i)f(x_j)K_P(x'_i,x_j)|\leq 
\frac{4C_{K_P}^2\|f\|^2_{L_\infty(\mathcal{X})}}{N}.
\end{split}
\end{equation*}
due to  $|f(x)f(y)K_P(x,y)|\leq C_{K_P}^2\|f\|^2_{L_\infty(\mathcal{X})}$.

Using McDiarmid's concentration inequality, we obtain
$$
{\mathbb P}[\tilde{h}(X_1, \cdots, X_N) -{\mathbb E}[\tilde{h}(X_1, \cdots, X_N)]>t]\leq e^{-\frac{Nt^2}{8 C_{K_P}^4\|f\|^4_{L_\infty(\mathcal{X})}}}.
$$
From $\int_{\mathcal{X}} K_P(x,y)f(y)dP(y)=0$, we obtain ${\mathbb E}[\tilde{h}(X_1, \cdots, X_N)] = 0$. Thus,
$$
{\mathbb P}[\tilde{h}(X_1, \cdots, X_N)>t]\leq e^{-\frac{Nt^2}{8 C_{K_P}^4\|f\|^4_{L_\infty(\mathcal{X})}}},
$$
and
$$
{\mathbb P}[[\|f'_N\|_{\mathcal{H}_{K_P}}^2>\frac{1}{N}C^2_{K_P}\|f\|^2_{L_\infty(\mathcal{X})}+t]\leq e^{-\frac{Nt^2}{8 C_{K_P}^4\|f\|^4_{L_\infty(\mathcal{X})}}}.
$$
\end{proof}

\begin{proof} [Proof of Lemma~\ref{proj-conc-bound}] Using the notations of the previous lemma, we simply need to note that
\begin{equation*}
\begin{split}
\|\frac{1}{N}\sum_{i=1}^N e(X_i)f_j(X_i)\|^2_{l^2({\mathbb N})} = \|(f_j)'_N\|_{\mathcal{H}_{K_P}}^2.
\end{split}
\end{equation*}
For each of functions $\{f_j\}$, using the previous lemma we obtain that one of the inequalities 
$$
 \|(f_j)'_N\|_{\mathcal{H}_{K_P}}^2>\frac{1}{N}C^2_{K_P}\max_{j:1\leq j\leq k}\|f_j\|^2_{L_\infty(\mathcal{X})}+t, 1\leq j\leq k,
$$ 
can be violated with probability no more than $ke^{-\frac{Nt^2}{8 C_{K_P}^4\max_{j}\|f_j\|^4_{L_\infty(\mathcal{X})}}}$. Thus, with probability at least  $1-ke^{-\frac{Nt^2}{8 C_{K_P}^4\max_{j:1\leq j\leq k}\|f_j\|^4_{L_\infty(\mathcal{X})}}}$ we have
\begin{equation*}
\begin{split}
\max_{j:1\leq j\leq k}\|\frac{1}{N}\sum_{i=1}^N e(X_i)f_j(X_i)\|^2_{l^2({\mathbb N})} \leq \frac{1}{N}C^2_{K_P}\max_{j:1\leq j\leq k}\|f_j\|^2_{L_\infty(\mathcal{X})}+t.
\end{split}
\end{equation*}
From
\begin{equation*}
\begin{split}
\|\frac{1}{N}e(X_1, \cdots, X_N) {\mathbf f}(X_1, \cdots, X_N)^\top\|_{\mathcal{B}({\mathbb R}^k,l^2({\mathbb N}))} \leq 
\sqrt{k}\max_{j:1\leq j\leq k}\|\frac{1}{N}\sum_{i=1}^N e(X_i)f_j(X_i)\|_{l^2({\mathbb N})},
\end{split}
\end{equation*}
we obtain the needed statement.
\end{proof}
Let us now deal with the second factor, i.e.
$$
\|(\frac{1}{N}{\mathbf f}(x_1, \cdots, x_N){\mathbf f}(x_1, \cdots, x_N)^\top)^{-1}\|_{\mathcal{B}({\mathbb R}^k, {\mathbb R}^k)} \|{\mathbf f}(x_1, \cdots, x_N)\|_{\mathcal{B}({\mathbb R}^N, {\mathbb R}^k)}.
$$
The matrix $\frac{1}{N}{\mathbf f}(x_1, \cdots, x_N){\mathbf f}(x_1, \cdots, x_N)^\top$ is simply the empirical covariance matrix for features $f_1, ..., f_k$ and it concentrates around its mean w.r.t. the operator norm due to the standard Bernstein matrix inequality argument.

\begin{lemma}\label{cov-bound} Let  $X_1, \dots, X_N \sim^{\text{iid}} P$.  We have
\begin{equation*}
\begin{split}
\|(\frac{1}{N}{\mathbf f}(X_1, \cdots, X_N){\mathbf f}(X_1, \cdots, X_N)^\top)^{-1}\|_{\mathcal{B}({\mathbb R}^N, {\mathbb R}^N)}\leq 2,\\
\|(\frac{1}{N}{\mathbf f}(X_1, \cdots, X_N){\mathbf f}(X_1, \cdots, X_N)^\top)^{-1}\|_{\mathcal{B}({\mathbb R}^k, {\mathbb R}^k)} \|{\mathbf f}(X_1, \cdots, X_N)\|_{\mathcal{B}({\mathbb R}^N, {\mathbb R}^k)}\leq \sqrt{6N},
\end{split}
\end{equation*}
with probability at least $1-2k\exp\left(-\frac{N}{\frac{28}{3}k\max\limits_{j:1\leq j\leq k}\|f_j\|^2_{L_\infty(\mathcal{X})}+\frac{4}{3}}\right)$.
\end{lemma}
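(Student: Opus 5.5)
The plan is to apply the matrix Bernstein inequality to the empirical covariance $\hat\Sigma=\frac{1}{N}\sum_{i=1}^N \mathbf{f}(X_i)\mathbf{f}(X_i)^\top$. Since $f_1,\dots,f_k$ are orthonormal in $L_2(\mathcal{X},P)$, we have ${\mathbb E}[\mathbf{f}(X)\mathbf{f}(X)^\top]=I_k$. Write $\hat\Sigma-I_k=\frac1N\sum_i Z_i$ with $Z_i=\mathbf{f}(X_i)\mathbf{f}(X_i)^\top-I_k$. These are independent, centered, symmetric $k\times k$ matrices.

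Set $M=\max_j\|f_j\|^2_{L_\infty(\mathcal{X})}$. We have $\|\mathbf{f}(x)\|^2\le kM$, so $\|Z_i\|\le kM+1$. Note that $kM\ge 1$, because ${\mathbb E}\|\mathbf{f}(X)\|^2=k$. For the variance, ${\mathbb E}Z_i^2={\mathbb E}[\|\mathbf{f}\|^2\mathbf{f}\mathbf{f}^\top]-I_k\preceq kM\,I_k$, so the variance proxy is $\|\sum_i{\mathbb E}Z_i^2\|\le NkM$. Matrix Bernstein with deviation level $1/2$ then gives
$$
{\mathbb P}\big[\|\hat\Sigma-I_k\|\ge \tfrac12\big]\le 2k\exp\Big(-\frac{N/8}{kM+(kM+1)/6}\Big).
$$
Simplifying the denominator, $8(kM+(kM+1)/6)=\frac{28}{3}kM+\frac43$, which is exactly the exponent in the statement. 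Using the two-sided version accounts for the factor $2k$.

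On the complementary event, $\lambda_{\min}(\hat\Sigma)\ge 1/2$, so $\|\hat\Sigma^{-1}\|\le 2$; this is the first claim. Also $\lambda_{\max}(\hat\Sigma)\le 3/2$, and since $\|F\|^2=\|FF^\top\|=N\|\hat\Sigma\|$ we get $\|F\|\le\sqrt{3N/2}$. Multiplying the two bounds gives $2\sqrt{3N/2}=\sqrt{6N}$, which is the second claim. The dimension notation $\mathcal{B}({\mathbb R}^N,{\mathbb R}^N)$ in the first line should be read as $k\times k$.

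The main obstacle is matching the constants to the specific Bernstein form. I will use the Tropp form $2k\exp\big(\frac{-t^2/2}{\sigma^2+Lt/3}\big)$ applied to the sum $\sum_i Z_i$ with $t=N/2$, $\sigma^2=NkM$ and $L=kM+1$. This gives exponent $\frac{N^2/8}{NkM+N(kM+1)/6}$, which reduces to the displayed one. It remains to verify that the variance bound uses $kM$ rather than $kM+1$; this holds because subtracting $I_k$ only decreases ${\mathbb E}Z^2$.
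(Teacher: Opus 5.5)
Your proposal is correct and follows the paper's proof essentially step for step: matrix Bernstein applied to $\sum_i(\mathbf{f}(X_i)\mathbf{f}(X_i)^\top - I_k)$ with uniform bound $kM+1$, variance proxy $NkM$ and deviation level $1/2$, after which $\|\hat\Sigma^{-1}\|\le 2$ and $\|\mathbf{f}(X_1,\dots,X_N)\|\le\sqrt{3N/2}$ combine to give $\sqrt{6N}$. You are also right that the lemma silently uses the orthonormality of the $f_j$ in $L_2(\mathcal{X},P)$ (assumed in Theorem~\ref{th-conditioning-bound}) to get ${\mathbb E}[\mathbf{f}(X)\mathbf{f}(X)^\top]=I_k$, and that the first operator norm should be read on ${\mathbb R}^k$.
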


\begin{proof}
We have 
$$
\frac{1}{N}{\mathbf f}(X_1, \cdots, X_N){\mathbf f}(X_1, \cdots, X_N)^\top-I_k = \frac{1}{N}\sum_{i=1}^N ({\mathbf f}(X_i){\mathbf f}(X_i)^\top-I_k).
$$
Matrices ${\mathbf f}(X_i){\mathbf f}(X_i)^\top-I_k\in {\mathbb R}^{k\times k}$ are independent and
\begin{equation*}
\begin{split}
\|{\mathbf f}(X_i){\mathbf f}(X_i)^\top-I_k\|_{\mathcal{B}({\mathbb R}^k,{\mathbb R}^k)}\leq \|{\mathbf f}(X_i){\mathbf f}(X_i)^\top\|_{\mathcal{B}({\mathbb R}^k,{\mathbb R}^k)}+1\leq 
k  \max_{j:1\leq j\leq k}\|f_j\|^2_{L_\infty(\mathcal{X})}+1.
\end{split}
\end{equation*}
Note that ${\mathbb E}[{\mathbf f}(X_i){\mathbf f}(X_i)^\top-I_k]=0$. For the second moment we have
\begin{equation*}
\begin{split}
{\mathbb E}[({\mathbf f}(X_i){\mathbf f}(X_i)^\top-I_k)^2]={\mathbb E}[\|{\mathbf f}(X_i)\|^2 {\mathbf f}(X_i){\mathbf f}(X_i)^\top]-I_k \preceq \\
(k\max_{j:1\leq j\leq k}\|f_j\|^2_{L_\infty(\mathcal{X})}-1)I_k \preceq k\max_{j:1\leq j\leq k}\|f_j\|^2_{L_\infty(\mathcal{X})}I_k.
\end{split}
\end{equation*}
After summing we obtain
$$
\left\|\sum_{i=1}^N {\mathbb E}[({\mathbf f}(X_i){\mathbf f}(X_i)^\top-I_k)^2]\right\|_{\mathcal{B}({\mathbb R}^k,{\mathbb R}^k)}\leq Nk\max_{j:1\leq j\leq k}\|f_j\|^2_{L_\infty(\mathcal{X})}.
$$
Matrix Bernstein inequality (see Theorem 6.1.1 in~\cite{MAL-048}) gives us
\begin{equation*}
\begin{split}
{\mathbb P}[\| \frac{1}{N}\sum_{i=1}^N ({\mathbf f}(X_i){\mathbf f}(X_i)^\top-I_k) \|_{\mathcal{B}({\mathbb R}^k,{\mathbb R}^k)}\geq t]\leq \\ 
2k\exp\left(-\frac{N t^2/2}{k\max\limits_{j:1\leq j\leq k}\|f_j\|^2_{L_\infty(\mathcal{X})} + (k\max\limits_{j:1\leq j\leq k}\|f_j\|^2_{L_\infty(\mathcal{X})}+1) t/3}\right).
\end{split}
\end{equation*}
Let us choose $t=\frac{1}{2}$ in the previous inequality. Then, we have 
$$
\|\frac{1}{N}{\mathbf f}(X_1, \cdots, X_N){\mathbf f}(X_1, \cdots, X_N)^\top-I_k\|_{\mathcal{B}({\mathbb R}^k,{\mathbb R}^k)}< \frac{1}{2}
$$
with probability at least $1-2k\exp\left(-\frac{N /8}{k\max\limits_{j:1\leq j\leq k}\|f_j\|^2_{L_\infty(\mathcal{X})} + (k\max\limits_{j:1\leq j\leq k}\|f_j\|^2_{L_\infty(\mathcal{X})}+1) /6}\right)$. In that case we have
$$
\|(\frac{1}{N}{\mathbf f}(X_1, \cdots, X_N){\mathbf f}(X_1, \cdots, X_N)^\top)^{-1}\|_{\mathcal{B}({\mathbb R}^N, {\mathbb R}^N)} < \frac{1}{1-0.5}=2,
$$
and
\begin{equation*}
\begin{split}
\|{\mathbf f}(X_1, \cdots, X_N)\|_{\mathcal{B}({\mathbb R}^N, {\mathbb R}^k)}=\sqrt{N}\sqrt{\|\frac{1}{N}{\mathbf f}(X_1, \cdots, X_N){\mathbf f}(X_1, \cdots, X_N)^\top\|_{\mathcal{B}({\mathbb R}^k, {\mathbb R}^k)}}\leq \sqrt{\frac{3N}{2}}.
\end{split}
\end{equation*}
Thus, we have
\begin{equation*}
\begin{split}
\|(\frac{1}{N}{\mathbf f}(X_1, \cdots, X_N){\mathbf f}(X_1, \cdots, X_N)^\top)^{-1}\|_{\mathcal{B}({\mathbb R}^N, {\mathbb R}^N)} \|{\mathbf f}(X_1, \cdots, X_N)\|_{\mathcal{B}({\mathbb R}^N, {\mathbb R}^k)}\leq \sqrt{6N}.
\end{split}
\end{equation*}
Lemma proved.
\end{proof}
A combination of Lemma~\ref{proj-conc-bound} and Lemma~\ref{cov-bound} gives us that for any $t>0$,
$$
\|\frac{1}{\sqrt{N}}\Lambda^{1/2}\Psi-\frac{1}{\sqrt{N}}\Lambda^{1/2}\Phi\|_{\mathcal{B}(\mathbb{R}^{N}, l^2({\mathbb N}))}\leq \sqrt{6k\left(\frac{1}{N}C^2_{K_P}\max_{j:1\leq j\leq k}\|f_j\|^2_{L_\infty(\mathcal{X})}+t\right)}
$$
with probability at least $$q(t) = 1-k\exp\left(-\frac{Nt^2}{8 C_{K_P}^4\max\limits_{j:1\leq j\leq k}\|f_j\|^4_{L_\infty(\mathcal{X})}}\right)-2k\exp\left(-\frac{N}{\frac{28}{3}k\max\limits_{j:1\leq j\leq k}\|f_j\|^2_{L_\infty(\mathcal{X})}+\frac{4}{3}}\right)$$ 
over randomness in inputs $X_1, \cdots, X_N$.

	In the next step, we need to bound $\|t(\frac{1}{\sqrt{N}}\Lambda^{1/2}\Psi)-t(\frac{1}{\sqrt{N}}\Lambda^{1/2}\Phi)\|_{\mathcal{B}(l^2({\mathbb N}), l^2({\mathbb N}))}$.
Note that 
$$
\|\frac{1}{\sqrt{N}}\Lambda^{1/2}\Phi\|_{ \mathcal{B}(\mathbb{R}^{N}, l^2({\mathbb N}))}\leq C_{K_P},
$$
and therefore,
$$
\|\frac{1}{\sqrt{N}}\Lambda^{1/2}\Psi\|_{ \mathcal{B}(\mathbb{R}^{N}, l^2({\mathbb N}))}\leq C_{K_P}+\sqrt{6k\left(\frac{1}{N}C^2_{K_P}\max_{j:1\leq j\leq k}\|f_j\|^2_{L_\infty(\mathcal{X})}+t\right)},
$$
with probability at least $q(t)$. We now need a lemma that bounds $\|t(\frac{1}{\sqrt{N}}\Lambda^{1/2}\Psi) - t(\frac{1}{\sqrt{N}}\Lambda^{1/2}\Phi)\|_{\mathcal{B}( l^2({\mathbb N}), l^2({\mathbb N}))}$ in terms of $\|\frac{1}{\sqrt{N}}\Lambda^{1/2}\Psi\|_{ \mathcal{B}(\mathbb{R}^{N}, l^2({\mathbb N}))}, \|\frac{1}{\sqrt{N}}\Lambda^{1/2}\Phi\|_{ \mathcal{B}(\mathbb{R}^{N}, l^2({\mathbb N}))}$ and $\|\frac{1}{\sqrt{N}}\Lambda^{1/2}\Psi - \frac{1}{\sqrt{N}}\Lambda^{1/2}\Phi\|_{\mathcal{B}(\mathbb{R}^{N}, l^2({\mathbb N}))}$. 

\begin{lemma}\label{lipshits} Let $A,B\in  \mathcal{B}(\mathbb{R}^{N}, l^2({\mathbb N}))$ be such that $\|A\|_{ \mathcal{B}(\mathbb{R}^{N}, l^2({\mathbb N}))}, \|B\|_{ \mathcal{B}(\mathbb{R}^{N}, l^2({\mathbb N}))} \leq \alpha$. Then,
\begin{equation*}
\begin{split}
\|t(A) - t(B)\|_{\mathcal{B}( l^2({\mathbb N}), l^2({\mathbb N}))}  \leq 
\left( \frac{2\alpha}{\uplambda} + \frac{2\alpha^3}{\uplambda^2} \right) \|A - B\|_{\mathcal{B}(\mathbb{R}^{N}, l^2({\mathbb N}))}.
\end{split}
\end{equation*}
\end{lemma}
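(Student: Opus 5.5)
We prove Lemma~\ref{lipshits} by writing $t(A)$ in a form in which the two resolvents can be compared through the resolvent identity. Put $R_A=(A^\top A+\uplambda I_N)^{-1}$ and $R_B=(B^\top B+\uplambda I_N)^{-1}$. Both are symmetric positive definite on $\mathbb{R}^N$ with $\|R_A\|,\|R_B\|\le 1/\uplambda$. We split the difference telescopically:
\begin{equation*}
t(A)-t(B) = (A-B)R_AA^\top + B(R_A-R_B)A^\top + BR_B(A-B)^\top .
\end{equation*}
This reduces the lemma to bounding three operator norms. All of them are in $\mathcal{B}(l^2(\mathbb{N}),l^2(\mathbb{N}))$ or are compositions of bounded operators between $\mathbb{R}^N$ and $l^2(\mathbb{N})$. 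We use throughout that $\|A^\top\|=\|A\|$.

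The first and third terms are immediate:
\begin{equation*}
\|(A-B)R_AA^\top\|\le \frac{\alpha}{\uplambda}\|A-B\|, \qquad \|BR_B(A-B)^\top\|\le \frac{\alpha}{\uplambda}\|A-B\| .
\end{equation*}
Together they give the $\frac{2\alpha}{\uplambda}$ part of the constant.

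For the middle term we use the resolvent identity
\begin{equation*}
R_A-R_B = R_A\,(B^\top B-A^\top A)\,R_B .
\end{equation*}
We then decompose
\begin{equation*}
B^\top B-A^\top A = B^\top(B-A)+(B-A)^\top A ,
\end{equation*}
which gives $\|B^\top B-A^\top A\|\le 2\alpha\|A-B\|$. Hence $\|R_A-R_B\|\le \frac{2\alpha}{\uplambda^2}\|A-B\|$, and
\begin{equation*}
\|B(R_A-R_B)A^\top\|\le \alpha^2\cdot \frac{2\alpha}{\uplambda^2}\|A-B\| = \frac{2\alpha^3}{\uplambda^2}\|A-B\| .
\end{equation*}
Summing the three bounds gives exactly the claimed constant $\frac{2\alpha}{\uplambda}+\frac{2\alpha^3}{\uplambda^2}$.

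The only point needing care is the infinite-dimensional side, since $A$ and $B$ map $\mathbb{R}^N$ into $l^2(\mathbb{N})$. Here $A^\top A$ is an $N\times N$ positive semidefinite matrix, so $R_A$ is a genuine finite matrix inverse. Every product above is a composition of bounded operators, and the operator norm is submultiplicative for such compositions. The resolvent identity is purely algebraic on $\mathbb{R}^N$, so no difficulty arises there. The main (mild) obstacle is therefore only to choose the splitting so that the constants come out as stated; the decomposition above achieves this.
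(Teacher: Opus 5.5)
Your proof is correct and is essentially the paper's argument: a three-term telescoping split of $t(A)-t(B)$, the resolvent identity to control $R_A-R_B$, and the bound $\|B^\top B-A^\top A\|\le 2\alpha\|A-B\|$, giving the same constants. The only difference is cosmetic: you place the resolvent difference as $B(R_A-R_B)A^\top$ where the paper uses $B(M_A^{-1}-M_B^{-1})B^\top$, and you write out the splitting of $B^\top B-A^\top A$ that the paper leaves implicit.
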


\begin{proof}
Let us denote $M_A = A^\top A  + \uplambda I_N$ and $M_B = B^\top B  + \uplambda I_N$.
We have
$$
t(A) - t(B) = A M_A^{-1} A^\top - B M_B^{-1} B^\top.
$$
By adding and subtracting $A M_A^{-1} B^\top$ and $B M_A^{-1} B^\top$ we split
\begin{equation*}
\begin{split}
t(A) - t(B)
= \\
(A M_A^{-1} A^\top - A M_A^{-1} B^\top) + (A M_A^{-1} B^\top - B M_A^{-1} B^\top) +
 (B M_A^{-1} B^\top - B M_B^{-1} B^\top)= \\
 A M_A^{-1}(A - B)^\top + 
 (A - B) M_A^{-1} B^\top + B (M_A^{-1} - M_B^{-1}) B^\top.
\end{split}
\end{equation*}
The first term can be bounded  by
\begin{equation*}
\begin{split}
\|A M_A^{-1}(A - B)^\top\|_{\mathcal{B}( l^2({\mathbb N}), l^2({\mathbb N}))} \leq \\
\|A\|_{\mathcal{B}(\mathbb{R}^{N}, l^2({\mathbb N}))} \cdot \|M_A^{-1}\|_{\mathcal{B}(\mathbb{R}^{N}, \mathbb{R}^{N})} \cdot \|A - B\|_{\mathcal{B}(\mathbb{R}^{N}, l^2({\mathbb N}))} \leq \frac{\alpha}{\uplambda} \|A - B\|_{\mathcal{B}(\mathbb{R}^{N}, l^2({\mathbb N}))}.
\end{split}
\end{equation*}
The second term by
\begin{equation*}
\begin{split}
\|(A - B) M_A^{-1} B^\top\|_{\mathcal{B}( l^2({\mathbb N}), l^2({\mathbb N}))} \leq \\
\|A - B\|_{\mathcal{B}(\mathbb{R}^{N}, l^2({\mathbb N}))} \cdot \|M_A^{-1}\|_{\mathcal{B}(\mathbb{R}^{N}, \mathbb{R}^{N})}  \cdot \|B\|_{\mathcal{B}(\mathbb{R}^{N}, l^2({\mathbb N}))} \leq \frac{\alpha}{\uplambda} \|A - B\|_{\mathcal{B}(\mathbb{R}^{N}, l^2({\mathbb N}))}.
\end{split}
\end{equation*}
From the resolvent identity $M_A^{-1} - M_B^{-1} = M_A^{-1}(M_B-M_A) M_B^{-1}$, we obtain
\begin{equation*}
\begin{split}
\|M_A^{-1} - M_B^{-1}\|_{\mathcal{B}(\mathbb{R}^{N}, \mathbb{R}^{N})} \leq \frac{1}{\uplambda^2} \|M_A - M_B\|_{\mathcal{B}(\mathbb{R}^{N}, \mathbb{R}^{N})} \leq \\
\frac{1}{\uplambda^2}(\|A\|_{\mathcal{B}(\mathbb{R}^{N}, l^2({\mathbb N}))} + \|B\|_{\mathcal{B}(\mathbb{R}^{N}, l^2({\mathbb N}))})\|A - B\|_{\mathcal{B}(\mathbb{R}^{N}, l^2({\mathbb N}))} = \frac{2\alpha}{\uplambda^2} \|A - B\|_{\mathcal{B}(\mathbb{R}^{N}, l^2({\mathbb N}))}
\end{split}
\end{equation*}
Using this, we bound the third term by
\begin{equation*}
\begin{split}
\|B (M_A^{-1} - M_B^{-1}) B^\top\|_{\mathcal{B}( l^2({\mathbb N}), l^2({\mathbb N}))} \leq \|B\|^2_{\mathcal{B}(\mathbb{R}^{N}, l^2({\mathbb N}))} \cdot \|M_A^{-1} - M_B^{-1}\|_{\mathcal{B}(\mathbb{R}^{N}, \mathbb{R}^{N})} \leq \\
\alpha^2 \cdot \frac{2\alpha}{\uplambda^2} \|A - B\|_{\mathcal{B}(\mathbb{R}^{N}, l^2({\mathbb N}))} = \frac{2\alpha^3}{\uplambda^2} \|A - B\|_{\mathcal{B}(\mathbb{R}^{N}, l^2({\mathbb N}))}.
\end{split}
\end{equation*}
After collecting the bounds, we obtain
\begin{equation*}
\begin{split}
\|t(A) - t(B)\|_{\mathcal{B}( l^2({\mathbb N}), l^2({\mathbb N}))}  \leq \frac{2\alpha}{\uplambda} \|A - B\|_{\mathcal{B}(\mathbb{R}^{N}, l^2({\mathbb N}))} + \frac{2\alpha^3}{\uplambda^2} \|A - B\|_{\mathcal{B}(\mathbb{R}^{N}, l^2({\mathbb N}))}
= \\
\left( \frac{2\alpha}{\uplambda} + \frac{2\alpha^3}{\uplambda^2} \right) \|A - B\|_{\mathcal{B}(\mathbb{R}^{N}, l^2({\mathbb N}))}.
\end{split}
\end{equation*}
Lemma proved.
\end{proof}

\begin{lemma}\label{tA-tB}
Let $t>0$ and $\alpha = C_{K_P}\left(1+\max\limits_{j:1\leq j\leq k}\|f_j\|_{L_\infty(\mathcal{X})}\sqrt{6k\left(\frac{1}{N}+t\right)}\right)$.  Then,
\begin{equation*}
\begin{split}
\| t(\frac{1}{\sqrt{N}}\Lambda^{1/2} \Psi) - t(\frac{1}{\sqrt{N}}\Lambda^{1/2} \Phi)\|_{\mathcal{B}( l^2({\mathbb N}), l^2({\mathbb N}))}  \leq \\
\left( \frac{2\alpha}{\uplambda} + \frac{2\alpha^3}{\uplambda^2} \right) C_{K_P}\max\limits_{j:1\leq j\leq k}\|f_j\|_{L_\infty(\mathcal{X})} \sqrt{6k\left(\frac{1}{N}+t\right)},
\end{split}
\end{equation*}
with probability at least $1-k\exp\left(-\frac{Nt^2}{8} \right)-2k\exp\left(-\frac{N}{\frac{28}{3}k\max\limits_{j:1\leq j\leq k}\|f_j\|^2_{L_\infty(\mathcal{X})}+\frac{4}{3}}\right)$.
\end{lemma}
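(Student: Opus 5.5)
The plan is to obtain the statement by plugging the high-probability perturbation bound derived just before Lemma~\ref{lipshits} into Lemma~\ref{lipshits}, after a rescaling of the free parameter $t$. Throughout, abbreviate $M=\max_{j:1\leq j\leq k}\|f_j\|_{L_\infty(\mathcal{X})}$, and take $A=\frac{1}{\sqrt{N}}\Lambda^{1/2}\Psi$ and $B=\frac{1}{\sqrt{N}}\Lambda^{1/2}\Phi$.

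\emph{Step 1: rescale $t$ in the combined bound.} The combination of Lemma~\ref{proj-conc-bound} and Lemma~\ref{cov-bound} shows that, for every $t'>0$,
\[
\|A-B\|_{\mathcal{B}(\mathbb{R}^N,l^2(\mathbb{N}))}\leq \sqrt{6k\left(\tfrac{1}{N}C_{K_P}^2M^2+t'\right)}
\]
with probability at least $q(t')$. I apply this with $t'=C_{K_P}^2M^2t$. The deviation bound then factors as $C_{K_P}M\sqrt{6k(\frac1N+t)}$. In $q(t')$, the first exponent becomes $\frac{N C_{K_P}^4M^4t^2}{8C_{K_P}^4M^4}=\frac{Nt^2}{8}$, and the Bernstein term from Lemma~\ref{cov-bound} does not change. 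This gives exactly the probability stated in the lemma.

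\emph{Step 2: bound $\|A\|$ and $\|B\|$ by $\alpha$.} For $B$, I use $B^\top B=\frac1N K_P(\mathcal{D}_N,\mathcal{D}_N)$. Its operator norm is at most its trace, which is $\frac1N\sum_i K_P(X_i,X_i)\leq C_{K_P}^2$, so $\|B\|\leq C_{K_P}\leq\alpha$. For $A$, the triangle inequality together with Step 1 gives, on the same event,
\[
\|A\|\leq \|B\|+\|A-B\|\leq C_{K_P}+C_{K_P}M\sqrt{6k(\tfrac1N+t)}=\alpha.
\]
Alternatively, $\|A\|\leq\|B\|$ holds outright, because $A=B(I_N-F^\top(FF^\top)^{-1}F)$ and this factor is an orthogonal projection.

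\emph{Step 3: apply Lemma~\ref{lipshits}.} With these $A$ and $B$ and this $\alpha$, Lemma~\ref{lipshits} yields
\[
\|t(A)-t(B)\|\leq\left(\frac{2\alpha}{\uplambda}+\frac{2\alpha^3}{\uplambda^2}\right)\|A-B\|.
\]
Inserting the Step 1 bound on $\|A-B\|$ gives the claimed inequality, and it holds on the event of Step 1.

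None of these steps is a real obstacle. The only point needing care is Step 1: the rescaling $t'=C_{K_P}^2M^2t$ must be made so that both the deviation bound and the probability come out in the stated normalized form.
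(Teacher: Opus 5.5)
Your proposal is correct and follows the paper's proof. You apply Lemma~\ref{lipshits} to $A=\frac{1}{\sqrt{N}}\Lambda^{1/2}\Psi$, $B=\frac{1}{\sqrt{N}}\Lambda^{1/2}\Phi$ with the combined deviation bound from Lemmas~\ref{proj-conc-bound} and~\ref{cov-bound}, and rescale $t$ by $C_{K_P}^2\max_j\|f_j\|_{L_\infty(\mathcal{X})}^2$; the paper rescales after invoking the Lipschitz lemma rather than before, which is immaterial. Your side remark is also correct: $\|A\|\le\|B\|\le C_{K_P}$ holds deterministically whenever $F$ has rank $k$, because $I_N-F^\top(FF^\top)^{-1}F$ is an orthogonal projection, so one could even take $\alpha=C_{K_P}$ — a slight sharpening the paper does not use.
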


\begin{proof} After we apply Lemma~\ref{lipshits} to $A = \frac{1}{\sqrt{N}}\Lambda^{1/2}\Psi$, $B=\frac{1}{\sqrt{N}}\Lambda^{1/2}\Phi$ we obtain the following statement:
let $t>0$ and $\alpha = C_{K_P}+\sqrt{6k\left(\frac{1}{N}C^2_{K_P}\max\limits_{j:1\leq j\leq k}\|f_j\|^2_{L_\infty(\mathcal{X})}+t\right)}$.  Then,
\begin{equation*}
\begin{split}
\|t(\frac{1}{\sqrt{N}}\Lambda^{1/2} \Psi) - t(\frac{1}{\sqrt{N}}\Lambda^{1/2} \Phi)\|_{\mathcal{B}( l^2({\mathbb N}), l^2({\mathbb N}))}  \leq \\
\left( \frac{2\alpha}{\uplambda} + \frac{2\alpha^3}{\uplambda^2} \right)  \sqrt{6k\left(\frac{1}{N}C^2_{K_P}\max\limits_{j:1\leq j\leq k}\|f_j\|^2_{L_\infty(\mathcal{X})}+t\right)},
\end{split}
\end{equation*}
with probability at least $1-k\exp\left(-\frac{Nt^2}{8 C_{K_P}^4\max\limits_{j:1\leq j\leq k}\|f_j\|^4_{L_\infty(\mathcal{X})}}\right)-2k\exp\left(-\frac{N}{\frac{28}{3}k\max\limits_{j:1\leq j\leq k}\|f_j\|^2_{L_\infty(\mathcal{X})}+\frac{4}{3}}\right)$. Rescaling $t=t' C^2_{K_P}\max\limits_{j:1\leq j\leq k}\|f_j\|^2_{L_\infty(\mathcal{X})}$ gives the desired inequality.
\end{proof}

Recall that $T_f = (FF^\top)^{-1}F \Phi^\top \Lambda^{1/2}(I -  \Lambda^{1/2} \Psi(\Psi^\top \Lambda \Psi+\uplambda N I_N)^{-1} \Psi^\top  \Lambda^{1/2})$. We finally need to bound $\|T_f\|_{\mathcal{B}(l^2({\mathbb N}),{\mathbb R}^k)}$ which is done in the next lemma.

\begin{lemma}\label{T_f-bound} For any $t>0$, we have
\begin{equation*}
\begin{split}
\|T_f\|_{\mathcal{B}(l^2({\mathbb N}),{\mathbb R}^k)} \leq
4C_{K_P}\max\limits_{j:1\leq j\leq k}\|f_j\|_{L_\infty(\mathcal{X})} \sqrt{ k\left(\frac{1}{N}+t\right)},
\end{split}
\end{equation*}	
with probability at least $1-ke^{-\frac{Nt^2}{8 }}-2k\exp\left(-\frac{N}{\frac{28}{3}k\max\limits_{j:1\leq j\leq k}\|f_j\|^2_{L_\infty(\mathcal{X})}+\frac{4}{3}}\right)$. 
\end{lemma}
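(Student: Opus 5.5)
We factor $T_f$ as a product and bound each factor with the concentration lemmas already established. Write $A=\frac{1}{\sqrt N}\Lambda^{1/2}\Psi$, so that
\[
T_f=\Big(\tfrac1N FF^\top\Big)^{-1}\cdot \tfrac1N F\Phi^\top\Lambda^{1/2}\cdot\big(I-t(A)\big).
\]
Here $t(A)=A(A^\top A+\uplambda I_N)^{-1}A^\top$, and $\Lambda^{1/2}\Psi(\Psi^\top\Lambda\Psi+\uplambda N I_N)^{-1}\Psi^\top\Lambda^{1/2}=t(A)$ after pulling out the factor $N$. Submultiplicativity of operator norms then gives
\[
\|T_f\|\le \Big\|\big(\tfrac1N FF^\top\big)^{-1}\Big\|\cdot\Big\|\tfrac1N F\Phi^\top\Lambda^{1/2}\Big\|\cdot\|I-t(A)\|.
\]

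The third factor is controlled deterministically. $t(A)$ is self-adjoint, and by the SVD of $A$ its nonzero eigenvalues are $s^2/(s^2+\uplambda)\in[0,1)$. Hence $0\preceq t(A)\preceq I$, so $0\preceq I-t(A)\preceq I$ and $\|I-t(A)\|\le 1$. This step requires no probability.

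The first factor is at most $2$ by Lemma~\ref{cov-bound}. That holds on an event of probability at least $1-2k\exp\big(-N/(\frac{28}{3}k\max_j\|f_j\|^2_{L_\infty}+\frac43)\big)$.

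For the second factor, note that $\frac1N F\Phi^\top\Lambda^{1/2}$ is the adjoint of $\frac1N e(X_1,\dots,X_N)\mathbf f(X_1,\dots,X_N)^\top$, so the two have the same norm. Apply Lemma~\ref{proj-conc-bound} with the rescaled parameter $t\,C_{K_P}^2\max_j\|f_j\|^2_{L_\infty}$, exactly as in the proof of Lemma~\ref{tA-tB}. This bounds the norm by
\[
C_{K_P}\max_j\|f_j\|_{L_\infty}\sqrt{k\big(\tfrac1N+t\big)},
\]
with failure probability $ke^{-Nt^2/8}$.

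A union bound over the two events gives the stated probability, and multiplying the three bounds yields the constant $2\le 4$. The only step that needs any thought is the contraction bound $\|I-t(A)\|\le1$. This bound is what removes any dependence on $\uplambda$ or $\alpha$, unlike Lemma~\ref{tA-tB}. The slack in the constant $4$ leaves room for the cruder estimate $\|I-t(A)\|\le 2$ if one prefers not to invoke the spectral argument.
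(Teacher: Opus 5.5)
Your proof is correct and follows the paper's argument: you use the same three-factor splitting of $T_f$, with Lemma~\ref{cov-bound} bounding $\|(\frac1N FF^\top)^{-1}\|$ by $2$, Lemma~\ref{proj-conc-bound} bounding $\|\frac1N F\Phi^\top\Lambda^{1/2}\|$ after the rescaling $t\mapsto t\,C_{K_P}^2\max_j\|f_j\|^2_{L_\infty(\mathcal{X})}$, and a union bound over the two events. The only difference is the last factor: the paper bounds it by $1+\|t(A)\|\le 2$, which tacitly uses the same spectral fact $\|t(A)\|\le 1$ and gives its constant $4$, whereas your observation $0\preceq I-t(A)\preceq I$ yields the sharper constant $2$.
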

\begin{proof} From Lemma~\ref{proj-conc-bound} we obtain
\begin{equation*}
\begin{split}
\|\frac{1}{N}F \Phi^\top \Lambda^{1/2}\|_{\mathcal{B}(l^2({\mathbb N}),{\mathbb R}^k)}=\|\frac{1}{N} \Lambda^{1/2}\Phi F^\top \|_{\mathcal{B}({\mathbb R}^k,l^2({\mathbb N}))}\leq 
 \sqrt{k\left(\frac{1}{N}C^2_{K_P}\max\limits_{j:1\leq j\leq k}\|f_j\|^2_{L_\infty(\mathcal{X})}+t\right)},
\end{split}
\end{equation*}
with probability at least $1-ke^{-\frac{Nt^2}{8 C_{K_P}^4\max\limits_{j:1\leq j\leq k}\|f_j\|^4_{L_\infty(\mathcal{X})}}}$. Therefore, using Lemma~\ref{cov-bound}, we have
\begin{equation*}
\begin{split}
\|T_f\|_{\mathcal{B}(l^2({\mathbb N}),{\mathbb R}^k)} \leq \\
\|(\frac{1}{N} FF^\top)^{-1}\|_{\mathcal{B}({\mathbb R}^k,{\mathbb R}^k)} \|\frac{1}{N} F \Phi^\top \Lambda^{1/2}\|_{\mathcal{B}(l^2({\mathbb N}),{\mathbb R}^k)}\cdot 
\|I -  \Lambda^{1/2} \Psi(\Psi^\top \Lambda \Psi+\uplambda I_N)^{-1} \Psi^\top  \Lambda^{1/2}\|_{\mathcal{B}(l^2({\mathbb N}),l^2({\mathbb N}))}\leq \\
2\sqrt{ k\left(\frac{1}{N}C^2_{K_P}\max\limits_{j:1\leq j\leq k}\|f_j\|^2_{L_\infty(\mathcal{X})}+t\right)}\cdot
\left(1+\|\Lambda^{1/2} \Psi(\Psi^\top \Lambda \Psi+\uplambda I_N)^{-1} \Psi^\top  \Lambda^{1/2}\|_{\mathcal{B}(l^2({\mathbb N}),l^2({\mathbb N}))}\right)	\leq\\
4\sqrt{ k\left(\frac{1}{N}C^2_{K_P}\max\limits_{j:1\leq j\leq k}\|f_j\|^2_{L_\infty(\mathcal{X})}+t\right)},
\end{split}
\end{equation*}	
with probability at least $1-ke^{-\frac{Nt^2}{8 C_{K_P}^4\max\limits_{j:1\leq j\leq k}\|f_j\|^4_{L_\infty(\mathcal{X})}}}-2k\exp\left(-\frac{N}{\frac{28}{3}k\max\limits_{j:1\leq j\leq k}\|f_j\|^2_{L_\infty(\mathcal{X})}+\frac{4}{3}}\right)$. By rescaling $t=t' C^2_{K_P}\max\limits_{j:1\leq j\leq k}\|f_j\|^2_{L_\infty(\mathcal{X})}$ we obtain the needed inequality.
\end{proof}
\subsection{Final steps of the proof}
\begin{lemma}\label{conditioning-bound} Suppose that $f_1, \cdots, f_k$ are orthogonal functions of unit norm in $L_2(\mathcal{X},P)$. Let $t>0$ and $\alpha = C_{K_P}\left(1+\max\limits_{j:1\leq j\leq k}\|f_j\|_{L_\infty(\mathcal{X})}\sqrt{6k\left(\frac{1}{N}+t\right)}\right)$. With probability  at least $1-k\exp\left(-\frac{Nt^2}{8} \right)-2k\exp\left(-\frac{N}{\frac{28}{3}k\max\limits_{j:1\leq j\leq k}\|f_j\|^2_{L_\infty(\mathcal{X})}+\frac{4}{3}}\right)$ over randomness in $X_1, \cdots, X_N$, we have
\begin{equation*}
\begin{split}
{\mathbb E}_\varepsilon [c_{\rm con}] \leq  \|f\|^2_{\mathcal{H}_K} C_{K_P}^2\max_{j:1\leq j\leq k}\|f_j\|_{L_\infty(\mathcal{X})}^2 k\left(\frac{1}{N}+t\right)\left(16+6\lambda_1\left( \frac{2\alpha}{\uplambda} + \frac{2\alpha^3}{\uplambda^2} \right)^2 \right)+\\
\frac{\sigma^2}{N}\left( \frac{9\lambda_1 C^2_{K_P}}{\uplambda^2}\left(\frac{C^2_{K_P}}{\uplambda}+1\right)^2+2k\right).
\end{split}
\end{equation*}	
\end{lemma}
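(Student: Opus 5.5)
The plan is to combine Lemma~\ref{perp-part} and Lemma~\ref{parallel-part} on the single high-probability event where both Lemma~\ref{tA-tB} and Lemma~\ref{T_f-bound} hold. Both of those lemmas are derived from the same two underlying events: the $k$ McDiarmid events of Lemma~\ref{proj-conc-bound}, rescaled, and the matrix Bernstein event of Lemma~\ref{cov-bound}. Hence their intersection has probability at least $1-k e^{-Nt^2/8}-2k\exp(-N/(\frac{28}{3}k\max_j\|f_j\|^2_{L_\infty}+\frac43))$, with no union bound penalty beyond the stated one. On that event $F$ has rank $k$, so Theorem~\ref{transfer} and all the auxiliary lemmas apply.

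\textbf{Step 1: reduce $c_{\rm con}$ to the two lemmas.} By definition, $c_{\rm con}=\|\hat f_\perp-h\|^2_{L_2(P)}+\|\hat f_\parallel-f_\parallel\|^2_{L_2(P)}$. For the first summand, write $\hat f_\perp-h=\sum_i(\hat v_i-\hathat{v}_i)\sqrt{\lambda_i}\phi_i$. Orthonormality of $\{\phi_i\}$ in $L_2(P)$ gives
\[
\|\hat f_\perp-h\|^2_{L_2(P)}=\sum_i\lambda_i(\hat v_i-\hathat{v}_i)^2\le\lambda_1\|\hat v-\hathat{v}\|^2_{l^2}=\lambda_1\|h-\hat f\|^2_{\mathcal{H}^\mathcal{F}_K}.
\]
Lemma~\ref{perp-part} then yields
\[
\lambda_1\|t(\cdot\Phi)-t(\cdot\Psi)\|^2\,\|f\|^2_{\mathcal{H}_K}+\lambda_1\sigma^2\frac{9C^2_{K_P}}{N\uplambda^2}\Bigl(\frac{C^2_{K_P}}{\uplambda}+1\Bigr)^2.
\]
This already produces the first part of the $\sigma^2/N$ coefficient. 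For the second summand, Lemma~\ref{parallel-part} gives $\|T_f\|^2\|f\|^2_{\mathcal{H}_K}+\frac{k}{N}\|(\frac1N FF^\top)^{-1}\|\sigma^2$. On the Bernstein event the inverse norm is at most $2$ by Lemma~\ref{cov-bound}; orthonormality of the $f_j$ in $L_2(P)$ is exactly what makes that lemma centered at $I_k$. This gives the term $2k\sigma^2/N$.

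\textbf{Step 2: insert the operator bounds.} Squaring Lemma~\ref{tA-tB} gives
\[
\|t(\cdot\Psi)-t(\cdot\Phi)\|^2\le\Bigl(\frac{2\alpha}{\uplambda}+\frac{2\alpha^3}{\uplambda^2}\Bigr)^2 C^2_{K_P}\max_j\|f_j\|^2_{L_\infty}\,6k\Bigl(\frac1N+t\Bigr).
\]
Multiplied by $\lambda_1$, this is the $6\lambda_1(\cdot)^2$ term. Squaring Lemma~\ref{T_f-bound} gives $\|T_f\|^2\le16C^2_{K_P}\max_j\|f_j\|^2_{L_\infty}k(\frac1N+t)$, which is the $16$ term. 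Factoring out the common quantity $\|f\|^2_{\mathcal{H}_K}C^2_{K_P}\max_j\|f_j\|^2_{L_\infty}k(\frac1N+t)$ gives exactly the claimed expression.

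\textbf{Main obstacle.} The main difficulty is bookkeeping rather than new analysis. I need to check that the norm identification $\|v\|_{l^2}=\|f\|_{\mathcal{H}^\mathcal{F}_K}$ used in Lemma~\ref{perp-part} is valid in the $L_2$ conversion. I also need to confirm that the two probabilistic lemmas rely on the identical rescaled events, so the failure probabilities do not double. If they did not coincide, I would instead intersect the events and accept a factor-$2$ looser probability, and then adjust $t$.
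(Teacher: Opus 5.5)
Your proposal is correct and follows the paper's proof essentially step for step: you bound $\|\hat f_\perp-h\|^2_{L_2(\mathcal{X},P)}$ by $\lambda_1$ times the native-space seminorm, invoke Lemmas~\ref{perp-part} and~\ref{parallel-part}, and then insert Lemmas~\ref{tA-tB}, \ref{T_f-bound} and~\ref{cov-bound}. Your observation that all three probabilistic lemmas rest on the same rescaled McDiarmid event of Lemma~\ref{proj-conc-bound} and the same Bernstein event of Lemma~\ref{cov-bound} is exactly the paper's closing ``more accurate reasoning'' remark; it is why the failure probability is counted only once rather than multiplied.
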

\begin{proof}[Proof of Lemma~\ref{conditioning-bound}] 
From Theorem~\ref{cucker-smale-kind} we conclude that 
\begin{equation*}
\begin{split}
\|\hat{f}_\perp -h \|_{L_2(\mathcal{X}, P)}\leq \sqrt{\lambda_1}\|\hat{f}_\perp -h \|_{\mathcal{H}_{K_P}} = \sqrt{\lambda_1}\|\hat{f}_\perp -h \|_{\mathcal{H}^\mathcal{F}_{K}} .
\end{split}
\end{equation*}	
Therefore,
\begin{equation*}
\begin{split}
c_{\rm con} \leq
\lambda_1\|\hat{f}_\perp -h \|_{\mathcal{H}^\mathcal{F}_{K}}^2+\|\hat{f}_\parallel -f_\parallel \|_{L_2(\mathcal{X}, P)}^2.
\end{split}
\end{equation*}	
From Lemmas~\ref{perp-part} and~\ref{parallel-part} we obtain
\begin{equation*}
\begin{split}
{\mathbb E}_\varepsilon [c_{\rm con}] \leq  \lambda_1\| t(\frac{1}{\sqrt{N}}\Lambda^{1/2} \Phi) - t(\frac{1}{\sqrt{N}}\Lambda^{1/2} \Psi)\|^2_{\mathcal{B}( l^2({\mathbb N}), l^2({\mathbb N}))} \|f\|^2_{\mathcal{H}^\mathcal{F}_{K}}+\\
\lambda_1\sigma^2 \frac{9C^2_{K_P}}{N\uplambda^2}\left(\frac{C^2_{K_P}}{\uplambda}+1\right)^2+\|T_f\|^2_{\mathcal{B}(l^2({\mathbb N}),{\mathbb R}^k)} \|f\|^2_{\mathcal{H}^\mathcal{F}_K}+
\frac{k}{N}\|(\frac{1}{N}FF^\top)^{-1}\|_{\mathcal{B}({\mathbb R}^k,{\mathbb R}^k)}\sigma^2.
\end{split}
\end{equation*}	
Using Lemma~\ref{tA-tB}, the first term is bounded by
\begin{equation*}
\begin{split}
\lambda_1\|f\|^2_{\mathcal{H}^\mathcal{F}_{K}}\left( \frac{2\alpha}{\uplambda} + \frac{2\alpha^3}{\uplambda^2} \right)^2 C_{K_P}^2\max\limits_{j:1\leq j\leq k}\|f_j\|_{L_\infty(\mathcal{X})}^2 6k\left(\frac{1}{N}+t\right),
\end{split}
\end{equation*}	
with probability at least $1-k\exp\left(-\frac{Nt^2}{8} \right)-2k\exp\left(-\frac{N}{\frac{28}{3}k\max\limits_{j:1\leq j\leq k}\|f_j\|^2_{L_\infty(\mathcal{X})}+\frac{4}{3}}\right)$, where $t>0$ and $\alpha = C_{K_P}\left(1+\max\limits_{j:1\leq j\leq k}\|f_j\|_{L_\infty(\mathcal{X})}\sqrt{6k\left(\frac{1}{N}+t\right)}\right)$.

Using Lemma~\ref{T_f-bound}, the third term is bounded by
\begin{equation*}
\begin{split}
16 \|f\|^2_{\mathcal{H}^\mathcal{F}_K} C_{K_P}^2\max\limits_{j:1\leq j\leq k}\|f_j\|_{L_\infty(\mathcal{X})}^2 k\left(\frac{1}{N}+t\right),
\end{split}
\end{equation*}	
with probability at least $1-k\exp\left(-\frac{Nt^2}{8} \right)-2k\exp\left(-\frac{N}{\frac{28}{3}k\max\limits_{j:1\leq j\leq k}\|f_j\|^2_{L_\infty(\mathcal{X})}+\frac{4}{3}}\right)$. 

Using Lemma~\ref{cov-bound}, the last term is bounded by
\begin{equation*}
\begin{split}
\frac{2k\sigma^2}{N},
\end{split}
\end{equation*}
with probability at least $1-2k\exp\left(-\frac{N}{\frac{28}{3}k\max\limits_{j:1\leq j\leq k}\|f_j\|^2_{L_\infty(\mathcal{X})}+\frac{4}{3}}\right)$.

Thus, with probability at least $1-2k\exp\left(-\frac{Nt^2}{8} \right)-6k\exp\left(-\frac{N}{\frac{28}{3}k\max\limits_{j:1\leq j\leq k}\|f_j\|^2_{L_\infty(\mathcal{X})}+\frac{4}{3}}\right)$, we have
\begin{equation*}
\begin{split}
{\mathbb E}_\varepsilon [c_{\rm con}]  \leq  \|f\|^2_{\mathcal{H}^\mathcal{F}_K} C_{K_P}^2\max\limits_{j:1\leq j\leq k}\|f_j\|_{L_\infty(\mathcal{X})}^2 k\left(\frac{1}{N}+t\right)(16+6\left( \frac{2\alpha}{\uplambda} + \frac{2\alpha^3}{\uplambda^2} \right)^2 \lambda_1)+\\
\lambda_1\sigma^2 \frac{9C^2_{K_P}}{N\uplambda^2}\left(\frac{C^2_{K_P}}{\uplambda}+1\right)^2+\frac{2k\sigma^2}{N}.
\end{split}
\end{equation*}
The latter almost coincides with the statement of lemma, though constants in front of $k$ in the expression for the probability are different. Note that $k\exp\left(-\frac{Nt^2}{8} \right)$ is the probability of the violation of the inequality of Lemma~\ref{proj-conc-bound} and $2k\exp\left(-\frac{N}{\frac{28}{3}k\max\limits_{j:1\leq j\leq k}\|f_j\|^2_{L_\infty(\mathcal{X})}+\frac{4}{3}}\right)$ is the probability of the violation of the inequality of Lemma~\ref{cov-bound}. In the latter sequence of arguments we counted the first probabiliity twice and the second one three times. More accurate reasoning gives us that the last inequality is true with probability at least $1-k\exp\left(-\frac{Nt^2}{8} \right)-2k\exp\left(-\frac{N}{\frac{28}{3}k\max\limits_{j:1\leq j\leq k}\|f_j\|^2_{L_\infty(\mathcal{X})}+\frac{4}{3}}\right)$.
\end{proof}
\begin{proof}[Proof of Theorem~\ref{th-conditioning-bound}] Let $t>0$ be such that $k\exp\left(-\frac{Nt^2}{8} \right) = \frac{\delta}{2}$, i.e. $t = \sqrt{\frac{8\log(\frac{2k}{\delta})}{N}}$. Our assumption that $N$ satisfies $N\geq (\frac{28}{3}k\max\limits_{j:1\leq j\leq k}\|f_j\|^2_{L_\infty(\mathcal{X})}+\frac{4}{3}) \log(\frac{4k}{\delta})$ is equivalent to $2k\exp\left(-\frac{N}{\frac{28}{3}k\max\limits_{j:1\leq j\leq k}\|f_j\|^2_{L_\infty(\mathcal{X})}+\frac{4}{3}}\right)\leq \frac{\delta}{2}$. 
In Lemma~\ref{conditioning-bound} we have $\alpha = C_{K_P}\left(1+\max\limits_{j:1\leq j\leq k}\|f_j\|_{L_\infty(\mathcal{X})}\sqrt{6k\left(\frac{1}{N}+t\right)}\right)$. So, using $\sqrt{\frac{8\log(\frac{2k}{\delta})}{N}}\geq \frac{1}{N}$, we have
\begin{equation*}
\begin{split}
\alpha\leq   C_{K_P}\left(1+\max\limits_{j:1\leq j\leq k}\|f_j\|_{L_\infty(\mathcal{X})}\sqrt{12k\sqrt{\frac{8\log(\frac{2k}{\delta})}{N}}}\right)\leq \\
C_{K_P}\left(1+\max\limits_{j:1\leq j\leq k}\|f_j\|_{L_\infty(\mathcal{X})}\frac{6k^{1/2}\log^{1/4}(\frac{2k}{\delta})}{N^{1/4}}\right)\leq 7C_{K_P}.
\end{split}
\end{equation*}	
provided that $N\geq k^2 \log (\frac{2k}{\delta}) \max\limits_{j:1\leq j\leq k}\|f_j\|^4_{L_\infty(\mathcal{X})}$.

From Lemma~\ref{conditioning-bound} we obtain that with probability at least $1-\delta$ we have
\begin{equation*}
\begin{split}
{\mathbb E}_\varepsilon [c_{\rm con}] \leq  \|f\|^2_{\mathcal{H}^\mathcal{F}_K} C_{K_P}^2\max_{j:1\leq j\leq k}\|f_j\|_{L_\infty(\mathcal{X})}^2 2k\sqrt{\frac{8\log(\frac{2k}{\delta})}{N}}\left(16+6\lambda_1\left( \frac{14C_{K_P}}{\uplambda} + \frac{686C_{K_P}^3}{\uplambda^2} \right)^2 \right)+\\
\frac{\sigma^2}{N}\left( \frac{9\lambda_1 C^2_{K_P}}{\uplambda^2}\left(\frac{C^2_{K_P}}{\uplambda}+1\right)^2+2k\right).
\end{split}
\end{equation*}	
Theorem proved.
\end{proof}

\section{Proof of Theorem~\ref{eigenvalue-push-down}}
Since
$$
\mathbb{E}[\Pi_k(x,y)] = \sum_{\ell,m=1}^\infty \sqrt{\lambda_\ell \lambda_m} \cdot \mathbb{E}[M_{\ell,m}] \cdot \phi_\ell(x) \phi_m(y),
$$
our task reduces to computing
$$
\mathbb{E}[M_{\ell,m}] = \mathbb{E}\left[\sum_{i,j=1}^k \xi_{i\ell} (G^{-1})_{ij} \xi_{jm}\right].
$$
\begin{lemma}\label{off-diagonal}
The off-diagonal elements of the projection coefficient matrix $[M_{\ell,m}]_{\ell,m=1}^\infty$ satisfy
$$
\mathbb{E}[M_{\ell,m}] = 0 \quad \text{for } \ell \ne m.
$$
\end{lemma}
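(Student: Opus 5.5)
The argument is a sign-flip symmetry of the Gaussian coefficients. Write $\xi_\ell = (\xi_{1\ell},\dots,\xi_{k\ell})^\top \in \mathbb{R}^k$ for the $\ell$-th column of the coefficient array. Then
$$G=\sum_{p=1}^\infty \lambda_p \xi_p\xi_p^\top, \qquad M_{\ell,m}=\xi_\ell^\top G^{-1}\xi_m,$$
as in Remark~\ref{simon-remark}. Fix $\ell\neq m$ and consider the transformation $\xi_\ell\mapsto-\xi_\ell$, leaving all other columns $\xi_p$ ($p\neq \ell$) unchanged.

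\textbf{Step 1: the transformation preserves the law.} The $\xi_p$ are i.i.d.\ $\mathcal{N}(\mathbf 0,I_k)$ and $-\xi_\ell$ has the same distribution as $\xi_\ell$. Hence the whole family $(\xi_p)_{p\ge1}$ has the same joint law after the flip.

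\textbf{Step 2: $G$ is invariant.} $G$ depends on $\xi_\ell$ only through $\xi_\ell\xi_\ell^\top$, which does not change. So $G^{-1}$ is invariant.

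\textbf{Step 3: $M_{\ell,m}$ changes sign.} Since $m\neq\ell$, the vector $\xi_m$ is untouched, so $M_{\ell,m}\mapsto -\xi_\ell^\top G^{-1}\xi_m = -M_{\ell,m}$. Therefore $M_{\ell,m}\stackrel{d}{=}-M_{\ell,m}$, and $\mathbb{E}[M_{\ell,m}]=0$ as soon as the expectation exists.

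\textbf{Main obstacle: integrability and well-definedness.} Two points need care.

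First, $G$ is a.s.\ well defined and invertible. Since $\sum_p\lambda_p<\infty$, we have $\mathbb{E}\,\mathrm{Tr}\,G=k\sum_p\lambda_p<\infty$, so the series converges a.s. Also $G\succeq \sum_{p\le k}\lambda_p\xi_p\xi_p^\top$, which is a.s.\ positive definite because $k$ Gaussian vectors in $\mathbb{R}^k$ are a.s.\ linearly independent.

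Second, $M_{\ell,m}$ is integrable. Write $M_{\ell,m}=\lambda_\ell^{-1/2}\lambda_m^{-1/2}\, a^\top G^{-1} b$ with $a=\sqrt{\lambda_\ell}\xi_\ell$ and $b=\sqrt{\lambda_m}\xi_m$. Since $G\succeq aa^\top$, we get $a^\top G^{-1}a\le 1$, and likewise $b^\top G^{-1}b\le1$. Cauchy--Schwarz in the $G^{-1}$ inner product then gives
$$|M_{\ell,m}|\le (\lambda_\ell\lambda_m)^{-1/2}.$$
So $M_{\ell,m}$ is bounded, its expectation exists, and the symmetry argument yields $\mathbb{E}[M_{\ell,m}]=0$ for $\ell\neq m$.

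The same boundedness also justifies exchanging expectation and summation in $\mathbb{E}[\Pi_k(x,y)]$ later in the proof of Theorem~\ref{eigenvalue-push-down}.
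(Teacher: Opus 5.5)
Your proof is correct and is the same sign-flip argument the paper uses: negating the column $\xi_\ell$ leaves the joint Gaussian law and $G$ unchanged but flips the sign of $M_{\ell,m}$. Your extra step is also sound and fills a point the paper leaves implicit. The bound $a^\top G^{-1}a\le 1$ follows from $G\succeq aa^\top$, and Cauchy--Schwarz then gives $|M_{\ell,m}|\le(\lambda_\ell\lambda_m)^{-1/2}$, which secures both a.s.\ invertibility of $G$ and integrability of $M_{\ell,m}$.
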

\begin{proof} Define $v_\ell = (\xi_{1\ell}, \dots, \xi_{k\ell})^\top \in \mathbb{R}^k$. Note that
$G = \sum_{j=1}^\infty \lambda_j v_j v_j^\top$ and
$M_{\ell,m} = v_\ell^\top G^{-1} v_m$.
Consider flipping the sign of all components of $v_\ell$, i.e., define
$$
\tilde{v}_j = \begin{cases}
-v_\ell & \text{if } j = \ell, \\
v_j & \text{if } j \ne \ell.
\end{cases}
$$
For $\tilde{G} = \sum_{j=1}^\infty \lambda_j \tilde{v}_j \tilde{v}_j^\top$,
and $\tilde{M}_{\ell,m} = \tilde{v}_\ell^\top \tilde{G}^{-1} \tilde{v}_m$, observe
$\tilde{G} = G$ and 
$$
\tilde{M}_{\ell,m} = (-v_\ell)^\top G^{-1} v_m = -v_\ell^\top G^{-1} v_m = -M_{\ell,m}.
$$
Since Gaussians are symmetric, the joint distribution of all $\xi_{ij} \sim \mathcal{N}(0,1)$ is invariant under sign flips of any single coordinate vector $v_\ell$. Therefore, $\mathbb{E}[M_{\ell,m}] = 0$.
\end{proof}

Recall that $\Pi_P$ denotes the projection operator onto the ${\rm span}(g_1, \cdots, g_k)$. From Lemma~\ref{off-diagonal} we conclude
$$
\mathbb{E}[\Pi_k(x,y)] = \sum_{\ell=1}^\infty \lambda_\ell \cdot \mathbb{E}[M_{\ell,\ell}] \cdot \phi_\ell(x) \phi_\ell(y).
$$
and, therefore,
$$
\mathbb{E}[\Pi_P[K(\cdot, z_2)](x,y)] = \langle \mathbb{E}[\Pi_n(x,\cdot)], K(\cdot, y)\rangle_{L_2(\mathcal{X},P)} = \sum_{\ell=1}^\infty \lambda_\ell^2 \cdot \mathbb{E}[M_{\ell,\ell}] \cdot \phi_\ell(x) \phi_\ell(y).
$$
Let us now compute
$$
\mathbb{E}[\Pi_P[\Pi_P[K(\cdot, z_2)](z_1, \cdot)](x,y)] = \mathbb{E}\left[ \iint \Pi_k(x, u) K(u, v) \Pi_k(v, y) \, dP(u) dP(v) \right].
$$
Recall that $K(u, v) = \sum_{\ell=1}^\infty \lambda_\ell \phi_\ell(u) \phi_\ell(v)$.
Now plug into the expression for $\Pi_k(x,y)$ and obtain that the latter equals
$$
\iint ( \sum_{i,j} \sqrt{\lambda_i \lambda_j} M_{i,j} \phi_i(x) \phi_j(u) )
( \sum_{\ell} \lambda_\ell \phi_\ell(u) \phi_\ell(v) )
( \sum_{m,n} \sqrt{\lambda_m \lambda_n} M_{m,n} \phi_m(v) \phi_n(y) ) dP(u)dP(v)
$$
$$
= \sum_{i,j,n} \lambda_j \sqrt{\lambda_i \lambda_j \lambda_j \lambda_n}
\cdot M_{i,j} M_{j,n} \cdot \phi_i(x) \phi_n(y)
= \sum_{i, j, n} \lambda_j^2 \sqrt{\lambda_i \lambda_n} \cdot M_{i,j} M_{j,n} \cdot \phi_i(x)\phi_n(y).
$$
Thus,
$$
\mathbb{E}[\Pi_P[\Pi_P[K(\cdot, z_2)](z_1, \cdot)](x,y)] = \sum_{i,n} \phi_i(x) \phi_n(y) \cdot \sqrt{\lambda_i \lambda_n}
\cdot \left( \sum_{j} \lambda_j^2 \cdot \mathbb{E}[M_{i,j} M_{j,n}] \right)
$$
Let us define
$$
C_{i,n} = \sum_{j=1}^\infty \lambda_j^2 \cdot M_{i,j} M_{j, n},
$$
so that
$$
\mathbb{E}[\Pi_P[\Pi_P[K(\cdot, z_2)](z_1, \cdot)](x,y)]  = \sum_{i,n} \sqrt{\lambda_i \lambda_n} \cdot \mathbb{E}[C_{i,n}] \cdot \phi_i(x) \phi_n(y).
$$
\begin{lemma}\label{off-diagonal2}
The off-diagonal elements  $C_{i,n}$ satisfy
$$
\mathbb{E}[C_{i,n}] = 0 \quad \text{for } i \ne n.
$$
\end{lemma}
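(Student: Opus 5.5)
The plan is to reuse the sign-flip symmetry from Lemma~\ref{off-diagonal}, applied to the product $M_{i,j}M_{j,n}$ summed over $j$. As there, write $v_\ell=(\xi_{1\ell},\dots,\xi_{k\ell})^\top$, so that $G=\sum_{\ell}\lambda_\ell v_\ell v_\ell^\top$ and $M_{a,b}=v_a^\top G^{-1}v_b$. Fix $i\neq n$ and flip the sign of $v_i$ only, i.e.\ set $\tilde v_i=-v_i$ and $\tilde v_\ell=v_\ell$ for $\ell\neq i$. Then $\tilde G=G$, and hence $\tilde M_{a,b}=\epsilon_a\epsilon_b M_{a,b}$, where $\epsilon_i=-1$ and $\epsilon_\ell=1$ otherwise.

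Next, compute the effect of the flip on each summand of $C_{i,n}$.

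For $j\neq i$: $\tilde M_{i,j}\tilde M_{j,n}=(-M_{i,j})(M_{j,n})$, since $n\neq i$ and $j\neq i$ leave the second factor unchanged.

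For $j=i$: $\tilde M_{i,i}=M_{i,i}$ (two sign changes) and $\tilde M_{i,n}=-M_{i,n}$. So this term also changes sign.

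Therefore every summand, and hence $\tilde C_{i,n}$, equals $-C_{i,n}$.

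The joint law of $\{\xi_{ij}\}$ is invariant under this flip, so $\mathbb{E}[C_{i,n}]=\mathbb{E}[\tilde C_{i,n}]=-\mathbb{E}[C_{i,n}]$, which gives $\mathbb{E}[C_{i,n}]=0$.

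The main technical point is justifying that $\mathbb{E}[C_{i,n}]$ exists and that expectation and the infinite sum over $j$ can be interchanged.

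I would bound $|C_{i,n}|$ by Cauchy--Schwarz:
$$
|C_{i,n}|\le\Big(\sum_j\lambda_j^2M_{i,j}^2\Big)^{1/2}\Big(\sum_j\lambda_j^2M_{n,j}^2\Big)^{1/2}.
$$
Then note that
$$
\sum_j\lambda_j^2M_{i,j}^2\le\lambda_1\,v_i^\top G^{-1}\Big(\sum_j\lambda_j v_jv_j^\top\Big)G^{-1}v_i=\lambda_1\,v_i^\top G^{-1}v_i\le \frac{1}{\lambda_i}\lambda_1 ,
$$
where the last inequality uses $G\succeq\lambda_i v_iv_i^\top$, so that $\lambda_i v_i^\top G^{-1}v_i\le1$. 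This gives a deterministic bound on $|C_{i,n}|$, and integrability follows.

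Alternatively, the symmetry argument can be applied directly to $C_{i,n}$ as a whole, as a random variable, which avoids any termwise interchange: the computation above shows $\tilde C_{i,n}=-C_{i,n}$ pointwise, since the series converges absolutely by the same bound.
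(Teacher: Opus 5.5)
Your proof is correct and follows the paper's argument: flip the sign of $v_i$, observe that $G$ is unchanged so every summand of $C_{i,n}$ (including $j=i$) changes sign, and use the sign-invariance of the Gaussian law. The explicit check of the $j=i$ summand and the deterministic bound $|C_{i,n}|\le \lambda_1/\sqrt{\lambda_i\lambda_n}$, which gives integrability and absolute convergence, are details the paper leaves implicit.
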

\begin{proof} Fix $i \ne n$. Using the notations of the previous lemma we have
  $$
  C_{i,n} = \sum_{j} \lambda_j^2 \cdot v_i^\top G^{-1} v_j \cdot v_j^\top G^{-1} v_n.
  $$
Flipping the sign of $v_i \to -v_i$, and leaving all other $v_j$ unchanged, leads to the change in the sign of $C_{i,n}$. Therefore, $\mathbb{E}[C_{i,n}] = 0$.
\end{proof}
From Lemma~\ref{off-diagonal2} we obtain
$$
\mathbb{E}[\Pi_P[\Pi_P[K(\cdot, z_2)](z_1, \cdot)](x,y)]  = \sum_{i} \lambda_i \cdot \mathbb{E}[C_{i,i}] \cdot \phi_i(x) \phi_i(y),
$$
with
$$
\mathbb{E}[C_{i,i}] = \sum_{j} \lambda_j^2 \cdot \mathbb{E}[M_{ij}^2].
$$
So, we proved
$$
{\mathbb E}_{\boldsymbol{\omega}}[K^{\boldsymbol{\omega}}_P(x,y)] = \sum_{i=1}^\infty \lambda_i(1-2\lambda_i \cdot \mathbb{E}[M_{i,i}]+\sum_{j} \lambda_j^2 \cdot \mathbb{E}[M_{i,j}^2]) \cdot \phi_i(x) \phi_i(y).
$$
\section{The behaviour of $\mu_i/\lambda_i$ in the thermodynamic limit}\label{thermodynamic}
Let us qualitively analyze how $\{\mu_i\}$ are related to $\{\lambda_i\}$. 
This type of non-rigorous analysis has been applied to a similar expression in~\cite{simon2023the}, though it should be considered as a way to derive the formula~\eqref{formula-kappa}, rather than a mathematically precise statement.  So, as pointed out in Remark~\ref{simon-remark}, we may substitute $(\xi_i^\top G_{-i}^{-1}\xi_i)^{-1}$ with a constant $\varkappa$ around which this expression concentrates as $k\to +\infty$. That is, $\mathbb{E}[M_{i,i}]\approx M_{i,i}\approx \frac{1/\varkappa}{1+\lambda_i/\varkappa} = \frac{1}{\lambda_i+\varkappa}$.  Since $\sum_{i=1}^\infty \lambda_i\xi_i^\top G^{-1}\xi_i = {\rm Tr}(G^{-1} G) = k$, the constant $\varkappa>0$ can be calculated from the condition 
$\sum_{i=1}^\infty \frac{\lambda_i}{\lambda_i+\varkappa} = k$.

The expression $\sum_{j} \lambda_j^2 {\mathbb E}[M_{i,j}^2]$ in Theorem~\ref{eigenvalue-push-down} decomposes to $\lambda_i^2 {\mathbb E}[M_{i,i}^2]+\sum_{j\ne i} \lambda_j^2 {\mathbb E}[M_{i,j}^2]$, where $\lambda_i^2 {\mathbb E}[M_{i,i}^2]\approx \frac{\lambda_i^2}{(\lambda_i+\varkappa)^2}$.The remaining part without the expectation equals 
\begin{equation*}
\begin{split}
\sum_{j:j\ne i} \lambda_j^2 M_{i,j}^2  = \frac{ \xi_i^\top G_{-i}^{-1}\sum_{j:j\ne i} \lambda_j^2 \xi_j\xi_j^\top G_{-i}^{-1} \xi_i }{ \left(1 + \lambda_i \xi_i^\top G_{-i}^{-1} \xi_i\right)^2 }.
\end{split}
\end{equation*}
\ifTR
The expression $\sum_{j:j\ne i} \lambda_j^2 \xi_j\xi_j^\top$ is a matrix random variable of a generalized $\chi^2$ type. Only few terms contribute to this expression substantially, if eigenvalue decay rate is sufficient.  We have $\sum_{j:j\ne i} \lambda_j^2 \xi_j\xi_j^\top = \sum_{j:j\ne i} \lambda_j^2 \xi_j \frac{1}{k}\xi_j^\top\xi_j\xi_j^\top +\sum_{j:j\ne i} \lambda_j^2 \xi_j (1-\frac{1}{k}\xi_j^\top\xi_j)\xi_j^\top$. With high probability we have $\frac{1}{k}(\xi_j^\top\xi_j) -1= \mathcal{O}(\frac{\log k}{\sqrt{k}})$ for $j=k^{\mathcal{O}(1)}$, due to Central Limit Theorem. Therefore, $\sum_{j:j\ne i} \lambda_j^2 \xi_j\xi_j^\top\approx \sum_{j:j\ne i} \lambda_j^2 \xi_j \frac{1}{k}(\xi_j^\top\xi_j)\xi_j^\top $ in the thermodynamic limit. So,
\begin{equation*}
\begin{split}
\xi_i^\top G_{-i}^{-1}\sum_{j:j\ne i} \lambda_j^2 \xi_j\xi_j^\top G_{-i}^{-1} \xi_i\approx\xi_i^\top G_{-i}^{-1}\sum_{j:j\ne i} \lambda_j^2 \xi_j \frac{1}{k}\xi_j^\top\xi_j\xi_j^\top G_{-i}^{-1} \xi_i=\\
\frac{1}{k}\xi_i^\top G_{-i}^{-1}(\sum_{j:j\ne i} \lambda_j \xi_j\xi_j^\top)^2 G_{-i}^{-1} \xi_i -\frac{1}{k}\sum_{(j,j'): j,j'\ne i,j\ne j'} \lambda_j\lambda_{j'}  \xi_i^\top G_{-i}^{-1} \xi_j\xi_j^\top  \xi_{j'}\xi_{j'}^\top G_{-i}^{-1} \xi_i =\\
\frac{1}{k}\xi_i^\top \xi_i - \sum_{j,j'\ne i}q_{jj'}x_{j}x_{j'}.
\end{split}
\end{equation*}
where $q_{jj}=0$, $q_{jj'}=\sqrt{\lambda_j\lambda_{j'}}\xi_j^\top  \xi_{j'}, j\ne j'$ and $x_j = \sqrt{\lambda_j} \xi_i^\top G_{-i}^{-1} \xi_j$. The first term approximately equals $1$, while the second term can be neglected. Indeed, $\sqrt{\lambda_j}$-factor guarantees that $[x_j]$ will have a bounded euclidean norm with high probability and $[q_{jj'}]$ concentrates sharply around the zero matrix.  
\else
\fi
If we neglect the remaining term, using Theorem~\ref{eigenvalue-push-down}, we would obtain $\frac{\mu_i}{\lambda_i}\approx 1-\frac{2\lambda_i }{\lambda_i+\varkappa}+\frac{\lambda_i^2}{(\lambda_i+\varkappa)^2}= \frac{\varkappa^2}{(\lambda_i+\varkappa)^2}$. As our experiments show (see Figures~\ref{ratio-estimates} and~\ref{wishart}), this term cannot be neglected although it contributes proportionally to $\frac{\varkappa^2}{(\lambda_i+\varkappa)^2}$. So, we conjecture 
\begin{equation*}
\begin{split}
\frac{\mu_i}{\lambda_i}\approx  \frac{c\varkappa^2}{(\lambda_i+\varkappa)^2}.
\end{split}
\end{equation*}
For $\lambda_i = \frac{1}{i^a}$ we observe $c\approx a$.
\begin{figure}[htb]
    \centering
    \includegraphics[width=0.23\textwidth]{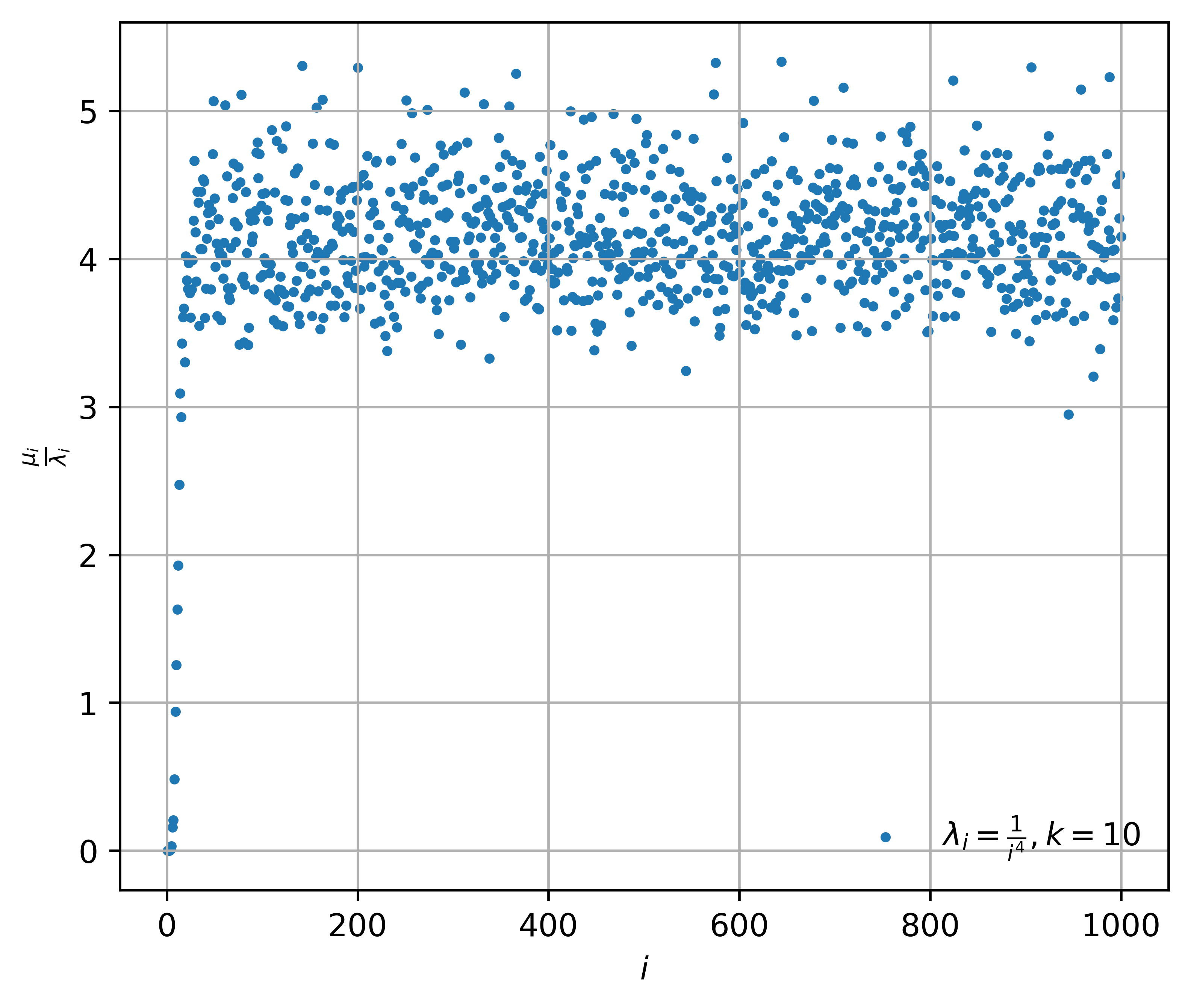}  \includegraphics[width=0.23\textwidth]{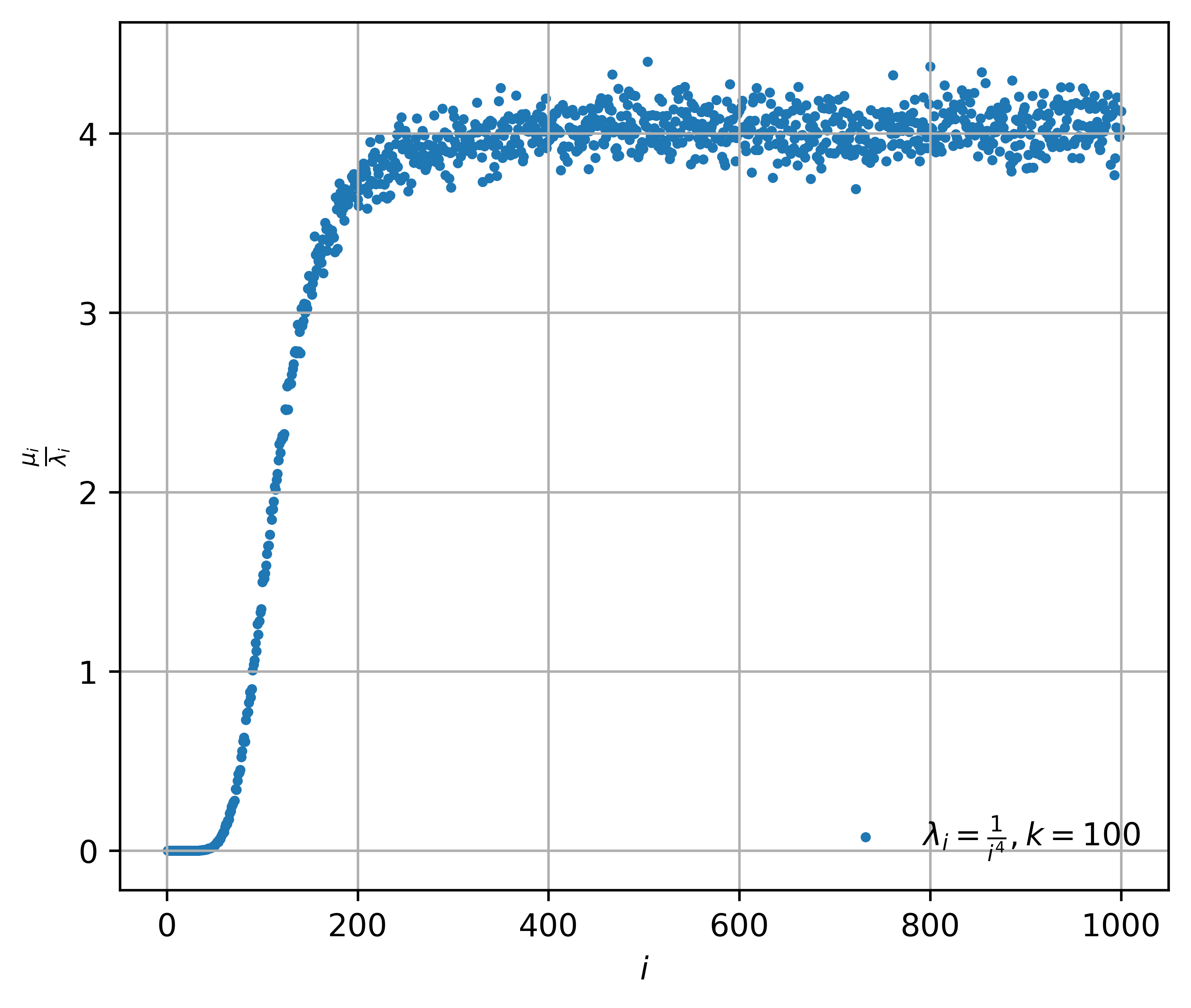} \includegraphics[width=0.23\textwidth]{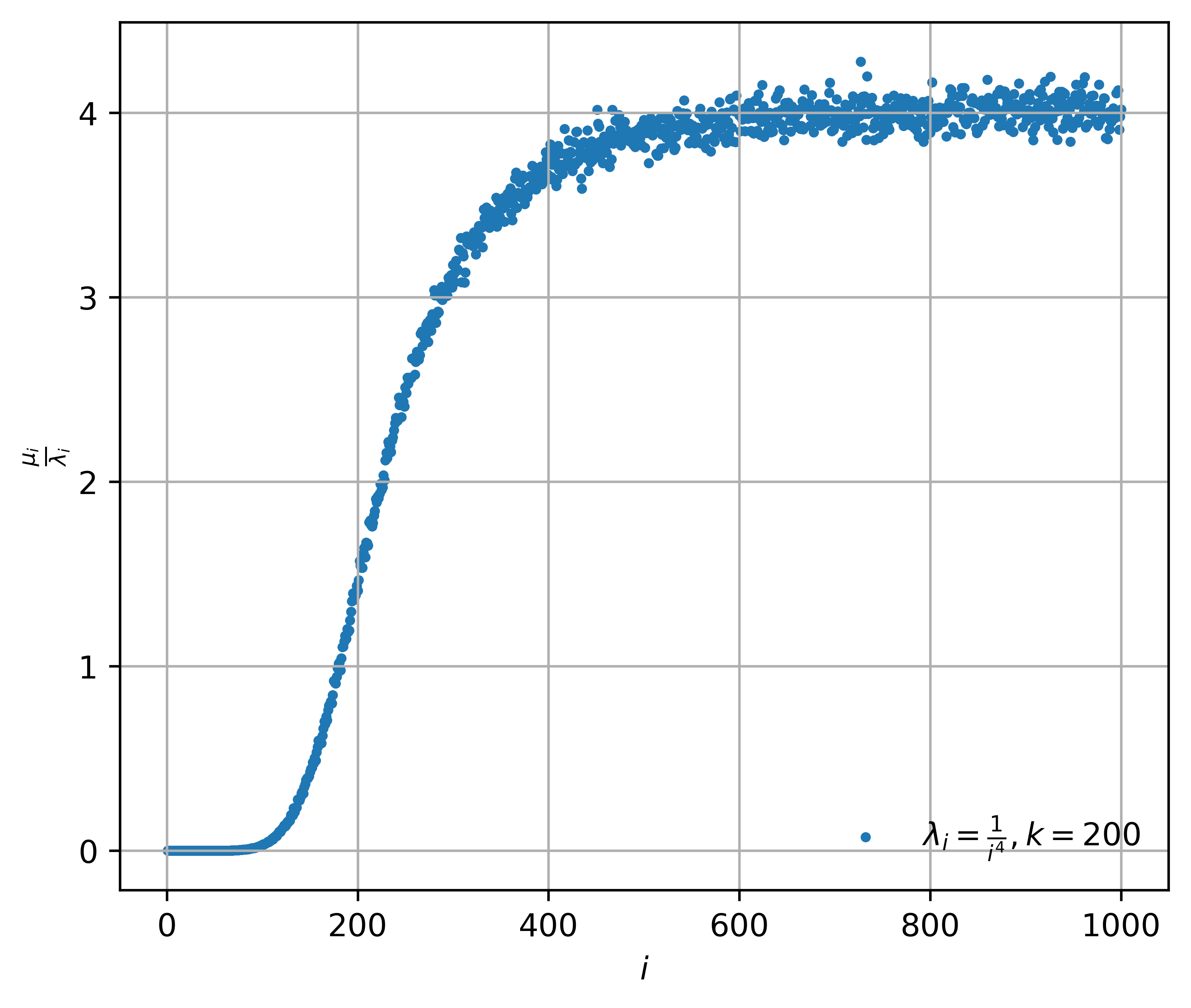} \includegraphics[width=0.23\textwidth]{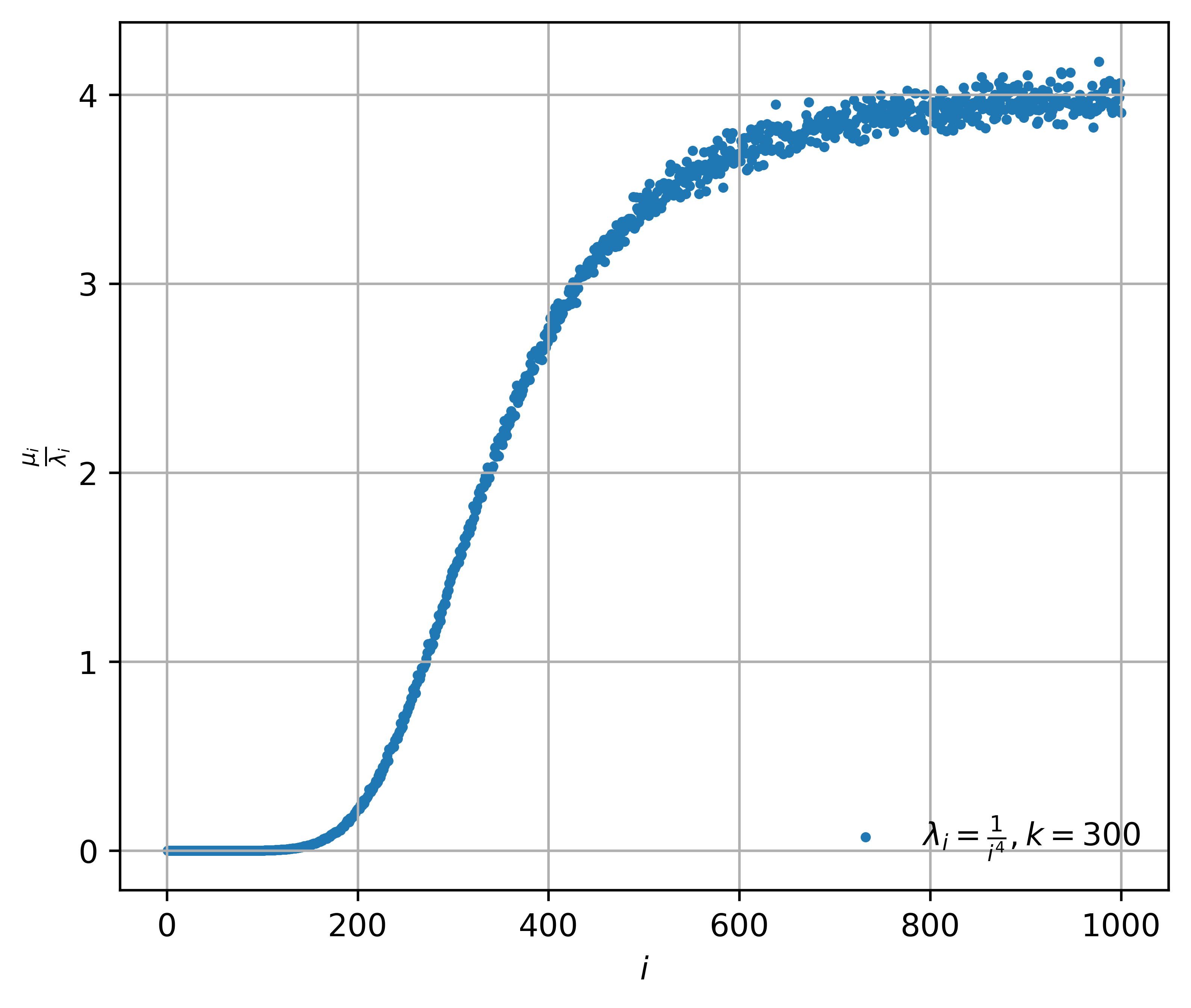} \\        \includegraphics[width=0.23\textwidth]{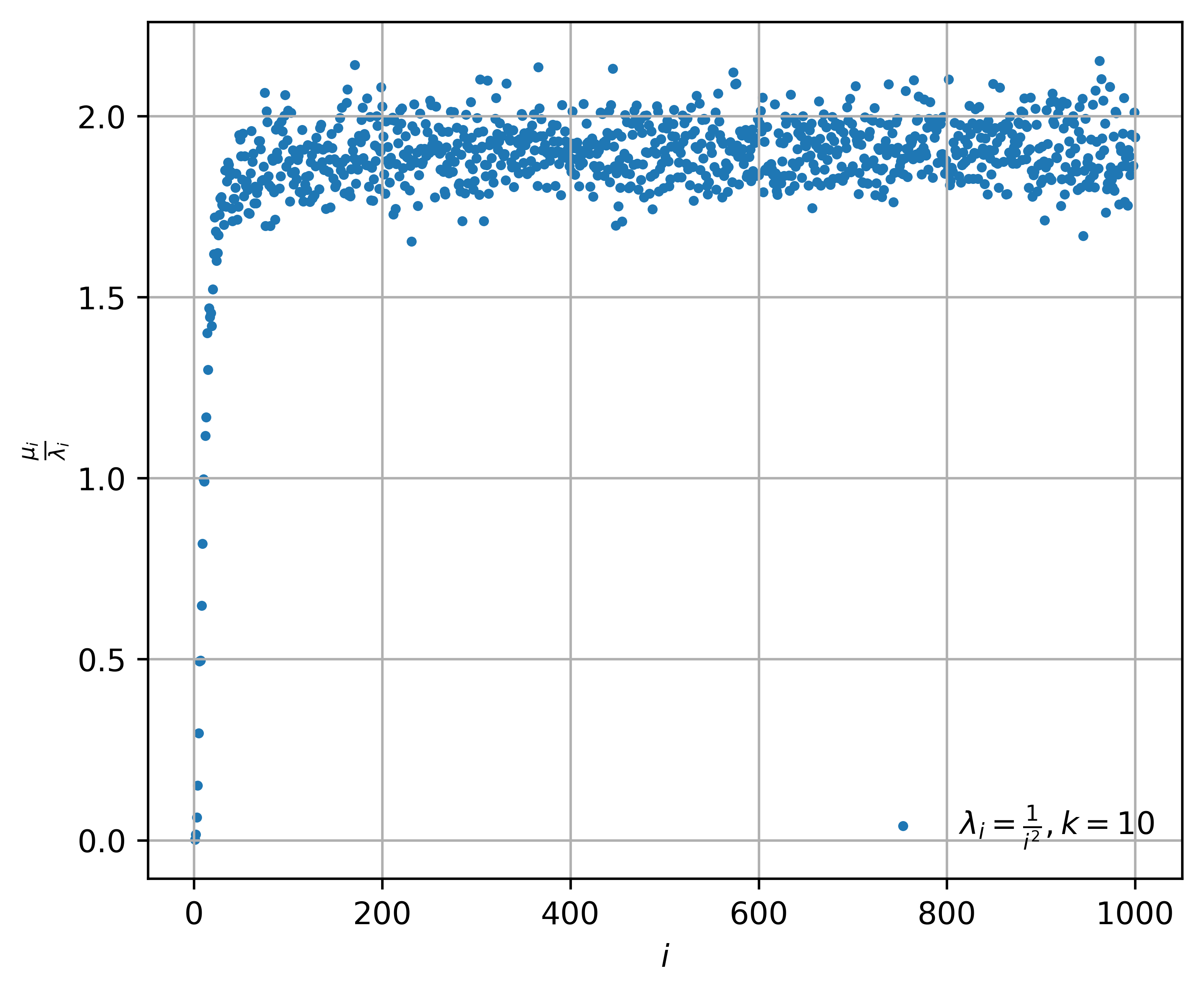}  \includegraphics[width=0.23\textwidth]{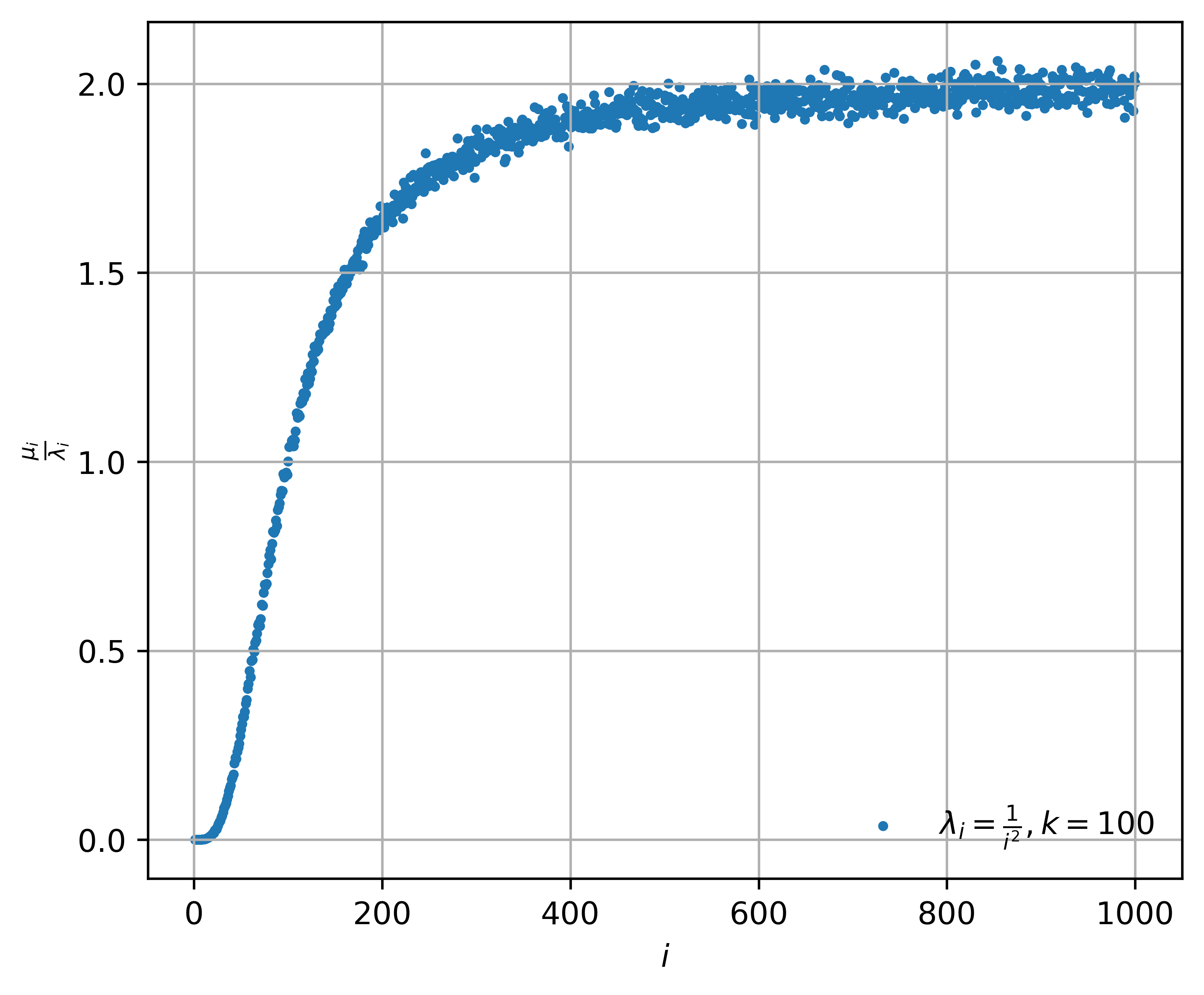} \includegraphics[width=0.23\textwidth]{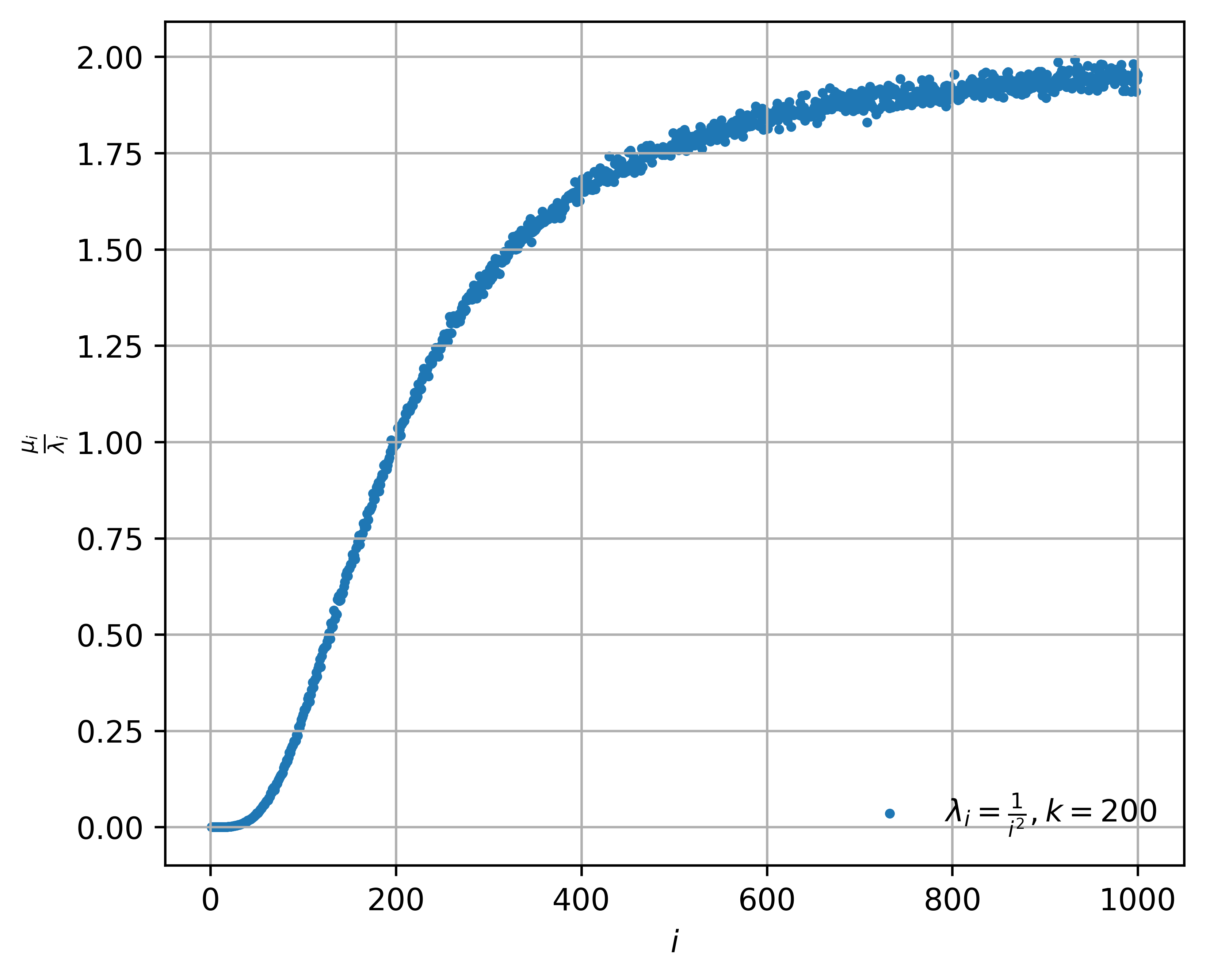} \includegraphics[width=0.23\textwidth]{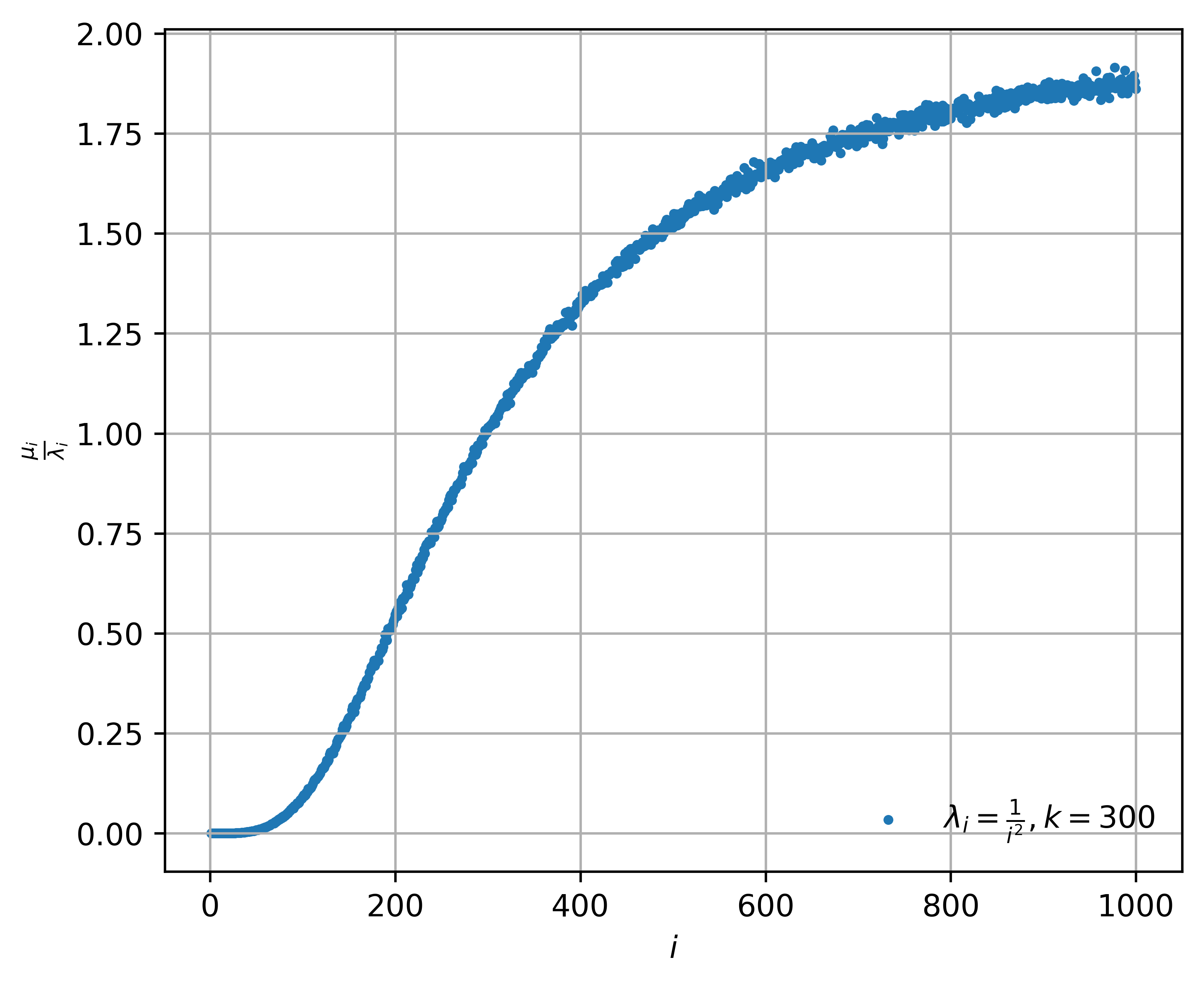} \\    \includegraphics[width=0.23\textwidth]{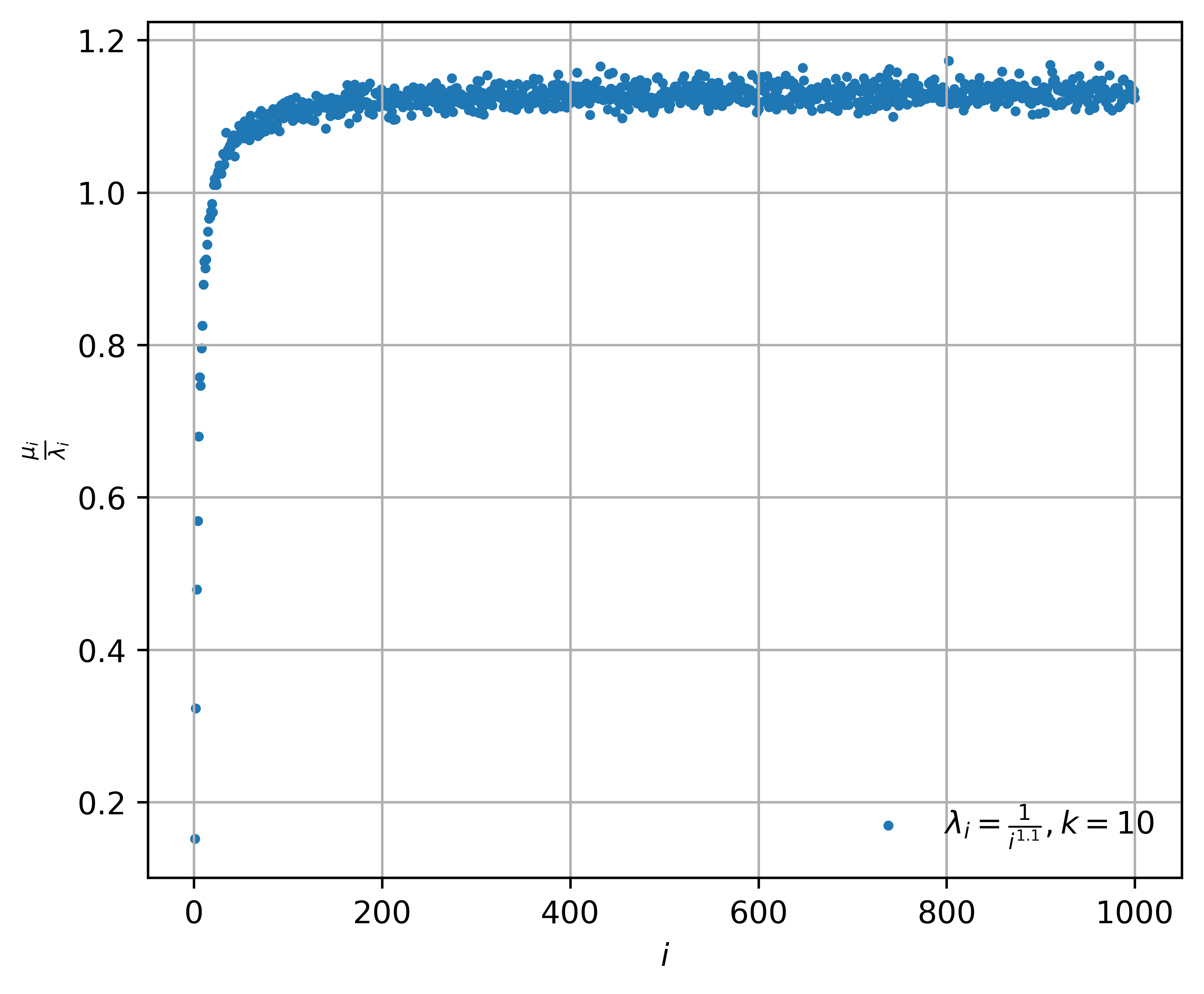}  \includegraphics[width=0.23\textwidth]{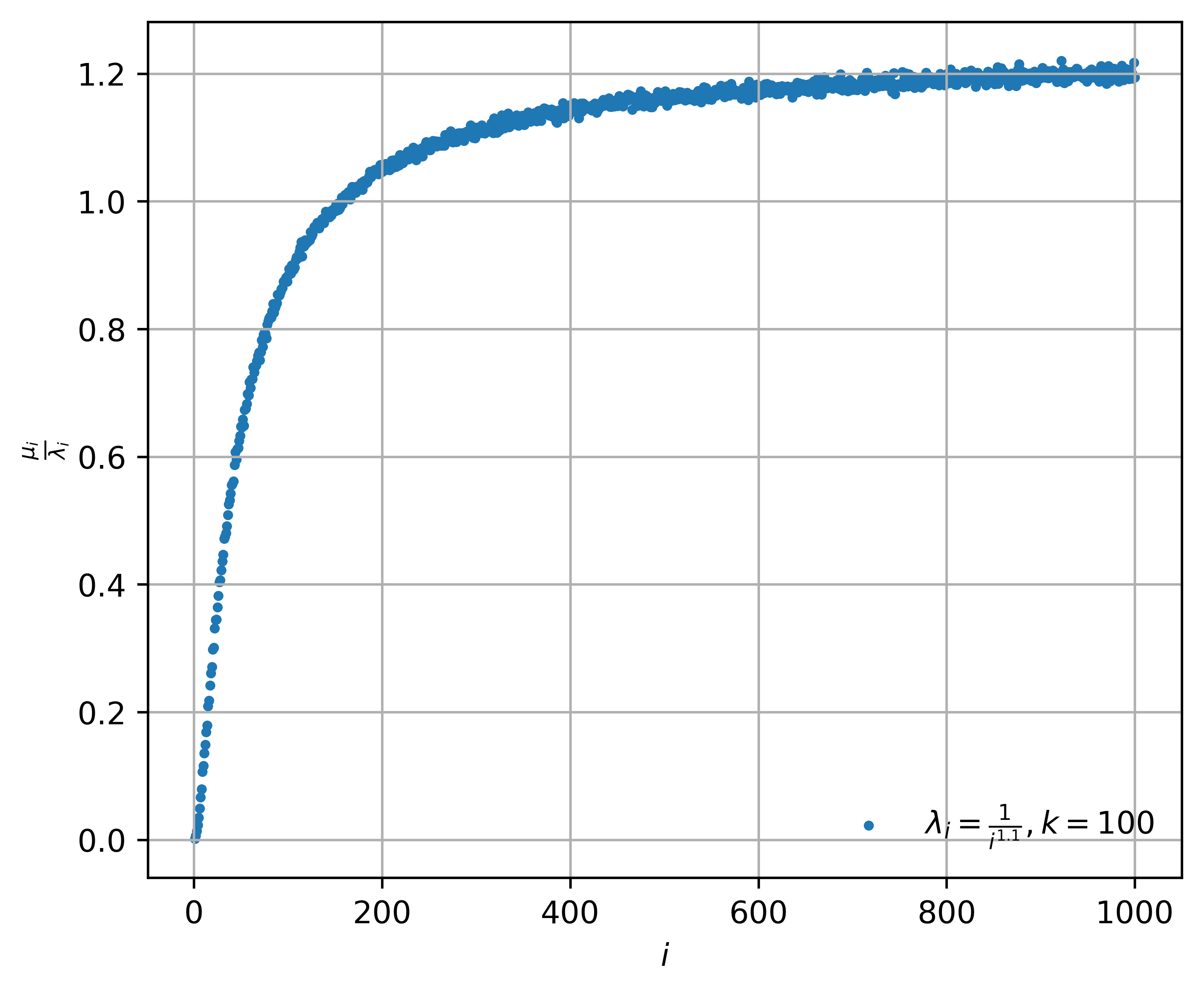} \includegraphics[width=0.23\textwidth]{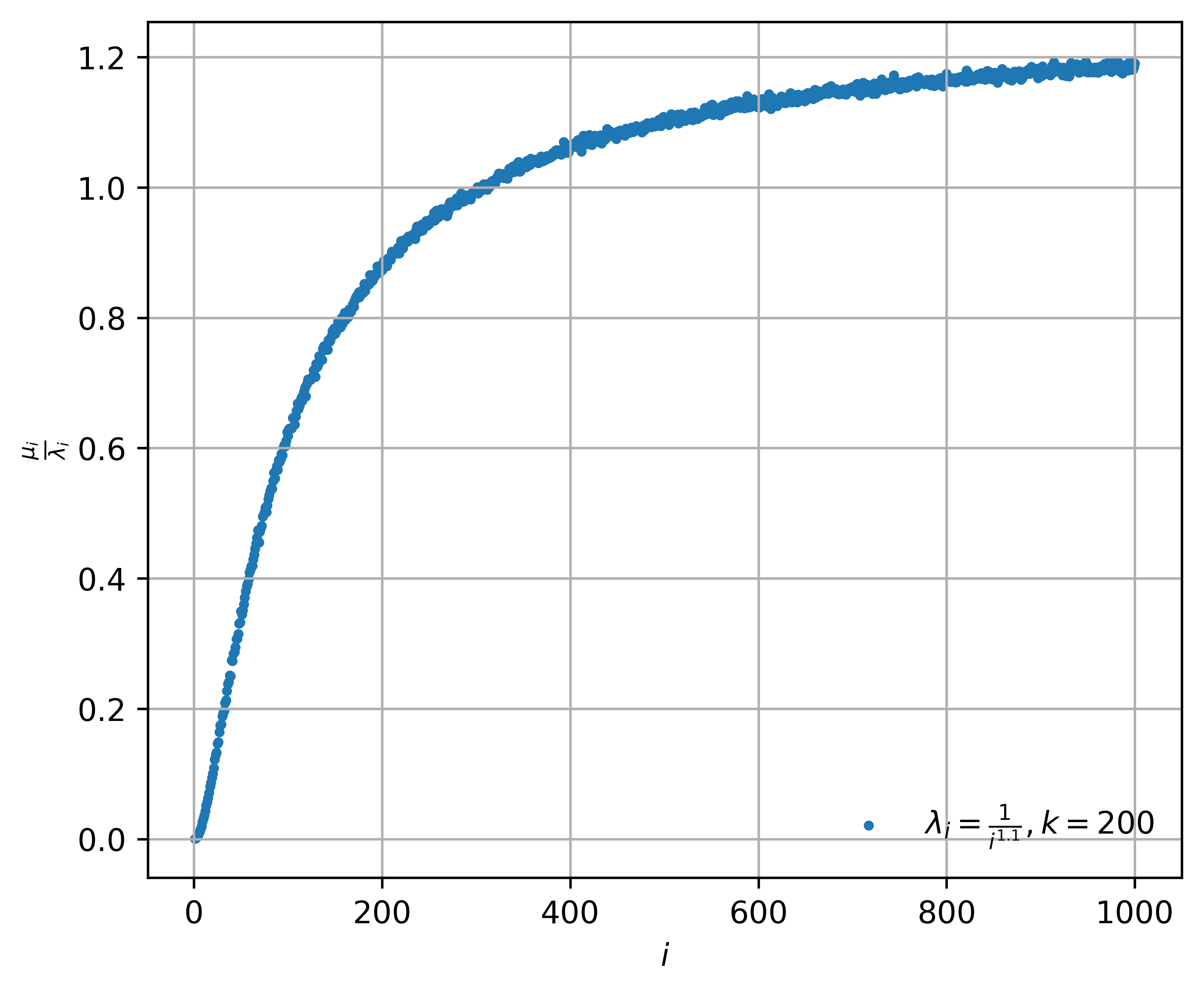} \includegraphics[width=0.23\textwidth]{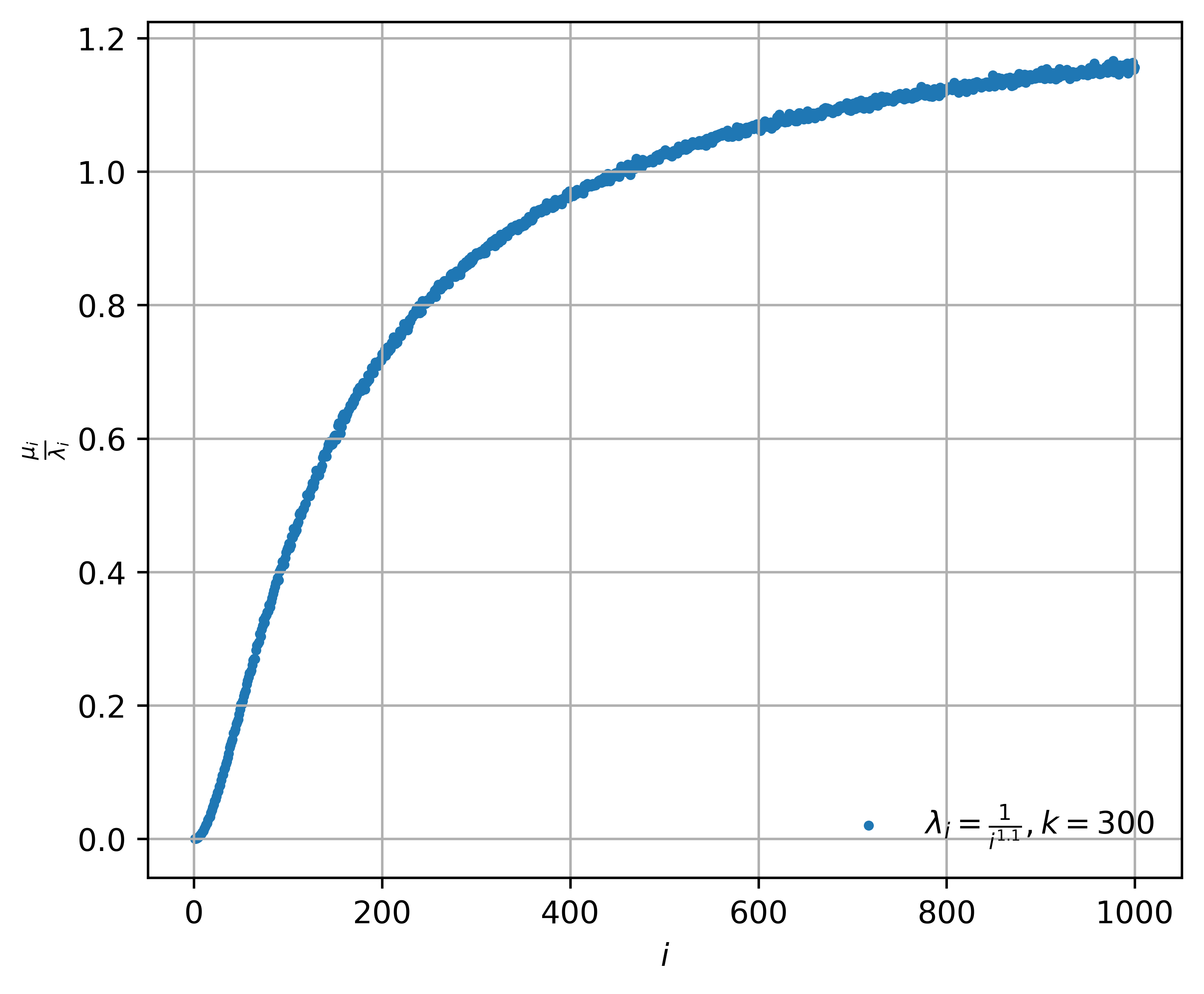} \\
\includegraphics[width=0.23\textwidth]{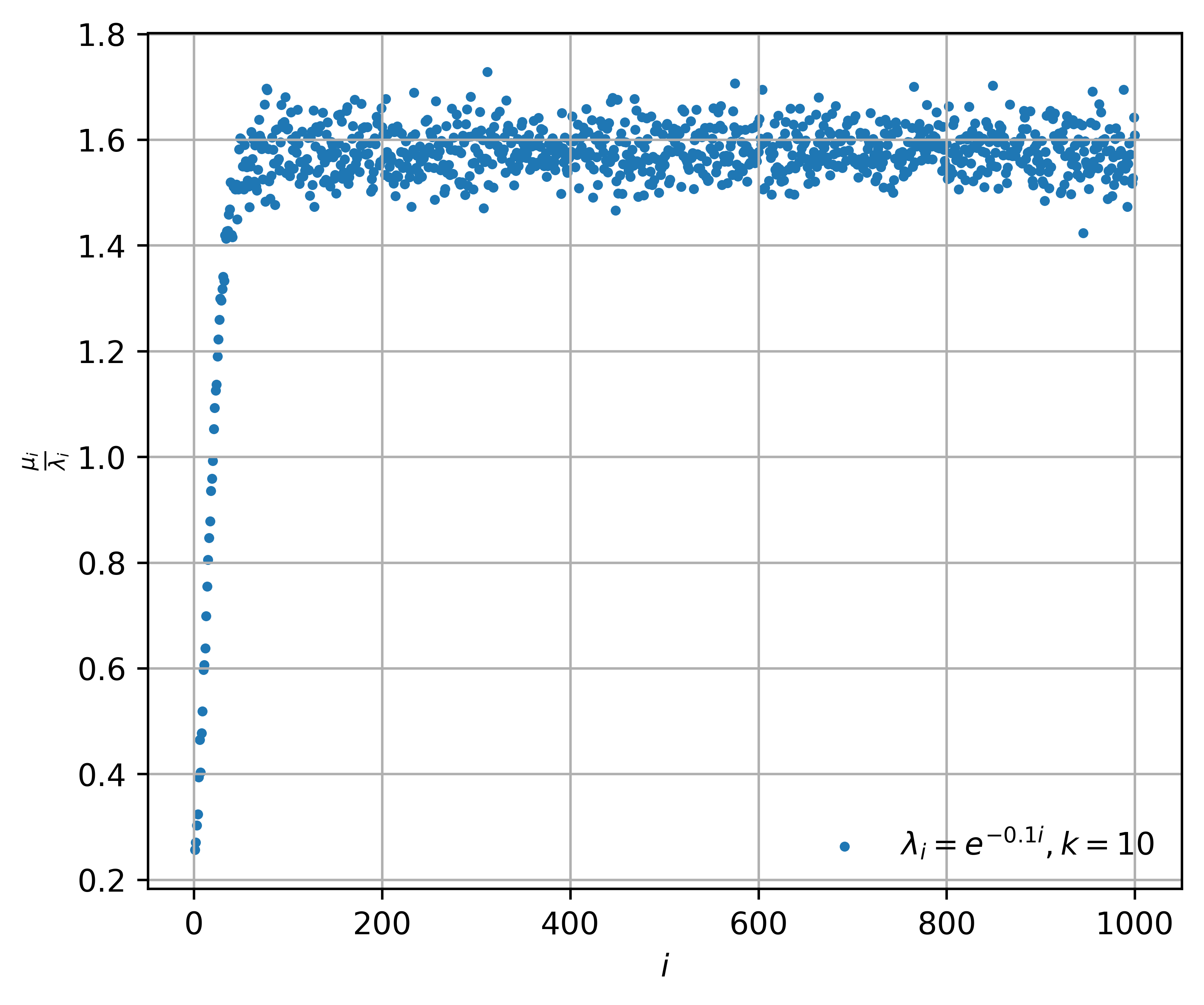}  \includegraphics[width=0.23\textwidth]{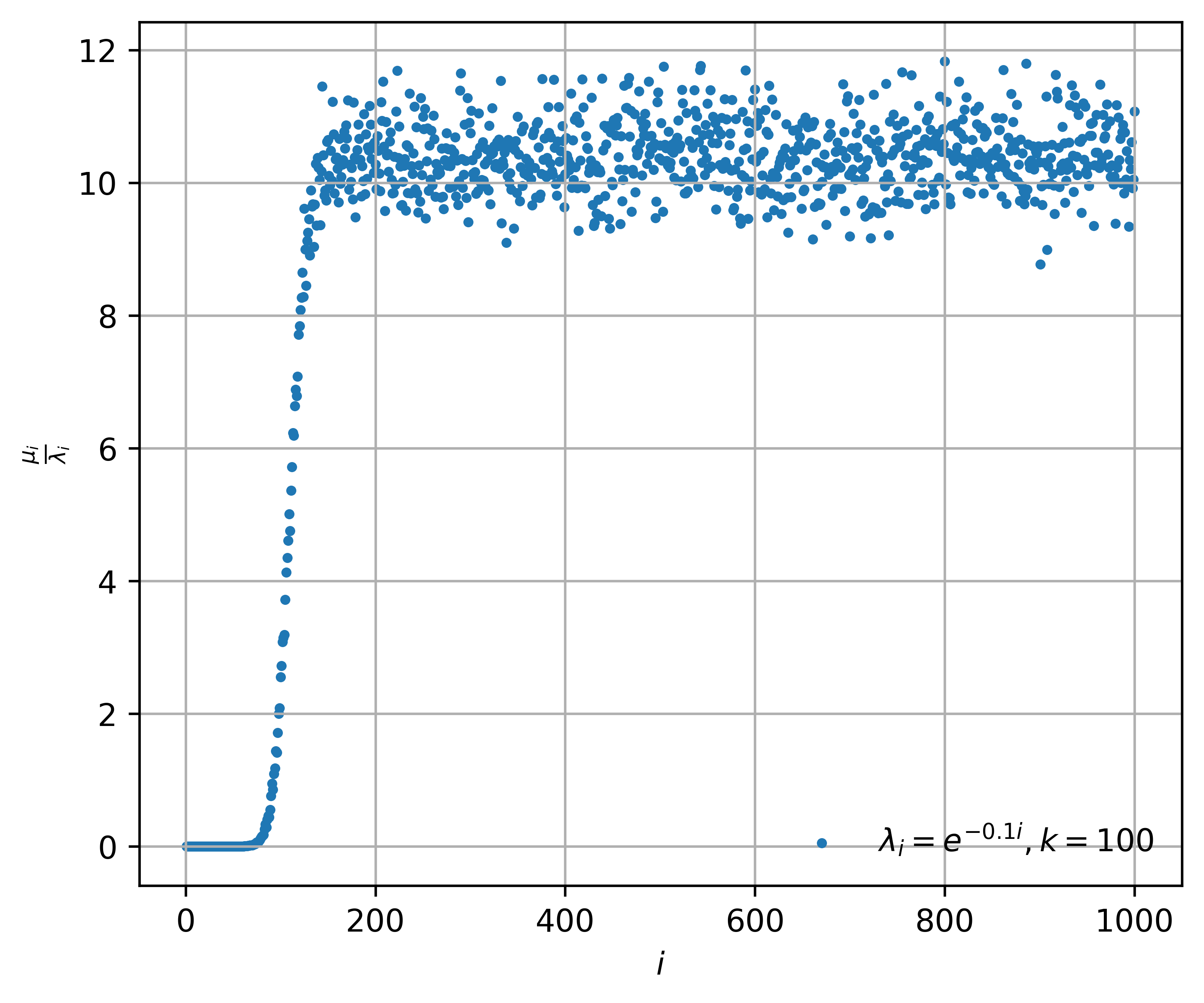} \includegraphics[width=0.23\textwidth]{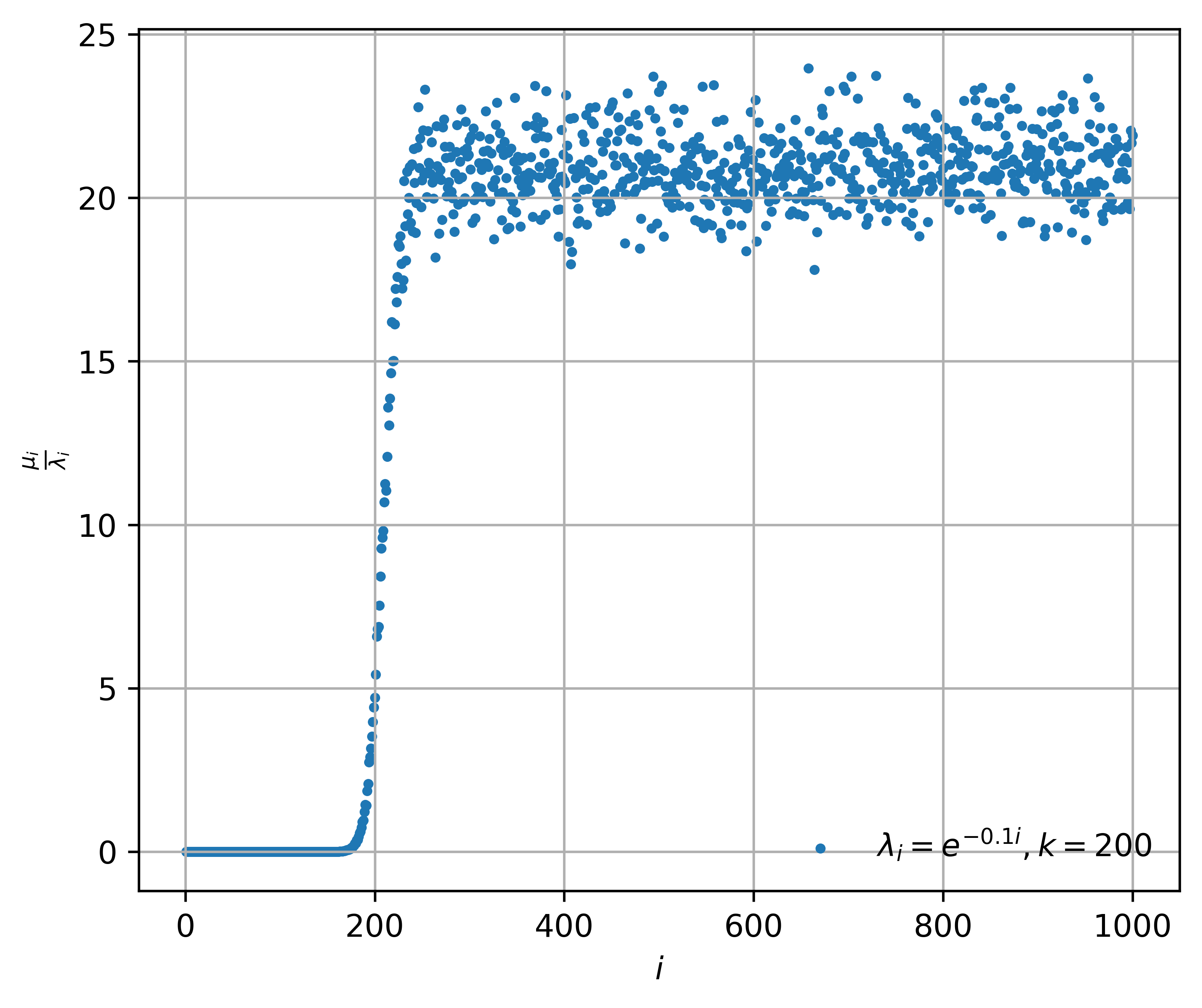} \includegraphics[width=0.23\textwidth]{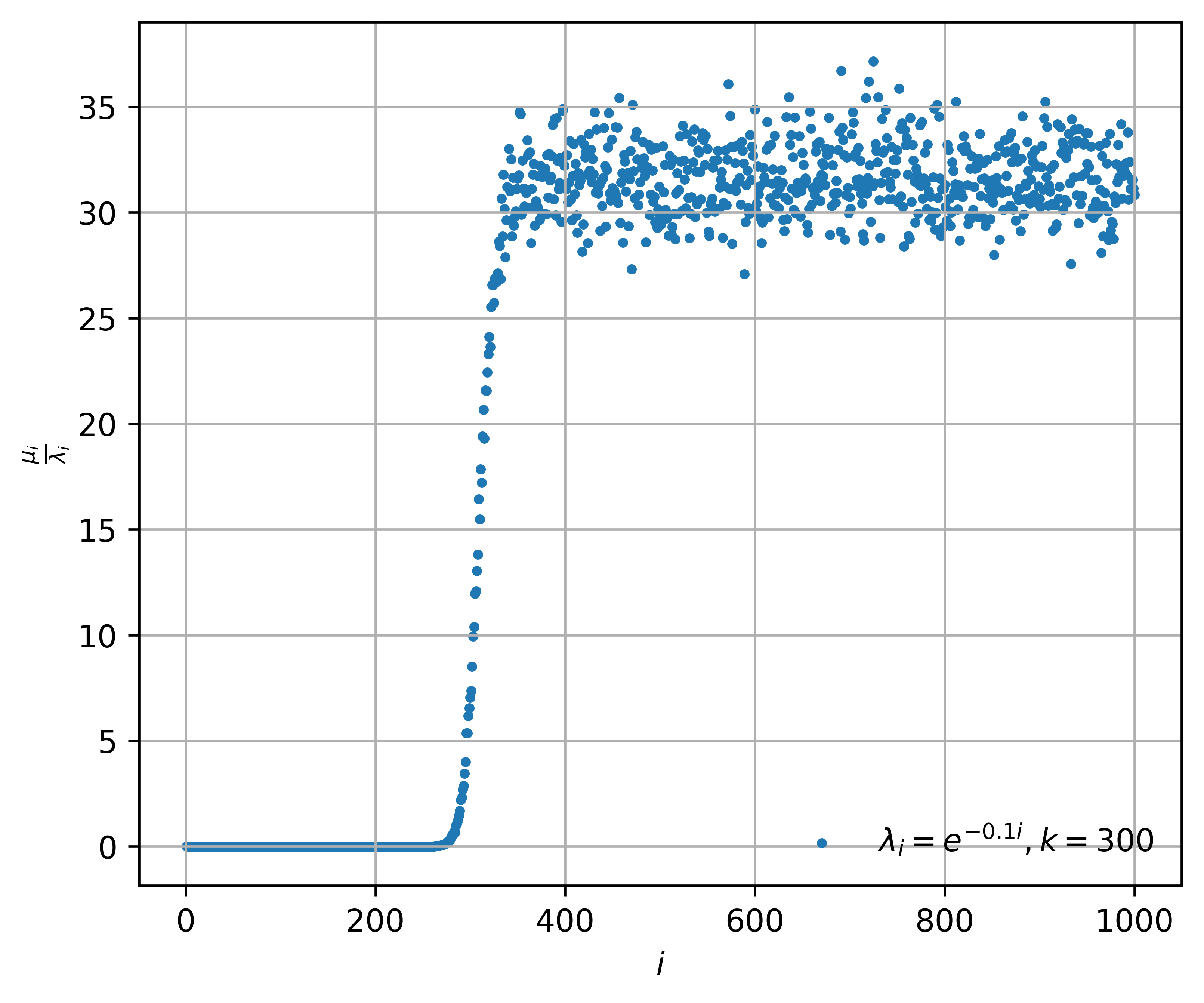} 
\caption{The behaviour of $1-2\lambda_i \cdot \mathbb{E}[M_{i,i}]+\sum_{j=1}^\infty \lambda_j^2 \cdot \mathbb{E}[M_{i,j}^2]$ computed by 50 times Monte-Carlo sampling for $k=10,100,200,300$ (columns) and eigenvalues (a) $\lambda_i=\frac{1}{i^4}$, (b) $\lambda_i=\frac{1}{i^2}$, (c) $\lambda_i=\frac{1}{i^{1.1}}$, (d) $\lambda_i=e^{-0.1i}$ (rows).}\label{ratio-estimates}    
\end{figure}

\begin{figure}[htb]
    \centering
    \includegraphics[width=0.23\textwidth]{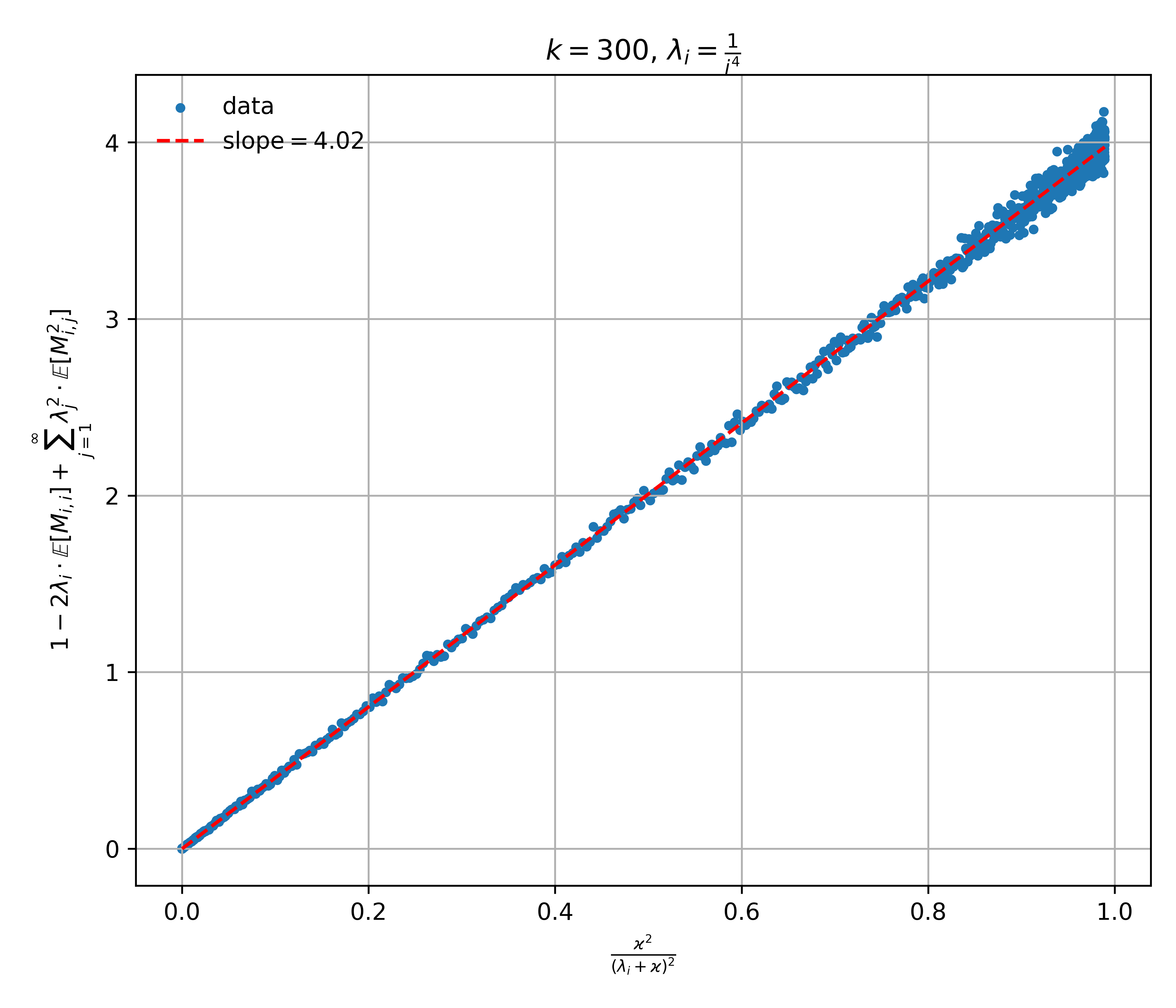}   \includegraphics[width=0.23\textwidth]{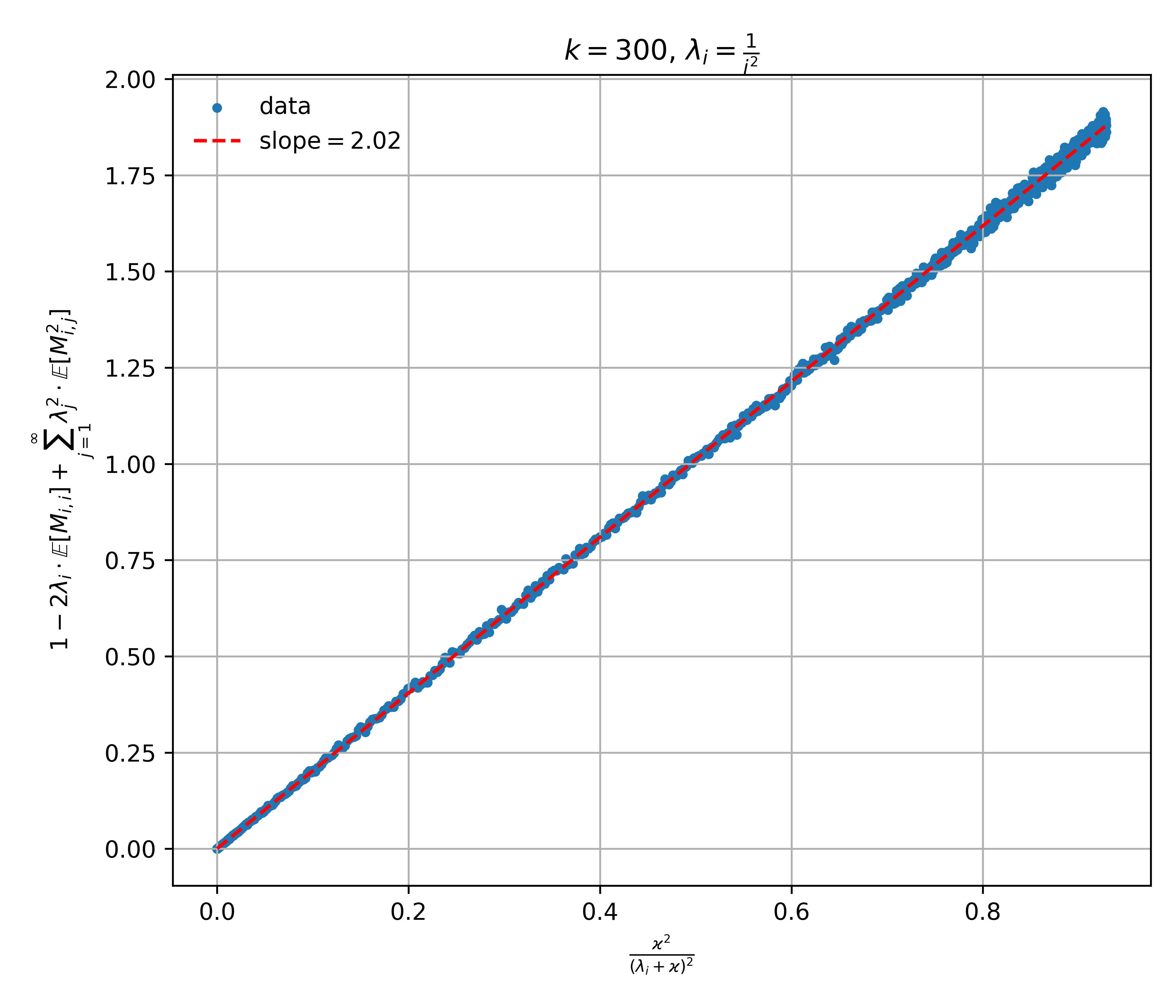} \includegraphics[width=0.23\textwidth]{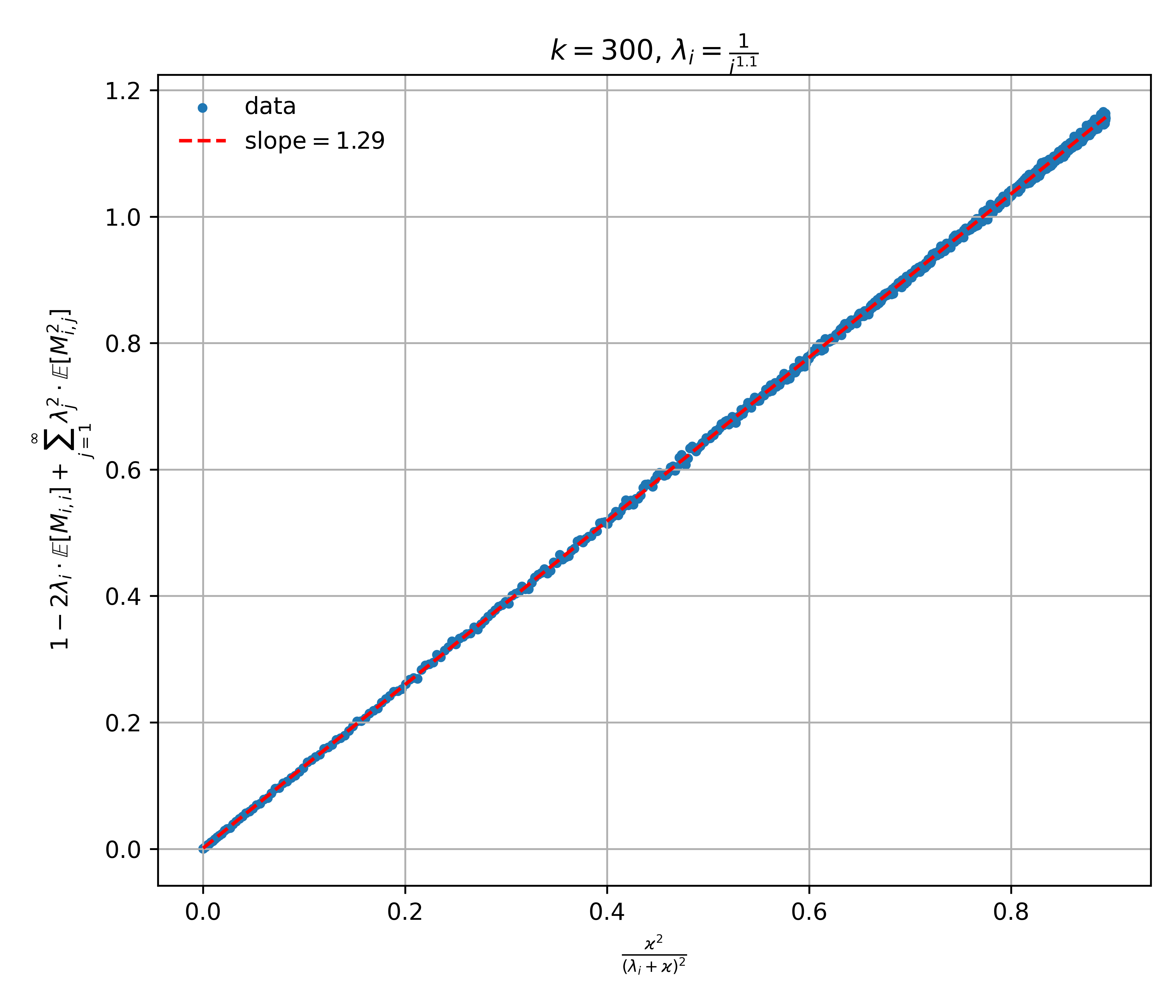} \includegraphics[width=0.23\textwidth]{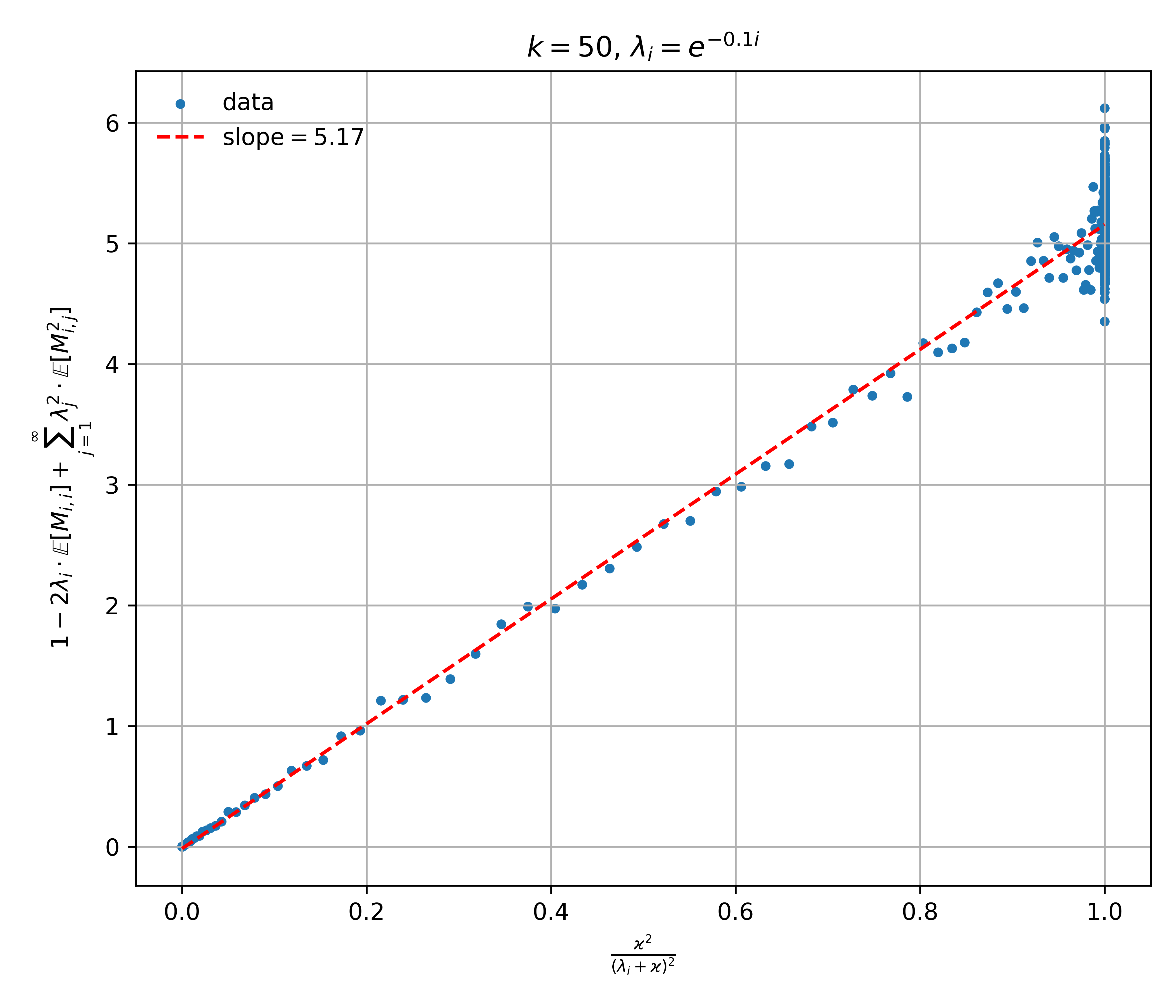}
\caption{Scatter plot for $\frac{\varkappa^2}{(\lambda_i+\varkappa)^2}$ vs. $1-2\lambda_i \cdot \mathbb{E}[M_{i,i}]+\sum_{j=1}^\infty \lambda_j^2 \cdot \mathbb{E}[M_{i,j}^2]$ for (a) $\lambda_i=\frac{1}{i^4}$, (b) $\lambda_i=\frac{1}{i^2}$, (c) $\lambda_i=\frac{1}{i^{1.1}}$, (d) $\lambda_i=e^{-0.1i}$.}\label{wishart}    
\end{figure}
\section{Random Feature approximation to the conditional KRR}\label{rfm-section}
The goal of this section is to establish existence of the random feature approximation of conditional KRR, similar to the approximation of standard KRR~\cite{NIPS2007_013a006f} (see also~\cite{pmlr-v244-takhanov24a}). Recall that $K(x,y) ={\mathbb E}_{\omega\sim \mathcal{P}}[f(\omega, x) f(\omega, y)]$ where
$(\Omega, \Sigma, \mathcal{P})$ is a probabilistic space.
Let us now introduce new features $f'_i(x) = \frac{1}{\sqrt{m}} f(\omega_i', x)$ for i.i.d. samples $\omega_{1}', \dots, \omega_{m}'\sim  \mathcal{P}$ and define feature vectors as $\phi(x) = [f_1(x), \dots, f_{k}(x)]^\top$ and $\psi(x) = [f'_1(x), \dots, f'_{m}(x)]^\top$. Consider the following random feature ridge regression (RFRR) problem
\begin{equation}\label{rfrr}
\begin{split}
\min_{u \in \mathbb{R}^{k}, w \in \mathbb{R}^{m}} \frac{1}{N} \sum_{i=1}^N (u^\top \phi(x_i)+w^\top \psi(x_i) - y_i)^2 + \uplambda \|w\|_2^2.
\end{split}
\end{equation}
The meaning of this task is to give a budget on weights of features $f'_i$ while having a complete freedom in selection of weights for the features $f_i, i=1,\cdots, k$. 
\begin{theorem}\label{RFM-kernel}
Let ${\rm rank}([f_i(x_j)]_{i=1}^{k}{ }_{j=1}^{N})=k$ and $u \in \mathbb{R}^{k}, w \in \mathbb{R}^{m}$ be the solution of the task~\eqref{rfrr}. Then, as $m\to +\infty$,
$$
u^\top \phi(x)+w^\top \psi(x)\to f(x)   \text{ with probability 1,}
$$
where $f$ is the solution of the task~\eqref{conditional-KRR}.
\end{theorem}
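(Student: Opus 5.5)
The approach is to write the random feature ridge regression~\eqref{rfrr} explicitly in kernel form and show that, as $m\to\infty$, it converges to the closed-form solution of conditional KRR given by Theorem~\ref{krr-cpd}. Let $\Psi_m=[\psi(x_1),\dots,\psi(x_N)]\in\mathbb{R}^{m\times N}$ and $K_m=\Psi_m^\top\Psi_m$, so that $(K_m)_{ij}=\frac1m\sum_{l=1}^m f(\omega'_l,x_i)f(\omega'_l,x_j)$. The first step is to minimize~\eqref{rfrr} over $u$ for fixed $w$. Since $F$ has rank $k$, this is ordinary least squares, and it gives
\[
u=(FF^\top)^{-1}F(y-\Psi_m^\top w).
\]
Substituting this back, the objective becomes $\frac1N\|(I_N-\Pi)(\Psi_m^\top w-y)\|^2+\uplambda\|w\|^2$, where $\Pi=F^\top(FF^\top)^{-1}F$.

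The second step solves this ridge problem in $w$. Its solution is
\[
w=\Psi_m(I_N-\Pi)\big((I_N-\Pi)K_m(I_N-\Pi)+\uplambda N I_N\big)^{-1}r,
\]
with $r=(I_N-\Pi)y$. This follows from the push-through identity. The prediction at a point $x$ is then
\[
u^\top\phi(x)+w^\top\psi(x)=k_m(x)^\top(I_N-\Pi)\big(\tilde K_m+\uplambda N I_N\big)^{-1}r+\phi(x)^\top(FF^\top)^{-1}F(y-K_m\alpha_m),
\]
where $k_m(x)=\Psi_m^\top\psi(x)$, $\tilde K_m=(I_N-\Pi)K_m(I_N-\Pi)$, and $\alpha_m=(I_N-\Pi)(\tilde K_m+\uplambda N I_N)^{-1}r$.

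The third step compares this with the conditional KRR solution. The proof of Theorem~\ref{krr-cpd} gives that solution in exactly the same form, with $\mathbf{K}$ in place of $K_m$ and $K(\cdot,x_i)$ in place of $(k_m(x))_i$. Concretely,
\[
f^\ast(x)=\sum_i\alpha_iK(x_i,x)+\phi(x)^\top\beta,\qquad \beta=-(FF^\top)^{-1}F(\mathbf{K}\alpha-y),
\]
where $\alpha$ solves the projected ridge system with $\tilde{\mathbf{K}}$. The normalization there is $\frac1N$ in the loss, matching $\uplambda N$ after rescaling. So the RF prediction is a fixed continuous (rational) function of the finitely many quantities $K_m(x_i,x_j)$ and $K_m(x,x_i)$, where we write $K_m(a,b)=\frac1m\sum_l f(\omega'_l,a)f(\omega'_l,b)$. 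The exact solution is the same function evaluated at $K(x_i,x_j)$ and $K(x,x_i)$.

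The final step is a law-of-large-numbers argument. Assuming $f(\omega,\cdot)$ is square-integrable in $\omega$ at each point, which holds because $K(x,x)<\infty$, the strong law gives $K_m(a,b)\to K(a,b)$ almost surely for each of the finitely many pairs $(a,b)$ involved, namely $(x_i,x_j)$ and $(x,x_i)$. The only place continuity could fail is the inverse. However, $\tilde K_m$ is positive semidefinite, so $\tilde K_m+\uplambda N I_N$ has smallest eigenvalue at least $\uplambda N>0$, and matrix inversion is continuous on this set. Hence the RF prediction converges almost surely to $f^\ast(x)$ by the continuous mapping theorem.

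The main obstacle is bookkeeping rather than analysis: matching the $\frac1N$ and $\uplambda$ scalings between~\eqref{rfrr} and the representer form in Theorem~\ref{krr-cpd}. It also requires checking that the $u$-part coincides, which means verifying $\beta=-(FF^\top)^{-1}F(\mathbf{K}\alpha-y)$ against $u=(FF^\top)^{-1}F(y-K_m\alpha_m)$. Convergence is pointwise in $x$; if uniform convergence were wanted, one would additionally need a uniform law of large numbers for $K_m(x,x_i)$, which I would avoid by stating the pointwise version.
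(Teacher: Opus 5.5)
Your proposal is correct and follows essentially the same route as the paper. Both eliminate $u$ by least squares, solve the projected ridge problem in $w$ via the push-through/Woodbury identity, and pass to $m\to\infty$ by the strong law of large numbers. The only difference is cosmetic: the paper compares the limit against the residual-KRR-plus-linear-regression form of Theorem~\ref{krr-cpd} (using $\tilde\psi(x_i)^\top\tilde\psi(x)\to K_{P_N}(x_i,x)$), whereas you compare against the representer form from that theorem's proof and spell out the integrability and invertibility conditions for the continuous-mapping step, which the paper leaves implicit.
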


\begin{proof}
The goal is to minimize
$$
\|A^\top u+B^\top w - y\|^2 + \uplambda N w^\top w,
$$
where $y = (y_1, \dots, y_N)^\top \in \mathbb{R}^N$ and $A = [f_i(x_j)]_{i=1}^{k}{ }_{j=1}^{N} \in \mathbb{R}^{k \times N}$ and $B = [f'_i(x_j)]_{i=1}^{m}{ }_{j=1}^{N} \in \mathbb{R}^{m \times N}$.  Gradients w.r.t. $u$ and  $w$ are equal to $0$ if an only if
$$
\begin{aligned}
AA^\top u = -A (B^\top w-y),\\
(BB^\top+\uplambda N I_{m}) w = -B (A^\top u-y).
\end{aligned}
$$
So, the trained function satisfies
\begin{equation*}
\begin{split}
u^\top \phi(x)+w^\top \psi(x)=-(w^\top B-y^\top) A^\top (AA^\top)^{-1} \phi(x) +w^\top\psi(x)=\\
y^\top A^\top (AA^\top)^{-1} \phi(x)+w^\top (\psi(x)-BA^\top (AA^\top)^{-1} \phi(x)).
\end{split}
\end{equation*}
Note that $y^\top A^\top (AA^\top)^{-1} \phi(x)$ is the output mapping of the linear regression with the feature vector $\phi(x)$ applied to the training data. 

Recall that $\Pi_{P_N}$ denotes the projection operator onto ${\rm span}(f_1, \cdots, f_k)$ in $L_2(\mathcal{X}, P_N)$.
Let $\tilde{f}_i = (I-\Pi_{P_N})[f'_i]$,
$$
\tilde{\psi}(x) = [\tilde{f}_{1}(x), \cdots, \tilde{f}_{m}(x)]^\top = (I-\Pi_{P_N})[\psi]$$ 
and $\tilde{B} = [\tilde{f}_{i}(x_j)]_{i=1}^{m}{ }_{j=1}^{N} \in \mathbb{R}^{m \times N}$. By construction, $BA^\top (AA^\top)^{-1} \phi(x) = \Pi_{P_N}[\psi](x) = \psi(x)-\tilde{\psi}(x) $. Thus, we have
\begin{equation*}
\begin{split}
u^\top \phi(x)+w^\top \psi(x)=w^\top\tilde{\psi}(x) +y^\top A^\top (AA^\top)^{-1} \phi(x).
\end{split}
\end{equation*}

The matrix $\Pi = A^\top(AA^\top)^{-1}A$ is the projection matrix onto the row space of $A$. So, we have
$$
\begin{aligned}
(BB^\top+\uplambda N I_{m}) w = -B (-\Pi B^\top w+\Pi y-y)\Rightarrow\\
w = (B(I-\Pi)B^\top+\uplambda N I_{m})^{-1}B (I-\Pi)y.
\end{aligned}
$$
The vector $r=(I-\Pi)y$ is exactly the vector of residuals. 
By construction, $\tilde{B} = B(I-\Pi)$. Then, $\tilde{B} \tilde{B}^\top = B(I-\Pi)B^\top$ and $B r = \tilde{B} r$. Therefore,
$$
\begin{aligned}
w = (\tilde{B} \tilde{B}^\top+\uplambda N I_{m})^{-1}\tilde{B} r.
\end{aligned}
$$
The Woodbury matrix identity gives $(\tilde{B} \tilde{B}^\top+\uplambda N I_{m})^{-1} = \frac{1}{\uplambda N} I_{m}-\frac{1}{\uplambda N} \tilde{B}(\uplambda N I_N+ \tilde{B}^\top\tilde{B})^{-1}\tilde{B}^\top$, and we obtain the standard kernel trick identity
$$
\begin{aligned}
w = \tilde{B}(\uplambda N I_N+ \tilde{B}^\top\tilde{B})^{-1} r.
\end{aligned}
$$
Let us denote $\tilde{K} = [\tilde{\psi}(x_i)^\top \tilde{\psi}(x_j)]_{i,j=1}^N=\tilde{B}^\top\tilde{B}$. 
So, $w = \sum_{j=1}^N a_i\tilde{\psi}(x_i)$ where $a=[a_i]_{i=1}^N= (\tilde{K}+\uplambda N I_N)^{-1}r$.
Thus, the first term of the trained function $\tilde{f}(x) = w^\top \tilde{\psi}(x)$ is
$$
\tilde{f}(x) = \sum_{i=1}^N a_i\tilde{\psi}(x_i)^\top\tilde{\psi}(x) = r^\top (\tilde{K}+\uplambda N I_N)^{-1}[\tilde{\psi}(x_1)^\top\tilde{\psi}(x), \cdots, \tilde{\psi}(x_N)^\top\tilde{\psi}(x)]^\top.
$$
Let us analyze the behaviour of that function under $m\to+\infty$. By the law of large numbers
$$
\tilde{K}\to [K_{P_N}(x_i,x_j)]_{i,j=1}^N,
$$
and
$$
\tilde{\psi}(x_i)^\top\tilde{\psi}(x)\to K_{P_N}(x_i,x),
$$
as $m\to +\infty$.
That is
$$
\tilde{f}(x) \to r^\top ([K_{P_N}(x_i,x_j)]+\uplambda N I_N)^{-1}[K_{P_N}(x_1,x), \cdots, K_{P_N}(x_N,x)]^\top.
$$
The latter is exactly the solution of 
$$
\begin{aligned}
\min_{g\in \mathcal{H}_{K_{P_N}}}\frac{1}{N}\sum_{i=1}^N (g(x_i) - \tilde{y}_i)^2 + \uplambda \|g\|_{\mathcal{H}_{K_{P_N}}}^2,
\end{aligned}
$$
Using Theorem~\ref{krr-cpd} we conclude that 
$$
u^\top \phi(x)+w^\top \psi(x)\to f(x)   \text{ with probability 1.}
$$
\end{proof}
Thus, RFRR method can be considered as an approximation of the conditional KRR. 

\section{Details of experiments and additional experiments}\label{additional}
\subsection{Details of experiments on hard thresholding with synthetic data: dependence of the cost of conditioning on $N,k,\sigma^2$}\label{additional-hard}
To verify the decay rates of the conditioning cost $c_{\rm con}$ predicted by Theorem~\ref{th-conditioning-bound}, 
we used the experimental setup described in Section~\ref{experiments} (the hard thresholding with synthetic data case). 
For fixed parameters $N$ (sample size), $k$ (chosen such that ${\rm dim}(\mathcal{F}) = 2k+1$), $\sigma^2$ (noise variance), 
and a selected regression function, we repeated the following procedure 20 or 50 times:  
(a) sample training and test sets;  
(b) train both the conditional KRR and the $\mathcal{F}$-conditional learner;  
(c) estimate $c_{\rm con}$ as the mean squared distance between the two resulting estimators on the test set.  
Finally, we averaged $c_{\rm con}$ across repetitions and denote this empirical estimate by $\hat{c}_{\rm con}$.

\subsection{Details of experiments on the hard thresholding with real world data}\label{hard-description}
For real (non-synthetic) data, the eigenfunctions $\phi_i$ of the integral operator
$\phi \mapsto \int_{\mathcal X} K(\cdot, x)\phi(x) dP(x)$
are not available in closed form and must be estimated from samples.
Given a Mercer kernel $K$ and training points $X_1,\dots,X_N$, let
$G = [K(X_i, X_j)] \in \mathbb{R}^{N\times N}$ be the Gram matrix, and let
$(\hat\lambda_i, \alpha_i)$ denote the eigenpairs of the Hermitian matrix $\frac{1}{N}G$, ordered so that
$\hat\lambda_1 \ge \hat\lambda_2 \ge \cdots$ and normalized by $\|\alpha_i\|_2 = 1$.
We write $P_N = \frac{1}{N}\sum_{j=1}^N \delta_{X_j}$ for the empirical distribution.

We estimate the eigenfunction $\phi_i$ by its empirical extension
$$
\widehat\phi_i(x)
= \frac{1}{\sqrt{N}\hat\lambda_i}
\sum_{j=1}^N \alpha_i(j)K(x, X_j),
\qquad i = 1,\dots, \operatorname{rank}(G).
$$
This normalization is chosen so that $\widehat\phi_i(X_\ell) = \sqrt{N}\alpha_i(\ell)$, making the family
${\widehat\phi_i}$ orthonormal in $L_2(\mathcal X, P_N)$.
Moreover, each $\widehat\phi_i$ is an eigenfunction of the empirical integral operator
$$
O_N f(x)
= \frac{1}{N}\sum_{j=1}^N K(x, X_j) f(X_j),
$$
satisfying $O_N \widehat\phi_i = \hat\lambda_i \widehat\phi_i$.

In all our experiments, we therefore substitute $\widehat\phi_i$ for the true eigenfunctions $\phi_i$ and define
$\mathcal{F}_k = \{\widehat\phi_1,\dots,\widehat\phi_k\}$.
Conditional KRR with respect to this $\mathcal{F}_k$ can be seen as a practical approximation of the hard-thresholding setting described in Subsection~\ref{hard-thresholding}.

\begin{table}[h!]
\centering
\begin{tabular}{lcc}
\toprule
\textbf{Kernel} & \textbf{KRR} & \textbf{C-KRR} \\
\midrule
Gaussian (RBF)  & 0.0671$\pm$ 0.0015 & 0.0732$\pm$ 0.0023 \\
Laplace         & 0.0750$\pm$ 0.0029 & 0.0744$\pm$ 0.0023 \\
Matern ($\nu=1.5$) & 0.1102$\pm$0.0038 & 0.1087$\pm$0.0029 \\
NNGP (erf)      & 0.0574$\pm$ 0.0020 & 0.0572$\pm$0.0022 \\
\bottomrule
\end{tabular}
\caption{Test MSE comparison of KRR ($k=0$) with $\uplambda$ optimized on the validation set and the hard thresholding setup ($\uplambda=0.01$) with $k$ optimized on the validation set for different kernels (on the 7-vs-9 MNIST).}
\label{tab:krr_ckrr}
\end{table}

In Section~\ref{experiments} we report results of experiments with the 7-vs-9 MNIST dataset. We also compared the test MSE of Conditional KRR---using a fixed regularization parameter $\uplambda$ and selecting $k$ via validation---with the test MSE of standard KRR equipped with an optimally tuned $\uplambda$. As shown in Table~\ref{tab:krr_ckrr}, the resulting test errors are extremely close, to the point where their difference is statistically insignificant (at least for the kernels considered and for the MNIST dataset).

We also investigated the effect of varying $k$ while keeping $\uplambda$ fixed at the value optimally tuned for standard KRR. In this setting, the test MSE for small $k$ is nearly identical to its value at $k=0$, and it increases only for sufficiently large $k$. These empirical findings suggest that when $\uplambda$ is already optimized for KRR, adjusting $k$ provides essentially no additional benefit. Naturally, this conclusion applies only to the MNIST dataset and the set of kernels examined here.

\subsection{Soft thresholding with random features: additional experiments}\label{exp:soft}
According to Theorem~\ref{RFM-kernel}, the larger $m$ (the number of penalized random features), the closer RFRR approximates conditional KRR. 
The plots reported in Section~\ref{experiments} were obtained with $m=2000$. 
We also tested the method on several non-synthetic datasets and consistently observed the same U-shaped behavior. 
Figure~\ref{rfm_soft_thresholding} shows the RFRR train/test MSE as a function of $k$ (the number of unpenalized random features) for $m=10000$ and 7-vs-9 MNIST dataset (rescaled so that pixel intensities fall within $[0,1]$, and mean-centered). 
For the cosine activation, adding unpenalized features consistently worsens the test MSE due to catastrophic overfitting, a well-known issue in ridgeless Gaussian KRR (equivalent to using unpenalized random features with cosine activation).

\begin{figure}[htb]
\begin{minipage}[t]{.3\textwidth}
    \centering
    \includegraphics[width=1.1\textwidth]{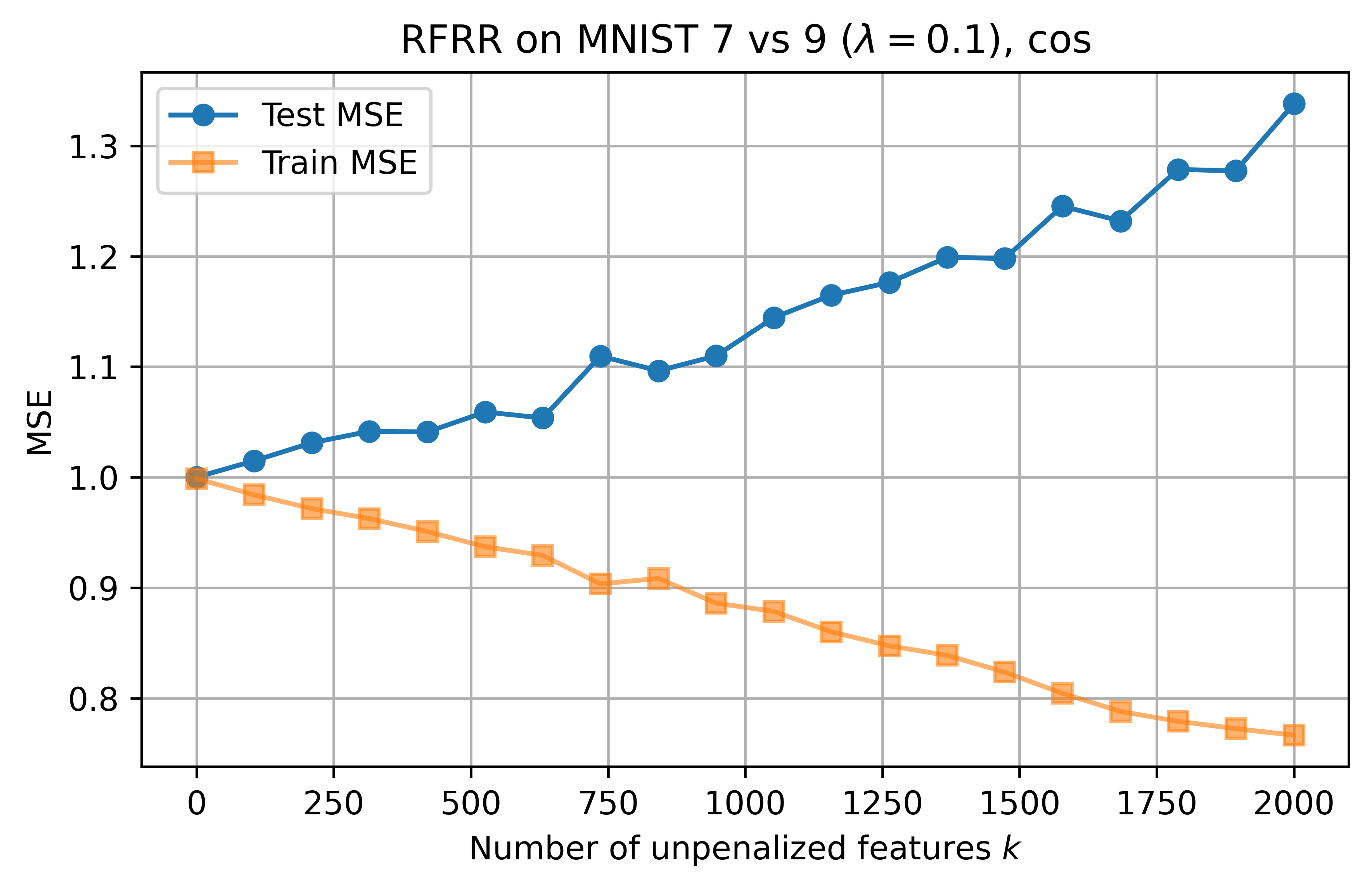}
\end{minipage}\,\,\,\,\,\,\begin{minipage}[t]{.3\textwidth}
    \centering
    \includegraphics[width=1.1\textwidth]{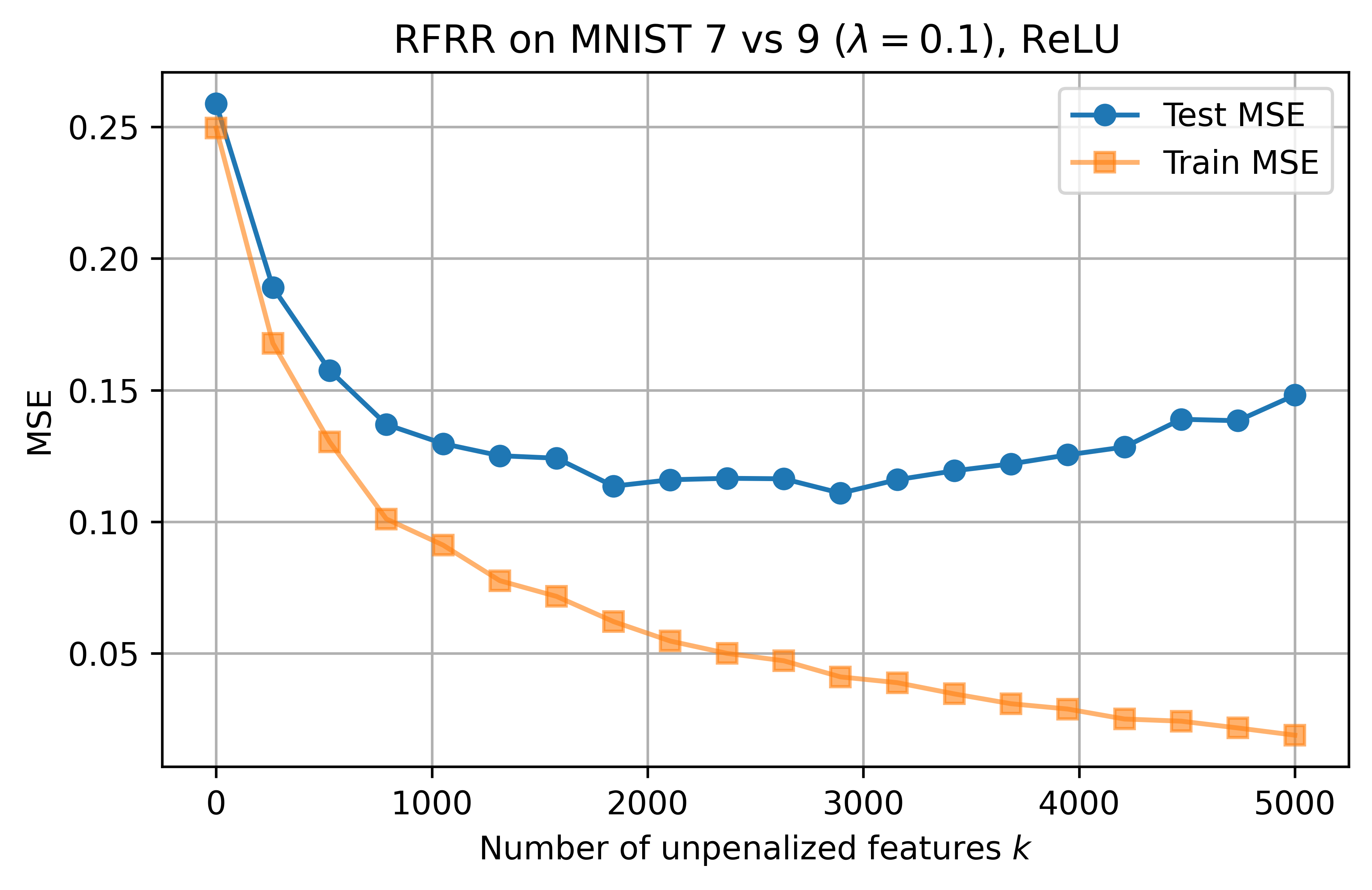}
\end{minipage}\,\,\,\,\,\,\begin{minipage}[t]{.3\textwidth}
    \centering
    \includegraphics[width=1.1\textwidth]{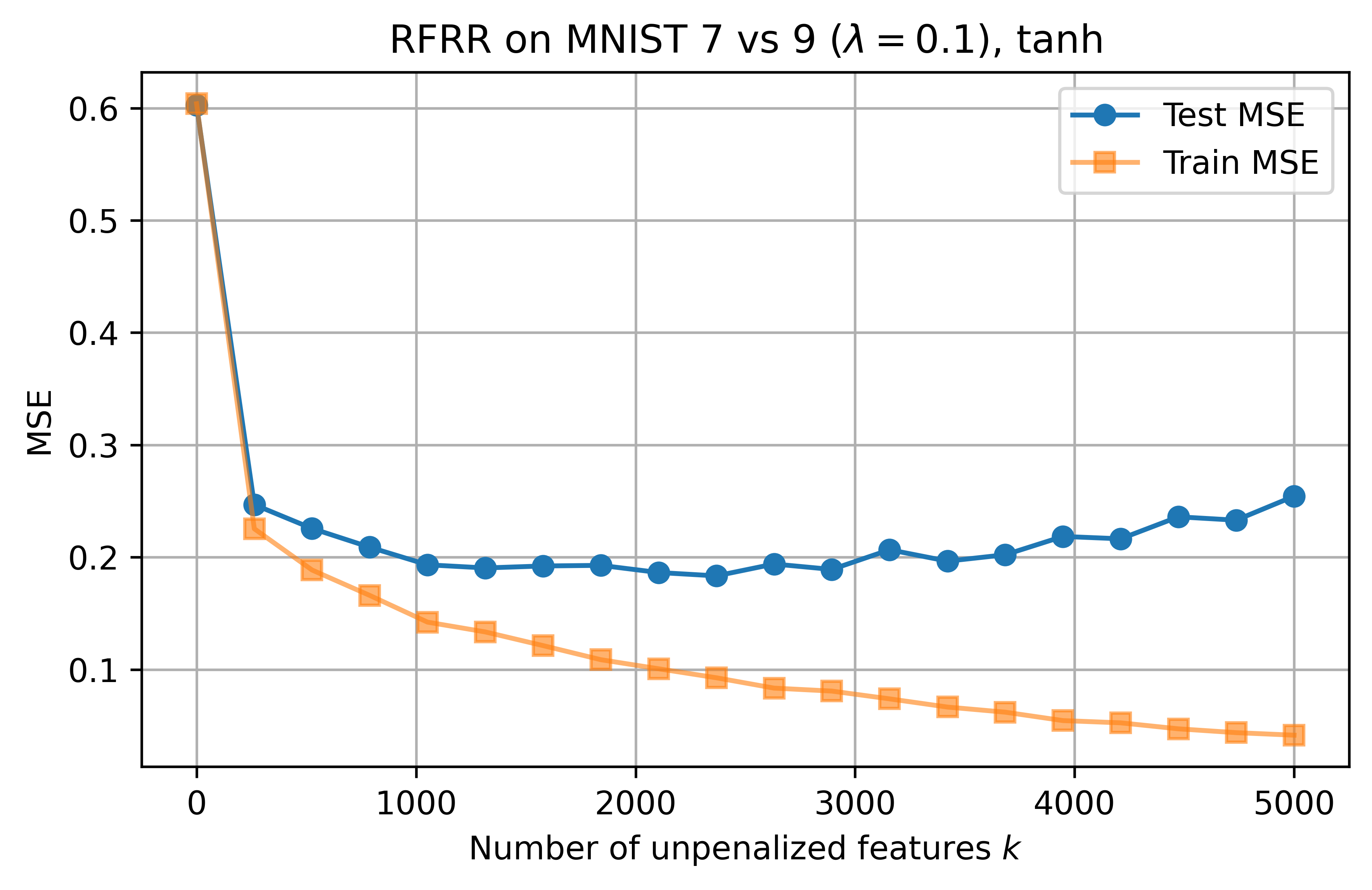}    
\end{minipage}
\caption{The effect of the soft thresholding for the cosine, ReLU and tanh activation functions on the 7-vs-9 MNIST dataset.}
    \label{rfm_soft_thresholding}
\end{figure}

\begin{remark}[Beyond hard and soft thresholding setups] Suppose that $\uplambda$ is optimally tuned for standard KRR. Although the results above suggest that using the eigenfunctions of the operator $\phi \longmapsto \int_{\mathcal X} K(\cdot, x)\phi(x) dP(x)$
as unpenalized features does not improve the test MSE, this does not imply that Conditional KRR cannot outperform standard KRR when supplied with a different choice of unpenalized features.

To demonstrate that Conditional KRR can exhibit a clear U-shaped dependence of $\mathrm{MSE}(k)$ on $k$ for the 7-vs-9 MNIST dataset, we conducted the following experiment. We first trained a two-layer neural network with ReLU activation and 20 hidden units, i.e., the model $
\mathrm{NN}_\theta(x) = \sum_{i=1}^{20} a_i\mathrm{ReLU}(w_i^\top x + b_i) + c$,
using $L_2$-regularization and the 7-vs-9 MNIST training set.
Next, we trained a random-feature approximation of Conditional KRR based on the corresponding ReLU kernel
$$
K(x,y)
= \mathbb{E}_{w\sim\mathcal N(0,I_d/d), b\sim U[-1,1]}
\left[\mathrm{ReLU}(w^\top x+b)\mathrm{ReLU}(w^\top y+b)\right],
$$
using $m = 10{,}000$ random features. We set $k = 20$ and defined the unpenalized subspace
$\mathcal{F}_k = \{\mathrm{ReLU}(w_i^\top x + b_i) \mid 1 \le i \le k\}$,
i.e. the features extracted from the trained neural network.
The resulting test MSE curves as functions of $\uplambda$ are shown in Figure~\ref{nn_thresholding}.
\end{remark}
\begin{figure}[htb]\color{red}
\begin{minipage}[t]{.9\textwidth}
    \centering
    \includegraphics[width=1.1\textwidth]{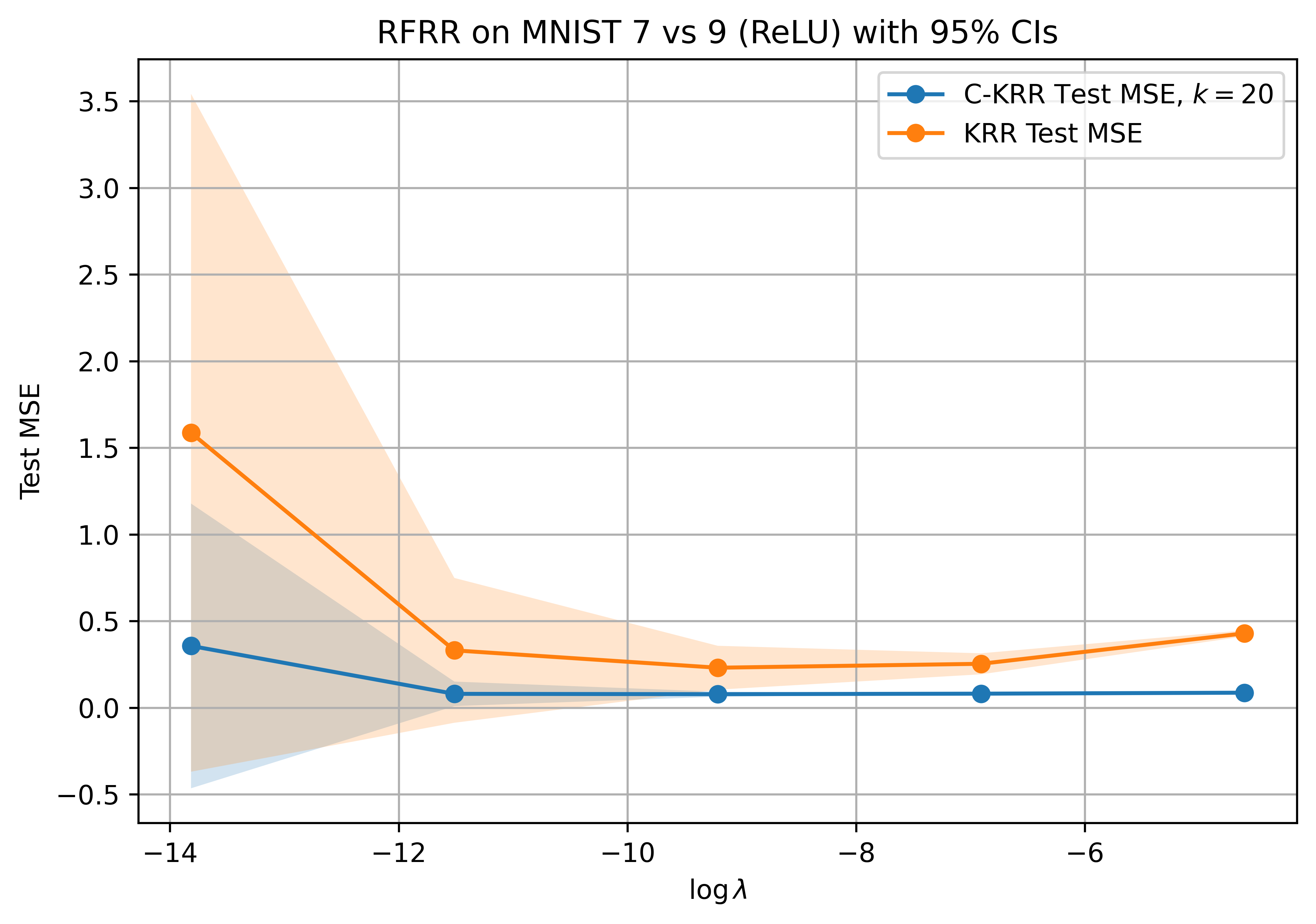}
\end{minipage}
\caption{Conditional KRR with trained unpenalized features vs standard KRR.}
    \label{nn_thresholding}
\end{figure}
The code can be accessed on \href{https://github.com/k-nic/Conditional-KRR}{GitHub}, allowing for easy reproduction of
our results.

\ifCOM
We have $[\widehat\phi_1(x), \cdots, \widehat\phi_k(x)] =\frac{1}{\sqrt{N}} [K(X_1, x), \cdots, K(X_N, x)][\alpha_1, \cdots, \alpha_k]{\rm diag}(\frac{1}{\hat\lambda_1}, \cdots, \frac{1}{\hat\lambda_k})$.
Coefficients vector of any $f\in {\rm span}(\mathcal{F}_k) \subseteq L_2(\mathcal{X}, P_N)$ in the basis $\widehat\phi_1, \cdots, \widehat\phi_k$ equals $\frac{1}{\sqrt{N}}[\alpha_1, \cdots, \alpha_k]^\top [f(X_1), \cdots, f(X_N)]^\top$, so the projection onto $\mathcal{F}_k$ can be given as
\begin{equation*}
\begin{split}
[\widehat\phi_1(x), \cdots, \widehat\phi_k(x)]\frac{1}{\sqrt{N}}[\alpha_1, \cdots, \alpha_k]^\top [f(X_1), \cdots, f(X_N)]^\top = \\
\frac{1}{N}[K(X_1, x), \cdots, K(X_N, x)][\alpha_1, \cdots, \alpha_k]{\rm diag}(\frac{1}{\hat\lambda_1}, \cdots, \frac{1}{\hat\lambda_k})[\alpha_1, \cdots, \alpha_k]^\top [f(X_1), \cdots, f(X_N)]^\top.
\end{split}
\end{equation*}
\else
\fi

\end{document}
